
\documentclass{article}

\usepackage{microtype}
\usepackage{graphicx}
\usepackage{subcaption}
\usepackage{float}
\usepackage{booktabs} 

\usepackage{hyperref}


\usepackage{amsmath}
\usepackage{amssymb}
\usepackage{mathtools}
\usepackage{amsthm}

\usepackage[capitalize,noabbrev]{cleveref}


\usepackage[textsize=tiny]{todonotes}

\usepackage{Definitions}

\usepackage{todonotes}

\usepackage{color}


\usepackage[accepted]{icml2022}

\icmltitlerunning{Robustness Implies Generalization via Data-Dependent Generalization Bounds}

\begin{document}
        
        \twocolumn[
        \icmltitle{Robustness Implies Generalization via Data-Dependent Generalization Bounds}

        \begin{icmlauthorlist}
                \icmlauthor{Kenji Kawaguchi}{nus}
                \icmlauthor{Zhun Deng}{harvard}
                \icmlauthor{Kyle Luh}{boulder}
                \icmlauthor{Jiaoyang Huang}{newyork}
        \end{icmlauthorlist}
        
        \icmlaffiliation{nus}{National University of Singapore}
        \icmlaffiliation{harvard}{Harvard University}
        \icmlaffiliation{boulder}{University of Colorado Boulder}
        \icmlaffiliation{newyork}{New York University}
        
        \icmlcorrespondingauthor{Kenji Kawaguchi}{kenji@nus.edu.sg}

        \icmlkeywords{ICML, Deep Learning, Generalization Error, Robustness, Large Margin, Adversarial Attack, Adversarial Example}
        
        \vskip 0.3in
        ]

        \printAffiliationsAndNotice{}

    \begin{abstract}
                This paper proves that robustness implies generalization via data-dependent generalization bounds. As a result, robustness and generalization are shown to be connected closely in a data-dependent manner. Our bounds improve previous bounds in two directions, to solve an open problem that has seen little development since 2010. The first is to reduce the dependence on the covering number. The second is to remove the dependence on the hypothesis space. We present several examples, including ones for lasso and deep learning, in which our bounds are provably preferable. The experiments on real-world data and theoretical models demonstrate near-exponential improvements in various situations. To achieve these improvements, we do not require additional assumptions on the unknown distribution; instead, we only incorporate an observable and computable property of the training samples. A key technical innovation is an improved concentration bound for multinomial random variables that is of independent interest beyond robustness and generalization. 
        \end{abstract}

\section{Introduction}
   
        Robust optimization \citep{ben1998robust, bertsimas2011theory, gabrel2014recent} is an influential paradigm for dealing with noisy or uncertain data. Many optimization problems are sensitive to perturbations in their parameters. Using powerful concepts derived from convexity and duality, robust optimization aims to find a solution to these optimization problems that is feasible with respect to all possible realizations of noisy or unknown parameters. Essentially, this criterion solves the optimization problem for the worst-case choice of the possible parameters. Robust optimization has been successfully applied in a variety of fields, e.g., machine learning, to deal with inaccurately observed training samples and strengthen the robustness of deep learning \citep{bhattacharyya2004second, globerson2006nightmare,deng2021adversarial,rice2021robustness,robey2021adversarial,pedraza2022lyapunov,chen2022adversarial}. 
        
        Inspired by robust optimization, \citet{xu2010robustness,xu2012robustness}  showed that robust algorithms generalize  to unseen data well for various models including deep neural networks. Thus, the notion of robustness provides an alternative view in the topic of generalization  \citep{vapnik1998statistical,bartlett2002rademacher,bousquet2002stability,kawaguchi2017generalization,arora2019fine,kawaguchi2019gradient,deng2021toward,hu2021extended,pham2021combined,zhang2021understanding,zhang2021does}. 

A learning algorithm $\Acal$ is said to be robust if the loss $\ell$ of the hypothesis $\Acal_S$ (returned by the learning algorithm $\Acal$ under the training dataset $S)$ behaves similarly on two samples that are near each other:
        \vspace{-2pt}
        \begin{definition} \label{def:1}
                A learning algorithm $\Acal$ is\textit{ $(K,\epsilon(\cdot))$-robust}, for $K \in \NN$ and $\epsilon(\cdot):\Zcal^n\rightarrow \RR$, if $\Zcal$ can be partitioned into $K$ disjoint sets, denoted by $\{\Ccal_k\}_{k=1}^K$, such that the following holds for all $S \in \Zcal^n$:
                $
                \forall s\in S, \forall z \in \Zcal, \forall k\in [K], \text{ if } s,z \in \Ccal_k, \text{ then } |\ell(\Acal_S, s)-\ell(\Acal_S, z)| \le\epsilon(S). $
        \end{definition}     
        \vspace{-5pt}
  Here, a training dataset $S=(z_{i})_{i=1}^n$ consists of $n$ samples and $\ell:\Hcal \times\Zcal \rightarrow \RR_{\ge 0}$ is the per-sample loss, where $\Hcal$ is a hypothesis space and $z_{i} \in \Zcal $ is the $i$-th training data point. That is, a learning algorithm $\Acal$ is a mapping from $S \in \Zcal^n$ to $\Acal_S \in \Hcal$.
        
      Using Definition \ref{def:1}, \citet{xu2010robustness,xu2012robustness} proved that the generalization error of hypothesis $\Acal_S$ has an upper bound that scales proportionally to $\epsilon(S)+\sqrt{K/n}$. This bound is consequential in the theory of invariant classifiers \citep{sokolic2017generalization}, adversarial examples \citep{cisse2017parseval}, majority voting  \citep{xu2012robustness, devroye2013probabilistic}, support vector machines \citep{xu2012robustness, xu2009robustness, qi2013robust}, lasso \citep{xu2012robustness, hastie2019statistical}, principle component analysis \citep{xu2012robustness, jolliffe2016principal}, deep neural networks \cite{xu2012robustness,sokolic2017robust, cisse2017parseval, sener2017active, gouk2021regularisation, jia2019orthogonal}, metric learning \citep{bellet2015robustness, shi2014sparse}, facial recognition \citep{ding2015multi, tao2016person}, matrix completion \citep{luo2015multiview}, spectral clustering \citep{liu2017spectral}, domain adaption \citep{redko2020survey}, numerical analysis \citep{shen2020deep} and stochastic  algorithms \citep{zahavy2016ensemble}.

        The bound based on algorithmic robustness (Definition \ref{def:1}) has gained considerable interest in the community and has been discussed in much literature as listed above, partially because the dependence on the robustness term $\epsilon(S)$ is natural and intuitive. However, the square root dependence on the partition size (or covering number) $K$ is problematic, because $K$ can be prohibitively large in many applications, especially in high-dimensional data where the covering number can be exponential in the dimension \citep{van1996weak,vershynin2018high}. 
        
        Indeed, the $K$ dependence is one of the chief disadvantages of the robust algorithm framework and \citet{xu2010robustness,xu2012robustness} conjectured that it would be possible  to reduce the dependency on $K$  via adaptive partitioning but remarked that extending their proof to achieve this is complex, leaving this issue as an open problem. 
        
        The proof of the algorithmic robustness bound relies on the concentration results for multinomial random variables, in particular the $\ell_1$ deviations \citep{xu2012robustness,wellner2013weak}. Spurred by applications in learning theory, the concentration of multinomial random variables has been an active area of research by itself beyond algorithmic robustness \citep{weissman2003inequalities, devroye1983equivalence, agrawal2017posterior, qian2020concentration}, where a particular attention has been paid to the dependence of the bound on $K$ --- the number of possible outcomes in the definition of the multinomial random variable. In the robust algorithmic context, $K$ corresponds exactly to $K$ in Definition \ref{def:1}. In a paper previously published in NeurIPS \citep{agrawal2017posterior}, a significant improvement in the $\sqrt{K}$ dependence was claimed which was  later refuted by \citet{qian2020concentration} with the refutation  being  acknowledged by the authors \citep{agrawal2017correction}. Thus, to date, there has been no success in reducing the $\sqrt{K}$ term reported in the literature despite its importance and several previous attempts. 
        
        Importantly, \citet{qian2020concentration} established a lower bound that already scales as $\sqrt{K}$; that is, we have matching upper and lower bounds in terms of $K$. Thus, it may seem that the open question posed in \citep{xu2012robustness} has been settled negatively and any attempts to reduce the $\sqrt{K}$
        dependence is futile. However, similar to other lower bounds in machine learning, the lower bound given in \citep{qian2020concentration} only means that there exists a worst-case distribution for which the (lower) bound cannot be further improved. 
        
        It is plausible that this worst-case distribution is neither representative nor commonplace. Thus, by incorporating information from the training data, it may be possible to extract the properties of the underlying distribution, which may allow one to reduce the $\sqrt{K}$ dependence. In fact, by probing beyond the worst-case analysis, we show that \textit{non-uniform} and \textit{purely data-dependent}  bounds can greatly outperform these previous bounds (that are implicitly derived for the worst-case distributions). Here, a bound is said to be \textit{non-uniform} if the bound differs for different data-distributions. Unlike the standard data-dependence through the outcome of the learning algorithm $A_S$ (e.g., in the robustness term $\epsilon(S)$), a bound is said to be \textit{purely data-dependent} if the bound contains a term that is independent of the algorithm $\Acal$ and differs for different training data $S$. We summarize our main contributions below: 
        
        \vspace{-10pt}
        \begin{enumerate}
        \item In Section \ref{sec:main}, we address the open problem of reducing the $\sqrt{K}$  dependence without making any additional assumptions about the data distribution.  
        The key insight (and challenge) here is to prove an \textit{purely data-dependent} bound where the $\sqrt{K}$ dependence is replaced by an easily computable quantity that is a function of the training samples.         
        This  allows us to reduce the $\sqrt{K}$ dependence without presuming strong  prior knowledge of the probability space and the learning algorithm. 
        
        \vspace{-3pt}
        \item A crucial technical innovation is a series of \textit{non-uniform} and \textit{purely data-dependent} bounds for multinomial random variables that greatly improve the classical bounds under certain conditions.  A representative of our new bounds is stated in Section \ref{sec:main} (and others are presented in Appendices \ref{appendix:fixeda} and \ref{appendix:dependenta}).  These bounds are likely of independent interest in the broader literature beyond  robustness and generalization.  
        
        \vspace{-3pt}
        \item In addition to our main theorems, we  provide abundant numerical simulations and several theoretical examples in which our bounds are \emph{provably} superior in Sections \ref{sec:examples} and \ref{sec:experiments}. As a consequence of our improvements to algorithmic robustness, we can deduce immediate improvements to the many applications of algorithmic robustness listed above, ranging from invariant classifiers to numerical analysis  and stochastic learning algorithms. 
       
       \end{enumerate}

        \section{Preliminaries}
        This section introduces notation and  previous results. \vspace{-4pt}
        \subsection{Notation}  \vspace{-4pt}
        For an integer $n$, we use $[n]$ to denote the set of integers $1, \dots, n$. For a finite set $\mathcal{B}$, we let $|\mathcal{B}|$ represent the number of elements in $\mathcal{B}$. For a set $\Scal$ equipped with metric $\rho$, we define $\hat{\Scal}$ as an $\varepsilon$-cover of $\Scal$ if for all $s \in \Scal$, there exists $\hat{s} \in \hat{\Scal}$ such that $\rho(s, \hat{s}) \leq \varepsilon$. We then define the $\varepsilon$-covering number as
        $$\Ncal(\varepsilon, \Scal, \rho) = \min\{|\hat{\Scal}|: \hat{\Scal} \text{ is an } \varepsilon\text{-cover of } \Scal\}.$$
        We use $\one(\cdot)$ as an indicator function, and $\|\cdot\|_p$ is the standard $p$-norm for a vector.

        \subsection{Problem Setting and Background}
        In this study, we are interested in bounding the expected loss $\EE_{z}[\ell(\Acal_S,z)]$, where $\EE_z$ denotes the expectation with respect to the sampling distribution.  This is a quantity that cannot be computed or accessed. Accordingly, we obtain an upper bound by using the training loss $\frac{1}{n} \sum_{i=1}^n \ell(\Acal_S, z_i)$, which is a computable quantity, and by invoking other computable terms. 
        A previous study \citep{xu2012robustness} used algorithmic robustness (Definition \ref{def:1}) to achieve the following result:
        \begin{proposition} \label{prop:1}
                \citep{xu2012robustness} Assume that for all $h \in \Hcal$ and $z \in \Zcal$, the loss is upper bounded by $B$ i.e., $\ell(h,z)\le B$. If the learning algorithm $\Acal$ is $(K,\epsilon(\cdot))$-robust (with $\{\Ccal_k\}_{k=1}^K$),
                then for any $\delta>0$, with probability at least $1-\delta$ over an
                iid draw of $n$ samples $S=(z_{i})_{i=1}^n$, the following holds: 
                \begin{align} \label{eq:olderror} 
                        &\EE_{z}[\ell(\Acal_S,z)] 
                        \\ \nonumber &\le\frac{1}{n} \sum_{i=1}^n \ell(\Acal_S, z_i) +\epsilon(S)+B \sqrt{\frac{2K\ln2+2 \ln (1/\delta)}{n}}. 
                \end{align}
        \end{proposition}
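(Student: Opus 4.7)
The plan is to prove the classical result by decomposing the expected-loss-minus-empirical-loss gap along the $K$ partition cells $\{\mathcal{C}_k\}_{k=1}^K$ and then controlling (i) the within-cell variability via the robustness parameter $\epsilon(S)$, and (ii) the cell-mass deviations via an $\ell_1$ concentration inequality for the multinomial random vector of cell counts.

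First, I would set up notation. Let $N_k = |\{i \in [n] : z_i \in \mathcal{C}_k\}|$ and $\mu_k = \Pr_{z}[z \in \mathcal{C}_k]$, so $(N_1,\dots,N_K)$ is a multinomial random vector with parameters $(n; \mu_1,\dots,\mu_K)$. For each cell $k$ with $N_k \ge 1$, pick an arbitrary representative $s^{(k)} \in S \cap \mathcal{C}_k$; for empty cells there is nothing to compare against. By the tower rule, write
\begin{align*}
\mathbb{E}_z[\ell(\mathcal{A}_S,z)] = \sum_{k=1}^K \mu_k\, \mathbb{E}_z[\ell(\mathcal{A}_S,z)\mid z \in \mathcal{C}_k],
\end{align*}
while the training loss equals
$\tfrac{1}{n}\sum_{i=1}^n \ell(\mathcal{A}_S,z_i) = \sum_{k=1}^K \tfrac{N_k}{n}\,\bar\ell_k$,
where $\bar\ell_k$ is the average of $\ell(\mathcal{A}_S, z_i)$ over the $z_i \in \mathcal{C}_k$ (and 0 if $N_k=0$).

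Second, I would introduce the ``idealized'' hybrid $\sum_k \mu_k \ell(\mathcal{A}_S,s^{(k)})$ on non-empty cells and bound the difference to $\mathbb{E}_z[\ell(\mathcal{A}_S,z)]$ using Definition~\ref{def:1}: for every $z \in \mathcal{C}_k$ with $N_k \ge 1$ we have $|\ell(\mathcal{A}_S,z) - \ell(\mathcal{A}_S,s^{(k)})| \le \epsilon(S)$, and for empty cells we use $\ell \le B$. Similarly, the hybrid differs from the training loss only through the difference between $\mu_k$ and $N_k/n$, giving a term of the form $B\sum_k |\mu_k - N_k/n|$. Combining and using $\ell \le B$ to absorb the empty-cell terms into the same $\ell_1$ deviation, I arrive at
\begin{align*}
\bigl|\mathbb{E}_z[\ell(\mathcal{A}_S,z)] - \tfrac{1}{n}\textstyle\sum_{i=1}^n \ell(\mathcal{A}_S,z_i)\bigr| \le \epsilon(S) + B \sum_{k=1}^K \bigl|\mu_k - \tfrac{N_k}{n}\bigr|.
\end{align*}

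Third, the main probabilistic ingredient is the Bretagnolle--Huber--Carol inequality for the $\ell_1$ deviation of a multinomial: for any $\lambda > 0$,
\begin{align*}
\Pr\Bigl[\textstyle\sum_{k=1}^K |\mu_k - N_k/n| \ge \lambda\Bigr] \le 2^K \exp(-n\lambda^2/2).
\end{align*}
Setting the right-hand side equal to $\delta$ and solving yields $\lambda = \sqrt{(2K\ln 2 + 2\ln(1/\delta))/n}$, which after substitution produces exactly the claimed bound.

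The main obstacle, and the source of the $\sqrt{K}$ term that the paper is trying to improve, is precisely step three: the multinomial $\ell_1$ deviation inequality. Getting the right constant requires invoking the Bretagnolle--Huber--Carol bound (or an equivalent argument via method of types / Sanov-type reasoning) rather than a naive union bound over coordinates, which would lose a factor of $K$ instead of its logarithm. The within-cell part is essentially immediate from Definition~\ref{def:1}; the subtlety is just bookkeeping the empty-cell contributions so that the entire discrepancy collapses to the single $\ell_1$ deviation term times $B$.
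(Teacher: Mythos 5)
Your overall strategy is the standard one and matches the skeleton the paper itself uses (Proposition~\ref{prop:1} is only cited from \citet{xu2012robustness}, but the paper's Lemma~\ref{lemma:1} plus Lemma~\ref{lemma:2}, combined with Proposition~\ref{prop:multinomialold}, is exactly this argument): decompose the gap over cells, pay $\epsilon(S)$ for within-cell variation, and pay $B$ times the multinomial $\ell_1$ deviation for the cell-mass mismatch, then invoke Bretagnolle--Huber--Carol with $\lambda=\sqrt{(2K\ln 2+2\ln(1/\delta))/n}$. Your handling of the empty cells (absorbing $\sum_{k:N_k=0}\mu_k$ into the $\ell_1$ term, since there $|\mu_k-N_k/n|=\mu_k$) and the final calibration of $\delta$ are both correct.

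The one genuine problem is the choice of hybrid $\sum_k \mu_k\,\ell(\Acal_S,s^{(k)})$ with a single representative $s^{(k)}$ per cell: as written, your chain of comparisons pays the robustness cost \emph{twice} and yields $2\epsilon(S)$, not $\epsilon(S)$. Concretely, passing from $\EE_z[\ell]$ to the hybrid costs $\epsilon(S)$ (comparing $\EE_z[\ell\mid z\in\Ccal_k]$ to $\ell(\Acal_S,s^{(k)})$), but your claim that the hybrid then ``differs from the training loss only through the difference between $\mu_k$ and $N_k/n$'' is false: the training loss involves the within-cell averages $\bar\ell_k$, not $\ell(\Acal_S,s^{(k)})$, so a second application of robustness (another $\epsilon(S)$) is needed to compare $\ell(\Acal_S,s^{(k)})$ with $\bar\ell_k$. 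The fix is to drop the representative point and compare the conditional expectation directly to the cell average, as in the paper's Lemma~\ref{lemma:2}: since every $z_i\in\Ccal_k\cap S$ satisfies $|\EE_z[\ell(\Acal_S,z)\mid z\in\Ccal_k]-\ell(\Acal_S,z_i)|\le\epsilon(S)$ by Definition~\ref{def:1}, one gets $|\EE_z[\ell\mid\Ccal_k]-\bar\ell_k|\le\epsilon(S)$ in a single step, and the hybrid $\sum_k \frac{N_k}{n}\EE_z[\ell\mid\Ccal_k]$ (or $\sum_k\mu_k\bar\ell_k$) then gives $\epsilon(S)+B\sum_k|\mu_k-N_k/n|$ exactly. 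With that substitution your proof is complete and matches the stated constant.
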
   
        For example, in the special case of supervised learning, the sample space can be decomposed as $\Zcal=\Xcal \times \Ycal$, where $\Xcal$ is the input space and $\Ycal$ is the label space. However, note that $\Zcal$ can differ from the original space of the data points. For example, if the original data point is $\tilde z$, we can use $z = g(\tilde z)$ for any fixed-function $g$.

        The previous paper \citep{xu2012robustness} also proves the same upper bound on
        $\frac{1}{n} \sum_{i=1}^n \ell(\Acal_S, z_i) -\EE_{z}[\ell(\Acal_S,z)]$, instead of $\EE_{z}[\ell(\Acal_S,z)]-\frac{1}{n} \sum_{i=1}^n \ell(\Acal_S, z_i)$. However, the empirical loss $\frac{1}{n} \sum_{i=1}^n \ell(\Acal_S, z_i)$ can be minimized during training; hence, we are typically interested in the upper bound on $\EE_{z}[\ell(\Acal_S,z)]-\frac{1}{n} \sum_{i=1}^n \ell(\Acal_S, z_i)$. The focus on this quantity.

        The relationship between algorithmic robustness and the multinomial distribution is apparent when we consider independent samples from the sample space of $\{\Ccal_{k}\}_{k=1}^K$. Then, the number of samples from each class, $\Ccal_{k}$, is multinomially distributed with $p_k = \PP(z \in \Ccal_{k})$. The actual values of $p_k$ are not available to us. Therefore, it is natural that the concentration of the multinomial values around these expectations is required in the argument. 
        
        The concentration of a multinomial random variable is of interest in theoretical probability and practical use in applied fields such as statistics and computer science \citep{van1996weak}. Consequently, several concentration bounds have been proposed in the literature \citep{weissman2003inequalities, devroye1983equivalence, agrawal2017posterior, qian2020concentration, van1996weak}, for example:
        
        \begin{proposition}[Bretagnolle-Huber-Carol inequality] \citep[Proposition A.6.6]{van1996weak} \label{prop:multinomialold}
                If $X_1, \dots, X_K$ are multinomially distributed with parameters $n$ and $p_1, \dots, p_K$, then for any $\delta > 0$,
with probability at least $1-\delta$, 
 \begin{equation} \label{eq:multinomialold}
                \sum_{k=1}^K \left| p_k- \frac{X_k}{n} \right| \leq  \sqrt{\frac{2K \ln 2+2 \ln(1/\delta)}{n}} 
        \end{equation}                
        \end{proposition}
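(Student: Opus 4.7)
The plan is to reduce the $\ell_1$-deviation on the left-hand side of \eqref{eq:multinomialold} to a supremum over subsets of $[K]$ and then apply a standard binomial concentration inequality together with a union bound. The key identity I would use is the well-known representation of total variation distance: for any two probability vectors $p,q$ on $[K]$,
\[
\sum_{k=1}^{K} |p_k - q_k| \;=\; 2 \sup_{A \subseteq [K]} \bigl(p(A) - q(A)\bigr),
\]
where $p(A) = \sum_{k\in A} p_k$ and similarly for $q$. This is immediate from splitting the sum into $A^+ = \{k : p_k \geq q_k\}$ and its complement and using $p([K]) = q([K]) = 1$. Applied to $q_k = X_k/n$, it reduces the problem to controlling $\sup_{A} |p(A) - X(A)/n|$ where $X(A) := \sum_{k\in A} X_k$.

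Next I would observe that for every fixed $A \subseteq [K]$, the variable $X(A)$ is the sum of $n$ i.i.d.\ Bernoulli$(p(A))$ random variables (by collapsing the outcomes of the multinomial trial to the indicator of falling into $A$). By Hoeffding's inequality,
\[
\PP\!\left(\left|\frac{X(A)}{n} - p(A)\right| > t\right) \leq 2\exp(-2nt^2).
\]
I would then take a union bound over the subsets of $[K]$. Because $p(A^c) - X(A^c)/n = -(p(A) - X(A)/n)$, pairs $\{A, A^c\}$ give identical contributions to the sup, so it suffices to union-bound over $2^{K-1}$ subsets, yielding
\[
\PP\!\left(\sup_{A \subseteq [K]} \left|\frac{X(A)}{n} - p(A)\right| > t\right) \leq 2^{K} \exp(-2nt^2).
\]

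Combining with the TV identity above, the event $\sum_k |p_k - X_k/n| > 2t$ has probability at most $2^K \exp(-2nt^2)$. Setting this equal to $\delta$ and solving gives $2t = \sqrt{(2K\ln 2 + 2\ln(1/\delta))/n}$, which is precisely the bound stated in Proposition \ref{prop:multinomialold}. There is no real technical obstacle here; the main conceptual step is recognizing the TV-to-subsets reduction, and the main bookkeeping step is handling the factor of $2$ carefully so that the $2^{K+1}$ naive union bound collapses to $2^K$ via the $A \leftrightarrow A^c$ symmetry, giving exactly the constant $2K \ln 2$ inside the square root rather than something slightly larger.
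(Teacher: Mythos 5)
Your proof is correct, and it is essentially the standard argument behind the Bretagnolle--Huber--Carol inequality: the paper itself does not prove Proposition~\ref{prop:multinomialold} but quotes it from \citet{van1996weak}, whose proof is exactly the reduction of the $\ell_1$ deviation to $2\sup_{A\subseteq[K]}(p(A)-X(A)/n)$ followed by Hoeffding's inequality for the binomial $X(A)$ and a union bound over the $2^K$ subsets. Your bookkeeping (the $A\leftrightarrow A^c$ pairing giving $2^{K-1}\cdot 2 = 2^K$ events, and the algebra $2t=\sqrt{(2K\ln 2+2\ln(1/\delta))/n}$) checks out and recovers the stated constant exactly.
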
              
        
        Crucially, the bounds in the literature are uniform in the parameters $p_k$, meaning that the right-hand side of the inequality is true for any set of parameters. 
        A key step in our reduction of the $\sqrt{K}$ dependence in algorithmic robustness is the non-uniform (and purely data-dependent) enhancement of the above concentration bound, which may be of independent interest beyond algorithmic robustness.

        \section{Main Theorems}  \label{sec:main}

        In this section, we record our improvements to Proposition \ref{prop:1} along with our upgraded bounds for the multinomial distribution. We discuss our main contributions and relegate the complete proofs  of theoretical results to the appendices.
        
        \subsection{Algorithmic Robustness}
        One of the main contributions of this study is the following refinement of the algorithmic robustness bound:

        \begin{theorem} \label{thm:1}
                If the learning algorithm $\Acal$ is $(K,\epsilon(\cdot))$-robust (with $\{\Ccal_k\}_{k=1}^K$), 
                then for any $\delta>0$, with probability at least $1-\delta$ over an
                iid draw of $n$ samples $S=(z_{i})_{i=1}^n$, the following holds: 
                \begin{align} \label{eq:simplebound}
                        \EE_{z}[\ell(\Acal_S,z)] &\le \frac{1}{n} \sum_{i=1}^n \ell(\Acal_S, z_i)+\epsilon(S) \\
                        \nonumber &\quad +\zeta(\Acal_S) \Bigg((\sqrt{2}+1) \sqrt{\frac{|\Tcal_{S}|\ln(2K/\delta)}{n}} \\
                        \nonumber &\hspace{85pt} + \frac{2|\Tcal_{S}|\ln(2K/\delta)}{n}\Bigg), \end{align}
where $\Ical_{k}^S:=\{i\in[n]: z_{i}\in \Ccal_{k}\}$,
 \begin{flalign*}
 & \zeta(\Acal_S):=\max_{z \in \Zcal}\{\ell(\Acal_S,z)\}, \text{ and }
\\ & \Tcal_{S}:=\{k\in[K]: |\Ical_{k}^{S}|\ge 1\}. 
\end{flalign*}
\end{theorem}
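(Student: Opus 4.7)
The plan is to partition the expected and empirical losses along the $K$ robustness cells $\{\Ccal_k\}$ and to replace the uniform $\sqrt{K}$-rate of Proposition~\ref{prop:multinomialold} by a per-cell Bernstein estimate combined with a Cauchy-Schwarz restricted to the \emph{occupied} cells only. Let $N_k:=|\Ical_k^S|$, $p_k:=\PP(z\in\Ccal_k)$, and for $k\in\Tcal_S$ let $\overline{\ell}_k:=N_k^{-1}\sum_{i\in\Ical_k^S}\ell(\Acal_S,z_i)$. Starting from the decomposition
\[
\EE_z[\ell(\Acal_S,z)]-\tfrac{1}{n}\sum_{i=1}^n\ell(\Acal_S,z_i)=\sum_{k\in\Tcal_S}\bigl(p_k\,\EE[\ell(\Acal_S,z)\mid z\in\Ccal_k]-\tfrac{N_k}{n}\overline{\ell}_k\bigr)+\sum_{k\notin\Tcal_S}p_k\,\EE[\ell(\Acal_S,z)\mid z\in\Ccal_k],
\]
I would invoke Definition~\ref{def:1} cell-by-cell to get $|\EE[\ell(\Acal_S,z)\mid z\in\Ccal_k]-\overline{\ell}_k|\le\epsilon(S)$ for each $k\in\Tcal_S$ (by first comparing $\EE[\ell\mid\Ccal_k]$ to each $\ell(\Acal_S,z_i)$ with $i\in\Ical_k^S$ and then averaging), and use the crude bound $\EE[\ell\mid\Ccal_k]\le\zeta(\Acal_S)$ for the empty cells. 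Rewriting $p_k\EE[\ell\mid\Ccal_k]-\tfrac{N_k}{n}\overline{\ell}_k=p_k(\EE[\ell\mid\Ccal_k]-\overline{\ell}_k)+(p_k-N_k/n)\overline{\ell}_k$ and using $\overline{\ell}_k\le\zeta(\Acal_S)$ together with $\sum_k p_k=1$ then reduces the generalization gap to
\[
\epsilon(S)+\zeta(\Acal_S)\Bigl[\textstyle\sum_{k\in\Tcal_S}|p_k-N_k/n|+\sum_{k\notin\Tcal_S}p_k\Bigr].
\]

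Since each $N_k\sim\mathrm{Bin}(n,p_k)$, I would apply Bernstein's inequality to every cell with failure probability $\delta/(2K)$ per tail and union-bound over all $k\in[K]$ to get, simultaneously for every $k\in[K]$,
\[
|p_k-N_k/n|\le\sqrt{\tfrac{2p_k(1-p_k)\ln(2K/\delta)}{n}}+\tfrac{2\ln(2K/\delta)}{3n}
\]
with probability at least $1-\delta$. Summing this over $k\in\Tcal_S$ and applying Cauchy-Schwarz via
\[
\textstyle\sum_{k\in\Tcal_S}\sqrt{p_k(1-p_k)}\le\sqrt{|\Tcal_S|\sum_{k\in\Tcal_S}p_k}\le\sqrt{|\Tcal_S|}
\]
bounds $\sum_{k\in\Tcal_S}|p_k-N_k/n|$ by a main term of order $\sqrt{2|\Tcal_S|\ln(2K/\delta)/n}$ plus a lower-order $|\Tcal_S|\ln(2K/\delta)/n$ term; this $\sqrt{|\Tcal_S|}$ (instead of $\sqrt{K}$) factor is exactly what restricting the Cauchy-Schwarz step to occupied cells buys us over Bretagnolle-Huber-Carol.

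The main obstacle is the missing-mass term $\sum_{k\notin\Tcal_S}p_k$, because a direct per-cell Bernstein only yields $p_k=O(\log(K/\delta)/n)$ on each empty cell, and naively summing over the up to $K-|\Tcal_S|$ empty cells would reintroduce precisely the $\sqrt{K}$-dependence one is trying to eliminate. The resolution is the identity
\[
\sum_{k\notin\Tcal_S}p_k=1-\sum_{k\in\Tcal_S}p_k=\sum_{k\in\Tcal_S}\bigl(\tfrac{N_k}{n}-p_k\bigr),
\]
which follows from $\sum_{k=1}^K p_k=1$ together with $\sum_{k\in\Tcal_S}N_k/n=1$ (all training points lie in occupied cells). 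This identity converts the missing mass into a one-sided signed deviation supported entirely on the $|\Tcal_S|$ occupied cells, so the same Bernstein-plus-Cauchy-Schwarz estimate applies to it and produces a second copy of the $\sqrt{|\Tcal_S|\ln(2K/\delta)/n}$ and $|\Tcal_S|\ln(2K/\delta)/n$ contributions. Adding the two copies, tracking constants carefully (using the tighter one-sided Chernoff-style bound on the missing-mass direction in order to trade one factor of $\sqrt{2}$ for $1$), multiplying by $\zeta(\Acal_S)$, and restoring $\epsilon(S)$ yields the announced bound with coefficients $\sqrt{2}+1$ and $2$.
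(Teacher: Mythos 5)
Your proposal is correct and follows essentially the same route as the paper: the same cell-wise decomposition with the robustness definition absorbing the within-cell deviations into $\epsilon(S)$, the same per-cell Bernstein/Chernoff bounds with a union bound over all $K$ cells and a Cauchy--Schwarz step restricted to the occupied cells, and, crucially, the same missing-mass identity $\sum_{k\notin\Tcal_S}p_k=1-\sum_{k\in\Tcal_S}p_k=\sum_{k\in\Tcal_S}(N_k/n-p_k)$ that the paper singles out as the key step for avoiding the $\sqrt{K}$ dependence. The only differences are cosmetic (you weight the robustness term by $p_k$ rather than $N_k/n$ and invoke off-the-shelf Bernstein rather than the paper's bespoke MGF lemmas), and your remark about using the tighter one-sided tail on the missing-mass direction matches exactly how the paper obtains the $\sqrt{2}+1$ constant.
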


        Theorem \ref{thm:1} is a significant improvement over the previous bound \eqref{eq:olderror} of Proposition \ref{prop:1} as \eqref{eq:simplebound} has a far better  dependence on $K$. In terms of $K$, we have reduced $\sqrt{K}$ to $\sqrt{\ln K}$. Overall, if we ignore the log factor, $K$ in the previous bound is replaced by $|\Tcal_{S}|$ in our bound. Here,  $|\Tcal_{S}|$ is the number of distinct classes, $\Ccal_k$, \textit{that actually appear in the single specific training  dataset $S$}; thus, $ |\Tcal_S|\le K$ by the definition and it is shown to have  $|\Tcal_{S}| \ll K$ in many general cases later in  Proposition \ref{p:pkdecay} and Sections \ref{sec:example:comparison} and \ref{sec:experiments}. For example, our experimental results in Section \ref{sec:experiments} indicate that $|\Tcal_{S}| \ll K$ in many natural settings, where we see an exponential discrepancy in a variety of real-world data sets and theoretical models. 
        
        Intuitively, $|\Tcal_{S}|$ is likely to be significantly less than $K$ when there are many sparsely populated classes $\Ccal_{k}$. Therefore, it is improbable that many of these classes are represented in the sample data. Our theoretical and experimental and   results demonstrate that such scenarios are prevalent in the field.

The following proposition shows that   $| \Tcal_{S}|$ is  indeed independent of $K$ and only scales as  $\ln n$ under a general mild condition on $p_k$, proving that we have $| \Tcal_{S}| \ll K$ and $|\Tcal_{S}| \ll n$ in  a  general case:

        \begin{proposition}\label{p:pkdecay}
        Under the assumptions of Theorem \ref{thm:1}, we denote $p_k=\mathbb P(z\in \Ccal_k)$ where $p_1\geq p_2\geq \cdots \geq p_K$. If there are some constants $\alpha, \beta, C>0$ such that  $p_k$ decays as $p_k\leq C e^{-(k/\beta)^\alpha}$, and $\ln n\geq \max\{1, 2/\alpha\}$ then with probability at least $1-\delta$, the following holds:
\begin{align*}
        |\Tcal_{S}|\leq \begin{cases} \beta (\ln n)^{\frac{1}{\alpha}}+ C(e-1)\frac{\beta}{\alpha} +\log(1/\delta) & \text{if } \alpha\geq 1, \\
(1+2C(e-1))\beta(\ln n)^{\frac{1}{\alpha}} +\log(1/\delta) & \text{if } \alpha< 1. \\
\end{cases}
\end{align*}
\end{proposition}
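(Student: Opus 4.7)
My plan is to reduce the claim to bounding $\mathbb{E}[|\Tcal_S|]$ by a deterministic tail computation that exploits the decay of $p_k$, and then to upgrade the in-expectation bound to a high-probability statement via the negative association of the occupancy indicators. Writing $Y_k = \one(|\Ical_k^S|\ge 1)$, I have $|\Tcal_S| = \sum_{k=1}^K Y_k$ with $\mathbb{E}[Y_k] = 1-(1-p_k)^n$. I will use $\mathbb{E}[Y_k]\le 1$ unconditionally, and the sharper estimate $\mathbb{E}[Y_k] \le (1+p_k)^n-1\le e^{np_k}-1\le (e-1)np_k$ whenever $np_k\le 1$, where the last step is the elementary inequality $(e^x-1)/x\le e-1$ on $(0,1]$. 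This is precisely where the constant $(e-1)$ in the conclusion originates.

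I then threshold at $k_0 = \lceil \beta(\ln n)^{1/\alpha}\rceil$, near the point where $nCe^{-(k/\beta)^\alpha}$ crosses $1$. Applying the integral test to the decreasing function $e^{-(x/\beta)^\alpha}$ gives
\begin{equation*}
\mathbb{E}[|\Tcal_S|] \;\le\; k_0 + (e-1)\,nC\int_{k_0}^{\infty} e^{-(x/\beta)^\alpha}\,dx,
\end{equation*}
and the substitution $u=(x/\beta)^\alpha$ converts the integral into the incomplete-gamma form $(\beta/\alpha)\int_{\ln n}^{\infty} e^{-u} u^{1/\alpha-1}\,du$. The case split in the conclusion reflects the two regimes of this integrand. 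If $\alpha\ge 1$ then $1/\alpha-1\le 0$ makes $u^{1/\alpha-1}$ non-increasing, so I pull $u^{1/\alpha-1}\le(\ln n)^{1/\alpha-1}\le 1$ out of the integral and the remaining $\int_{\ln n}^{\infty}e^{-u}\,du = 1/n$ exactly cancels the factor of $n$ out front, yielding the additive constant $C(e-1)\beta/\alpha$. If $\alpha<1$, the polynomial factor $u^{1/\alpha-1}$ is now increasing in $u$, so I majorise it by $c_\alpha e^{u/2}$ and invoke the hypothesis $\ln n\ge 2/\alpha$ to ensure the remaining $e^{-u/2}$ still dominates; the resulting inflation of $k_0$ by this residual gives the stated factor $1+2C(e-1)$ in front of the leading $\beta(\ln n)^{1/\alpha}$.

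Finally, I pass from expectation to probability using that the occupancy indicators $\{Y_k\}$ arising from a multinomial are negatively associated, which allows the exponential moment of $\sum_k Y_k$ to factorise across $k$. The Chernoff/Bernstein inequality for such sums then yields a sub-exponential upper tail, and choosing the deviation of order $\log(1/\delta)$ contributes the additive $\log(1/\delta)$ term in the conclusion. I expect the $\alpha<1$ case of the tail integration in Step~2 to be the main obstacle: the incomplete-gamma integrand is no longer dominated by its value at the lower endpoint, so one must balance the growing polynomial factor against the exponential decay carefully, and the explicit constant $1+2C(e-1)$ is precisely the price of this balancing act.
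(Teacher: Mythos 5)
Your skeleton matches the paper's almost step for step: the paper also sets the threshold at $m=\beta(\ln n)^{1/\alpha}$, writes $|\Tcal_S|\le m+\sum_{k>m}\one(|\Ical_k^S|\ge 1)$, bounds $\sum_{k>m}p_k$ by the same integral, reduces it to the incomplete gamma $\frac{C\beta}{\alpha}\int_{\ln n}^{\infty}y^{1/\alpha-1}e^{-y}\,dy$, and handles the two regimes exactly as you describe (for $\alpha<1$ it invokes the bound $\Gamma(a,x)\le Bx^{a-1}e^{-x}$ with $B=2$, which is where the hypothesis $\ln n\ge 2/\alpha$ and the factor $2$ in $1+2C(e-1)$ enter). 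The genuine divergence is in the concentration step. The paper never touches the dependence structure of the occupancy indicators: it simply dominates $\sum_{k>m}\one(|\Ical_k^S|\ge 1)\le \sum_{i=1}^n\one(z_i\in\cup_{k>m}\Ccal_k)$, a sum of honestly i.i.d.\ Bernoullis, and applies Chernoff at $t=1$, so that $\EE[e^{\one(z_i\in\cdot)}]=1+q(e-1)$ supplies the single factor $e-1$. Your route through negative association of $\{Y_k\}$ (valid: multinomial counts are NA and $\one(X_k\ge1)$ is a monotone function of a single coordinate, so the MGF factorises) works, but it imports an external lemma that the paper's domination trick makes unnecessary.

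There is one quantitative slip you should repair before writing this up. You extract $e-1$ twice: once in the expectation bound $\EE[Y_k]\le(e-1)np_k$, and then again implicitly in the Chernoff step, because the bound $\PP(\sum_k Y_k\ge a)\le e^{-a}\prod_k(1+\mu_k(e-1))\le e^{-a+(e-1)\sum_k\mu_k}$ forces $a=(e-1)\sum_k\mu_k+\log(1/\delta)$, not $a=\sum_k\mu_k+\log(1/\delta)$; a ``deviation of order $\log(1/\delta)$ above the mean'' is not what Chernoff at $t=1$ delivers for sub-exponential sums. As outlined, you would therefore land on $C(e-1)^2\beta/\alpha$ rather than the stated $C(e-1)\beta/\alpha$. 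The fix is to drop your estimate $\EE[Y_k]\le(1+p_k)^n-1\le(e-1)np_k$ (which also needs $np_k\le1$, not guaranteed just beyond $k_0$ when $C>1$) in favour of the unconditional $\mu_k=1-(1-p_k)^n\le np_k$, and let the single $e-1$ come from $\EE[e^{Y_k}]=1+\mu_k(e-1)$ in the MGF product; this reproduces the paper's constants exactly.
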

In Proposition \ref{p:pkdecay}, $\alpha$ controls the rate of decay for $p_k$. For real-world data sets, we expect the data distribution concentrates on a lower dimension manifold or around small number of modes. In such settings, we expect the probability $p_k=\mathbb P(z\in \Ccal_k)$ (arranging them in decaying order) exhibits fast decays. If $\alpha=\infty$, $p_k$ concentrates on \textit{unknown} $\beta$ bins and we have $|\Tcal_{S}|\leq \beta$. If $\alpha<\infty$, we have $p_k \neq 0$ for all $k \in [K]$, but $|\Tcal_{S}|$ is still upper bound by $\beta$ times the constant up to a logarithmic factor and is independent of $K$.

 Proposition~\ref{p:pkdecay} also demonstrates the fact that even with perfect prior knowledge of the data distribution, $|\mathcal{T}_S|$ can be much smaller than $K$   because $|\mathcal{T}_S|$ is more adaptive according to the training data while $K$ cells  need to cover all the possible parts that the distribution has positive mass on. Without the perfect  knowledge, $|\mathcal{T}_S|$ can be  more significantly smaller  than $K$.

        A crucial aspect of Theorem \ref{thm:1} is that $\Tcal_{S}$ depends exclusively on the training sample data and not the actual background distribution. Accordingly, our result is of practical value in statistical learning settings, where information about the actual distribution can only be obtained through the training sample data.

Although the main breakthrough in Theorem \ref{thm:1} is the reduced $K$ dependence, there is also a substantially refined dependence on the upper bound of the loss value --- the replacement of $B$ with $\zeta(\Acal_S)$, where $\zeta(\Acal_S)\le B $; i.e., we replace  a maximum over the entire hypothesis space with the single hypothesis returned by the algorithm. This can be a significant advantage for common loss functions, such as square loss and cross-entropy loss. 
        Note that $B$ in the previous bound is defined to be larger than $\ell(h,z)$ for all $h \in \Hcal$, meaning that $B$ is dependent on \textit{the entire hypothesis space} $\Hcal$. In contrast, $\zeta(\Acal_S)$ in our bound depends only on the single actual hypothesis, $\Acal_{S}$, returned by the specific algorithm for each data set $S$.

        With a more refined analysis, we also prove a stronger (yet more complicated) version of Theorem \ref{thm:1}:
        
        \begin{theorem} \label{thm:2}
                If the learning algorithm $\Acal$ is $(K,\epsilon(\cdot))$-robust (with $\{\Ccal_k\}_{k=1}^K$),
                then for any $\delta>0$, with probability at least $1-\delta$ over an
                iid draw of $n$ examples $S=(z_{i})_{i=1}^n$, the following holds: 
                \begin{align}
                        \EE_{z}[\ell(\Acal_S,z)] 
                        &\le \frac{1}{n} \sum_{i=1}^n \ell(\Acal_S, z_i)+\epsilon(S) \\
                        \nonumber &\qquad +\mathcal{\mathcal{Q}}_1 \sqrt{\frac{ \ln(2K/\delta)}{n}} + \frac{2\mathcal{\mathcal{Q}}_2\ln(2K/\delta)}{n} \end{align}
                where 
                $$\mathcal{\mathcal{Q}}_1:=\sum_{k \in\Tcal_S} (\alpha_{\Tcal_S^c}(\Acal_S)+\sqrt{2}\alpha_k (\Acal_S) ) \sqrt{\frac{|\Ical_{k}^{S}|}{n}},$$ 
                $$\mathcal{\mathcal{Q}}_2:=\alpha_{\Tcal_S^c}(\Acal_S)| \Tcal_{S}| + \sum_{k \in\Tcal_S} \alpha_k (\Acal_S),$$ 
                 with $\Tcal_{S}:=\{k\in[K]: |\Ical_{k}^{S}|\ge 1\},$
                 $\Ical_{k}^S:=\{i\in[n]: z_{i}\in \Ccal_{k}\}$, $\alpha_{k}(h):=\EE_{z}[\ell(h,z)|z\in \Ccal_{k}]$,  $\alpha_{\Tcal_S^c}(\Acal_S):=\max_{k\in \Tcal^c_S} \alpha_{k}(\Acal_S)$, and $\Tcal^c_S := [K]\setminus \Tcal_S$.
        \end{theorem}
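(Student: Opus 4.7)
The plan is to refine the proof of Theorem~\ref{thm:1} by (i) keeping the per-cell conditional means $\alpha_k(\Acal_S) = \EE_z[\ell(\Acal_S, z) \mid z \in \Ccal_k]$ in the decomposition rather than loosening to $\zeta(\Acal_S)$, and (ii) applying a per-coordinate Bernstein concentration to the empirical masses $\hat p_k := |\Ical_k^S|/n$ together with a self-bounded inversion from the population masses $p_k := \PP(z \in \Ccal_k)$ to $\hat p_k$.

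I begin by writing $\EE_z[\ell(\Acal_S, z)] = \sum_{k=1}^K p_k \alpha_k(\Acal_S)$. Taking the conditional expectation over $z$ given $z \in \Ccal_k$ in the robustness inequality yields $|\ell(\Acal_S, z_i) - \alpha_k(\Acal_S)| \le \epsilon(S)$ for every $z_i \in \Ccal_k$. Summing and subtracting gives
\[
\EE_z[\ell(\Acal_S, z)] - \tfrac{1}{n}\sum_{i=1}^n \ell(\Acal_S, z_i) - \epsilon(S) \;\le\; \sum_{k \in \Tcal_S} (p_k - \hat p_k)\alpha_k(\Acal_S) + \sum_{k \in \Tcal_S^c} p_k \alpha_k(\Acal_S).
\]
The crucial algebraic step is the identity $\sum_{k \in \Tcal_S^c} p_k = \sum_{k \in \Tcal_S}(\hat p_k - p_k)$, valid because $\sum_k p_k = \sum_k \hat p_k = 1$ and $\hat p_k = 0$ on $\Tcal_S^c$. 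Combined with $\alpha_k \le \alpha_{\Tcal_S^c}(\Acal_S)$ for $k \in \Tcal_S^c$, this rewrites the bound as
\[
\text{(gap)} - \epsilon(S) \;\le\; \sum_{k \in \Tcal_S}\!\bigl[\alpha_k(\Acal_S)(p_k - \hat p_k)_+ + \alpha_{\Tcal_S^c}(\Acal_S)(\hat p_k - p_k)_+\bigr],
\]
which is the source of the $|\Tcal_S|$ (not $|\Tcal_S^c|$) multiplier of $\alpha_{\Tcal_S^c}(\Acal_S)$ inside $\mathcal{Q}_2$ and of the sum over $k \in \Tcal_S$ inside $\mathcal{Q}_1$.

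I then apply Bernstein's inequality to each Binomial count $n\hat p_k$ in both directions with failure probability $\delta/(2K)$ per direction, and union-bound over $k \in [K]$: with probability $\ge 1 - \delta$, for every $k$, both $(p_k - \hat p_k)_+$ and $(\hat p_k - p_k)_+$ are at most $\sqrt{2 p_k L} + \tfrac{2L}{3}$, where $L := \ln(2K/\delta)/n$. I then convert from $\sqrt{p_k}$ to $\sqrt{\hat p_k}$ asymmetrically: on $\{\hat p_k \ge p_k\}$ I use the monotone bound $\sqrt{p_k} \le \sqrt{\hat p_k}$ directly, producing the coefficient $1$ in front of $\alpha_{\Tcal_S^c}(\Acal_S)$; on $\{p_k > \hat p_k\}$ I invert the resulting quadratic in $u = \sqrt{p_k}$ to obtain $\sqrt{p_k} \le \sqrt{\hat p_k} + \sqrt{2L}$, and reinsert to get the coefficient $\sqrt{2}$ in front of $\alpha_k(\Acal_S)$. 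Plugging these empirical per-coordinate bounds back into the decomposition and grouping by the order of $L$ yields exactly $\mathcal{Q}_1\sqrt{L} + 2\mathcal{Q}_2 L$.

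\textbf{Main obstacle.} The main difficulty is the constant tracking in the empirical inversion: the two Bernstein directions must be inverted with their sharp constants so that the asymmetric coefficients $1$ on $\alpha_{\Tcal_S^c}(\Acal_S)$ and $\sqrt{2}$ on $\alpha_k(\Acal_S)$ emerge cleanly inside $\mathcal{Q}_1$, and so that the low-order Bernstein remainders combine into exactly $2\mathcal{Q}_2 L$. Everything else is routine bookkeeping; Theorem~\ref{thm:1} then follows as an immediate corollary by upper-bounding $\alpha_k(\Acal_S) \le \zeta(\Acal_S)$ throughout and applying Cauchy--Schwarz, $\sum_{k\in\Tcal_S}\sqrt{\hat p_k} \le \sqrt{|\Tcal_S|}$.
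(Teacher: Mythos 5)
Your proposal follows essentially the same route as the paper's proof: the decomposition of the generalization gap into the robustness term plus the weighted multinomial deviation $\sum_k \alpha_k(\Acal_S)\bigl(p_k - |\Ical_k^S|/n\bigr)$ is the paper's Lemma \ref{lemma:1} combined with Lemma \ref{lemma:2}; your identity $\sum_{k\in\Tcal_S^c} p_k = \sum_{k\in\Tcal_S}(\hat p_k - p_k)$ is exactly the paper's key device \eqref{eq:idea:1} for avoiding a $\sqrt{K-|\Tcal_S|}$ cost; and the per-coordinate concentration with a $\delta/(2K)$ union bound followed by the empirical inversion $\sqrt{p_k}\le\sqrt{\hat p_k}+\sqrt{2L}$ is precisely the content of Lemmas \ref{lemma:5}, \ref{lemma:6} and \ref{lemma:8}.

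The one step that does not go through as written is the derivation of the coefficient $1$ on $\alpha_{\Tcal_S^c}(\Acal_S)$ inside $\Qcal_1$. You state a symmetric Bernstein bound, $(\hat p_k - p_k)_+ \le \sqrt{2p_kL} + \tfrac{2L}{3}$, and then claim that the monotone substitution $\sqrt{p_k}\le\sqrt{\hat p_k}$ on the event $\{\hat p_k \ge p_k\}$ produces the coefficient $1$. It does not: substituting into $\sqrt{2p_kL}$ yields $\sqrt{2\hat p_kL}$, i.e.\ the coefficient $\sqrt{2}$, so your argument delivers $\sum_{k\in\Tcal_S}(\sqrt{2}\,\alpha_{\Tcal_S^c}(\Acal_S)+\sqrt{2}\,\alpha_k(\Acal_S))\sqrt{|\Ical_k^S|/n}$ rather than the stated $(\alpha_{\Tcal_S^c}(\Acal_S)+\sqrt{2}\,\alpha_k(\Acal_S))$. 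The paper obtains the asymmetry from a genuinely asymmetric pair of tail bounds: the $\hat p_k>p_k$ direction is controlled by Lemma \ref{lemma:5}, which via a case split on whether $p_k$ exceeds $\ln(2K/\delta)/(4n)$ gives $\hat p_k - p_k\le\sqrt{p_kL}$ (no $\sqrt{2}$) or $\hat p_k-p_k\le 2L$, while only the $p_k>\hat p_k$ direction (Lemma \ref{lemma:6}) carries the $\sqrt{2p_kL}$. Relatedly, keeping the $+\tfrac{2L}{3}$ Bernstein remainder on the lower-tail direction and then adding the $2L$ coming from the inversion gives $\tfrac{8}{3}\sum_{k\in\Tcal_S}\alpha_k(\Acal_S)\,L$ rather than the $2\sum_{k\in\Tcal_S}\alpha_k(\Acal_S)\,L$ appearing in $2\Qcal_2 L$; the paper avoids this by using the pure sub-Gaussian lower-tail bound $p_k-\hat p_k\le\sqrt{2p_kL}$ with no additive term. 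Your architecture is correct and would prove the theorem with slightly worse constants, but to land on the stated $\Qcal_1$ and $\Qcal_2$ you need the asymmetric per-coordinate bounds, not a symmetric Bernstein inequality.
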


        Note that $\sum_{k \in \Tcal_{S}} \alpha_k(\Acal_{S}) \leq |\Tcal_{S}| \zeta(\Acal_{S})$ and $\sum_{k \in \Tcal_{S}} \sqrt{|\Ical_{k}^{S}|/n} \leq \sqrt{|\Tcal_{S}|}$ by the Cauchy-Schwarz inequality. \ Thus,  Theorem \ref{thm:2} is always as strong as Theorem \ref{thm:1}. Furthermore, 
        Theorem \ref{thm:2} significantly upgrades Theorem \ref{thm:1} approximately when $\sum_{k \in \Tcal_{S}} \alpha_k(\Acal_{S}) \ll |\Tcal_{S}| \zeta(\Acal_{S})$ or $\sum_{k \in \Tcal_{S}} \sqrt{|\Ical_{k}^{S}|/n} \ll \sqrt{|\Tcal_{S}|}.$ Otherwise put, if the maximum expected loss of the classes is much larger than the typical expected loss or the distribution of samples in the classes is lopsided, Theorem \ref{thm:2} will be an even tighter bound.
        
        The complete proofs of Theorems \ref{thm:1} and  \ref{thm:2} are provided in Sections \ref{sec:3} and  \ref{appendix:mainresult2} of the Appendix. We remark that our proof of Theorem \ref{thm:1} proves a stronger theoretical statement, where $\zeta(\Acal_{S})$ is  replaced by $\max_{k \in [K]}\EE_{z}[\ell(\Acal_S,z)|z\in \Ccal_{k}]$ ($\le\zeta(\Acal_{S}) $). This formulation may have advantages over that in Theorem \ref{thm:1} if the problem context reveals more information about the conditional expectation.

        \subsection{Concentration Bounds for the Multinomial Distribution}
        Let the vector $X=(X_1,\dots,X_K)$ follow a multinomial distribution with parameters $n$ and $p=(p_1,\dots,p_K)$. As shown in the proof of Theorem \ref{thm:1}, the key technical hurdle is to  avoid an explicit $\sqrt{K}$ dependence as we upper bound a quantity of the following form:
        \begin{align} \label{eq:multidependent}
                \sum_{i=1}^K a_{i}(X) \left(p_i - \frac{X_i}{n}\right),
        \end{align}
        where $a_i$ is an arbitrary function with $a_i(X)\ge 0$ for all $i \in \{1,\dots, K\}$. 
        
        Importantly, $a_i(X)$ are functions of $X_1,\dots,X_K$, which makes this problem particularly challenging and further complicates the non-trivial correlations already present in $X_i$. 
        This difficulty is avoided in \citep{qian2020concentration, bellet2015robustness, xu2012robustness} by using the global maximum of the loss function with the  $\sqrt{K}$
dependence. Allowing $a_i(X)$ to depend on $X$ is critical in our analysis and underpins the improvement from $B$ in Proposition \ref{prop:1} to $\zeta(\Acal_{S})$ in Theorem \ref{thm:1}, in addition to the improvement from $\sqrt{K}$ to $\sqrt{\ln K}$.

        One example of our new multinomial bounds that are non-uniform is the following lemma.
        
        \begin{lemma} \label{lemma:multinomialnew}
                For any $\delta>0$, with probability at least $1-\delta$, 
                $$
                \sum_{i=1}^K a_i(X) \left(p_i- \frac{X_i}{n} \right)\le\left(\sum_{i=1}^K a_i(X) \sqrt{p_i}\right) \sqrt{\frac{2 \ln(K/\delta)}{n}}.
                $$  
        \end{lemma}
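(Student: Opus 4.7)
The plan is to obtain the bound by applying a sharp marginal tail bound to each coordinate $X_i$ separately, then taking a union bound, and finally exploiting the nonnegativity of $a_i(X)$ to pass from a per-coordinate statement to the weighted sum. The key observation that defuses the dependence of $a_i$ on $X$ is that the per-coordinate inequality I will establish holds \emph{pointwise} on a high-probability event of $X$, so multiplying through by any nonnegative quantity --- including one that depends on $X$ --- preserves the inequality, and summing preserves it as well.

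First, I would note that each marginal $X_i$ is $\mathrm{Binomial}(n, p_i)$. I would then invoke the one-sided multiplicative Chernoff bound for the lower tail of a binomial: for every $t \in (0, p_i)$,
\begin{equation*}
\Pr\!\left[\, p_i - \frac{X_i}{n} \ge t \,\right] \le \exp\!\left(-\frac{n t^{2}}{2 p_i}\right).
\end{equation*}
Setting the right-hand side equal to $\delta/K$ and solving for $t$ gives $t = \sqrt{2 p_i \ln(K/\delta)/n}$. This is the step where the improvement over the Bretagnolle--Huber--Carol bound comes in: because the lower-tail bound is pure Chernoff with no Bernstein correction term, there is no additive $1/n$-type remainder and the scaling is cleanly $\sqrt{p_i}$ rather than $\sqrt{1}$.

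Next, I would apply a union bound over $i \in [K]$ to obtain that, with probability at least $1 - \delta$, the event
\begin{equation*}
\mathcal{E} := \left\{\, p_i - \frac{X_i}{n} \le \sqrt{\frac{2 p_i \ln(K/\delta)}{n}} \ \text{ for all } i \in [K]\,\right\}
\end{equation*}
holds. On $\mathcal{E}$, for each $i$ the scalar inequality $p_i - X_i/n \le \sqrt{2 p_i \ln(K/\delta)/n}$ is valid; multiplying both sides by the nonnegative quantity $a_i(X)$ (allowed because both sides of the inequality are fixed numbers given $X$, and $a_i(X) \ge 0$) and summing over $i$ yields
\begin{equation*}
\sum_{i=1}^{K} a_i(X)\!\left(p_i - \frac{X_i}{n}\right) \le \left(\sum_{i=1}^{K} a_i(X)\sqrt{p_i}\right) \sqrt{\frac{2 \ln(K/\delta)}{n}},
\end{equation*}
which is the stated bound. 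The only genuine subtlety is the one just flagged: the argument works because the union bound is taken before $a_i(X)$ enters the picture, turning a potentially delicate question about a weighted sum with data-dependent weights into a routine consequence of a pointwise high-probability event. There is no real obstacle beyond being careful about this order of quantifiers.
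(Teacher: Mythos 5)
Your proposal is correct and follows essentially the same route as the paper: the paper's Lemma \ref{lemma:6} is exactly your per-coordinate lower-tail bound $\PP\left(p_i - X_i/n > M\right) \le \exp\left(-nM^2/(2p_i)\right)$ (derived there as the single-coordinate specialization of a general fixed-coefficient MGF bound, Lemma \ref{lemma:4}, rather than quoted as binomial Chernoff), followed by the same union bound and the same pointwise multiplication by the nonnegative, $X$-dependent weights $a_i(X)$. The "order of quantifiers" observation you flag is precisely the mechanism the paper relies on.
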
        
        By comparing Lemma \ref{lemma:multinomialnew} to  Proposition \ref{prop:multinomialold}, in some range of parameters, we have essentially replaced $\sqrt{K}$ with $\sum_i \sqrt{p_i}$. If $p_i = 1/K$ for all $i$, then we recover \eqref{eq:multinomialold}. Conversely, in the other extreme case, if $p_1 \approx 1$ and the remaining $p_i$ are near zero, then $\sum \sqrt{p_i} \approx 1$. Thus, our result interpolates between these cases, and there is a wide range of possible distributions in which our bound is significantly better than Proposition \ref{prop:multinomialold}.
        
        While Lemma \ref{lemma:multinomialnew} is of independent interest, it is likely difficult to be used directly in machine learning because it depends on the probability distribution $p$, which is typically unknown. To overcome this issue, we further refine Lemma \ref{lemma:multinomialnew} and remove the dependence on $p$. To keep the notation consistent, we write $p_k=\PP(z\in \Ccal_{k})$ and $X_k=\sum_{i=1}^n \one\{z_i\in \Ccal_{k}\}$ with $\{\Ccal_{k}\}_{k=1}^K$ being arbitrary in this subsection. Since $\{\Ccal_{k}\}_{k=1}^K$ is arbitrary here, this still represents general multinomial distributions.

        One of the main contributions of this study is the following refinement of the concentration bound on general multinomial distributions: 
        
        \begin{theorem} \label{lem:weightedmultinomial}
                For any $\delta>0$, with probability at least $1-\delta$,                 \begin{align} \label{eq:new:1}
                        &\sum_{i=1}^K a_i(X) \left(p_i - \frac{X_i}{n}\right) \\ \nonumber &  \le \tilde \Qcal \sqrt{\frac{| \Tcal_S| \ln(2K/\delta)}{n}} + a_{\Tcal_S^c}(X) \frac{2| \Tcal_S|\ln(2K/\delta)}{n},
                \end{align}
        where ${\mathcal{\tilde Q}} =a_{\Tcal_S}(X)\sqrt{2} +a_{\Tcal_S^c}(X) $,  $a_{\Tcal_S}(X)=\max_{i\in \Tcal_S} a_i(X)$ and $a_{\Tcal_S^c}(X)=\max_{i\in \Tcal^c_S} a_i(X)$ with $\Tcal^c_S = [K]\setminus \Tcal_S$.         
        \end{theorem}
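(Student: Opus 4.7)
My plan is to split the sum according to whether each index $k$ lies in $\Tcal_S$ (where $X_k \ge 1$) or in $\Tcal_S^c$ (where $X_k = 0$), and then apply coordinatewise tail bounds to the Binomial marginals $X_k \sim \mathrm{Bin}(n, p_k)$ with a union bound over all $k \in [K]$. Because these per-coordinate inequalities hold simultaneously for every $k \in [K]$ with probability at least $1-\delta$, they remain valid after restriction to the random index sets $\Tcal_S$ and $\Tcal_S^c$; this is precisely what lets me deal with the data-dependent coefficients $a_i(X)$ and the random partition without paying an explicit $\sqrt{K}$ outside of the logarithm.

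Concretely, I would first combine the multiplicative Chernoff lower tail with the Bernstein upper tail and a union bound to obtain, with probability at least $1-\delta$, the simultaneous inequalities $p_k - X_k/n \le \sqrt{2 p_k \ln(2K/\delta)/n}$ and $X_k/n - p_k \le \sqrt{2 p_k \ln(2K/\delta)/n} + \frac{2\ln(2K/\delta)}{3n}$ for every $k \in [K]$. For the $\Tcal_S$-part of the sum, pulling out the pointwise maximum yields $\sum_{k \in \Tcal_S} a_k(X)(p_k - X_k/n) \le a_{\Tcal_S}(X)\sum_{k \in \Tcal_S}(p_k - X_k/n)^+$; plugging in the lower-tail inequality and applying Cauchy-Schwarz $\sum_{k \in \Tcal_S}\sqrt{p_k} \le \sqrt{|\Tcal_S|\sum_{k \in \Tcal_S} p_k} \le \sqrt{|\Tcal_S|}$ produces the contribution $a_{\Tcal_S}(X)\sqrt{2|\Tcal_S|\ln(2K/\delta)/n}$, which is exactly the $a_{\Tcal_S}(X)\sqrt{2}$ term of $\tilde{\mathcal{Q}}$.

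For the $\Tcal_S^c$-part, since $X_k = 0$ there, the sum collapses to $\sum_{k \in \Tcal_S^c} a_k(X)\, p_k \le a_{\Tcal_S^c}(X) \sum_{k \in \Tcal_S^c} p_k$, and the crucial observation is the identity $\sum_{k \in \Tcal_S^c} p_k = 1 - \sum_{k \in \Tcal_S} p_k = \sum_{k \in \Tcal_S}(X_k/n - p_k)$, which follows from $\sum_k X_k = n$ together with $X_k = 0$ on $\Tcal_S^c$. This rewrites the missing mass as a sum indexed only by $\Tcal_S$, at which point the upper-tail inequality and the same Cauchy-Schwarz step deliver $\sum_{k \in \Tcal_S^c} p_k \le \sqrt{2|\Tcal_S|\ln(2K/\delta)/n} + \frac{2|\Tcal_S|\ln(2K/\delta)}{3n}$. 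Multiplying by $a_{\Tcal_S^c}(X)$ supplies the $a_{\Tcal_S^c}(X)\sqrt{|\Tcal_S|\ln(2K/\delta)/n}$ contribution to $\tilde{\mathcal{Q}}$ and the linear term $a_{\Tcal_S^c}(X)\cdot 2|\Tcal_S|\ln(2K/\delta)/n$, completing the stated bound after combining with the $\Tcal_S$-contribution and collecting constants.

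The main obstacle I expect is precisely this missing-mass step: a naive per-coordinate Bernstein applied directly to $k \in \Tcal_S^c$ would bound each $p_k$ by $O(\ln(K/\delta)/n)$ and then sum to $|\Tcal_S^c|\cdot O(\ln(K/\delta)/n)$, which is useless because $|\Tcal_S^c|$ can be as large as $K$ and would reintroduce the unwanted $\sqrt{K}$ factor. The pivotal trick is the identity above, which reindexes the missing mass from $\Tcal_S^c$ to $\Tcal_S$ so that only $|\Tcal_S|$ appears in the final bound; this is exactly what enables the data-dependent improvement the theorem is designed to achieve.
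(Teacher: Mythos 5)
Your proposal is correct and follows essentially the same route as the paper: the same split of the sum over $\Tcal_S$ versus $\Tcal_S^c$, the same pivotal reindexing of the missing mass $\sum_{k\in\Tcal_S^c} p_k = 1-\sum_{k\in\Tcal_S}p_k=\sum_{k\in\Tcal_S}(X_k/n - p_k)$ (which is exactly the ``novel observation'' the paper highlights, phrased there as converting an upper bound on $\sum_{k\notin\Tcal_S}p_k$ into a lower bound on $\sum_{k\in\Tcal_S}p_k$), and the same combination of union-bounded per-coordinate binomial tails with Cauchy--Schwarz over $\Tcal_S$. The only deviation is cosmetic: your single Bernstein upper tail produces $\sqrt{2}\,a_{\Tcal_S^c}(X)$ in place of $a_{\Tcal_S^c}(X)$ inside $\tilde\Qcal$ (and $2/3$ in place of $2$ in the linear term), whereas the paper obtains its stated constants via a case split on whether $p_k$ exceeds $\ln(2K/\delta)/(4n)$; the structure and all essential ideas are identical.
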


        Further results of this nature and their technical proofs can be found in Sections \ref{appendix:fixeda} and \ref{appendix:dependenta} of the Appendix. 

\subsection{Discussion and Extensions }
        \subsubsection{Proof Ideas and Challenges}
        The proof of Theorem \ref{thm:1} can be divided into three phases.  In the first, we prove (a stronger version of) Lemma \ref{lemma:multinomialnew}. Next, we invoke Lemma \ref{lemma:multinomialnew} to prove Theorem \ref{lem:weightedmultinomial}. Finally, we deduce Theorem \ref{thm:1} from Theorem \ref{lem:weightedmultinomial}. Thus, the first obstacle is to establish Lemma \ref{lemma:multinomialnew}, which supplants Proposition \ref{prop:multinomialold}. After this key step,  the next challenge lies in going from Lemma \ref{lemma:multinomialnew} to Theorem \ref{lem:weightedmultinomial}, which requires us to remove the $\sum_i \sqrt{p_i}$ term in Lemma \ref{lemma:multinomialnew} without incurring the  $\sqrt{K}$ dependence. For example,  if we naively  bound $\sum_i \sqrt{p_i}$ with the Cauchy--Schwarz inequality,
we have that 
\begin{align*}
\sum_{i=1}^K \sqrt{p_i}  \le \sqrt{\sum_{i=1}^K p_i}  \sqrt{\sum_{i=1}^K 1} =  \sqrt{K}. 
\end{align*}
Similarly, if we apply Jensen's inequality, which relies on the concavity of the square root function in this domain, we find that   
\vspace{-8pt}
\begin{align*} 
\frac{1}{K}\sum_{i=1}^K \sqrt{p_i}  \le \sqrt{\frac{1}{K}\sum_{i=1}^K  p_i}  =  \sqrt{\frac{1}{K}}, 
\end{align*}        
which again implies  that $\sum_{i=1}^K \sqrt{p_i} \le  \sqrt{K}$. Thus, both approaches reproduce the $\sqrt{K}$ dependence, illustrating the challenges of removing  the $\sum_i \sqrt{p_i}$ term without the  $\sqrt{K}$ dependence. Our novel observation is to first decompose the sum as
\vspace{-0pt}
\begin{align} \label{eq:idea:1}
\sum_{i=1}^K  \left(p_i - \frac{X_i}{n}\right) &=\sum_{i\in \Tcal_S}  \left(p_i - \frac{X_i}{n}\right)  + \sum_{i \notin \Tcal_S} p_i
\\ \nonumber & =\sum_{i\in \Tcal_S}  \left(p_i - \frac{X_i}{n}\right)  +\left(1- \sum_{i\in \Tcal_S} p_i\right), 
\end{align} 
and find an upper bound on the second term $1- \sum_{i\in \Tcal_S} p_{i}$ by using a \textit{lower} bound on $\sum_{i\in \Tcal_S}  \left(p_i - \frac{X_i}{n}\right)$. That is, if $ \sum_{i\in \Tcal_S}  \left(p_i - \frac{X_i}{n}\right)\ge  -c$ for some $c>0$, then $1- \sum_{i\in \Tcal_S} p_{i}\le1+c- \sum_{i\in \Tcal_S}\frac{X_i}{n}=c$. The second line of \eqref{eq:idea:1} is a conceptually crucial step where we convert the question of \textit{upper} bounding $ \sum_{i \notin \Tcal_S} p_i$  to that of \textit{lower} bounding $\sum_{i\in \Tcal_S} p_i$. If we directly upper bound $\sum_{i \notin \Tcal_S} p_i$, it will sustain a cost of $\sqrt{K-|\Tcal_S|}$, resulting in a $\sqrt{K}$ dependence. This step of decomposing  the sum and of converting the upper bound to a lower bound is designed to avoid the $\sqrt{K}$ dependence. Now, the problem has been reduced to finding tight lower and upper bounds on these quantities, which is still nontrivial and is described in Appendices  \ref{appendix:fixeda} and \ref{appendix:dependenta}.

        \subsubsection{Pseudo-Robustness}

        \citet{xu2012robustness} also introduced a more general notion of \emph{pseudo-robustness} which relaxes algorithmic robustness by only requiring the nearness of the loss functions to hold for a subset of the training samples:        \begin{definition}
                A learning algorithm $\Acal$ is\textit{ $(K,\epsilon(\cdot), \hn(\cdot))$ pseudo-robust }, for $K \in \NN,$ $\epsilon(\cdot):\Zcal^n \rightarrow \RR$, and $\hn(\cdot):\Zcal^n \rightarrow \{1,\dots,n\}$, where $\Zcal$ can be partitioned into $K$ disjoint sets, denoted by $\{\Ccal_k\}_{k=1}^K$, such that for all $S \in \Zcal^n$, there exists a subset of training samples $\hS$ with $|\hS|=\hn(S)$ and the following holds:
                $
                \forall s \in \hS, \forall z \in \Zcal, \forall k=1,\dots,K: \text{ if } s,z \in \Ccal_k, 
                \text{ then } |\ell(\Acal_S, s)-\ell(\Acal_S, z)| \le\epsilon(S). $
        \end{definition}         
For pseudo-robustness, we have proved the following analog of Theorem \ref{thm:1}:
        
        \begin{theorem} 
                If the learning algorithm $\Acal$ is $(K,\epsilon(\cdot), \hn(\cdot))$ is pseudo-robust (with $\{\Ccal_k\}_{k=1}^K$),
                then for any $\delta>0$, with probability at least $1-\delta$ over an
                iid draw of $n$ examples $S=(z_{i})_{i=1}^n$, the following holds: 
                \begin{align*}
                        \EE_{z}&[\ell(\Acal_S,z)] \le\frac{1}{n} \sum_{i=1}^n \ell(\Acal_S, z_i)+\frac{\hn(S)}{n}\epsilon(S) \\
                        &\qquad + \frac{n-\hn(S)}{n} \hzeta(\Acal,S) \\ 
                        &\qquad +\zeta(\Acal_S) \Bigg((\sqrt{2}+1) \sqrt{\frac{|\Tcal_{S}|\ln(2K/\delta)}{n}} \\
                        &\qquad \qquad \qquad \qquad \qquad + \frac{2|\Tcal_{S}|\ln(2K/\delta)}{n}\Bigg), \end{align*}
                where $\hzeta(\Acal,S):=\max_{(k, i) \in [K] \times [n] }\left|\alpha_{k}(\Acal_S)-\ell(\Acal_S ,z_{i}) \right|$, $$\zeta(\Acal_S):=\max_{k \in [K]}\EE_{z}[\ell(\Acal_S,z)|z\in \Ccal_{k}]$$ and $$\Tcal_{S}:=\{k\in[K]: |\Ical_{k}^{S}|\ge 1\}$$ with $\Ical_{k}^S:=\{i\in[n]: z_{i}\in \Ccal_{k}\}$.
        \end{theorem}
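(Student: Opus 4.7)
The plan is to mimic the proof of Theorem~\ref{thm:1}, replacing the robustness condition by pseudo-robustness and absorbing the ``bad'' indices $i \notin \hS$ into an additive term involving $\hzeta(\Acal,S)$. Writing $k(i)$ for the unique index with $z_i \in \Ccal_{k(i)}$ and $X_k = |\Ical_k^S|$, I would first decompose
\begin{align*}
\EE_{z}[\ell(\Acal_S,z)] - \frac{1}{n}\sum_{i=1}^n \ell(\Acal_S, z_i)
&= \sum_{k=1}^K (p_k - X_k/n)\,\alpha_k(\Acal_S) \\
&\quad + \frac{1}{n}\sum_{i=1}^n \bigl(\alpha_{k(i)}(\Acal_S) - \ell(\Acal_S,z_i)\bigr),
\end{align*}
since $\EE_z[\ell(\Acal_S,z)] = \sum_k p_k \alpha_k(\Acal_S)$ and $\frac{1}{n}\sum_i \alpha_{k(i)}(\Acal_S) = \sum_k \frac{X_k}{n}\alpha_k(\Acal_S)$.

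For the second sum I would split the indices into $\hS$ and its complement. For $i \in \hS$, the pseudo-robustness condition gives $|\ell(\Acal_S,z_i)-\ell(\Acal_S,z)| \le \epsilon(S)$ for every $z \in \Ccal_{k(i)}$; taking the conditional expectation over $z \mid z\in \Ccal_{k(i)}$ yields $|\alpha_{k(i)}(\Acal_S)-\ell(\Acal_S,z_i)| \le \epsilon(S)$. For $i \notin \hS$, the same difference is bounded by $\hzeta(\Acal,S)$ by definition. Summing contributes exactly $\frac{\hn(S)}{n}\epsilon(S) + \frac{n-\hn(S)}{n}\hzeta(\Acal,S)$.

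For the first sum, I would apply Theorem~\ref{lem:weightedmultinomial} with $a_k(X) := \alpha_k(\Acal_S)$; note that although $\Acal_S$ depends on the whole sample, $\alpha_k(\cdot)$ is a conditional expectation and so $a_k(X)\le \zeta(\Acal_S):=\max_{k\in[K]} \alpha_k(\Acal_S)$. This bounds both $a_{\Tcal_S}(X)$ and $a_{\Tcal_S^c}(X)$ by $\zeta(\Acal_S)$, whence $\tilde{\Qcal} \le (\sqrt{2}+1)\zeta(\Acal_S)$, producing the stated $\zeta(\Acal_S)\bigl((\sqrt{2}+1)\sqrt{|\Tcal_S|\ln(2K/\delta)/n}+2|\Tcal_S|\ln(2K/\delta)/n\bigr)$ term. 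Combining the two bounds gives the claim.

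The only genuine subtlety, and the step I would be most careful about, is that $a_k(X)=\alpha_k(\Acal_S)$ is indeed a function of $X$ (through $\Acal_S$), so the data-dependent concentration result of Theorem~\ref{lem:weightedmultinomial} is actually needed rather than a fixed-weights bound; once that is invoked, everything else is a routine combination with the deterministic pseudo-robustness bookkeeping above. Accounting bookkeeping aside, no new probabilistic ingredient beyond Theorem~\ref{lem:weightedmultinomial} is required, which is exactly why the proof yields the same $|\Tcal_S|\ln K$ dependence as Theorem~\ref{thm:1}, augmented only by the deterministic $\frac{n-\hn(S)}{n}\hzeta(\Acal,S)$ slack that measures how far pseudo-robustness falls short of full robustness.
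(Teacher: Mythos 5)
Your proposal is correct and follows essentially the same route as the paper: the same decomposition into a multinomial concentration term plus a per-sample conditional-expectation deviation term (the paper's Lemma~\ref{lemma:1}), the same split of indices into $\hS$ and its complement to obtain $\frac{\hn(S)}{n}\epsilon(S)+\frac{n-\hn(S)}{n}\hzeta(\Acal,S)$ (the paper's Lemma~\ref{lemma:11}), and the same application of Theorem~\ref{lem:weightedmultinomial} with $a_k(X)=\alpha_k(\Acal_S)$ bounded by $\zeta(\Acal_S)$. The subtlety you flag about the $X$-dependence of the weights is exactly the point the paper's data-dependent concentration bound is designed to handle, so nothing is missing.
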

        
        Theorem \ref{thm:2} exhibits an analogous generalization. A precise statement and proof can be found in Appendix \ref{appendix:pseudo}. These theorems offer concomitant improvements to the pseudo-robustness bounds attained in \citep{xu2012robustness}.

        \section{Examples} \label{sec:examples}

        Our contributions augment the abstract framework of algorithmic robustness. We do not use application-dependent information nor do we append any restrictive assumptions, and, therefore, can deduce improvements to the many applications that employ the structure of algorithmic robustness. After presenting known examples in Section \ref{sec:example:robust}, we provide new theoretical comparisons via examples in Section \ref{sec:example:comparison}.

        \subsection{Robust Algorithms} \label{sec:example:robust}

        Although our Theorems \ref{thm:1} and \ref{thm:2} are applicable to a wide range of applications, this section provides only a few simple examples from \citep{xu2012robustness} to which our Theorems \ref{thm:1} and \ref{thm:2} can be applied. When we have the decomposition $z \in \Zcal = \Xcal \times \Ycal$ with $\Xcal$ as the input space and $\Ycal$ as the output space, we let $z^{(x)}\in \Xcal$ and $z^{(y)}\in \Ycal$ denote the $\Xcal$ and $\Ycal$ components of a sample point $z$, respectively, with  $z=(z^{(x)}, z^{(y)})$. 
We also write
        $S=(s_1,\dots,s_n)$.

        \begin{example} \citep{xu2012robustness}
                (Lipschitz continuous functions) Broad classes of learning problems are set in spaces with natural metrics. If the loss function is Lipschitz, which is a simple and natural condition, then the algorithm is robust. More precisely, if $\Zcal$ is compact with regarding a metric $\rho$ and $\ell(\Acal_S, \cdot)$ is Lipschitz continuous with Lipschitz constant $c(S)$, that is,
                $$
                |\ell(\Acal_S, z) - \ell(\Acal_S, z')| \le c(S)\rho(z,z'), \quad \forall z,z' \in \Zcal, 
                $$ 
                then $\Acal$ is $(\Ncal(\gamma/2,\Zcal, \rho), c(S)\gamma)$-robust for all $\gamma>0$.
        \end{example}
        
        \begin{example} \citep{xu2012robustness} (Lasso) \label{example:lasso}
                Lasso is a workhorse of modern machine learning \cite{hastie2019statistical}. We assume that $\Zcal$ is compact, and we use the loss function $\ell(\Acal_S, z) = |z^{(y)} - \Acal_{S}(z^{(x)})|$. Lasso can be formulated as an optimization problem.
                \[
                \mini_w: \frac{1}{n} \sum_{i=1}^n (s_i^{(y)} - w^\top s_i^{(x)})^2 + c \|w\|_1.
                \]
                This algorithm is $(\mathcal{N}(\nu/2, \Zcal, \|\cdot\|_{\infty}), \nu (\frac{1}{n} \sum_{i=1}^n (s_i^{(y)})^2)/c \allowbreak + \nu)$-robust for all $\nu > 0$. 
                
        \end{example}
        
        \begin{example} \citep{xu2012robustness} (Principal Component Analysis) 
        For $\Zcal \subset \RR^m$, a set with the maximum $\ell_2$ norm bounded by $B$. If we use the loss function  \vspace{-8pt}
        \[
        \ell((w_1, \dots, w_d), z) = \sum_{j=1}^d (w^\top_j z)^2, \vspace{-8pt}
        \] 
        then finding the first $d$ principal components via the optimization problem: \vspace{-8pt}
        \[
        \text{Maximize: } \sum_{i=1}^n \sum_{j=1}^d (w^\top_j s_i)^2 \vspace{-3pt}
        \]
        with the constraint that $\|w_j\|_2 = 1$ and $w_i^\top w_j = 0$ for $i \neq j$ is $(\Ncal (\gamma/2, \Zcal, \|\cdot \|_2), 2 d \gamma B)$-robust, for all $\gamma > 0$. 

        \end{example}

        Theorems \ref{thm:1} and \ref{thm:2} can be used as a black box mathematical tool  for many more of the existing applications cataloged in the introduction. 
        
       \subsection{Theoretical Comparisons} \label{sec:example:comparison}

       Here, we further  demonstrate the advantage of using our bounds in Section \ref{sec:main} over the bounds provided by \citet{xu2012robustness} and the bounds obtained via uniform stability \citep{bousquet2002stability}, using Lasso, least square regression, neural networks, and discrete-valued neural communication as examples.

   The first example demonstrates that when the data are embedded with high probability on a low-dimensional manifold in the data space,  and our bound is much stronger than that of \citep{xu2012robustness}: 
        
        \begin{example}[Lasso] \label{example:2} Recall that in Example \ref{example:lasso}, Lasso is $(\mathcal{N}(\nu/2, \Zcal, \|\cdot \|_{\infty}), \nu (\frac{1}{n} \sum_{i=1}^n (s_i^{(y)})^2)/c + \nu)$-robust for all $\nu > 0$. Consider $z^{(y)}\in\mathbb{R}$ and $z^{(x)}\in\mathbb{R}^d$. Given any $\nu>0$, let $z$ follow a distribution $\mathcal{D}_z$, such that $z^{(x)}=({x^{(1)}}^\top,{x^{(2)}}^\top)^\top$, where $x^{(1)}\sim N(0,I_p)|_{[-1,1]^p}$ (truncated Gaussian on $[-1,1]^p$), $x^{(2)}\sim N(\mu,\sigma^2I_r)|_{[-1,1]^{r}}$, and $r=d-p$, $z^{(y)}= w^{*\top}z^{(x)}$, where $\|w^*\|_1\le 1$. 
        For sufficiently small $\sigma$, we can check that the $\nu$-covering of the data space $[-1,1]^d$ satisfies Proposition \ref{p:pkdecay}, with $\beta=(2/\nu)^{p}$ and $\alpha=2$. As a consequence, we have that $|\Tcal_{\mathcal S}|=\Theta((2/\nu)^{p+1})$ with a probability of at least $1-\delta$. Since $\mathcal{N}(\nu/2, \Zcal, \|\cdot \|_{\infty})=\Theta((2/\nu)^{d+1})$, there exists $D, N$ such that for any $d>D$ and $n>N$, when $d\gg p$,  there exists $(\mu,\sigma)$ such that  the bound in Theorem 1 is much tighter than that in Proposition 1 as $|\Tcal_{\mathcal S}|=\Theta((2/\nu)^{p+1})\ll\Theta((2/\nu)^{d+1})=\mathcal{N}(\nu/2, \Zcal, \|\cdot \|_{\infty})$.  See Appendix \ref{sec:theory} for more details. \end{example}

   In the next example, we can see that our bound is much tighter than the bound obtained via uniform stability when there are outliers in the data: 
        \begin{example}[Regularized least square regression] \label{example:1}
        We refer to the example in \citep{bousquet2002stability} for regularized least squares regression. Specifically, $z^{(y)}\in [0,B]$ and $z^{(x)}\in[0,1]$. The regularized least squares regression is defined as 
        $\mathcal{A}_{S}=\argmin_w\frac{1}{n}\sum_{i=1}^n\ell(w,z_i)+\lambda|w|^2, $ 
        where $\ell(w,z)=(w\cdot z^{(x)} -z^{(y)})^2$ and $w\in \mathbb{R}$.
 For this example,       \citet{bousquet2002stability} observe that
        $0\le \ell(w,z)\le \sqrt{B/\lambda},$
        and establish the stability bound
        $\beta\le \frac{2B^2}{\lambda n}.$
        Now, consider the following distribution on $z$: $z^{(y)}=w^*\cdot z^{(x)}+\epsilon\bm1(|\epsilon|<B)$. In addition, $z^{(x)}$ follows a continuous distribution on $[0,1]$, for instance, a uniform distribution on $[0,1]$, and $\epsilon\sim N(0,\sigma^2)$. Without loss of generality, let $w^*=1$. In this example, we can possibly observe some large outlier with small probability. One can check that by suitably chosen $\sigma$, Proposition \ref{p:pkdecay} holds with $\beta=2/\nu$ and $\alpha=2$. Thus, there exists a threshold $N$ such that for any $n>N$, there exists $\nu>0,\sigma>0$ and $\delta>0$,  with a probability of at least $1-\delta$, $|\mathcal T_S|=\Theta(2\nu)$. Thus, if $B^2/\lambda\gg 2/\nu$, the bound in Theorem 1 is a far more precise bound than that obtained via uniform stability. For details of the proof, we refer the reader to Appendix \ref{sec:theory}.
     \end{example}  

\begin{figure*}[!ht]
\begin{subfigure}[b]{0.24\textwidth}
  \includegraphics[width=\textwidth, height=0.8\textwidth]{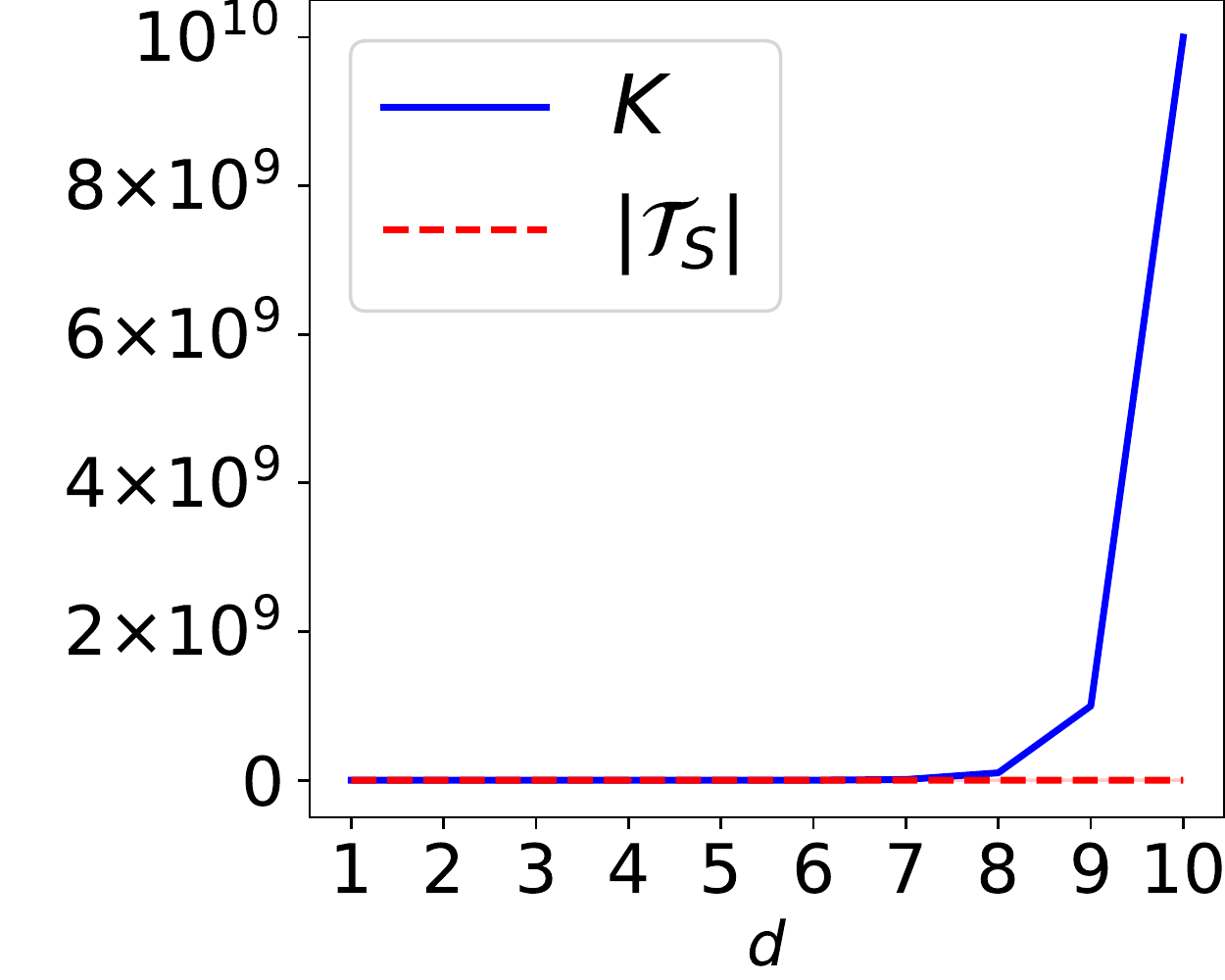}
  \vspace{-15pt}
  \caption{Beta(0.1, 0.1)} 
  \vspace{-8pt}
\end{subfigure}
\begin{subfigure}[b]{0.24\textwidth}
  \includegraphics[width=\textwidth, height=0.8\textwidth]{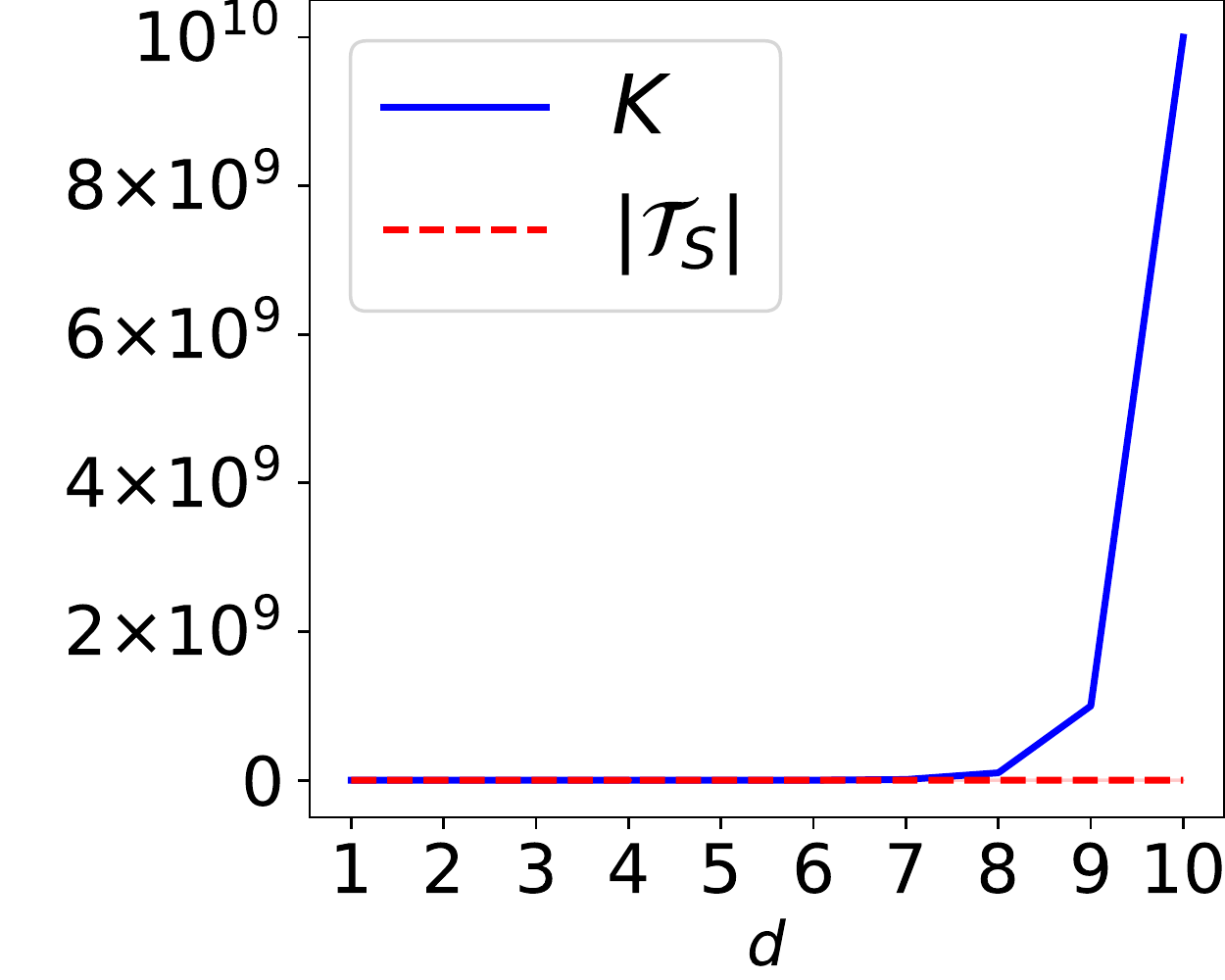}
  \vspace{-15pt}
  \caption{Beta(0.1, 10)} 
  \vspace{-8pt}
\end{subfigure}
\begin{subfigure}[b]{0.24\textwidth}
  \includegraphics[width=\textwidth, height=0.8\textwidth]{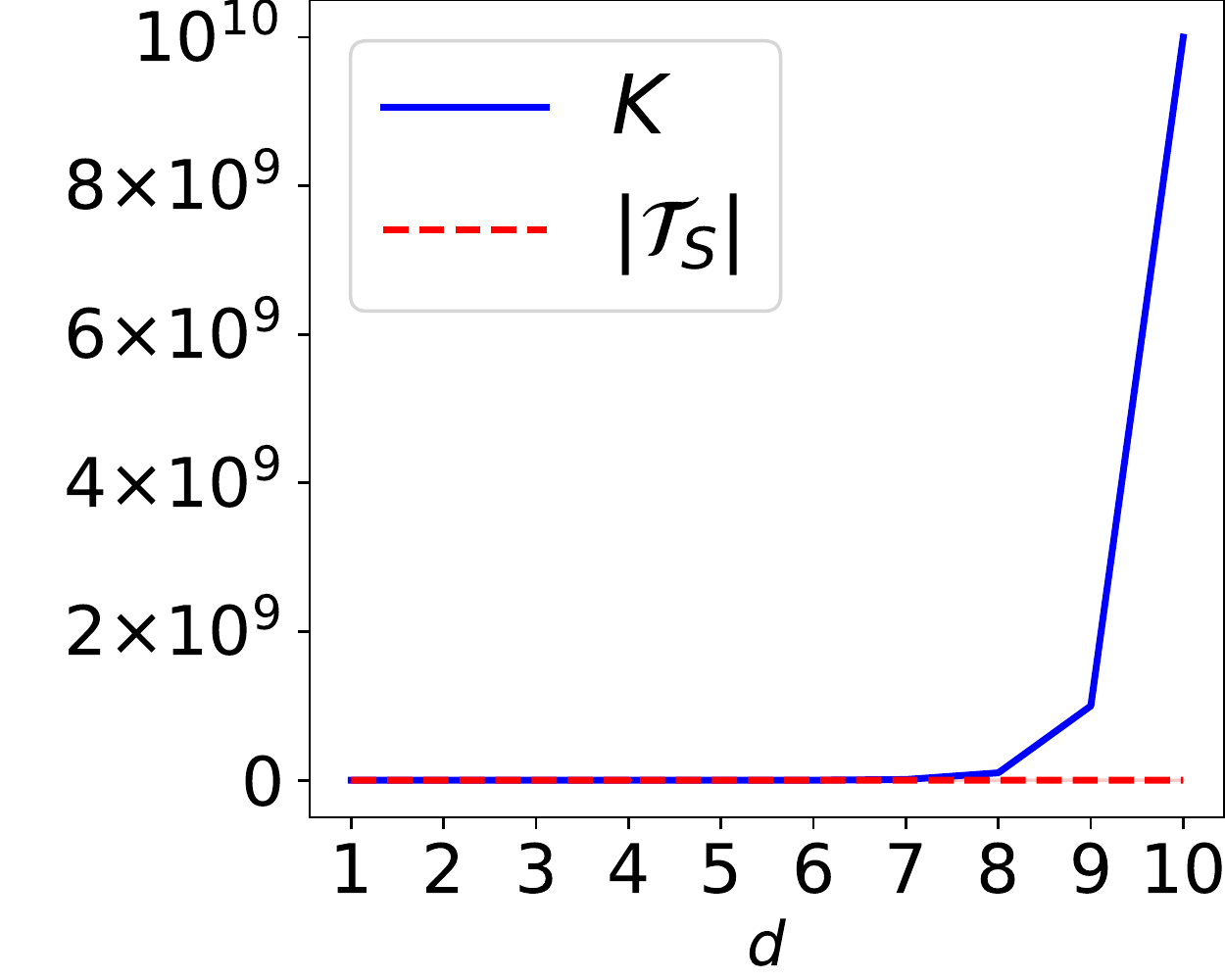}
  \vspace{-15pt}
  \caption{Gauss mix (0.01)} 
  \vspace{-8pt}
\end{subfigure}
\begin{subfigure}[b]{0.24\textwidth}
  \includegraphics[width=\textwidth, height=0.8\textwidth]{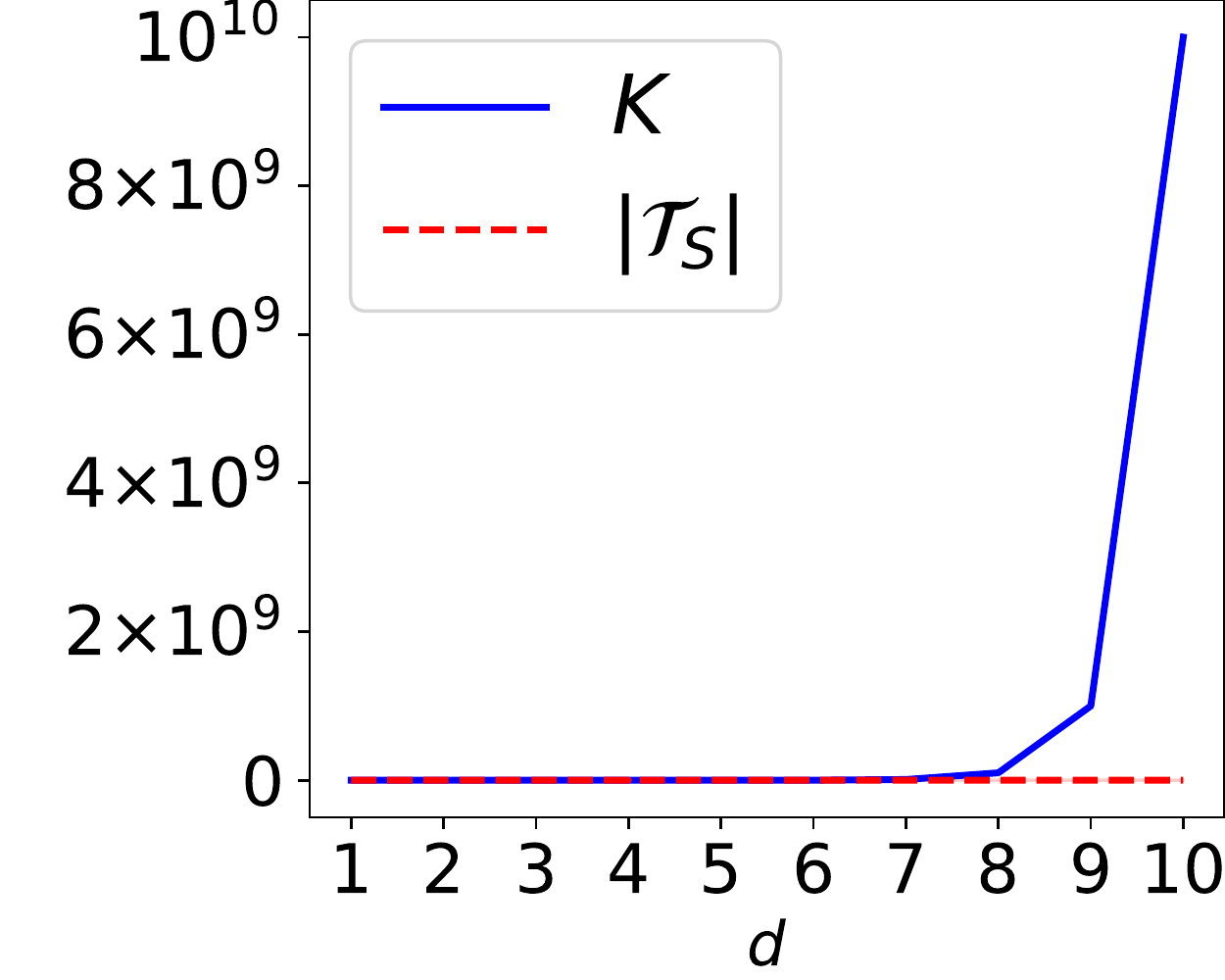}
  \vspace{-15pt}
  \caption{Gauss mix (1.0)} 
  \vspace{-8pt}
\end{subfigure}
\caption{The values of $K$ versus $|\Tcal_S|$ with synthetic data and the $\epsilon$-covering of the original space. The plot shows the mean of 10 random trials. The value of $K$ increases exponentially as $d$ increases linearly, whereas $|\Tcal_S|$ does not. } \label{fig:1} 
\end{figure*}        
\begin{figure*}[!ht]
\begin{subfigure}[b]{0.24\textwidth}
  \includegraphics[width=\textwidth, height=0.8\textwidth]{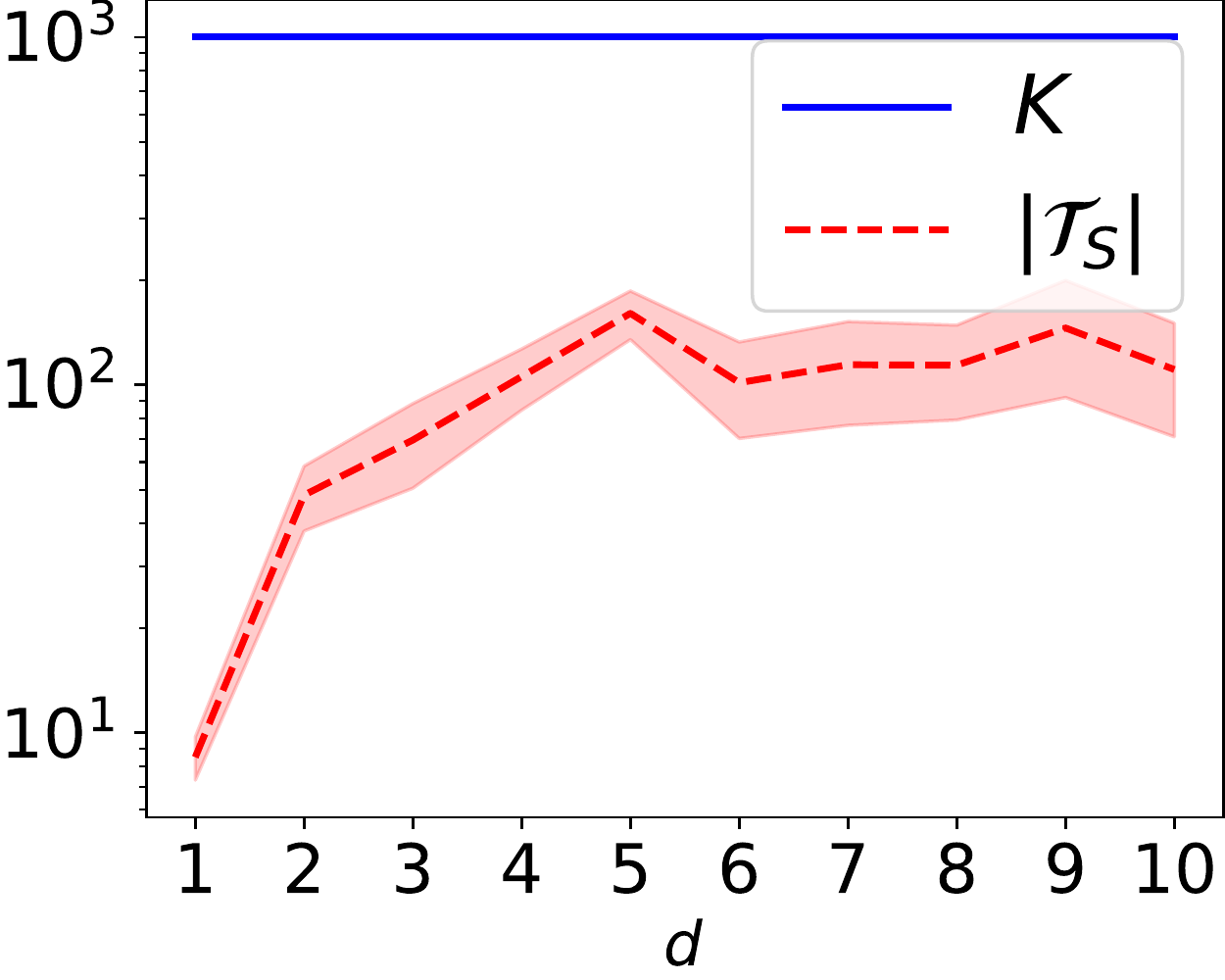}
  \vspace{-15pt}
  \caption{Beta(0.1, 0.1)} 
  \vspace{-8pt}
\end{subfigure}
\begin{subfigure}[b]{0.24\textwidth}
  \includegraphics[width=\textwidth, height=0.8\textwidth]{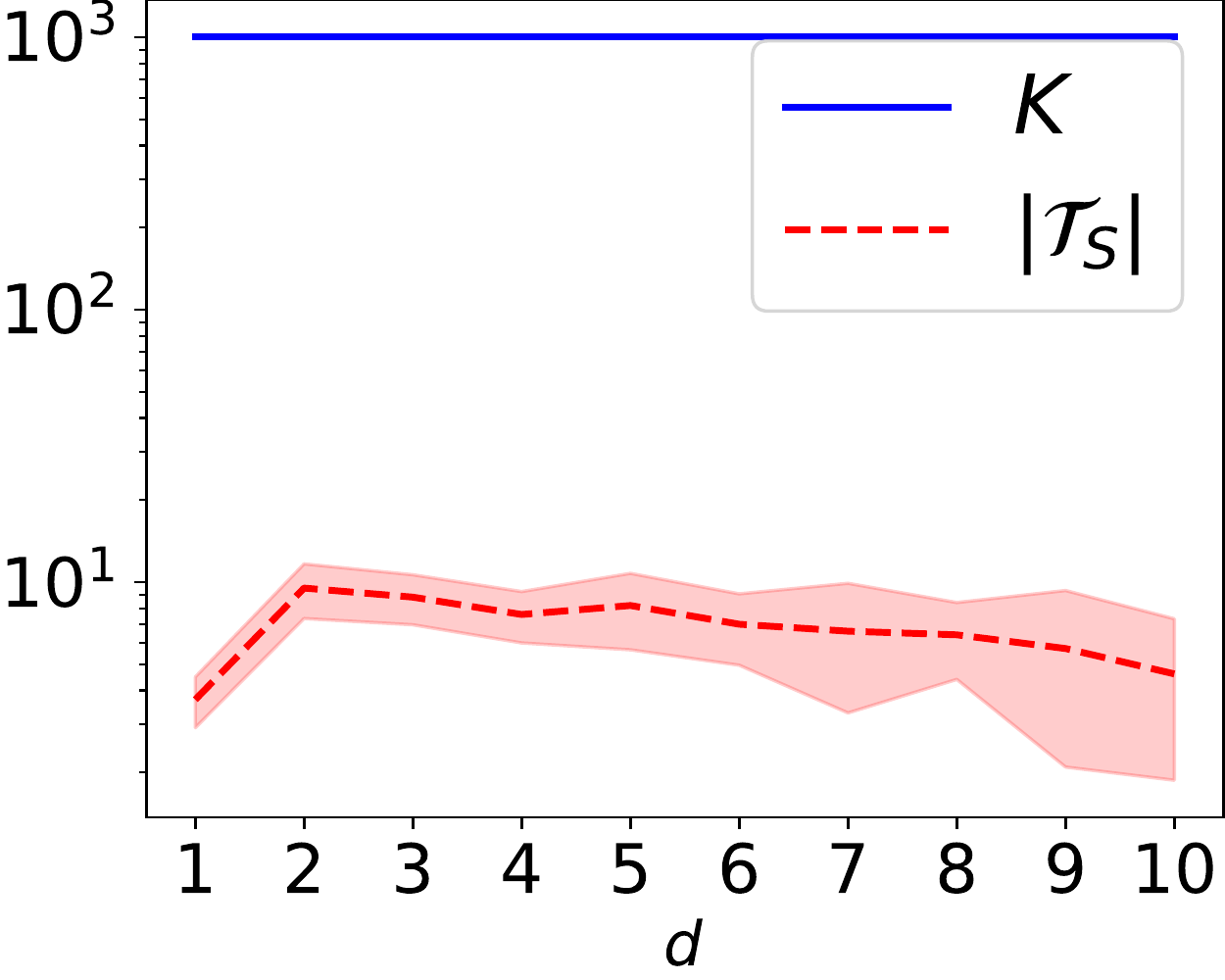}
  \vspace{-15pt}
  \caption{Beta(0.1, 10)} 
  \vspace{-8pt}
\end{subfigure}
\begin{subfigure}[b]{0.24\textwidth}
  \includegraphics[width=\textwidth, height=0.8\textwidth]{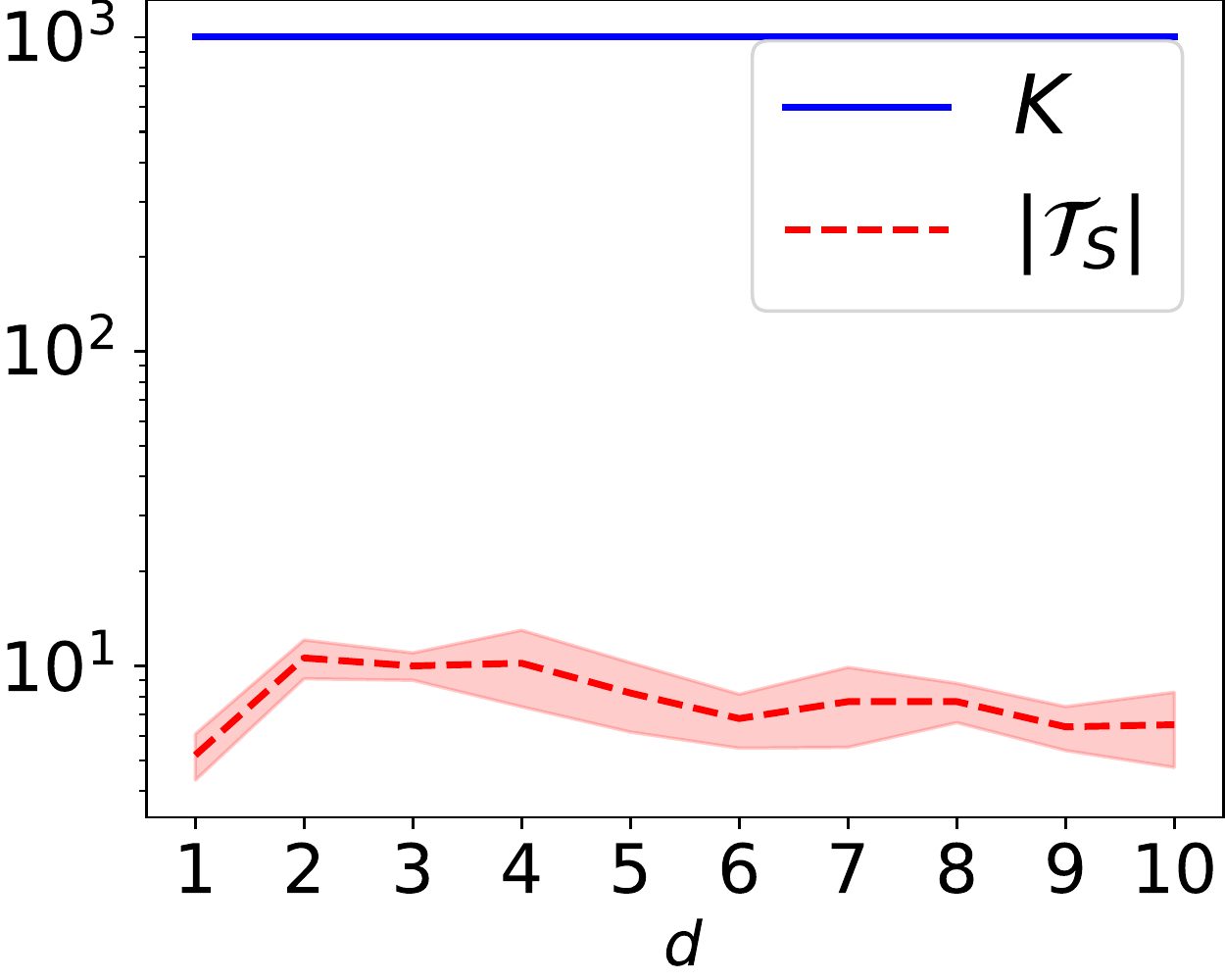}
  \vspace{-15pt}
  \caption{Gauss mix (0.01)} 
  \vspace{-8pt}
\end{subfigure}
\begin{subfigure}[b]{0.24\textwidth}
  \includegraphics[width=\textwidth, height=0.8\textwidth]{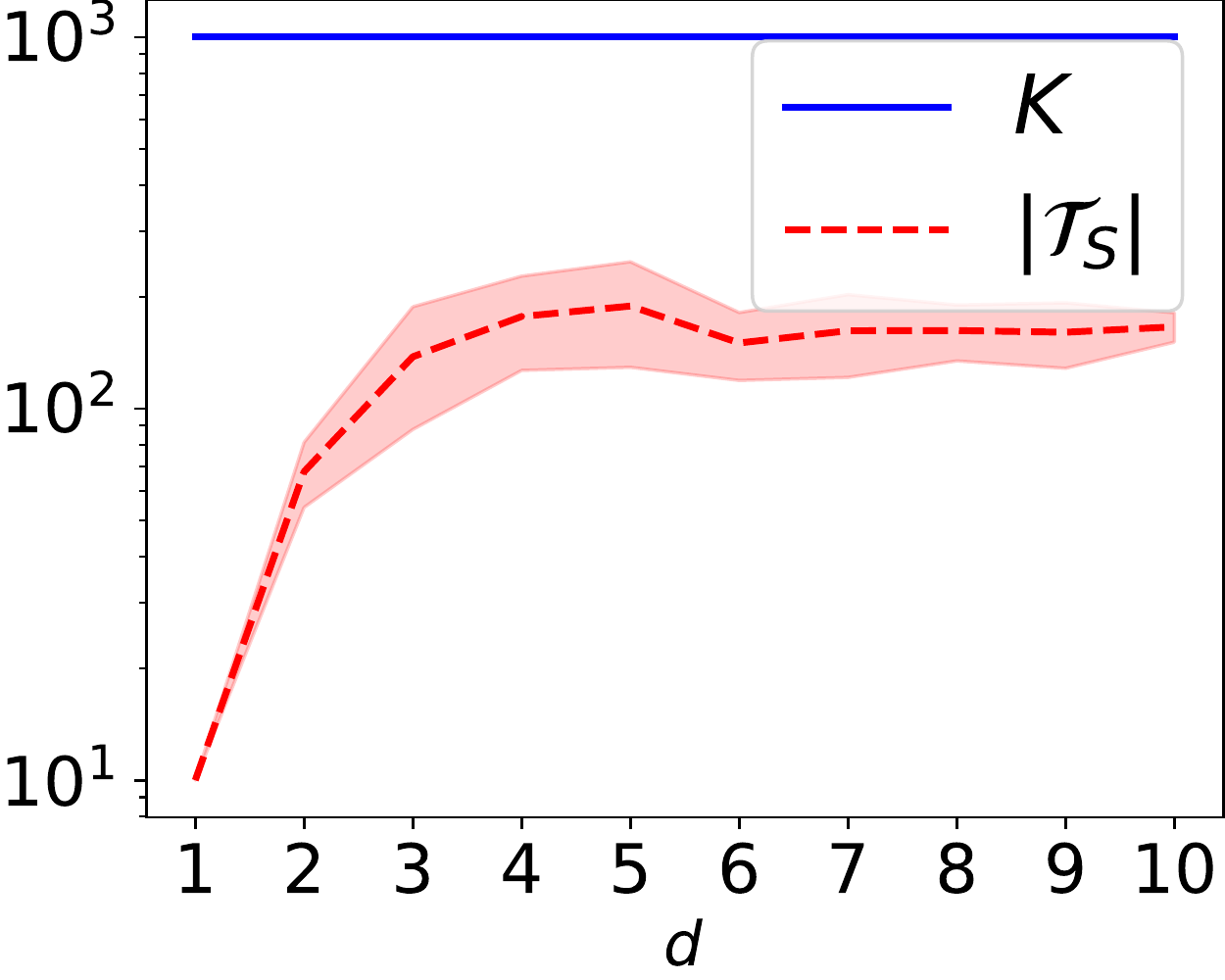}
  \vspace{-15pt}
  \caption{Gauss mix (1.0)} 
  \vspace{-8pt}
\end{subfigure}
\caption{The values of $K$ versus $|\Tcal_S|$ with synthetic data and the inverse image of the $\epsilon$-covering in randomly projected spaces. The plot shows the mean and one standard deviation of 10 random trials. We still have $|\Tcal_S|<K$ with random projections to reduce $K$.} \label{fig:2} 
\end{figure*}           

     Although Example \ref{example:1} compares our robustness bound  and the uniform stability bound, we emphasize that robustness and stability are very different properties and have distinct strengths and weaknesses: i.e., one setting prefers the robustness framework and another setting favors the stability approach. For example, a learning algorithm may be robust but not stable, e.g., lasso regression \citep{xu2009robustness}, and vice versa. Accordingly, this paper focuses on the fundamental advancements of the robustness framework and of the statistical bound for general multinomial distributions.

Furthermore, the following examples illustrate some of the immediate  improvements   for robust margin deep neural networks and discrete-valued neural communication:
\begin{example}[Robust  margin deep  neural networks] \label{example:new:1}
 The previous paper \citep{sokolic2017robust} uses Proposition \ref{prop:1} (in our paper) with the  $\epsilon$-covering of $\Xcal$ for robust  margin  neural networks. Our new theorem (Theorem \ref{thm:1} or Theorem \ref{thm:2})  immediately improve their bounds  by replacing  $K=\frac{2^{k+1} (C_M)^k}{\gamma^k_b}$ in their bounds with $|\Tcal_S|$. In our paper, the comparison of $K$ v.s.  $|\Tcal_S|$ for  the  $\epsilon$-covering of $\Xcal$ is shown in Figure \ref{fig:3}  for the real-life datasets. By plugging these values of  $K$ and  $|\Tcal_S|$  into the previous bounds  and our versions, we  yield  exponential improvements over the previous bounds for  robust  margin  deep neural networks.  
\end{example}

\begin{example}[Discrete-valued neural communication] \label{example:new:2}
The  bound in (previous) theorem 3 of the recent paper on \textit{discrete-valued neural communication} \citep{liu2021discrete} scales at the rate of $L^G$ (which is the size of the discrete bottleneck). Since their proof uses Proposition \ref{prop:multinomialold} (in our paper) to bound the left-hand-side of \eqref{eq:new:1} (in our Theorem \ref{lem:weightedmultinomial}), by applying our Theorem \ref{lem:weightedmultinomial}, we yield an improvement by replacing $L^G$ with $|\Tcal_S|$ for discrete-valued neural communication.    
\end{example}

        \section{Experiments} \label{sec:experiments}

This section establishes the advantage of our new bounds via experiments using both synthetic data and real-world data. We generated synthetic data by sampling from beta distributions and Gaussian mixture distributions with a variety of hyperparameters. For real-world data, we adopted the standard benchmark datasets: MNIST \citep{lecun1998gradient}, CIFAR-10 \allowbreak and CIFAR-100 \citep{krizhevsky2009learning},  SVHN \citep{netzer2011reading}, Fashion-MNIST (FMNIST) \citep{xiao2017fashion}, Kuzushiji-MNIST (KMNIST) \citep{clanuwat2019deep}, and Semeion \citep{srl1994semeion}. 

The value of $\epsilon(S)$ is exactly the same for the previous bound and our bound in all the experiments. To choose the partition $\{\Ccal_k\}_{k=1}^K$, in addition to other examples, we used the $\epsilon$-covering of the original input space $\Xcal$ as our primary example, as this is the default option in \citep{xu2012robustness}. The data space is normalized such that $\Xcal \subseteq [0,1]^d$ for the dimensionality $d$ of each input data. Accordingly, we used the infinity norm and a diameter of $0.1$ for the $\epsilon$-covering in all experiments. See Appendix \ref{app:exp} for more details on the experimental setup.

\subsection{Synthetic data} \label{sec:exp:synthetic}
 Figure \ref{fig:1} shows the values of $K$ and $|\Tcal_S|$ for the synthetic data with the partition $\{\Ccal_k\}_{k=1}^K$ being the $\epsilon$-covering of $\Xcal$. Here, Beta($\alpha$, $\beta$) indicates the Beta distribution with hyper-parameters $\alpha$ and $\beta$, and Gauss mix ($\sigma$) means the mixture of five Gaussian distributions with a standard deviation $\sigma$. 
Appendix \ref{app:exp} presents more results with different distributions, showing the same qualitative behavior in all cases.

While the $\epsilon$-covering of the original input space $\Xcal$ is the default example from the previous paper \citep{xu2012robustness}, in Figure \ref{fig:1} we see that $K$ grows rapidly as $d$ increases. Therefore, to reduce $K$ significantly, we also propose utilizing the inverse image of the $\epsilon$-covering in a randomly projected space. That is, given a random matrix $A$, we use the $\epsilon$-covering of the space of $u=Ax$ to define the pre-partition $\{\tilde \Ccal_k\}_{k=1}^K$. Then, the partition $\{\Ccal_k\}_{k=1}^K$ is defined by $\Ccal_k= \{x \in \Xcal : Ax \in \tilde \Ccal_{k}\}$. We randomly generated matrix $A \in \RR^{3 \times d}$ in each trial.

Figure \ref{fig:2} shows the values of $K$ versus $|\Tcal_S|$ for the synthetic data with the partition $\{\Ccal_k\}_{k=1}^K$ being the inverse image of the $\epsilon$-covering in randomly projected spaces. As can be seen, even in the case where $K$ is reduced via random projection, we have $|\Tcal_S| \ll K$. Thus, in both cases, our bounds are significantly tighter than the previous bound for these synthetics data.

\subsection{Real-world data}

Figure \ref{fig:3} shows the values of $K$ versus $|\Tcal_S|$ for the real-world data with the partition $\{\Ccal_k\}_{k=1}^K$ being the $\epsilon$-covering of $\Xcal$. All the training data points of each dataset were used. As can be seen, we  have $|\Tcal_S|\ll K$ for the real-world data.

To reduce the value of $K$, we additionally propose the following new method; i.e., as we have unlabeled data in many applications, we propose to use them to help define the partition $\{\Ccal_k\}_{k=1}^K$. The key idea here was that the choice of partition $\{\Ccal_k\}_{k=1}^K$ had to be independent of the labeled data used in the training loss in Theorems \ref{thm:1} and \ref{thm:2}, but it could depend on the unlabeled data. Otherwise expressed, given a set of unlabeled data points $\{\bar x_k\}_{k=1}^K$, the partition $\{\Ccal_k\}_{k=1}^K$ is defined by the clustering with the unlabeled data as $\Ccal_k= \{x \in \Xcal : k =\argmin_{k' \in [K]} \|x-\bar x_{k'} \|_2 \}$. Following the literature on semi-supervised learning, we split the training data points into labeled data points (500 for Semeion and 5000 for all other datasets) and unlabeled data points (the remainder of the training data). 

Figure \ref{fig:4} shows the values of $K$ versus $|\Tcal_S|$ for the real-world data with the partition $\{\Ccal_k\}_{k=1}^K$ being  the clustering with the unlabeled data. As can be seen, even in this case with  the significantly reduced $K$, we still have $|\Tcal_S| \ll K$.

Figure \ref{fig:5} shows the values of $K$ v.s. $|\Tcal_S|$ for  real-life datasets with the partition being the inverse image of the $\epsilon$-covering in randomly projected spaces. The random projection was conducted in the same manner without unlabeled data as in Figure \ref{fig:2}. The  projection reduced the value of $K$ significantly, and yet we still have $|\Tcal_S| \ll K$. Thus, in all  three cases, our bounds are significantly tighter than the previous bound for these real-world data.

\begin{figure}[t!]
\center
\begin{subfigure}[b]{0.4\textwidth}
  \includegraphics[width=\textwidth, height=0.52\textwidth]{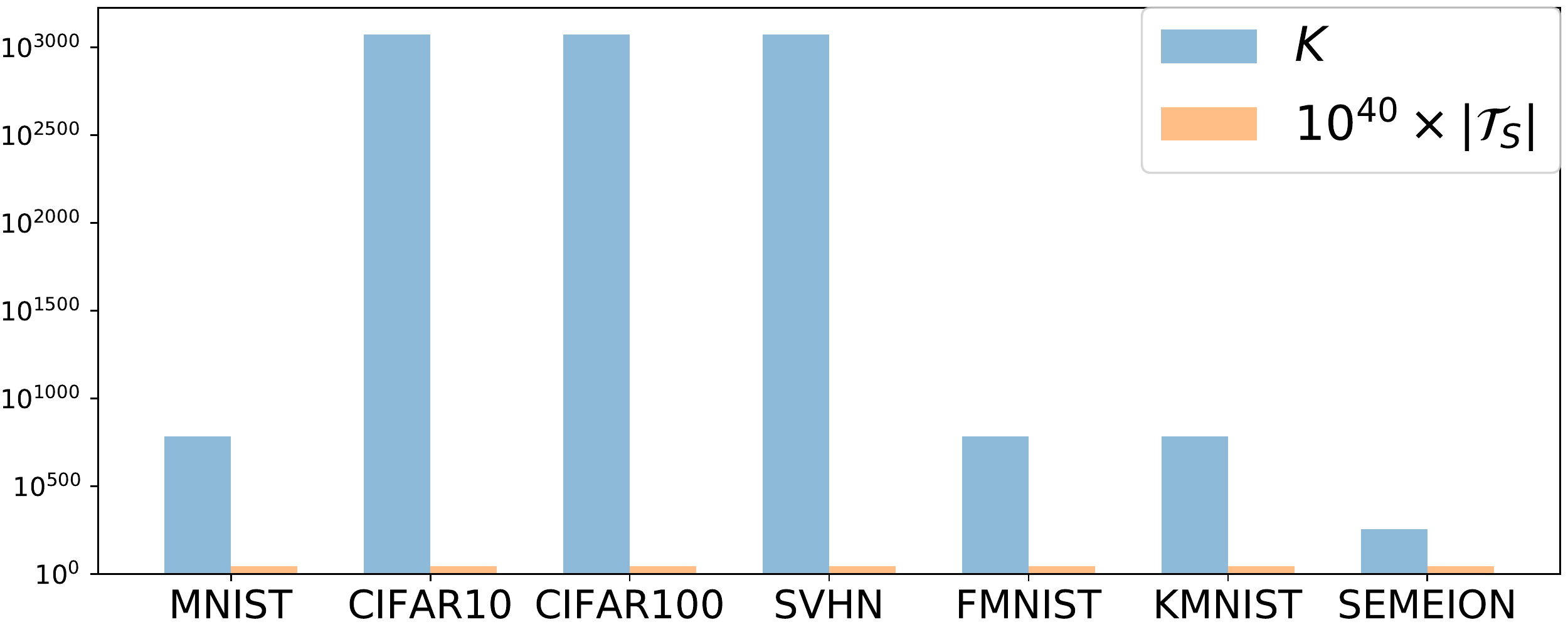}
  \vspace{-21pt}
\end{subfigure} 
\caption{The values of $K$ versus $|\Tcal_S|$ with real-world data and the $\epsilon$-covering. The values of $|\Tcal_S|$ are extremely small compared to those of $K$ in all datasets. } \label{fig:3} 
\end{figure} 

\begin{figure}[t!]
\center
\begin{subfigure}[b]{0.4\textwidth}
  \includegraphics[width=\columnwidth, height=0.6\textwidth]{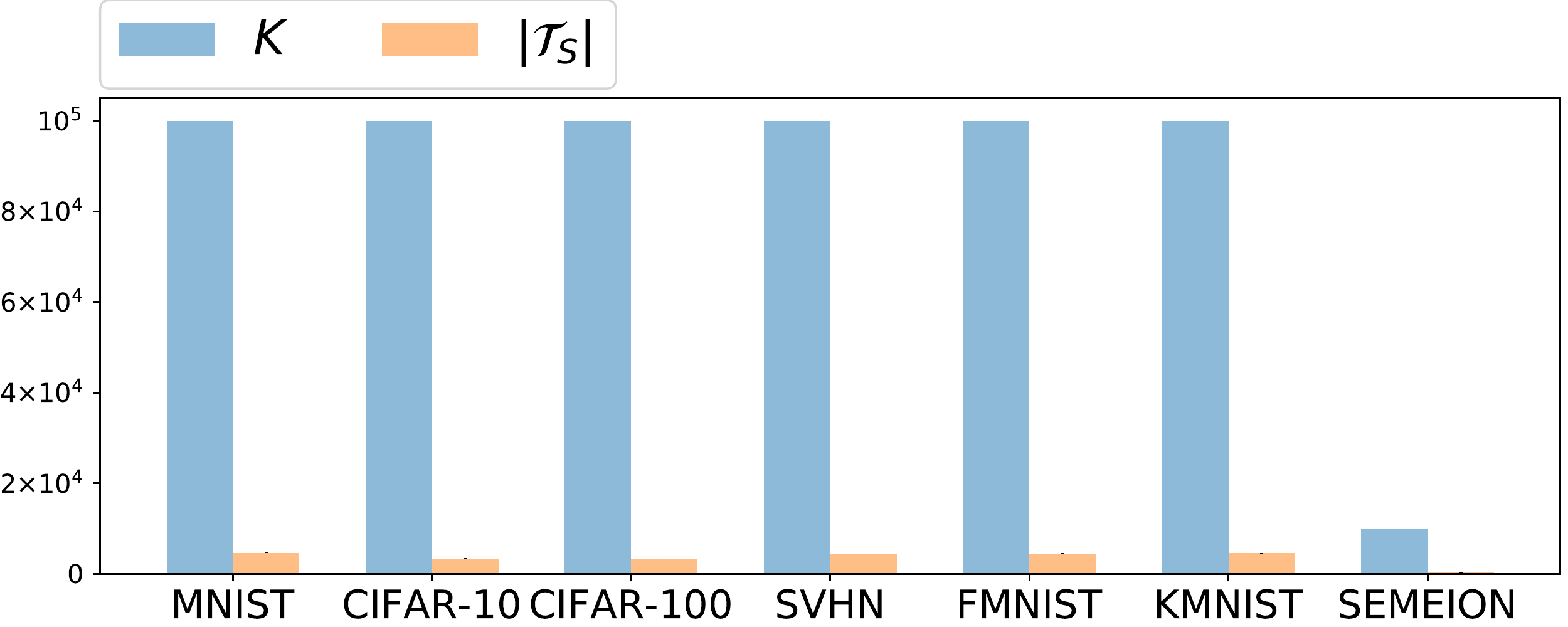}
  \vspace{-21pt}
\end{subfigure} 
    \caption{The values of $K$ versus $|\Tcal_S|$ with real-world data and the clustering using unlabeled data. With clustering to reduce $K$, we still have $|\Tcal_S|<K$. Here,  $|\Tcal_S|$ was close to zero for Semeion.}  \label{fig:4}  
\end{figure}

\begin{figure}[t!]
\center  
\begin{subfigure}[b]{0.4\textwidth}
  \includegraphics[width=\columnwidth, height=0.6\textwidth]{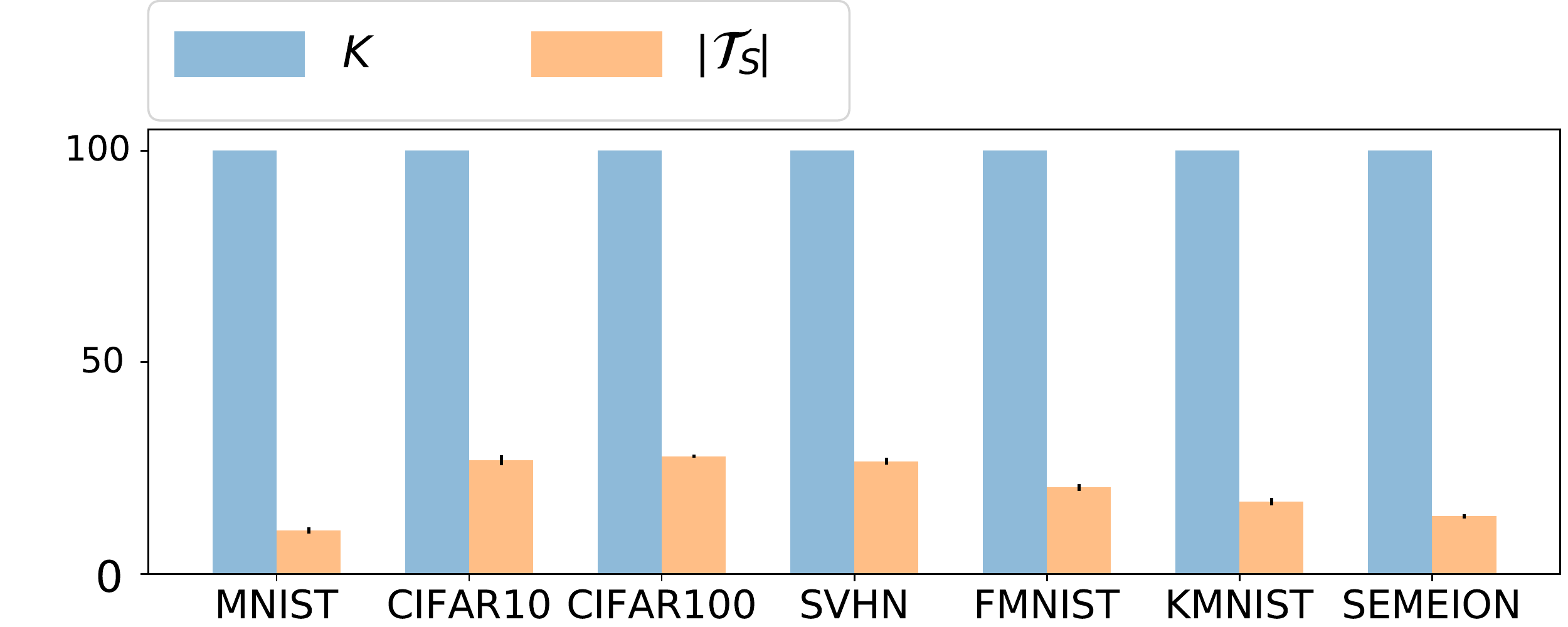}
  \vspace{-21pt}
\end{subfigure} 
    \caption{ The values of $K$ versus $|\Tcal_S|$ with real-world data and random projection. With random projection to reduce $K$, we still have $|\Tcal_S|<30< K=100< n\approx 60,000$ for the  real-life datasets. Here,  $n$ is the full train data size of each dataset: e.g., $n=60,000$ for MNIST.} \label{fig:5}
\end{figure}

\section{Conclusion} 
Since its introduction in 2010, algorithmic robustness has been a popular approach for analyzing learning algorithms \cite{xu2010robustness,xu2012robustness, bellet2015robustness, jolliffe2016principal}. In the original manuscript, which initiated the study of algorithmic robustness, \citet{xu2010robustness,xu2012robustness} pointed out that one disadvantage of their method is the dependence of the  bound on the covering number of the sample space. To the community, they posed the open problem of finding a mechanism to improve this dependence. Despite the popularity and several unsuccessful attempts, no significant progress has been made in this regard \citep{qian2020concentration, agrawal2017posterior, agrawal2017correction}.

        In this study, we provide tighter bounds for algorithmic robustness and general multinomial distributions. Our results establish natural and easily verified conditions in which the dependence of $K$ can be greatly reduced. Additionally, we demonstrate that the expected loss can be controlled by examining the single hypothesis returned by an algorithm, whereas in \citep{xu2012robustness}, the entire hypothesis space has to be analyzed. This is a considerable gain against several common loss functions. 
        
        Our bound is both practical and effective in the machine learning setting, as it depends only on the training samples. Furthermore, we provided theoretical and numerical examples in which our bounds proved superior to those of \citet{xu2012robustness} and those that follow from uniform stability \citep{bousquet2002stability}. Our experimental simulations show that on common datasets and popular theoretical models, this bound is exponentially better than the algorithmic robustness bound \citep{xu2012robustness}. These improvements to the foundations of algorithmic robustness have immediate impacts on applications ranging from metric learning to invariant classifiers. 

The main limitation of our approach is that we cannot know the values of the bounds until specifying training data; i.e.,  $|\Tcal_S|$ in our bound is data-dependent, whereas $K$ in the previous bound is data-independent. This data-dependence might not be preferable in some applications, where we may want to compute a bound before seeing training data.

        \section*{Acknowledgements}
        The research of J.H. is supported by the
Simons Foundation as a Junior Fellow at the Simons Society of Fellows, and NSF grant DMS-2054835. The research of Z.D. is supported by the Sloan Foundation grants, the NSF grant 1763665, and the Simons Foundation Collaboration on the Theory of Algorithmic Fairness.

        \bibliography{all}

\newcommand{\noopsort}[1]{} \newcommand{\printfirst}[2]{#1}
  \newcommand{\singleletter}[1]{#1} \newcommand{\switchargs}[2]{#2#1}
\begin{thebibliography}{60}
\providecommand{\natexlab}[1]{#1}
\providecommand{\url}[1]{\texttt{#1}}
\expandafter\ifx\csname urlstyle\endcsname\relax
  \providecommand{\doi}[1]{doi: #1}\else
  \providecommand{\doi}{doi: \begingroup \urlstyle{rm}\Url}\fi

\bibitem[Agrawal \& Jia(2017)Agrawal and Jia]{agrawal2017posterior}
Agrawal, S. and Jia, R.
\newblock Optimistic posterior sampling for reinforcement learning: worst-case
  regret bounds.
\newblock \emph{Advances in Neural Information Processing Systems (NeurIPS)},
  30, 2017.

\bibitem[Agrawal \& Jia(2020)Agrawal and Jia]{agrawal2017correction}
Agrawal, S. and Jia, R.
\newblock Posterior sampling for reinforcement learning: worst-case regret
  bounds.
\newblock \emph{arXiv update as a correction of the NeurIPS 2017 paper of the
  same authors}, 2020.

\bibitem[Arora et~al.(2019)Arora, Du, Hu, Li, and Wang]{arora2019fine}
Arora, S., Du, S., Hu, W., Li, Z., and Wang, R.
\newblock Fine-grained analysis of optimization and generalization for
  overparameterized two-layer neural networks.
\newblock In \emph{International Conference on Machine Learning}, pp.\
  322--332. PMLR, 2019.

\bibitem[Bartlett \& Mendelson(2002)Bartlett and
  Mendelson]{bartlett2002rademacher}
Bartlett, P.~L. and Mendelson, S.
\newblock Rademacher and gaussian complexities: Risk bounds and structural
  results.
\newblock \emph{Journal of Machine Learning Research}, 3\penalty0
  (Nov):\penalty0 463--482, 2002.

\bibitem[Bellet \& Habrard(2015)Bellet and Habrard]{bellet2015robustness}
Bellet, A. and Habrard, A.
\newblock Robustness and generalization for metric learning.
\newblock \emph{Neurocomputing}, 151:\penalty0 259--267, 2015.

\bibitem[Ben-Tal \& Nemirovski(1998)Ben-Tal and Nemirovski]{ben1998robust}
Ben-Tal, A. and Nemirovski, A.
\newblock Robust convex optimization.
\newblock \emph{Mathematics of operations research}, 23\penalty0 (4):\penalty0
  769--805, 1998.

\bibitem[Bertsimas et~al.(2011)Bertsimas, Brown, and
  Caramanis]{bertsimas2011theory}
Bertsimas, D., Brown, D.~B., and Caramanis, C.
\newblock Theory and applications of robust optimization.
\newblock \emph{SIAM review}, 53\penalty0 (3):\penalty0 464--501, 2011.

\bibitem[Bhattacharyya et~al.(2004)Bhattacharyya, Pannagadatta, and
  Smola]{bhattacharyya2004second}
Bhattacharyya, C., Pannagadatta, K., and Smola, A.~J.
\newblock A second order cone programming formulation for classifying missing
  data.
\newblock In \emph{Proceedings of the 17th International Conference on Neural
  Information Processing Systems}, pp.\  153--160, 2004.

\bibitem[Borwein \& Chan(2009)Borwein and Chan]{borwein2009uniform}
Borwein, J. and Chan, O.-Y.
\newblock Uniform bounds for the incomplete complementary gamma function.
\newblock \emph{Mathematical Inequalities and Applications}, 12:\penalty0
  115--121, 2009.

\bibitem[Bousquet \& Elisseeff(2002)Bousquet and
  Elisseeff]{bousquet2002stability}
Bousquet, O. and Elisseeff, A.
\newblock Stability and generalization.
\newblock \emph{Journal of Machine Learning Research}, 2\penalty0
  (Mar):\penalty0 499--526, 2002.

\bibitem[Chen et~al.(2022)Chen, Feng, Dai, Bai, Jiang, Xia, and
  Wang]{chen2022adversarial}
Chen, B., Feng, Y., Dai, T., Bai, J., Jiang, Y., Xia, S.-T., and Wang, X.
\newblock Adversarial examples generation for deep product quantization
  networks on image retrieval.
\newblock \emph{IEEE Transactions on Pattern Analysis and Machine
  Intelligence}, 2022.

\bibitem[Cisse et~al.(2017)Cisse, Bojanowski, Grave, Dauphin, and
  Usunier]{cisse2017parseval}
Cisse, M., Bojanowski, P., Grave, E., Dauphin, Y., and Usunier, N.
\newblock Parseval networks: Improving robustness to adversarial examples.
\newblock In \emph{International Conference on Machine Learning}, pp.\
  854--863. PMLR, 2017.

\bibitem[Clanuwat et~al.(2019)Clanuwat, Bober-Irizar, Kitamoto, Lamb, Yamamoto,
  and Ha]{clanuwat2019deep}
Clanuwat, T., Bober-Irizar, M., Kitamoto, A., Lamb, A., Yamamoto, K., and Ha,
  D.
\newblock Deep learning for classical japanese literature.
\newblock In \emph{NeurIPS Creativity Workshop 2019}, 2019.

\bibitem[Deng et~al.(2021{\natexlab{a}})Deng, He, and Su]{deng2021toward}
Deng, Z., He, H., and Su, W.
\newblock Toward better generalization bounds with locally elastic stability.
\newblock In \emph{International Conference on Machine Learning}, pp.\
  2590--2600. PMLR, 2021{\natexlab{a}}.

\bibitem[Deng et~al.(2021{\natexlab{b}})Deng, Zhang, Vodrahalli, Kawaguchi, and
  Zou]{deng2021adversarial}
Deng, Z., Zhang, L., Vodrahalli, K., Kawaguchi, K., and Zou, J.~Y.
\newblock Adversarial training helps transfer learning via better
  representations.
\newblock \emph{Advances in Neural Information Processing Systems}, 34,
  2021{\natexlab{b}}.

\bibitem[Devroye(1983)]{devroye1983equivalence}
Devroye, L.
\newblock The equivalence of weak, strong and complete convergence in l1 for
  kernel density estimates.
\newblock \emph{The Annals of Statistics}, pp.\  896--904, 1983.

\bibitem[Devroye et~al.(2013)Devroye, Gy{\"o}rfi, and
  Lugosi]{devroye2013probabilistic}
Devroye, L., Gy{\"o}rfi, L., and Lugosi, G.
\newblock \emph{A probabilistic theory of pattern recognition}, volume~31.
\newblock Springer Science \& Business Media, 2013.

\bibitem[Ding et~al.(2015)Ding, Xu, and Tao]{ding2015multi}
Ding, C., Xu, C., and Tao, D.
\newblock Multi-task pose-invariant face recognition.
\newblock \emph{IEEE Transactions on Image Processing}, 24\penalty0
  (3):\penalty0 980--993, 2015.

\bibitem[Gabrel et~al.(2014)Gabrel, Murat, and Thiele]{gabrel2014recent}
Gabrel, V., Murat, C., and Thiele, A.
\newblock Recent advances in robust optimization: An overview.
\newblock \emph{European journal of operational research}, 235\penalty0
  (3):\penalty0 471--483, 2014.

\bibitem[Globerson \& Roweis(2006)Globerson and Roweis]{globerson2006nightmare}
Globerson, A. and Roweis, S.
\newblock Nightmare at test time: robust learning by feature deletion.
\newblock In \emph{Proceedings of the 23rd international conference on Machine
  learning}, pp.\  353--360, 2006.

\bibitem[Gouk et~al.(2021)Gouk, Frank, Pfahringer, and
  Cree]{gouk2021regularisation}
Gouk, H., Frank, E., Pfahringer, B., and Cree, M.~J.
\newblock Regularisation of neural networks by enforcing lipschitz continuity.
\newblock \emph{Machine Learning}, 110\penalty0 (2):\penalty0 393--416, 2021.

\bibitem[Hastie et~al.(2019)Hastie, Tibshirani, and
  Wainwright]{hastie2019statistical}
Hastie, T., Tibshirani, R., and Wainwright, M.
\newblock \emph{Statistical learning with sparsity: the lasso and
  generalizations}.
\newblock Chapman and Hall/CRC, 2019.

\bibitem[Hu et~al.(2021)Hu, Jagtap, Karniadakis, and Kawaguchi]{hu2021extended}
Hu, Z., Jagtap, A.~D., Karniadakis, G.~E., and Kawaguchi, K.
\newblock When do extended physics-informed neural networks (xpinns) improve
  generalization?
\newblock \emph{arXiv preprint arXiv:2109.09444}, 2021.

\bibitem[Jia et~al.(2019)Jia, Li, Wen, Liu, and Tao]{jia2019orthogonal}
Jia, K., Li, S., Wen, Y., Liu, T., and Tao, D.
\newblock Orthogonal deep neural networks.
\newblock \emph{arXiv preprint arXiv:1905.05929}, 2019.

\bibitem[Jolliffe \& Cadima(2016)Jolliffe and Cadima]{jolliffe2016principal}
Jolliffe, I.~T. and Cadima, J.
\newblock Principal component analysis: a review and recent developments.
\newblock \emph{Philosophical Transactions of the Royal Society A:
  Mathematical, Physical and Engineering Sciences}, 374\penalty0
  (2065):\penalty0 20150202, 2016.

\bibitem[Kawaguchi \& Huang(2019)Kawaguchi and Huang]{kawaguchi2019gradient}
Kawaguchi, K. and Huang, J.
\newblock Gradient descent finds global minima for generalizable deep neural
  networks of practical sizes.
\newblock In \emph{2019 57th Annual Allerton Conference on Communication,
  Control, and Computing (Allerton)}, pp.\  92--99. IEEE, 2019.

\bibitem[Kawaguchi et~al.(2017)Kawaguchi, Kaelbling, and
  Bengio]{kawaguchi2017generalization}
Kawaguchi, K., Kaelbling, L.~P., and Bengio, Y.
\newblock Generalization in deep learning.
\newblock \emph{arXiv preprint arXiv:1710.05468}, 2017.

\bibitem[Krizhevsky \& Hinton(2009)Krizhevsky and
  Hinton]{krizhevsky2009learning}
Krizhevsky, A. and Hinton, G.
\newblock Learning multiple layers of features from tiny images.
\newblock Technical report, Citeseer, 2009.

\bibitem[LeCun et~al.(1998)LeCun, Bottou, Bengio, and
  Haffner]{lecun1998gradient}
LeCun, Y., Bottou, L., Bengio, Y., and Haffner, P.
\newblock Gradient-based learning applied to document recognition.
\newblock \emph{Proceedings of the IEEE}, 86\penalty0 (11):\penalty0
  2278--2324, 1998.

\bibitem[Liu et~al.(2021)Liu, Lamb, Kawaguchi, ALIAS PARTH~GOYAL, Sun, Mozer,
  and Bengio]{liu2021discrete}
Liu, D., Lamb, A.~M., Kawaguchi, K., ALIAS PARTH~GOYAL, A.~G., Sun, C., Mozer,
  M.~C., and Bengio, Y.
\newblock Discrete-valued neural communication.
\newblock \emph{Advances in Neural Information Processing Systems}, 34, 2021.

\bibitem[Liu et~al.(2017)Liu, Wu, Liu, Tao, and Fu]{liu2017spectral}
Liu, H., Wu, J., Liu, T., Tao, D., and Fu, Y.
\newblock Spectral ensemble clustering via weighted k-means: Theoretical and
  practical evidence.
\newblock \emph{IEEE transactions on knowledge and data engineering},
  29\penalty0 (5):\penalty0 1129--1143, 2017.

\bibitem[Luo et~al.(2015)Luo, Liu, Tao, and Xu]{luo2015multiview}
Luo, Y., Liu, T., Tao, D., and Xu, C.
\newblock Multiview matrix completion for multilabel image classification.
\newblock \emph{IEEE Transactions on Image Processing}, 24\penalty0
  (8):\penalty0 2355--2368, 2015.

\bibitem[Natalini \& Palumbo(2000)Natalini and
  Palumbo]{natalini2000inequalities}
Natalini, P. and Palumbo, B.
\newblock Inequalities for the incomplete gamma function.
\newblock \emph{Math. Inequal. Appl}, 3\penalty0 (1):\penalty0 69--77, 2000.

\bibitem[Netzer et~al.(2011)Netzer, Wang, Coates, Bissacco, Wu, and
  Ng]{netzer2011reading}
Netzer, Y., Wang, T., Coates, A., Bissacco, A., Wu, B., and Ng, A.~Y.
\newblock Reading digits in natural images with unsupervised feature learning.
\newblock In \emph{NIPS workshop on deep learning and unsupervised feature
  learning}, 2011.

\bibitem[Pedraza et~al.(2022)Pedraza, Deniz, and Bueno]{pedraza2022lyapunov}
Pedraza, A., Deniz, O., and Bueno, G.
\newblock Lyapunov stability for detecting adversarial image examples.
\newblock \emph{Chaos, Solitons \& Fractals}, 155:\penalty0 111745, 2022.

\bibitem[Pham et~al.(2021)Pham, Dai, Ghiasi, Kawaguchi, Liu, Yu, Yu, Chen,
  Luong, Wu, Tan, and Le]{pham2021combined}
Pham, H., Dai, Z., Ghiasi, G., Kawaguchi, K., Liu, H., Yu, A.~W., Yu, J., Chen,
  Y.-T., Luong, M.-T., Wu, Y., Tan, M., and Le, Q.~V.
\newblock Combined scaling for open-vocabulary image classification.
\newblock \emph{arXiv preprint arXiv:2111.10050}, 2021.
\newblock \doi{10.48550/arXiv.2111.10050}.
\newblock URL \url{https://arxiv.org/abs/2111.10050}.

\bibitem[Qi et~al.(2013)Qi, Tian, and Shi]{qi2013robust}
Qi, Z., Tian, Y., and Shi, Y.
\newblock Robust twin support vector machine for pattern classification.
\newblock \emph{Pattern Recognition}, 46\penalty0 (1):\penalty0 305--316, 2013.

\bibitem[Qian et~al.(2020)Qian, Fruit, Pirotta, and
  Lazaric]{qian2020concentration}
Qian, J., Fruit, R., Pirotta, M., and Lazaric, A.
\newblock Concentration inequalities for multinoulli random variables.
\newblock \emph{arXiv preprint arXiv:2001.11595}, 2020.

\bibitem[Redko et~al.(2020)Redko, Morvant, Habrard, Sebban, and
  Bennani]{redko2020survey}
Redko, I., Morvant, E., Habrard, A., Sebban, M., and Bennani, Y.
\newblock A survey on domain adaptation theory: learning bounds and theoretical
  guarantees.
\newblock \emph{arXiv preprint arXiv:2004.11829}, 2020.

\bibitem[Rice et~al.(2021)Rice, Bair, Zhang, and Kolter]{rice2021robustness}
Rice, L., Bair, A., Zhang, H., and Kolter, J.~Z.
\newblock Robustness between the worst and average case.
\newblock \emph{Advances in Neural Information Processing Systems}, 34, 2021.

\bibitem[Robey et~al.(2021)Robey, Chamon, Pappas, Hassani, and
  Ribeiro]{robey2021adversarial}
Robey, A., Chamon, L., Pappas, G.~J., Hassani, H., and Ribeiro, A.
\newblock Adversarial robustness with semi-infinite constrained learning.
\newblock \emph{Advances in Neural Information Processing Systems},
  34:\penalty0 6198--6215, 2021.

\bibitem[Sener \& Savarese(2017)Sener and Savarese]{sener2017active}
Sener, O. and Savarese, S.
\newblock Active learning for convolutional neural networks: A core-set
  approach.
\newblock \emph{arXiv preprint arXiv:1708.00489}, 2017.

\bibitem[Shen et~al.(2020)Shen, Cheng, and Liang]{shen2020deep}
Shen, X., Cheng, X., and Liang, K.
\newblock Deep euler method: solving odes by approximating the local truncation
  error of the euler method.
\newblock \emph{arXiv preprint arXiv:2003.09573}, 2020.

\bibitem[Shi et~al.(2014)Shi, Bellet, and Sha]{shi2014sparse}
Shi, Y., Bellet, A., and Sha, F.
\newblock Sparse compositional metric learning.
\newblock In \emph{Proceedings of the AAAI Conference on Artificial
  Intelligence}, volume~28, 2014.

\bibitem[Sokolic et~al.(2017{\natexlab{a}})Sokolic, Giryes, Sapiro, and
  Rodrigues]{sokolic2017generalization}
Sokolic, J., Giryes, R., Sapiro, G., and Rodrigues, M.
\newblock Generalization error of invariant classifiers.
\newblock In \emph{Artificial Intelligence and Statistics}, pp.\  1094--1103,
  2017{\natexlab{a}}.

\bibitem[Sokolic et~al.(2017{\natexlab{b}})Sokolic, Giryes, Sapiro, and
  Rodrigues]{sokolic2017robust}
Sokolic, J., Giryes, R., Sapiro, G., and Rodrigues, M.~R.
\newblock Robust large margin deep neural networks.
\newblock \emph{IEEE Transactions on Signal Processing}, 2017{\natexlab{b}}.

\bibitem[Srl \& Brescia(1994)Srl and Brescia]{srl1994semeion}
Srl, B.~T. and Brescia, I.
\newblock Semeion handwritten digit data set.
\newblock \emph{Semeion Research Center of Sciences of Communication, Rome,
  Italy}, 1994.

\bibitem[Tao et~al.(2016)Tao, Guo, Song, Li, Yu, and Tang]{tao2016person}
Tao, D., Guo, Y., Song, M., Li, Y., Yu, Z., and Tang, Y.~Y.
\newblock Person re-identification by dual-regularized kiss metric learning.
\newblock \emph{IEEE Transactions on Image Processing}, 25\penalty0
  (6):\penalty0 2726--2738, 2016.

\bibitem[Van Der~Vaart et~al.(1996)Van Der~Vaart, van~der Vaart, van~der Vaart,
  and Wellner]{van1996weak}
Van Der~Vaart, A.~W., van~der Vaart, A.~W., van~der Vaart, A., and Wellner, J.
\newblock \emph{Weak convergence and empirical processes: with applications to
  statistics}.
\newblock Springer Science \& Business Media, 1996.

\bibitem[Vapnik(1998)]{vapnik1998statistical}
Vapnik, V.
\newblock \emph{Statistical learning theory}, volume~1.
\newblock Wiley New York, 1998.

\bibitem[Vershynin(2018)]{vershynin2018high}
Vershynin, R.
\newblock \emph{High-dimensional probability: An introduction with applications
  in data science}, volume~47.
\newblock Cambridge university press, 2018.

\bibitem[Weissman et~al.(2003)Weissman, Ordentlich, Seroussi, Verdu, and
  Weinberger]{weissman2003inequalities}
Weissman, T., Ordentlich, E., Seroussi, G., Verdu, S., and Weinberger, M.~J.
\newblock {Inequalities for the L1 deviation of the empirical distribution}.
\newblock \emph{Hewlett-Packard Labs, Tech. Rep}, 2003.

\bibitem[Wellner et~al.(2013)]{wellner2013weak}
Wellner, J. et~al.
\newblock \emph{Weak convergence and empirical processes: with applications to
  statistics}.
\newblock Springer Science \& Business Media, 2013.

\bibitem[Xiao et~al.(2017)Xiao, Rasul, and Vollgraf]{xiao2017fashion}
Xiao, H., Rasul, K., and Vollgraf, R.
\newblock Fashion-mnist: a novel image dataset for benchmarking machine
  learning algorithms.
\newblock \emph{arXiv preprint arXiv:1708.07747}, 2017.

\bibitem[Xu \& Mannor(2010)Xu and Mannor]{xu2010robustness}
Xu, H. and Mannor, S.
\newblock Robustness and generalization.
\newblock In \emph{Conference on Learning Theory (COLT)}, 2010.

\bibitem[Xu \& Mannor(2012)Xu and Mannor]{xu2012robustness}
Xu, H. and Mannor, S.
\newblock Robustness and generalization.
\newblock \emph{Machine learning}, 86\penalty0 (3):\penalty0 391--423, 2012.

\bibitem[Xu et~al.(2009)Xu, Caramanis, and Mannor]{xu2009robustness}
Xu, H., Caramanis, C., and Mannor, S.
\newblock Robustness and regularization of support vector machines.
\newblock \emph{Journal of machine learning research}, 10\penalty0 (7), 2009.

\bibitem[Zahavy et~al.(2016)Zahavy, Kang, Sivak, Feng, Xu, and
  Mannor]{zahavy2016ensemble}
Zahavy, T., Kang, B., Sivak, A., Feng, J., Xu, H., and Mannor, S.
\newblock Ensemble robustness and generalization of stochastic deep learning
  algorithms.
\newblock \emph{arXiv preprint arXiv:1602.02389}, 2016.

\bibitem[Zhang et~al.(2021{\natexlab{a}})Zhang, Bengio, Hardt, Recht, and
  Vinyals]{zhang2021understanding}
Zhang, C., Bengio, S., Hardt, M., Recht, B., and Vinyals, O.
\newblock Understanding deep learning (still) requires rethinking
  generalization.
\newblock \emph{Communications of the ACM}, 64\penalty0 (3):\penalty0 107--115,
  2021{\natexlab{a}}.

\bibitem[Zhang et~al.(2021{\natexlab{b}})Zhang, Deng, and
  Kawaguchi]{zhang2021does}
Zhang, L., Deng, Z., and Kawaguchi, K.
\newblock How does mixup help with robustness and generalization?
\newblock In \emph{International Conference on Learning Representations
  (ICLR)}, 2021{\natexlab{b}}.

\end{thebibliography}
        \bibliographystyle{icml2022}

\newpage
\appendix
\onecolumn
        \allowdisplaybreaks

        \section{Multinomial Concentration Bounds with Probability Distribution Dependence} \label{appendix:fixeda}
                Let the vector $X=(X_1,\dots,X_K)$ follow the multinomial distribution with parameters $m$ and $p=(p_1,\dots,p_K)$. We wish to upper bound the following quantity:
        \begin{align}
                \sum_{i=1}^K a_{i}(X) \left(p_i - \frac{X_i}{m}\right)  ,
        \end{align}
        where $a_i(X)\ge 0$ for all $i \in \{1,\dots, K\}$. Recall that $a_i(X)$ depend on $X_1,\dots,X_K$, which is the heart of the problem.
        
        In this section, we first establish concentration results for scaled multinomial random variables, in other words with $a_i(X)$ fixed independent of $X$.  This is the first step towards dealing with the dependent coefficients in the above equation.
After this first step, we will analyze the quantity with the dependent $a_i$.        
        \subsection{Multinomial Distribution with fixed $a$}

        In this setting, we obtain the following sharp lower and upper bounds.
        
        \begin{lemma} \label{lemma:3}
                Let $\ba_1,\dots,\ba_K \ge 0$ be fixed such that $ \sum_{i=1}^K \ba_i p_i\neq 0$. Then, for any $M>0$,
                $$
                \PP\left(\sum_{i=1}^K \ba_{i} \left(p_i - \frac{X_i}{m}\right) <-M   \right) \le \exp\left(-\frac{mM}{2 \ba} \min\left\{1, \frac{\ba M}{\beta}\right\}\right)
                $$ 
                where $\ba:= \max_{i \in [K]} \ba_i$ and $\beta:= 2\sum_{i=1}^K \ba^2_i p_i$.
        \end{lemma}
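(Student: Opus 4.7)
The plan is to realize $\sum_{i=1}^K \ba_i X_i$ as a sum of $m$ i.i.d.\ bounded random variables and then invoke a one-sided Bernstein inequality. Write the multinomial draw as $X_i=\sum_{t=1}^m Z_i^{(t)}$, where $Z^{(t)}=(Z_1^{(t)},\dots,Z_K^{(t)})$ are i.i.d.\ categorical indicator vectors with $\PP(Z_i^{(t)}=1)=p_i$ and at most one coordinate equal to $1$. Define $W^{(t)}:=\sum_{i=1}^K \ba_i Z_i^{(t)}$, so $W^{(t)}$ takes the value $\ba_i$ with probability $p_i$; in particular $W^{(t)}\in[0,\ba]$, $\mu:=\EE[W^{(t)}]=\sum_i \ba_i p_i$, and $\Var(W^{(t)})\le \EE[(W^{(t)})^2]=\sum_i \ba_i^2 p_i=\beta/2$. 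Rearranging gives
\begin{equation*}
\sum_{i=1}^K \ba_i\!\left(p_i-\tfrac{X_i}{m}\right) \;=\; \mu \;-\; \tfrac{1}{m}\sum_{t=1}^m W^{(t)},
\end{equation*}
so the target event $\{\sum_i \ba_i(p_i-X_i/m)<-M\}$ coincides with the upper-tail event $\{\tfrac{1}{m}\sum_t W^{(t)}-\mu>M\}$ for an i.i.d.\ sum of $[0,\ba]$-valued variables.

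Next I would apply the standard one-sided Bernstein inequality (using the a.s.\ bound $W^{(t)}-\mu\le\ba$ and $\Var(W^{(t)})\le\beta/2$) to obtain
\begin{equation*}
\PP\!\left(\tfrac{1}{m}\sum_{t=1}^m (W^{(t)}-\mu) > M\right) \;\le\; \exp\!\left(-\frac{mM^2}{\beta+2\ba M/3}\right).
\end{equation*}
It then remains to check that this Bernstein exponent dominates $\tfrac{mM}{2\ba}\min\{1,\ba M/\beta\}$, which splits cleanly into the two regimes of the $\min$. If $\ba M\le\beta$, the $\min$ equals $\ba M/\beta$ and the target reduces to $mM^2/(2\beta)$; since $2\ba M/3\le 2\beta/3\le\beta$, the Bernstein denominator is at most $2\beta$, and the inequality follows. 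If $\ba M>\beta$, the $\min$ equals $1$ and the target is $mM/(2\ba)$; the Bernstein denominator is bounded by $\beta+2\ba M/3<\ba M+2\ba M/3=5\ba M/3<2\ba M$, which again yields the claim.

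The main conceptual subtlety is the $\min$ form of the exponent. A naive Chernoff argument using only $e^{-x}\le 1-x+x^2/2$ recovers the sub-Gaussian piece $mM^2/\beta$ but misses the linear-in-$M$ piece that governs the large-deviation regime; producing the full $\min$-exponent is precisely what requires the a.s.\ upper bound $W^{(t)}\le\ba$, either through the Bernstein inequality invoked above or equivalently through a Bennett-type MGF estimate $\EE[e^{\lambda(W^{(t)}-\mu)}]\le \exp(\tfrac{\beta}{2\ba^2}(e^{\lambda\ba}-1-\lambda\ba))$ followed by optimization in $\lambda$. Once the i.i.d.\ reduction above is in place, the remaining work is the routine two-case comparison, and I do not expect further technical obstacles beyond careful bookkeeping of constants.
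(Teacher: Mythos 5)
Your proposal is correct, and it reaches the stated bound by a genuinely different (though closely related) route. The paper works directly with the multinomial vector: it applies Markov's inequality, computes the moment generating function exactly via the multinomial theorem as $\bigl(\sum_{i=1}^K p_i e^{\nu\ba_i/m}\bigr)^m$, bounds it using the elementary inequalities $1+x\le e^x$ and $e^x\le 1+x+x^2$ on $[0,1]$ (which is what forces the constraint $\nu\le m/\ba$), and then makes the same two-case choice of the Chernoff parameter ($\nu=mM/\beta$ when $M\le\beta/\ba$, $\nu=m/\ba$ otherwise) to produce the $\min$-form exponent from scratch. You instead make the i.i.d.\ structure explicit --- the per-trial variable $W^{(t)}$ taking value $\ba_i$ with probability $p_i$ is exactly the object whose MGF the paper raises to the $m$-th power --- and delegate the tail bound to the off-the-shelf one-sided Bernstein inequality, after which the two-regime comparison of exponents is routine and your constants check out ($\beta+2\ba M/3\le 2\beta$ when $\ba M\le\beta$, and $<2\ba M$ otherwise). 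What your approach buys is modularity and a slightly sharper intermediate bound (the Bernstein exponent $mM^2/(\beta+2\ba M/3)$ dominates the $\min$-form exponent in both regimes); what the paper's approach buys is self-containedness, using only $e^x\le 1+x+x^2$ rather than citing Bernstein, and a template that it reuses with sign flipped for the other tail in Lemma~\ref{lemma:4}. The only cosmetic gap is that you should note $\beta>0$ (guaranteed by the hypothesis $\sum_i\ba_i p_i\neq 0$ with $\ba_i,p_i\ge 0$) before dividing by it, as the paper does.
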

        \begin{proof} 
                Note that  $\PP\left(\sum_{i=1}^K \ba_{i} \left(p_i - \frac{X_i}{m}\right) <-M \right)=\PP\left(\sum_{i=1}^K \ba_{i} \left( \frac{X_i}{m}-p_i \right)>M \right)$. By Markov's inequality, the following holds for any $\nu>0$: 
                \begin{align} \label{eq:5}
                        \nonumber \PP\left(\sum_{i=1}^K \ba_{i} \left( \frac{X_i}{m}-p_i\right) >M   \right) &=\PP\left(\nu\sum_{i=1}^K \ba_{i}\frac{X_i}{m}  >\nu\left(M +\sum_{i=1}^K \ba _{i}p_{i}  \right)  \right) 
                        \\ \nonumber & =\PP\left(e^{\nu\sum_{i=1}^K \ba_{i}\frac{X_i}{m}}  >e^{\nu\left(M +\sum_{i=1}^K \ba _{i}p_{i}  \right)}  \right)
                        \\ \nonumber & \le\PP\left(e^{\nu\sum_{i=1}^K \frac{\ba_{i}X_i}{m}}  \ge e^{\nu\left(M +\sum_{i=1}^K \ba _{i}p_{i}\right)}  \right)
                        \\ & \le e^{-\nu\left(M +\sum_{i=1}^K \ba _{i}p_{i}  \right)} \EE[e^{\nu\sum_{i=1}^K \frac{\ba_{i}X_i}{m}}].  
                \end{align}
                To evaluate the moment generating function, we use the probability mass function of the multinomial distribution and the multinomial theorem to find that 
                \begin{align} \label{eq:10}
                        \nonumber \EE[e^{\nu\sum_{i=1}^K \frac{\ba_{i}X_i}{m}}]&=\sum \frac{m!}{X_1!\dots X_K!} p_1^{X_1}p_2^{X_2}\dots p_K^{X_K} e^{\nu\sum_{i=1}^K \frac{\ba_{i}X_i}{m}}
                        \\ \nonumber & =\sum \frac{m!}{X_1!\dots X_K!} (p_1^{}e^{ \frac{\nu\ba_{1}}{m}})^{X_1}(p_2^{}e^{ \frac{\nu\ba_{2}}{m}})^{X_2}\dots (p_K^{}e^{ \frac{\nu\ba_{K}}{m}})^{X_K} \\ & =(\sum_{i=1}^K p_i e^{ \frac{\nu\ba_{i}}{m}})^m, 
                \end{align}
                where the sum in the first two lines is taken over all possible values of the random variable $(X_1,\dots,X_K)$ (i.e., this is  the sum in the definition of the expectation on the left-hand side of the equation). Pugging this into \eqref{eq:5}, we conclude that 
                \begin{align} \label{eq:6}
                        \PP\left(\sum_{i=1}^K \ba_{i} \left( \frac{X_i}{m}-p_i\right) >M   \right) & \le e^{-\nu\left(M +\sum_{i=1}^K \ba _{i}p_{i}  \right)} \left(\sum_{i=1}^K p_i e^{ \frac{\nu\ba_{i}}{m}}\right)^m.
                \end{align}
                We recall the following simple bounds for exponential functions:
                \begin{align}
                        \label{eq:7} & e^x \ge 1+x \qquad \ \ \  \text{for } x \ge 0
                        \\ \label{eq:8} & e^x \le 1+x+x^2 \ \ \text{for } 0 \le x \le 1 .
                \end{align}
                Then, for $0<\nu \le \frac{m}{\ba} \le \frac{m}{\ba_i}$, using \eqref{eq:7}--\eqref{eq:8}, we have that
                \begin{align*}
                        \sum_{i=1}^K p_i e^{ \frac{\nu\ba_{i}}{m}} \le \sum_{i=1}^K p_i \left(1+\frac{\nu\ba_{i}}{m} +\frac{\nu^{2}\ba_{i}^2}{m^{2}} \right) \le e^{ \sum_{i=1}^K p_i \left(\frac{\nu\ba_{i}}{m} +\frac{\nu^{2}\ba_{i}^2}{m^{2}} \right)}. 
                \end{align*}  
                Pugging this into \eqref{eq:6}, we deduce that
                \begin{align} \label{eq:9}
                        \nonumber \PP\left(\sum_{i=1}^K \ba_{i} \left( \frac{X_i}{m}-p_i\right) >M   \right) & \le e^{-\nu\left(M +\sum_{i=1}^K \ba _{i}p_{i}  \right)} \left(e^{ \sum_{i=1}^K p_i \left(\frac{\nu\ba_{i}}{m} +\frac{\nu^{2}\ba_{i}^2}{m^{2}} \right)}\right)^m
                        \\ \nonumber & = e^{-\nu\left(M +\sum_{i=1}^K \ba _{i}p_{i}  \right)}   e^{ \sum_{i=1}^K \nu p_i \ba_{i}  + \sum_{i=1}^K p_i\frac{\nu^{2}\ba_{i}^2}{m} }
                        \\ \nonumber & = e^{-\nu M+\sum_{i=1}^K p_i\frac{\nu^{2}\ba_{i}^2}{m}}   \\ & = e^{-\nu M+ \frac{\nu^2\beta}{2m}} 
                \end{align} 
                Here, we have $\beta:= 2\sum_{i=1}^K \ba^2_i p_i>0$ since $\sum_{i=1}^K \ba_i p_i\neq 0$ and $\ba_i,p_i\ge 0$ by assumption.
                
                We consider two possible cases. If $M \le \frac{\beta}{\ba}$, we take $0<\nu =\frac{mM}{\beta} \le \frac{m}{\ba}$, which yields in \eqref{eq:9}:
                $$
                \PP\left(\sum_{i=1}^K \ba_{i} \left( \frac{X_i}{m}-p_i\right) >M   \right) \le e^{- \frac{mM^{2}}{2\beta}}. 
                $$
                If $M \ge \frac{\beta}{\ba}$, we take $0<\nu = \frac{m}{\ba}$, which yields in \eqref{eq:9}:
                $$
                \PP\left(\sum_{i=1}^K \ba_{i} \left( \frac{X_i}{m}-p_i\right) >M   \right) \le e^{- \frac{m}{\ba}(M-\frac{\beta}{2\ba})}\le e^{-\frac{mM}{2\ba}}. 
                $$
                Combining these conclusions yields 
                $$
                \PP\left(\sum_{i=1}^K \ba_{i} \left( \frac{X_i}{m}-p_i\right) >M   \right) \le \exp\left(-\frac{mM}{2 \ba} \min\left\{1, \frac{\ba M}{\beta}\right\}\right).
                $$
        \end{proof}
        
        For the other tail, we establish the following estimate.
        
        \begin{lemma} \label{lemma:4}
                Let $\ba_1,\dots,\ba_K \ge 0$ be fixed such that $ \sum_{i=1}^K \ba_i p_i\neq 0$. Then, for any $M>0$,
                $$
                \PP\left(\sum_{i=1}^K \ba_{i} \left(p_i - \frac{X_i}{m}\right) >M   \right) \le \exp\left(-\frac{mM^{2}}{\beta}\right)
                $$ 
                where $\beta:= 2\sum_{i=1}^K \ba^2_i p_i$.
        \end{lemma}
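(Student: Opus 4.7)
The plan is to mirror the Chernoff/MGF strategy used in Lemma \ref{lemma:3}, but applied to the opposite tail, exploiting the fact that the one-sided exponential bound for negative exponents does not carry the restriction $0\le x\le 1$ that forced the case split in the previous lemma. Concretely, set $Y=\sum_{i=1}^K \ba_i X_i/m$ and $\mu=\sum_{i=1}^K \ba_i p_i$. The event $\sum_i \ba_i(p_i-X_i/m)>M$ is $\{Y<\mu-M\}$, so for any $\nu>0$ Markov's inequality gives
\[
\PP(Y<\mu-M) \le e^{\nu(\mu-M)}\,\EE[e^{-\nu Y}].
\]
The moment-generating-function computation from Lemma \ref{lemma:3} (probability mass function plus multinomial theorem, cf.\ equation \eqref{eq:10}) carries over verbatim with $\nu\ba_i/m$ replaced by $-\nu\ba_i/m$, yielding $\EE[e^{-\nu Y}]=\bigl(\sum_{i=1}^K p_i e^{-\nu\ba_i/m}\bigr)^m$.

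The key technical step is bounding $\sum_{i} p_i e^{-\nu\ba_i/m}$. The plan is to use the inequality $e^{-x}\le 1-x+x^2/2$ valid for all $x\ge 0$ (a short convexity argument: the function $1-x+x^2/2-e^{-x}$ vanishes with its derivative at $0$ and has non-negative second derivative on $[0,\infty)$). Since $\nu\ba_i/m\ge 0$, this gives
\[
\sum_{i=1}^K p_i e^{-\nu\ba_i/m}\le 1-\frac{\nu\mu}{m}+\frac{\nu^{2}}{2m^{2}}\sum_{i=1}^K p_i\ba_i^{2} \le \exp\!\left(-\frac{\nu\mu}{m}+\frac{\nu^{2}}{2m^{2}}\sum_{i=1}^K p_i\ba_i^{2}\right),
\]
using $1+y\le e^y$. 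Raising to the $m$-th power and substituting back into the Markov bound, the $\nu\mu$ and $-\nu\mu$ terms cancel, leaving
\[
\PP(Y<\mu-M)\le \exp\!\left(-\nu M+\frac{\nu^{2}}{2m}\sum_{i=1}^K p_i\ba_i^{2}\right)=\exp\!\left(-\nu M+\frac{\nu^{2}\beta}{4m}\right),
\]
where $\beta=2\sum_i p_i\ba_i^{2}$.

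The final step is optimization: minimizing the quadratic-in-$\nu$ exponent in the unconstrained range $\nu>0$ gives $\nu=2mM/\beta$, and substituting this value produces exactly $\exp(-mM^{2}/\beta)$, the claimed bound. The contrast with Lemma \ref{lemma:3} is that there, the upper bound $e^{x}\le 1+x+x^{2}$ only held for $0\le x\le 1$, forcing the restriction $\nu\le m/\ba$ and hence a two-regime conclusion; here, because we are on the negative-exponent side, the quadratic envelope is valid globally, so no analogous constraint appears and the optimization is clean. I do not anticipate any real obstacle—the only subtlety is to recognize that the correct elementary inequality to invoke for this tail is the uniformly valid $e^{-x}\le 1-x+x^{2}/2$ rather than the range-restricted $e^{x}\le 1+x+x^{2}$ used for the other direction.
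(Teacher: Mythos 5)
Your proposal is correct and follows essentially the same Chernoff/MGF route as the paper: the substitution $\nu \mapsto -\nu$ merely reparametrizes the paper's choice of a negative exponent, your inequality $e^{-x}\le 1-x+x^{2}/2$ for $x\ge 0$ is the paper's $e^{x}\le 1+x+x^{2}/2$ for $x\le 0$ in disguise, and the optimizing value $\nu=2mM/\beta$ matches the paper's $\nu=-2mM/\beta$ and yields the identical exponent $-mM^{2}/\beta$. No gaps.
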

        \begin{proof}
                By Markov's inequality, the following holds \textit{for any $\nu<0$:} 
                \begin{align*} 
                        \nonumber \PP\left(\sum_{i=1}^K \ba_{i} \left( p_i-\frac{X_i}{m}\right) >M   \right) &=\PP\left(\nu\sum_{i=1}^K \ba_{i}\frac{X_i}{m}  >\nu\left(\sum_{i=1}^K \ba _{i}p_{i} -M   \right)  \right) 
                        \\ \nonumber & =\PP\left(e^{\nu\sum_{i=1}^K \ba_{i}\frac{X_i}{m}}  >e^{\nu\left(\sum_{i=1}^K \ba _{i}p_{i}  -M \right)}  \right)
                        \\ \nonumber & \le\PP\left(e^{\nu\sum_{i=1}^K \frac{\ba_{i}X_i}{m}}  \ge e^{\nu\left(\sum_{i=1}^K \ba _{i}p_{i}-M\right)}  \right)
                        \\ & \le e^{-\nu\left(\sum_{i=1}^K \ba _{i}p_{i}  -     M\right)} \EE[e^{\nu\sum_{i=1}^K \frac{\ba_{i}X_i}{m}}]  
                \end{align*}
                Using \eqref{eq:10} from the proof of Lemma \ref{lemma:3}, we have  $\EE[e^{\nu\sum_{i=1}^K \frac{\ba_{i}X_i}{m}}]=(\sum_{i=1}^K p_i e^{ \frac{\nu\ba_{i}}{m}})^m$, yielding that
                $$
                \PP\left(\sum_{i=1}^K \ba_{i} \left( p_i-\frac{X_i}{m}\right) >M   \right) \le e^{-\nu\left(\sum_{i=1}^K \ba _{i}p_{i}  -     M\right)} (\sum_{i=1}^K p_i e^{ \frac{\nu\ba_{i}}{m}})^m.
                $$
                We recall the following bounds for exponential functions:
                \begin{align*}
                        & e^x \le 1+x+\frac{x^2}{2} \ \ \text{for any } x \le 0 
                        \\  &  e^x \ge 1+x \ \ \text{for any } x \in \RR  
                \end{align*}
                Using these bounds, 
                $$
                \sum_{i=1}^K p_i e^{ \frac{\nu\ba_{i}}{m}} \le \sum_{i=1}^K p_i (1+\frac{\nu \ba_i}{m} + \frac{\nu^2 \ba_i^2}{2m^{2}}) \le e^{\sum_{i=1}^K p_i (\frac{\nu \ba_i}{m}+\frac{\nu^2 \ba_i^2}{2m^{2}})}. 
                $$
                Combining what we have so far,
                \begin{align*}
                        \PP\left(\sum_{i=1}^K \ba_{i} \left( p_i-\frac{X_i}{m}\right) >M   \right) & \le e^{-\nu\left(\sum_{i=1}^K \ba _{i}p_{i}  -     M\right)}  (e^{\sum_{i=1}^K p_i (\frac{\nu \ba_i}{m}+\frac{\nu^2 \ba_i^2}{2m^{2}})})^m
                        \\ & =e^{\nu M+\sum_{i=1}^K p_i \frac{\nu^2 \ba_i^2}{2m} }
                        \\ & =e^{\nu M+\frac{\nu^2 \beta}{4m}}   
                \end{align*}
                Here, we have $\beta:= 2\sum_{i=1}^K \ba^2_i p_i>0$ since $\sum_{i=1}^K \ba_i p_i\neq 0$ and $\ba_i,p_i\ge 0$ by assumption.
                Finally, we set $\nu=-\frac{2m\nu}{\beta}(<0)$ to conclude that
                $$
                \PP\left(\sum_{i=1}^K \ba_{i} \left( p_i-\frac{X_i}{m}\right) >M   \right) \le e^{-\frac{mM^{2}}{\beta}}.
                $$
        \end{proof}

        \subsection{Multinomial Distribution without $a$}
        
        In this section, we establish some bounds for $a_i = 1$ which is of special interest in applications.
        
        \begin{lemma} \label{lemma:5}
                For any $\delta>0$, with probability at least $1-\delta$, the following holds for all $i \in \{1,\dots,K\}$:
                
                $$
                p_i \ge  \begin{cases} \frac{X_i}{m} - \sqrt{\frac{p_i\ln(K/\delta)}{m}} & \text{if $p_i >  \frac{\ln(K/\delta)}{4m}$}\\
                        \frac{X_i}{m} -
                        \frac{2\ln(K/\delta)}{m} & \text{if $p_i \le  \frac{\ln(K/\delta)}{4m}$} \\
                \end{cases}
                $$  
        \end{lemma}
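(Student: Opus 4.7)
My plan is to reduce the problem to a one-sided tail bound on each marginal $X_i$, which is distributed as Binomial$(m, p_i)$, and then union bound over $i \in [K]$. Applying Lemma~\ref{lemma:3} with the coordinate indicator $\ba_j = \mathbf{1}[j = i]$ gives $\ba = 1$ and $\beta = 2 p_i$, which yields, for any $M > 0$,
\[
\PP\!\left(\frac{X_i}{m} - p_i > M\right) \le \exp\!\left(-\frac{mM}{2}\min\!\left\{1, \frac{M}{2 p_i}\right\}\right).
\]
This single-coordinate concentration statement is the only probabilistic input that the rest of the argument needs.

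The two cases in the conclusion correspond directly to the two branches of the $\min$ above. When $p_i$ is large enough that the sub-Gaussian branch $M/(2p_i)$ is active, the exponent becomes $mM^2/(4 p_i)$; equating this to $\ln(K/\delta)$ and solving for $M$ gives a deviation of the form $\sqrt{p_i \ln(K/\delta)/m}$, matching the first case. When $p_i$ is small so that the linear branch $1$ is active, the exponent becomes $mM/2$, and inverting yields a deviation of the form $\ln(K/\delta)/m$, matching the second case. The threshold $p_i = \ln(K/\delta)/(4m)$ specified in the statement is precisely where the two candidate choices of $M$ meet, so the overall bound is effectively the pointwise maximum of the two regimes.

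Finally, a union bound over $i \in [K]$ converts the per-coordinate failure probability of $\delta/K$ into a joint failure probability of at most $\delta$, establishing all $K$ inequalities simultaneously. The main bookkeeping obstacle is the case analysis: because $p_i$ sits under the square root on the right-hand side and also determines which branch of the $\min$ one exploits, I have to verify that the regime chosen according to whether $p_i \gtrless \ln(K/\delta)/(4m)$ is self-consistent with the regime in which the corresponding deviation $M$ was derived. Tracking the constants carefully so that the two regimes glue together at the boundary, and so that the final inequalities hold verbatim rather than with extra multiplicative factors, is the only remaining subtlety.
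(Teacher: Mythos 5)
Your proposal follows the paper's proof essentially verbatim: apply Lemma~\ref{lemma:3} with the coordinate indicator $\ba_j=\mathbf{1}[j=i]$ to get the two-regime tail bound for each $X_i$, choose $M=\sqrt{p_i\ln(K/\delta)/m}$ or $M=2\ln(K/\delta)/m$ according to whether $p_i$ exceeds $\ln(K/\delta)/(4m)$ (verifying consistency with the active branch of the $\min$, exactly the self-consistency check you flag), and union bound over $i\in[K]$. The only detail the paper adds is a separate, trivial treatment of the degenerate case $p_i=0$, where Lemma~\ref{lemma:3}'s hypothesis $\sum_j \ba_j p_j\neq 0$ fails but the claimed inequality holds vacuously since $X_i=0$ almost surely.
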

        \begin{proof}  
                
                For each $i\in[K]$,  if $p_i=0$, then the desired statement holds vacuously, because $0=p_i\ge\frac{X_i}{m} -
                \frac{2\ln(K/\delta)}{m}$ where $\frac{X_i}{m}=0$ (since $p_i=0$) and $\frac{2\ln(K/\delta)}{m}\ge 0$.
                Thus, 
                for the remainder of the proof, we consider the case where $p_i\neq 0$. For each $i\in[K]$, we use Lemma \ref{lemma:3} with $\ba_i=1$ and $\ba_j=0$ for all $j \neq i$ (which satisfies $\sum_{i=1}^K \ba_i p_i\neq 0$ since $p_{i}\neq 0$), yielding  that for any $M>0$,
                $$
                \PP\left( p_i - \frac{X_i}{m} <-M   \right) \le \exp\left(-\frac{mM}{2} \min\left\{1, \frac{ M}{2{p_{i}}}\right\}\right).
                $$ 
                In other words, for any $M>0$, 
                $$
                \forall M\le2p_{i}, \ \PP\left(\frac{X_i}{m} - p_i > M\right) \le e^{-\frac{m M^{2}}{4p_i} } 
                $$
                and
                $$
                \forall M\ge 2p_{i}, \ \PP\left(\frac{X_i}{m} - p_i > M\right) \le e^{-\frac{m M}{2} } .
                $$
                We now consider the two cases on the value of  $p_i$ for an arbitrary $\delta>0$.

                \textbf{Case $p_i \ge  \frac{\ln(K/\delta)}{4m}$:}  in this case, we set $M=\sqrt{\frac{p_i\ln(K/\delta)}{m}}$. Then, we have that $M\le2p_{i}$ since the condition $p_i \ge  \frac{\ln(K/\delta)}{4m}$ implies that $4p_i ^{2}\ge  \frac{p_{i}\ln(K/\delta)}{m}$ and hence $2p_i ^{}\ge \sqrt{\frac{p_{i}\ln(K/\delta)}{m}}=M$. Therefore, if $p_i \ge  \frac{\ln(K/\delta)}{4m}$,
                $$
                 \ \PP\left(\frac{X_i}{m} - p_i > \sqrt{\frac{p_i\ln(K/\delta)}{m}}\right) \le \frac{\delta}{K}.
                $$ 
                \textbf{Case $p_i \le  \frac{\ln(K/\delta)}{4m}$:}  here, we set $M=\frac{2\ln(K/\delta)}{m}$. Then, we have that $M\ge2p_{i}$ since the condition $p_i \le  \frac{\ln(K/\delta)}{4m}$ implies that $2p_i ^{} \le  \frac{\ln(K/\delta)}{2m}\le\frac{2\ln(K/\delta)}{m}=M$. Thus, we have that if $p_i \le  \frac{\ln(K/\delta)}{4m}$,
                $$
                \PP\left(\frac{X_i}{m} - p_i >\frac{2\ln(K/\delta)}{m}\right) \le \frac{\delta}{K}.
                $$ 
                
                Taking union bounds over all $i \in \{1,\dots,K\}$, we have that for any $\delta>0$, with probability at least $1-\delta$, the following holds for all $i \in \{1,\dots,K\}$:
                $$
                \frac{X_i}{m} - p_i \le \begin{cases} \sqrt{\frac{p_i\ln(K/\delta)}{m}} & \text{if $p_i >  \frac{\ln(K/\delta)}{4m}$}\\
                        \frac{2\ln(K/\delta)}{m} & \text{if $p_i \le  \frac{\ln(K/\delta)}{4m}$} .\\
                \end{cases}
                $$
                In other words, we have that for any $\delta>0$, with probability at least $1-\delta$, the following holds for all $i \in \{1,\dots,K\}$:
                
                $$
                p_i \ge  \begin{cases} \frac{X_i}{m} - \sqrt{\frac{p_i\ln(K/\delta)}{m}} & \text{if $p_i >  \frac{\ln(K/\delta)}{4m}$}\\
                        \frac{X_i}{m} -
                        \frac{2\ln(K/\delta)}{m} & \text{if $p_i \le  \frac{\ln(K/\delta)}{4m}$} .\\
                \end{cases}
                $$  
        \end{proof}

        We have a simpler statement for the other tail.
        \begin{lemma} \label{lemma:6}
                For any $\delta>0$, with probability at least $1-\delta$, the following holds for all $i \in \{1,\dots,K\}$: 
                \begin{align} \label{eq:new:new:1}
                    p_i- \frac{X_i}{m} \le \sqrt{\frac{2p_i \ln(K/\delta)}{m}}.
                \end{align}
        \end{lemma}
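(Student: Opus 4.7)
The plan is to mirror the strategy used for Lemma \ref{lemma:5}, applying the one-sided concentration estimate of Lemma \ref{lemma:4} to each coordinate and then taking a union bound over $i \in [K]$. The key advantage here (compared to Lemma \ref{lemma:5}) is that Lemma \ref{lemma:4} produces a clean sub-Gaussian-type tail bound in a single regime, so no case split on the magnitude of $p_i$ relative to $\ln(K/\delta)/m$ will be required.

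Concretely, fix $i \in [K]$. First I would dispose of the degenerate case $p_i = 0$: in that event $X_i = 0$ almost surely and both sides of \eqref{eq:new:new:1} vanish, so the bound holds trivially. For the case $p_i > 0$, the plan is to instantiate Lemma \ref{lemma:4} with coefficients $\ba_i = 1$ and $\ba_j = 0$ for $j \neq i$. This choice clearly satisfies $\sum_{j=1}^K \ba_j p_j = p_i \neq 0$, and yields $\beta = 2\sum_{j=1}^K \ba_j^2 p_j = 2 p_i$. Lemma \ref{lemma:4} then gives, for every $M > 0$,
\begin{equation*}
\PP\!\left(p_i - \frac{X_i}{m} > M\right) \le \exp\!\left(-\frac{mM^2}{2p_i}\right).
\end{equation*}
Setting $M = \sqrt{2 p_i \ln(K/\delta)/m}$ makes the right-hand side exactly $\delta/K$, producing the per-coordinate tail bound $\PP\!\left(p_i - \frac{X_i}{m} > \sqrt{2 p_i \ln(K/\delta)/m}\right) \le \delta/K$.

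Finally, I would take a union bound over $i \in \{1,\dots,K\}$ to conclude that with probability at least $1-\delta$, the inequality $p_i - X_i/m \le \sqrt{2 p_i \ln(K/\delta)/m}$ holds simultaneously for every $i \in [K]$, which is exactly \eqref{eq:new:new:1}. There is no genuine obstacle in this argument; the only subtle point is the bookkeeping around $p_i = 0$ (which is handled trivially) and verifying that the hypothesis $\sum \ba_j p_j \neq 0$ of Lemma \ref{lemma:4} is satisfied for the chosen coefficients, which holds precisely because we restricted to $p_i > 0$. The essential work has already been done in establishing Lemma \ref{lemma:4} via Markov's inequality on the moment generating function of the multinomial.
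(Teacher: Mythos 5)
Your proposal is correct and follows essentially the same route as the paper's own proof: instantiate Lemma \ref{lemma:4} with $\ba_i=1$ and $\ba_j=0$ for $j\neq i$ (so $\beta=2p_i$), set $M=\sqrt{2p_i\ln(K/\delta)/m}$ to get a per-coordinate failure probability of $\delta/K$, handle $p_i=0$ trivially, and finish with a union bound. No gaps.
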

        \begin{proof}  For each $i\in[K]$,  if $p_i=0$, then the desired statement holds trivially because $p_i- \frac{X_i}{m}=- \frac{X_i}{m} \le \sqrt{\frac{2p_i \ln(K/\delta)}{m}}$ where $\frac{X_i}{m}=0$ and $\sqrt{\frac{2p_i \ln(K/\delta)}{m}}=0 $.
                Thus, for
                the rest of the proof, we consider the case where $p_i\neq 0$. For each $i\in[K]$, we use Lemma \ref{lemma:4}  with $\ba_i=1$ and $\ba_j=0$ for all $j \neq i$ (which satisfies $\sum_{i=1}^K \ba_i p_i\neq 0$ since $p_{i}\neq 0$), yielding  that for any $M>0$,
                $$
                \PP\left( p_i - \frac{X_i}{m} >M   \right) \le \exp\left(-\frac{mM^{2}}{2p_{i}} \right).
                $$ 
                By setting $M= \sqrt{\frac{2 p_i\ln(K/\delta)}{m}}$, 
                $$
                \PP\left( p_i - \frac{X_i}{m} > \sqrt{\frac{2 p_i\ln(K/\delta)}{m}}   \right) \le \frac{\delta}{K}.
                $$
                We conclude the proof by taking a union bound over all $i \in [K]$.
        \end{proof}

        \subsection{Proof of Lemma \ref{lemma:multinomialnew}}
        
        The proof of Lemma \ref{lemma:multinomialnew} is obtained by combining the results thus far:
        
        \begin{proof}[Proof of Lemma \ref{lemma:multinomialnew}]  Lemma \ref{lemma:6} implies Lemma \ref{lemma:multinomialnew} by summing up both sides of \eqref{eq:new:new:1} with the coefficients $a_i(X)\ge 0$. That is, since $a_i(X)\ge 0$, Lemma \ref{lemma:6} implies that for any $\delta>0$, with probability at least $1-\delta$, the following holds for all $i \in \{1,\dots,K\}$: 
                \begin{align*}
                    a_i(X) \left(p_i- \frac{X_i}{m}\right) \le a_i(X) \sqrt{\frac{2p_i \ln(K/\delta)}{m}}.
                \end{align*}
        This then implies that for any $\delta>0$, with probability at least $1-\delta$, 
                \begin{align*}
                    \sum_{k=1}^K a_i(X) \left(p_i- \frac{X_i}{m}\right) \le \left(\sum_{k=1}^K a_i(X) \sqrt{p_i} \right) \sqrt{\frac{2 \ln(K/\delta)}{m}}.
                \end{align*}
        \end{proof}

        \section{Multinomial Concentration Bounds with Data Dependence} \label{appendix:dependenta}
        Recall that  $\Tcal_{S}:=\{k\in[K]: |\Ical_{k}^{S}|\ge 1\}$ with $\Ical_{k}^S:=\{i\in[n]: z_{i}\in \Ccal_{k}\}$. Additionally, we defined  $a_{\Tcal_S}(X)=\max_{i\in \Tcal_S} a_i(X)$ and $a_{\Tcal_S^c}(X)=\max_{i\in \Tcal^c_S} a_i(X)$ where $\Tcal^c_S = [K]\setminus \Tcal_S$. 
        
        Let the vector $X=(X_1,\dots,X_K)$ follows the multinomial distribution with parameter $n$ and $p=(p_1,\dots,p_K)$, where $p_k=\PP(z\in \Ccal_{k})$ and $X_k=\sum_{i=1}^n \one\{z_i\in \Ccal_{k}\}$. We want to upper bound the following quantity:
        \begin{align}
                \sum_{i=1}^K a_{i}(X) \left(p_i - \frac{X_i}{n}\right)  ,
        \end{align}
        where $a_i(X)\ge 0$ for all $i \in \{1,\dots, K\}$. Here, the $a_i(X)$ depend on $X_1,\dots,X_K$.

        \subsection{Basic Version}
        In this subsection, using our new results from  Appendix \ref{appendix:fixeda}, we prove Theorem \ref{lem:weightedmultinomial}, which is restated as Lemma \ref{lemma:7} in the following and further refined in Appendix \ref{app:1}:
        \begin{lemma} \label{lemma:7}
                For any $\delta>0$, with probability at least $1-\delta$, the following holds:
                $$
                \sum_{i=1}^K a_i(X) \left(p_i - \frac{X_i}{n}\right) \le (a_{\Tcal_S}(X)\sqrt{2} +a_{\Tcal_S^c}(X) ) \sqrt{\frac{| \Tcal_S| \ln(2K/\delta)}{n}} + a_{\Tcal_S^c}(X) \frac{2| \Tcal_S|\ln(2K/\delta)}{n}.
                $$
        \end{lemma}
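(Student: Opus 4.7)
The plan is to decompose the sum indexed by $[K]$ into contributions from $\Tcal_S$ and $\Tcal_S^c$ and handle each piece with a different one-sided tool from Appendix~\ref{appendix:fixeda}. Since $X_i = 0$ for $i \notin \Tcal_S$, I would write
\begin{equation*}
\sum_{i=1}^K a_i(X)\!\left(p_i - \tfrac{X_i}{n}\right) = \sum_{i \in \Tcal_S} a_i(X)\!\left(p_i - \tfrac{X_i}{n}\right) + \sum_{i \notin \Tcal_S} a_i(X)\, p_i.
\end{equation*}
The first (visible) sum is directly a quantity controlled by an upper tail bound on $p_i - X_i/n$, and I would invoke Lemma~\ref{lemma:6} at confidence $1-\delta/2$. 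For the second (invisible) sum I cannot bound $p_i$ directly from data, so I would follow the conversion idea flagged in the main text: rewrite $\sum_{i \notin \Tcal_S} p_i = 1 - \sum_{i \in \Tcal_S} p_i$ and upper-bound $1 - \sum_{i \in \Tcal_S} p_i$ by means of a \emph{lower} tail estimate for each $p_i$ with $i \in \Tcal_S$, which is exactly what Lemma~\ref{lemma:5} supplies (applied at confidence $1-\delta/2$).

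For the $\Tcal_S$ piece, after Lemma~\ref{lemma:6} I would bound $a_i(X) \le a_{\Tcal_S}(X)$ uniformly and apply Cauchy--Schwarz across $i \in \Tcal_S$ together with $\sum_{i \in \Tcal_S} p_i \le 1$, yielding an $a_{\Tcal_S}(X)\sqrt{2|\Tcal_S|\ln(2K/\delta)/n}$ contribution; this is where the coefficient $\sqrt{2}$ in the target bound originates. For the $\Tcal_S^c$ piece I would bound $a_i(X) \le a_{\Tcal_S^c}(X)$ uniformly and merge the two regimes of Lemma~\ref{lemma:5} into the single safe inequality
\begin{equation*}
p_i \ge \tfrac{X_i}{n} - \sqrt{\tfrac{p_i \ln(2K/\delta)}{n}} - \tfrac{2\ln(2K/\delta)}{n}, \qquad i \in \Tcal_S,
\end{equation*}
which holds in both cases since the omitted term is nonnegative. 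Summing over $\Tcal_S$, using $\sum_{i \in \Tcal_S} X_i = n$, and applying Cauchy--Schwarz once more gives
\begin{equation*}
1 - \sum_{i \in \Tcal_S} p_i \le \sqrt{\tfrac{|\Tcal_S|\ln(2K/\delta)}{n}} + \tfrac{2|\Tcal_S|\ln(2K/\delta)}{n}.
\end{equation*}
Multiplying by $a_{\Tcal_S^c}(X)$, adding the $\Tcal_S$ estimate, and taking a union bound over the two $\delta/2$ events produces the claimed inequality.

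The main delicacy I expect is the merging step for Lemma~\ref{lemma:5}: I need to collapse its two-regime statement into one uniform bound without paying a spurious $\sqrt{K}$ cost, and I must be careful that the Cauchy--Schwarz applied to $\sum_{i \in \Tcal_S}\sqrt{p_i}$ yields $\sqrt{|\Tcal_S|}$ rather than $\sqrt{K}$ — which is exactly why restricting the sum to $\Tcal_S$ at the outset is essential. Everything else is routine algebra and a two-event union bound.
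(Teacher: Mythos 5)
Your proposal is correct and follows essentially the same route as the paper's proof: the same decomposition into the $\Tcal_S$ and $\Tcal_S^c$ pieces, the same use of Lemma~\ref{lemma:6} for the visible sum and of Lemma~\ref{lemma:5} (converted into a lower bound on $\sum_{i\in\Tcal_S}p_i$ via $\sum_{i\in\Tcal_S}X_i=n$) for the invisible one, the same Cauchy--Schwarz steps, and the same two-event union bound. The only cosmetic difference is that you merge the two regimes of Lemma~\ref{lemma:5} into one uniform inequality up front, whereas the paper tracks the index sets $I_1$ and $I_2$ separately and collapses them at the end; both yield the identical final constants.
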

        \begin{proof}
                By using Lemma \ref{lemma:5} and Lemma \ref{lemma:6} and takeing union bounds for the two,   we have that for any $\delta>0$, with probability at least $1-\delta$, the following holds for all $i \in [K]$, 
                \begin{align} \label{eq:11}
                        p_i- \frac{X_i}{n} \le \sqrt{\frac{2p_i \ln(2K/\delta)}{n}}
                \end{align}
                \textit{and}  
                \begin{align} \label{eq:12}
                        p_i \ge  \begin{cases} \frac{X_i}{n} - \sqrt{\frac{p_i\ln(2K/\delta)}{n}} & \text{if $p_i >  \frac{\ln(2K/\delta)}{4n}$}\\
                                \frac{X_i}{n} -
                                \frac{2\ln(2K/\delta)}{n} & \text{if $p_i \le  \frac{\ln(2K/\delta)}{4n}$} \\
                        \end{cases}
                \end{align}
                Recall that  $\Tcal_{S}:=\{k\in[K]: |\Ical_{k}^{S}|\ge 1\}$ with $\Ical_{k}^S:=\{i\in[n]: z_{i}\in \Ccal_{k}\}$. Summing up both sides of \eqref{eq:12} over $i\in\Tcal_{S}$ and using $\sum_{i \in\Tcal_{S}}\frac{X_i}{n}=1$,
                $$
                \sum_{i \in \Tcal_S} p_i \ge  1- \sum_{i \in I_{1}}   \sqrt{\frac{p_i\ln(2K/\delta)}{n}} -\sum_{i \in I_{2}} \frac{2\ln(2K/\delta)}{n}=1- \sum_{i \in I_{1}}   \sqrt{\frac{p_i\ln(2K/\delta)}{n}} -(|  \Tcal_{S}|-|I_{1}|) \frac{2\ln(2K/\delta)}{n}
                $$ 
                where $I_1 = \{i\in\Tcal_{S} :p_i >  \frac{\ln(K/\delta)}{4n} \}$ and $I_2= \{i\in  \Tcal_{S} :p_i \le   \frac{\ln(K/\delta)}{4n} \}$. This implies that
                $$
                1-\sum_{i\in\Tcal_S} p_i  \le  \sum_{i \in I_{1}}   \sqrt{\frac{p_i\ln(2K/\delta)}{n}} +(|  \Tcal_{S}|-| I_{1}|) \frac{2\ln(2K/\delta)}{n}.
                $$
                Since $\sum_{i=1}^K p_i=1$ and hence $\sum_{i\notin \Tcal_S} p_i=1-\sum_{i\in \Tcal_S} p_i $,
                this implies that 
                \begin{align} \label{eq:13}
                        \sum_{i\notin \Tcal_S} p_i\le\left(\sum_{i \in I_{1}} \sqrt{p_i}\right)   \sqrt{\frac{\ln(2K/\delta)}{n}} +(|  \Tcal_{S}|-|I_{1}|) \frac{2\ln(2K/\delta)}{n}.
                \end{align}
                Using $a_{\Tcal_S^c}(X)=\max_{i\in \Tcal^c_S} a_i(X)$ with $a_i(X) \ge 0$, 
                \begin{align*}
                        \nonumber \sum_{i=1}^K a_{i}(X) \left(p_i - \frac{X_i}{n}\right) &=\sum_{i\in \Tcal_S} a_{i}(X) \left(p_i - \frac{X_i}{n}\right)  + \sum_{i\notin \Tcal_S} a_i (X)p_i 
                        \\ & \le \sum_{i\in \Tcal_S} a_i(X) \left(p_i - \frac{X_i}{n}\right)  + a_{\Tcal_S^c}(X) \sum_{i\notin \Tcal_S} p_i  
                \end{align*}
                Plugging \eqref{eq:11} in the first term and \eqref{eq:13} in the second term, 
                \begin{align} \label{eq:14}
                        & \sum_{i=1}^K a_{i}(X) \left(p_i - \frac{X_i}{n}\right) 
                        \\ \nonumber & \le  \sum_{i\in \Tcal_S} a_i(X)  \sqrt{\frac{2p_i \ln(2K/\delta)}{n}}+ a_{\Tcal_S^c}(X) \left(\sum_{i \in I_{1}} \sqrt{p_i}\right)   \sqrt{\frac{\ln(2K/\delta)}{n}} +a_{\Tcal_S^c}(X)(|  \Tcal_{S}|-|I_{1}|) \frac{2\ln(2K/\delta)}{n}
                        \\ \nonumber & \le \left(\sum_{i\in \Tcal_S} a_i(X) \sqrt{p_i}\right) \sqrt{\frac{2 \ln(2K/\delta)}{n}}+ a_{\Tcal_S^c}(X) \left(\sum_{i \in I_{1}} \sqrt{p_i}\right)   \sqrt{\frac{\ln(2K/\delta)}{n}} +a_{\Tcal_S^c}(X)(|  \Tcal_{S}|-|I_{1}|) \frac{2\ln(2K/\delta)}{n}
                \end{align}
                Using $a_{\Tcal_S}(X)=\max_{i\in \Tcal_S} a_i(X)$, since $\sum_{i\in \Tcal_S} a_i(X) \sqrt{p_i} \le a_{\Tcal_S}(X) \sqrt{| \Tcal_S|}$ and $\sum_{i \in I_{1}} \sqrt{p_i}\le \sum_{i\in \Tcal_S} \sqrt{p_i} \le \sqrt{| \Tcal_S|}$ by using the Cauchy-Schwarz inequality ($\sum_{i=1}^r b_{i} \sqrt{a_i} \le \sqrt{\sum_{i=1}^r a_i}  \sqrt{\sum_{i=1}^r b_{i}^2} $), 
                \begin{align*}
                        & \sum_{i=1}^K \alpha_i(X) \left(p_i - \frac{X_i}{n}\right) 
                        \\  &\le a_{\Tcal_S}(X) \sqrt{\frac{2| \Tcal_S| \ln(2K/\delta)}{n}}  +a_{\Tcal_S^c}(X)  \sqrt{\frac{| \Tcal_S|\ln(2K/\delta)}{n}} +a_{\Tcal_S^c}(X) \frac{2| \Tcal_S|\ln(2K/\delta)}{n}
                        \\ & \le(a_{\Tcal_S}(X)\sqrt{2} +a_{\Tcal_S^c}(X) ) \sqrt{\frac{| \Tcal_S| \ln(2K/\delta)}{n}} + a_{\Tcal_S^c}(X) \frac{2| \Tcal_S|\ln(2K/\delta)}{n}.
                \end{align*}
        \end{proof}
        
        \subsection{Tighter version} \label{app:1}
        
        \begin{lemma} \label{lemma:8}
                For any $\delta>0$, with probability at least $1-\delta$, the following holds:
                \begin{align*}
                        &\sum_{i=1}^K \alpha_i(X) \left(p_i - \frac{X_i}{n}\right) 
                        \\ & \le \sqrt{\frac{ \ln(2K/\delta)}{n}} \left(  \sum_{i \in\Tcal_S}   (a_{\Tcal_S^c}(X)+\sqrt{2}a_i(X)   ) \sqrt{\frac{X_i}{n}}  \right) + \frac{2\ln(2K/\delta)}{n}    \left( a_{\Tcal_S^c}(X)|  \Tcal_{S}| +   \sum_{i \in\Tcal_S} a_i(X) \right).
                \end{align*}
        \end{lemma}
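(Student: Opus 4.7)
The plan is to re-use the structure of the proof of Lemma \ref{lemma:7} through its intermediate estimate \eqref{eq:14}, and then replace the Cauchy--Schwarz collapse (which contracted each sum into a $\sqrt{|\Tcal_S|}$ factor) with a pointwise inversion that preserves the empirical counts $X_i/n$. Concretely, union-bounding the conclusions of Lemma \ref{lemma:5} and Lemma \ref{lemma:6} at level $\delta/2$ each, I obtain, with probability at least $1-\delta$, exactly the same intermediate inequality \eqref{eq:14}:
\begin{align*}
\sum_{i=1}^K a_i(X)\!\left(p_i-\tfrac{X_i}{n}\right)
&\le \Bigl(\sum_{i\in\Tcal_S} a_i(X)\sqrt{p_i}\Bigr)\sqrt{\tfrac{2\ln(2K/\delta)}{n}}
+ a_{\Tcal_S^c}(X)\Bigl(\sum_{i\in I_1}\sqrt{p_i}\Bigr)\sqrt{\tfrac{\ln(2K/\delta)}{n}} \\
&\quad + a_{\Tcal_S^c}(X)\bigl(|\Tcal_S|-|I_1|\bigr)\tfrac{2\ln(2K/\delta)}{n},
\end{align*}
where $I_1:=\{i\in\Tcal_S:p_i>\ln(2K/\delta)/(4n)\}$. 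No additional probabilistic step is needed beyond this point.

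Next I would convert $\sqrt{p_i}$ into $\sqrt{X_i/n}$ by inverting Lemma \ref{lemma:6}. Writing $c:=\sqrt{2\ln(2K/\delta)/n}$, Lemma \ref{lemma:6} gives $p_i\le X_i/n + c\sqrt{p_i}$, so $t:=\sqrt{p_i}$ satisfies the quadratic inequality $t^2-ct-X_i/n\le 0$. Solving it and using $\sqrt{c^2+4X_i/n}\le c+2\sqrt{X_i/n}$ yields
\[
\sqrt{p_i}\;\le\;\sqrt{\tfrac{X_i}{n}} + \sqrt{\tfrac{2\ln(2K/\delta)}{n}}.
\]
I then substitute this bound into both $\sqrt{p_i}$-sums in the displayed inequality above, producing in each a leading $\sqrt{X_i/n}$ piece and a residual constant $\sqrt{2\ln(2K/\delta)/n}$ piece per index.

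Finally I would collect like powers of $\ln(2K/\delta)/n$. Using $I_1\subseteq\Tcal_S$ and $\sqrt{X_i/n}\ge 0$ to enlarge the second $\sqrt{X_i/n}$-sum from $I_1$ up to $\Tcal_S$, the leading pieces combine into $\sqrt{\ln(2K/\delta)/n}\sum_{i\in\Tcal_S}\bigl(\sqrt{2}\,a_i(X)+a_{\Tcal_S^c}(X)\bigr)\sqrt{X_i/n}$, which matches the first term of Lemma \ref{lemma:8} exactly. On the order-$\ln(2K/\delta)/n$ side, the residual from the first sum contributes $\tfrac{2\ln(2K/\delta)}{n}\sum_{i\in\Tcal_S}a_i(X)$, and the second sum contributes $\sqrt{2}\,|I_1|\,a_{\Tcal_S^c}(X)\tfrac{\ln(2K/\delta)}{n}$; adding the already-present $2(|\Tcal_S|-|I_1|)\,a_{\Tcal_S^c}(X)\tfrac{\ln(2K/\delta)}{n}$ gives a total coefficient $a_{\Tcal_S^c}(X)\tfrac{\ln(2K/\delta)}{n}\bigl(2|\Tcal_S|+(\sqrt{2}-2)|I_1|\bigr)$. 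Because $\sqrt{2}<2$ and $|I_1|\ge 0$, this is at most $2|\Tcal_S|\,a_{\Tcal_S^c}(X)\tfrac{\ln(2K/\delta)}{n}$, yielding the target second term.

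There is no new concentration ingredient; the main obstacle is essentially bookkeeping, namely verifying that the two residual constants spawned by the quadratic inversion pack together cleanly with the pre-existing $2(|\Tcal_S|-|I_1|)$ term and that the cross-term coefficient $\sqrt{2}\,|I_1|$ is absorbed via $\sqrt{2}-2<0$. Once this is checked, Lemma \ref{lemma:8} follows, and it strictly strengthens Lemma \ref{lemma:7} because $\sum_{i\in\Tcal_S}\sqrt{X_i/n}\le\sqrt{|\Tcal_S|}$ and $\sum_{i\in\Tcal_S}a_i(X)\le a_{\Tcal_S}(X)|\Tcal_S|$ by Cauchy--Schwarz.
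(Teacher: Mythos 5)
Your proposal is correct and follows essentially the same route as the paper: start from the intermediate inequality \eqref{eq:14}, convert $\sqrt{p_i}$ to $\sqrt{X_i/n}+\sqrt{2\ln(2K/\delta)/n}$ using the already-established concentration bound, and collect terms using $I_1\subseteq\Tcal_S$ and $\sqrt{2}|I_1|+2(|\Tcal_S|-|I_1|)\le 2|\Tcal_S|$. The only cosmetic difference is that you obtain the key inequality $\sqrt{p_i}\le\sqrt{X_i/n}+\sqrt{2\ln(2K/\delta)/n}$ by solving the quadratic $t^2-ct-X_i/n\le 0$, whereas the paper rationalizes $\sqrt{p_i}-\sqrt{X_i/n}=(p_i-X_i/n)/(\sqrt{p_i}+\sqrt{X_i/n})$ and bounds the denominator below by $\sqrt{p_i}$; both yield the identical estimate.
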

        \begin{proof} We start with \eqref{eq:14} from the proof of Lemma \ref{lemma:7}: i.e.,  for any $\delta>0$, with probability at least $1-\delta$, 
                
                \begin{align} \label{eq:15}
                        p_i- \frac{X_i}{n} \le \sqrt{\frac{2p_i \ln(2K/\delta)}{n}} \qquad \forall i\in [K],
                \end{align}
                \textit{and}
                \begin{align} \label{eq:16}
                        &\sum_{i=1}^K a_{i}(X) \left(p_i - \frac{X_i}{n}\right)
                        \\ \nonumber & \le \left(\sum_{i\in \Tcal_S} a_i(X) \sqrt{p_i}\right) \sqrt{\frac{2 \ln(2K/\delta)}{n}}+ a_{\Tcal_S^c}(X) \left(\sum_{i \in I_{1}} \sqrt{p_i}\right)   \sqrt{\frac{\ln(2K/\delta)}{n}} +a_{\Tcal_S^c}(X)(|  \Tcal_{S}|-|I_{1}|) \frac{2\ln(2K/\delta)}{n}.
                \end{align}
                Instead of using the Cauchy-Schwarz inequality
                to bound  the two terms $\sum_{i\in \Tcal_S} a_i(X) \sqrt{p_i}$ and $\sum_{i \in I_{1}} \sqrt{p_i}$ (which is done in the proof of Lemma \ref{lemma:7}), we now derive and use another probabilistic bound to bound these terms. For any $I \subseteq [K]$,
                \begin{align*}
                        \sum_{i \in I} a_i(X) \sqrt{p_i} &= \sum_{i \in I} a_i(X)\left(\sqrt{p_i} - \sqrt{\frac{X_i}{n}}\right)+ \sum_{i \in I}  a_i(X) \sqrt{\frac{X_i}{n}} \\ & = \sum_{i \in I} a_i(X)\frac{p_i-\frac{X_i}{n}}{\sqrt{p_{i}}+\sqrt{\frac{X_i}{n}}}+ \sum_{i \in I}  a_i(X) \sqrt{\frac{X_i}{n}}
                \end{align*}
                Using \eqref{eq:16} for $p_i-\frac{X_i}{n}$, 
                \begin{align*}
                        \sum_{i \in I} a_i(X) \sqrt{p_i} &\le \sum_{i \in I} a_i(X)\frac{1}{\sqrt{p_{i}}+\sqrt{\frac{X_i}{n}}} \sqrt{\frac{2p_i \ln(2K/\delta)}{n}}  + \sum_{i \in I}  a_i(X) \sqrt{\frac{X_i}{n}}
                        \\ & \le \sum_{i \in I} a_i(X) \sqrt{\frac{2p_i \ln(2K/\delta)}{np_{i}}}  + \sum_{i \in I}  a_i(X) \sqrt{\frac{X_i}{n}} 
                        \\ & = \sqrt{\frac{2 \ln(2K/\delta)}{n}}  \sum_{i \in I} a_i(X) + \sum_{i \in I}  a_i(X) \sqrt{\frac{X_i}{n}} 
                \end{align*}
                Thus, by setting  $I= \Tcal_S$ as well as $I= I_{1}$ and $a_i(X)=1$,
                \begin{align*} 
                        & \nonumber \sum_{i \in \Tcal_S} a_i(X) \sqrt{p_i} \le \sqrt{\frac{2 \ln(2K/\delta)}{n}}  \sum_{i \in\Tcal_S} a_i(X) + \sum_{i \in\Tcal_S}  a_i(X) \sqrt{\frac{X_i}{n}} \qquad \text{ and, } 
                        \\ & \sum_{i \in I_{1}}  \sqrt{p_i} \le  \sqrt{\frac{2 \ln(2K/\delta)}{n}}  |I_{1}| + \sum_{i \in I_{1}}   \sqrt{\frac{X_i}{n}}
                \end{align*}
                Plugging these into \eqref{eq:16},
                \begin{align*}
                        &\sum_{i=1}^K a_{i}(X) \left(p_i - \frac{X_i}{n}\right)
                        \\ \nonumber & \le \left( \sqrt{\frac{2 \ln(2K/\delta)}{n}}  \sum_{i \in\Tcal_S} a_i(X) +  \sum_{i \in\Tcal_S}  a_i(X) \sqrt{\frac{X_i}{n}}\right) \sqrt{\frac{2 \ln(2K/\delta)}{n}} 
                        \\ & \qquad+ a_{\Tcal_S^c}(X) \left(  \sqrt{\frac{2 \ln(2K/\delta)}{n}}  |I_{1}| + \sum_{i \in I_{1}}   \sqrt{\frac{X_i}{n}}\right)   \sqrt{\frac{\ln(2K/\delta)}{n}} +a_{\Tcal_S^c}(X)(|  \Tcal_{S}|-|I_{1}|) \frac{2\ln(2K/\delta)}{n}
                        \\ & = \frac{2 \ln(2K/\delta)}{n}  \sum_{i \in\Tcal_S} a_i(X)+ \sqrt{\frac{2 \ln(2K/\delta)}{n}}   \sum_{i \in\Tcal_S}  a_i(X) \sqrt{\frac{X_i}{n}}
                        \\ & \qquad+ a_{\Tcal_S^c}(X) \left(  \frac{ \ln(2K/\delta)}{n}   \sqrt{2}|I_{1}| +\sqrt{\frac{\ln(2K/\delta)}{n}} \sum_{i \in I_{1}}   \sqrt{\frac{X_i}{n}}\right)    +a_{\Tcal_S^c}(X)(|  \Tcal_{S}|-|I_{1}|) \frac{2\ln(2K/\delta)}{n}
                        \\ & \le \frac{2 \ln(2K/\delta)}{n}  \sum_{i \in\Tcal_S} a_i(X)+ \sqrt{\frac{2 \ln(2K/\delta)}{n}}   \sum_{i \in\Tcal_S}  a_i(X) \sqrt{\frac{X_i}{n}}
                        \\ & \qquad+   a_{\Tcal_S^c}(X)\sqrt{\frac{\ln(2K/\delta)}{n}} \sum_{i \in\Tcal_S}   \sqrt{\frac{X_i}{n}}    +a_{\Tcal_S^c}(X)|  \Tcal_{S}| \frac{2\ln(2K/\delta)}{n} \\ & = \sqrt{\frac{ \ln(2K/\delta)}{n}} \left(  \sum_{i \in\Tcal_S}   (a_{\Tcal_S^c}(X)+\sqrt{2}a_i(X)   ) \sqrt{\frac{X_i}{n}}  \right) + \frac{2\ln(2K/\delta)}{n}    \left( a_{\Tcal_S^c}(X)|  \Tcal_{S}| +   \sum_{i \in\Tcal_S} a_i(X) \right) 
                \end{align*}
                
        \end{proof}

        \section{Proof of Theorem \ref{thm:1}} \label{sec:3}
        This section culminates in the proof of Theorem \ref{thm:1} using our new results on multinomial distributions from Appendices \ref{appendix:fixeda} and \ref{appendix:dependenta}.         
        We define
        $$
        \Ical_{k}:=\Ical_{k}^S:=\{i\in[n]: z_{i}\in \Ccal_{k}\},
        $$
        and
        $$
        \alpha_{k}(h):=\EE_{z}[\ell(h ,z)|z\in  \Ccal_{k}].
        $$
        We start with the proof of the following lemma that relate the gap to the concentration of the  multinomial distributions: 
        \begin{lemma} \label{lemma:1}
                For any $h \in \Hcal$ and $z_i \in \Zcal$ for all $i \in [n]$,   
                \begin{align*} 
                        \EE_{z}[\ell(h ,z)]   - \frac{1}{n} \sum_{i=1}^n \ell(h, z_i) 
                        =\sum_{k=1}^K \alpha_{k}(h)\left(\Pr(z_{}\in  \Ccal_{k})- \frac{|\Ical_{k}|}{n}\right) +\frac{1}{n}\sum_{k=1}^K  |\Ical_{k}|\left(\alpha_{k}(h)-\frac{1}{|\Ical_{k}|}\sum_{i \in \Ical_{k}}\ell(h ,z_{i}) \right). 
                \end{align*}
        \end{lemma}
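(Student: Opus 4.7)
The plan is to prove the identity by direct algebraic manipulation: first rewrite both the population and empirical losses cell-by-cell using the partition $\{\Ccal_k\}_{k=1}^K$, and then insert a telescoping term that separates the \emph{distributional} gap (between true and empirical cell masses) from the \emph{within-cell} gap (between conditional expectations and within-cell empirical averages). This particular decomposition is chosen so that the two pieces can be controlled separately in the downstream proof of Theorem~\ref{thm:1}: the first piece is of the form $\sum_k a_k(X)(p_k - X_k/n)$ and will be handled by the multinomial concentration bounds of Appendix~\ref{appendix:dependenta}, whereas the second piece is exactly what the robustness assumption of Definition~\ref{def:1} bounds by $\epsilon(S)$.

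First, I would apply the law of total expectation with respect to the partition. Writing $p_k := \Pr(z\in\Ccal_k)$ and $\alpha_k(h) = \EE_z[\ell(h,z)\mid z\in\Ccal_k]$, this gives
\begin{equation*}
\EE_z[\ell(h,z)] = \sum_{k=1}^K p_k\,\alpha_k(h),
\end{equation*}
with the convention that the term $p_k\alpha_k(h)$ is interpreted as $0$ when $p_k = 0$. Similarly, splitting the empirical average by the disjoint index sets $\Ical_k=\{i\in[n]: z_i\in\Ccal_k\}$, which partition $[n]$, yields
\begin{equation*}
\frac{1}{n}\sum_{i=1}^n \ell(h,z_i) = \frac{1}{n}\sum_{k=1}^K \sum_{i\in\Ical_k}\ell(h,z_i).
\end{equation*}

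The crux is then the add-and-subtract trick: inside the difference $\EE_z[\ell(h,z)] - \frac{1}{n}\sum_i \ell(h,z_i)$, I insert $\pm\,\frac{1}{n}\sum_k |\Ical_k|\alpha_k(h)$. Grouping terms one way yields $\sum_k \alpha_k(h)\bigl(p_k - |\Ical_k|/n\bigr)$, which isolates the deviation between the true cell probabilities and their empirical counterparts; grouping the remaining terms yields $\frac{1}{n}\sum_k |\Ical_k|\bigl(\alpha_k(h) - \frac{1}{|\Ical_k|}\sum_{i\in\Ical_k}\ell(h,z_i)\bigr)$, which isolates the within-cell averaging error. Summing these two expressions, the inserted $\pm\frac{1}{n}\sum_k |\Ical_k|\alpha_k(h)$ cancels, leaving exactly $\sum_k p_k\alpha_k(h) - \frac{1}{n}\sum_k\sum_{i\in\Ical_k}\ell(h,z_i) = \EE_z[\ell(h,z)]-\frac{1}{n}\sum_{i=1}^n \ell(h,z_i)$, as required.

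There is no substantive obstacle; the lemma is a bookkeeping identity whose only content is the \emph{choice} of what to add and subtract. The hard work is deferred to the subsequent steps, where bounding $\sum_k \alpha_k(\Acal_S)(p_k - |\Ical_k|/n)$ without a $\sqrt{K}$ factor requires the new data-dependent multinomial concentration result (Theorem~\ref{lem:weightedmultinomial}), and bounding the second summand by $\epsilon(S)$ requires invoking the robustness definition with $s$ chosen to be any training sample in $\Ical_k$.
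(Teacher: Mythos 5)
Your proposal is correct and matches the paper's own proof essentially step for step: both apply the law of total expectation over the partition $\{\Ccal_k\}_{k=1}^K$, split the empirical average by the index sets $\Ical_k$, and add and subtract $\frac{1}{n}\sum_k |\Ical_k|\alpha_k(h)$ to separate the cell-mass deviation from the within-cell averaging error. Your remarks on how the two pieces are consumed downstream (multinomial concentration for the first, robustness for the second) also agree with the paper's subsequent use of the lemma.
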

        \begin{proof}
                We first write the expected error as the sum of the conditional expected error:
                \begin{align*}
                        \EE_{z}[\ell(h ,z)] &=\sum_{k=1}^K \EE_{z}[\ell(h ,z)|z\in  \Ccal_{k}]\Pr(z_{}\in  \Ccal_{k}) =\sum_{k=1}^K \EE_{z_{k}}[\ell(h ,z_{k})]\Pr(z_{}\in  \Ccal_{k}),
                \end{align*}
                where $z_k$ is the random variable $z$ conditioned on the event $z\in  \Ccal_{k}$. 
                Using this, we  decompose the generalization error into two terms:\begin{align} \label{eq:1}
                        &\EE_{z}[\ell(h ,z)]  - \frac{1}{n} \sum_{i=1}^n \ell(h, z_i) 
                        \\ \nonumber & =\sum_{k=1}^K \EE_{z_{k}}[\ell(h ,z_{k})]\left(\Pr(z_{}\in  \Ccal_{k})- \frac{|\Ical_{k}|}{n}\right)
                        +\left(\sum_{k=1}^K \EE_{z_{k}}[\ell(h ,z_{k})]\frac{|\Ical_{k}|}{n}- \frac{1}{n} \sum_{i=1}^n \ell(h, z_i)\right). 
                \end{align}
                The second term in the right-hand side  of \eqref{eq:1} is further simplified by using 
                $$
                \frac{1}{n} \sum_{i=1}^n \ell(h, z_i)=\frac{1}{n}\sum_{k=1}^K  \sum_{i \in \Ical_{k}}\ell(h ,z_{i}), 
                $$
                as 
                \begin{align*}
                        \sum_{k=1}^K \EE_{z_{k}}[\ell(h ,z_{k})]\frac{|\Ical_{k}|}{n}- \frac{1}{n} \sum_{i=1}^n \ell(h, z_i)
                        & =\frac{1}{n}\sum_{k=1}^K  |\Ical_{k}|\left(\EE_{z_{k}}[\ell(h ,z_{k})]-\frac{1}{|\Ical_{k}|}\sum_{i \in \Ical_{k}}\ell(h ,z_{i}) \right).
                        \
                \end{align*}
                Substituting these into equation \eqref{eq:1} yields
                \begin{align*} 
                        &\EE_{z}[\ell(h ,z)]   - \frac{1}{n} \sum_{i=1}^n \ell(h, z_i) 
                        \\ \nonumber & =\sum_{k=1}^K \EE_{z_{k}}[\ell(h ,z_{k})]\left(\Pr(z_{}\in  \Ccal_{k})- \frac{|\Ical_{k}|}{n}\right) +\frac{1}{n}\sum_{k=1}^K  |\Ical_{k}|\left(\EE_{z_{k}}[\ell(h ,z_{k})]-\frac{1}{|\Ical_{k}|}\sum_{i \in \Ical_{k}}\ell(h ,z_{i}) \right). \end{align*}
        \end{proof}
        
        The second term in the previous lemma is bounded by the following lemma using the robustness: 
        
        \begin{lemma} \label{lemma:2}
                If a learning algorithm $\Acal$ is  \textit{$(K,\epsilon(\cdot))$-robust},
                then the following holds for any $S \in \Zcal^n$:
                $$
                \left|\frac{1}{n}\sum_{k=1}^K  |\Ical_{k}|\left(\EE_{z}[\ell(\Acal_S, z)|z\in  \Ccal_{k}]-\frac{1}{|\Ical_{k}|}\sum_{i \in \Ical_{k}}\ell(\Acal_{S} ,z_{i}) \right)\right| \le\epsilon(S). 
                $$
        \end{lemma}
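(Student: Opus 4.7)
The plan is to exploit the robustness definition pointwise within each partition cell and then aggregate using the triangle inequality. The factor $|\Ical_k|$ in front conveniently handles the cells with no training samples (the whole summand is zero), so the real work concerns only $k \in \Tcal_S$.

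First I would fix $k$ with $|\Ical_k| \geq 1$ and pick an arbitrary $i \in \Ical_k$, so $z_i \in \Ccal_k$. For any $z \in \Ccal_k$, the robustness hypothesis gives $|\ell(\Acal_S, z_i) - \ell(\Acal_S, z)| \leq \epsilon(S)$. Taking the conditional expectation of $z$ given $z \in \Ccal_k$ on both sides and applying Jensen's inequality yields
\begin{equation*}
\left|\alpha_k(\Acal_S) - \ell(\Acal_S, z_i)\right|
= \left|\EE_z[\ell(\Acal_S, z)\mid z \in \Ccal_k] - \ell(\Acal_S, z_i)\right| \leq \epsilon(S).
\end{equation*}
This holds for every $i \in \Ical_k$, so averaging the $|\Ical_k|$ such inequalities and applying the triangle inequality inside the average gives
\begin{equation*}
\left|\alpha_k(\Acal_S) - \frac{1}{|\Ical_k|}\sum_{i \in \Ical_k} \ell(\Acal_S, z_i)\right| \leq \epsilon(S).
\end{equation*}

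Next I would multiply through by $|\Ical_k|/n$, sum over $k \in \Tcal_S$ (the cells with $|\Ical_k| \geq 1$), apply the triangle inequality one more time to pull the absolute value outside the sum, and use the identity $\sum_{k \in \Tcal_S} |\Ical_k|/n = 1$. Cells with $|\Ical_k| = 0$ contribute zero to the sum in the lemma statement, so extending back to all $k \in [K]$ is free. This produces exactly
\begin{equation*}
\left|\frac{1}{n}\sum_{k=1}^K |\Ical_k|\left(\alpha_k(\Acal_S) - \frac{1}{|\Ical_k|}\sum_{i \in \Ical_k}\ell(\Acal_S, z_i)\right)\right| \leq \epsilon(S).
\end{equation*}

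There is no substantial obstacle here; the only subtlety is the mild care needed to move from a pointwise bound (robustness holds for each $z \in \Ccal_k$) to a bound on the conditional expectation, which is immediate from Jensen's inequality on the absolute value. The proof is essentially a three-line calculation in which the robustness constant $\epsilon(S)$ propagates unchanged through a convex combination, so the final constant matches $\epsilon(S)$ without any additional overhead.
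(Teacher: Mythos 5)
Your proposal is correct and follows essentially the same route as the paper's proof: both reduce the claim to the pointwise robustness bound $|\ell(\Acal_S,z_i)-\ell(\Acal_S,z)|\le\epsilon(S)$ within each cell, pass to the conditional expectation via Jensen's inequality, average over $i\in\Ical_k$ with the triangle inequality, and conclude using $\sum_k|\Ical_k|=n$. The only difference is cosmetic ordering (you bound each cell first and then aggregate, while the paper first pulls the absolute value inside the sum and then bounds each cell by a supremum), so no further comment is needed.
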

        \begin{proof}
                By the triangle inequality,
                \begin{align*}
                        &\left|\frac{1}{n}\sum_{k=1}^K  |\Ical_{k}|\left(\EE_{z}[\ell(\Acal_S, z)|z\in  \Ccal_{k}]-\frac{1}{|\Ical_{k}|}\sum_{i \in \Ical_{k}}\ell(\Acal_{S} ,z_{i}) \right)\right| 
                        \\ & \le\frac{1}{n}\sum_{k=1}^K  |\Ical_{k}|\left|\EE_{z}[\ell(\Acal_S, z)|z\in  \Ccal_{k}]-\frac{1}{|\Ical_{k}|}\sum_{i \in \Ical_{k}}\ell(\Acal_{S} ,z_{i})\right|. 
                \end{align*}
                Furthermore, again by the triangle inequality,\begin{align*}
                        \left|\EE_{z}[\ell(\Acal_S, z)|z\in  \Ccal_{k}]-\frac{1}{|\Ical_k|}\sum_{S_{i} \in \Ical_k } \ell(\Acal_S, S_{i})\right| & =\left|\frac{1}{|\Ical_k|}\sum_{S_{i} \in \Ical_k }\EE_{z}[\ell(\Acal_S, z)|z\in  \Ccal_{k}]-\frac{1}{|\Ical_k|}\sum_{S_{i} \in \Ical_k } \ell(\Acal_S, S_{i})\right|  
                        \\ & \le\frac{1}{|\Ical_k|}\sum_{S_{i} \in \Ical_k }\left|\EE_{z}[\ell(\Acal_S, z)|z\in  \Ccal_{k}]- \ell(\Acal_S, S_{i})\right|
                        \\ & \le \sup_{ z \in \Zcal\cap\Ccal_k, s\in S\cap  \Ccal_k} \frac{1}{|\Ical_k|}\sum_{S_{i} \in \Ical_k }\left|\ell(\Acal_S, z)- \ell(\Acal_S,s)\right|
                        \\ & = \sup_{ z \in \Zcal\cap\Ccal_k, s\in S\cap  \Ccal_k} \left|\ell(\Acal_S, z)- \ell(\Acal_S,s)\right|.
                \end{align*}
                We now suppose that a learning algorithm $\Acal$ is $(K,\epsilon(\cdot))$-robust. Then, $\sup_{ z \in \Zcal\cap\Ccal_k, s\in S\cap  \Ccal_k} \left|\ell(\Acal_S, z)- \ell(\Acal_S,s)\right| \le\epsilon(S)$ for all $k=1,\dots, K$ by the definition of a learning algorithm $\Acal$ being $(K,\epsilon(\cdot))$-robust. Thus, we have that 
                $$
                \frac{1}{n}\sum_{k=1}^K  |\Ical_{k}|\left|\EE_{z}[\ell(\Acal_S, z)|z\in  \Ccal_{k}]-\frac{1}{|\Ical_{k}|}\sum_{i \in \Ical_{k}}\ell(\Acal_{S} ,z_{i})\right|\le\epsilon(S) \left(\frac{1}{n}\sum_{k=1}^K  |\Ical_{k}| \right)=\epsilon(S), 
                $$
                since $\sum_{k=1}^K  |\Ical_{k}|=n$.  
        \end{proof}

        Using these lemmas and our new concentration bounds on multinomial distributions   from Appendices \ref{appendix:fixeda} and \ref{appendix:dependenta}, we can complete the proof of Theorem \ref{thm:1} as follows:

        \begin{proof}[Proof of Theorem \ref{thm:1}]  
                From Lemma \ref{lemma:1}, 
                \begin{align} \label{eq:2}
                        & \EE_{z}[\ell(\Acal_S ,z)]   - \frac{1}{n} \sum_{i=1}^n \ell(\Acal_S, z_i)
                        \\ \nonumber & =\frac{1}{n}\sum_{k=1}^K  |\Ical_{k}|\left(\alpha_{k}(\Acal_S)-\frac{1}{|\Ical_{k}|}\sum_{i \in \Ical_{k}}\ell(\Acal_S ,z_{i}) \right)+\sum_{k=1}^K \alpha_{k}(\Acal_S)\left(\Pr(z_{}\in  \Ccal_{k})- \frac{|\Ical_{k}|}{n}\right) .  
                \end{align}
                By using Lemma \ref{lemma:7} with
                $a_{k}(X)=\alpha_k (\Acal_S)$  and noticing that $a_{\Tcal_S}(X),a_{\Tcal_S^c}(X)\le \zeta(\Acal_S)$, we have that for any $\delta>0$, with probability at least $1-\delta$, 
                \begin{align} \label{eq:3}
                        \sum_{k=1}^K \alpha_{k}(\Acal_S)\left(\Pr(z_{}\in  \Ccal_{k})- \frac{|\Ical_{k}|}{n}\right) \le\zeta(\Acal_S) \left((\sqrt{2}+1) \sqrt{\frac{|\Tcal_{S}| \ln(2K/\delta)}{n}}  + \frac{2|\Tcal_{S}|\ln(2K/\delta)}{n}\right). \end{align}
                Invoking Lemma \ref{lemma:2}, 
                \begin{align} \label{eq:4}
                        \frac{1}{n}\sum_{k=1}^K  |\Ical_{k}|\left(\alpha_{k}(\Acal_S)-\frac{1}{|\Ical_{k}|}\sum_{i \in \Ical_{k}}\ell(\Acal_S ,z_{i}) \right) \le \left|\frac{1}{n}\sum_{k=1}^K  |\Ical_{k}|\left(\alpha_{k}(\Acal_S)-\frac{1}{|\Ical_{k}|}\sum_{i \in \Ical_{k}}\ell(\Acal_S ,z_{i}) \right)\right|\le\epsilon(S).
                \end{align}
                By combining \eqref{eq:2}--\eqref{eq:4},  we obtain the desired statement.
        \end{proof}
        
        \section{Proof of Proposition \ref{p:pkdecay}}
        
        \begin{proof}[Proof of Proposition \ref{p:pkdecay}]
Take $m=\beta (\ln n)^{1/\alpha}$, then we have 
\begin{equation}
\Tcal_{S}\leq m+\sum_{k>m} {\bm 1}(|\Ical_{k}^{S}|\ge 1).
\end{equation}
Since $z_i$ are i.i.d, similarly to the Chernoff bound, we have
\begin{align}\label{e:che}\begin{split}
\mathbb P\left(\sum_{k>m} {\bm 1}(|\Ical_{k}^{S}|\ge 1)\geq \lambda \right)
&\leq \mathbb P\left(\sum_{i=1}^n {\bm 1}(z_i\in  \cup_{k>m}\Ccal_k) \geq\lambda\right)
\leq 
\prod_{i=1}^n \mathbb E[e^{{\bm 1}(z_i\in  \cup_{k>m}\Ccal_k)}]e^{-\lambda}\\
&=
\prod_{i=1}^n (1+\mathbb P(z_i\in  \cup_{k>m}\Ccal_k)(e-1))e^{-\lambda}
\leq e^{n\mathbb P(z\in  \cup_{k>m}\Ccal_k)(e-1)-\lambda}
\end{split}\end{align}

Under our assumption that $p_k\leq C e^{-(k/\beta)^\alpha}$, we have 
\begin{align}\begin{split}\label{e:bb}
&\phantom{{}={}}\mathbb P(z\in \cup_{k>m}\Ccal_k)=\sum_{k>m}p_k\leq \sum_{k>m}C e^{-(k/\beta)^\alpha}\\
&\leq \int_m^\infty Ce^{-(x/\beta)^\alpha}{\rm d} x
=\frac{C\beta}{\alpha}\int_{(m/\beta)^\alpha}^\infty y^{\frac{1}{\alpha}-1}e^{-y}{\rm d} y.
\end{split}\end{align}
 There are two cases i) $\alpha\geq1$ and ii) $\alpha<1$. In the first case, if $\alpha\geq 1$, we recall from our choice that $(m/\beta)^\alpha=\ln n\geq 1$. So the integrand $y^{1/\alpha-1}\leq 1$ and 
 \begin{align}\label{e:bb1}
  \mathbb P(z\in \cup_{k>m}\Ccal_k)
  \leq \frac{C\beta}{\alpha}\int_{(m/\beta)^\alpha}^\infty e^{-y}{\rm d} y
  =\frac{C\beta}{\alpha}e^{-(m/\beta)^\alpha}=\frac{C\beta}{n\alpha}.
 \end{align}
By plugging \eqref{e:bb1} into \eqref{e:che}, and take $\lambda=C(e-1)\beta/\alpha +\log(1/\delta)$, we can conclude that 
\begin{equation}
\mathbb P\left(\sum_{k>m} {\bm 1}(|\Ical_{k}^{S}|\ge 1)\geq \lambda \right)\leq \delta,
\end{equation}
and the first inequality follows. 
 
For the second case when $\alpha<1$, the integral in \eqref{e:bb} is the incomplete Gamma function
\begin{align}
    \mathbb P(z\in \cup_{k>m}\Ccal_k)\leq \frac{C\beta}{\alpha}\int_{(m/\beta)^\alpha}^\infty y^{\frac{1}{\alpha}-1}e^{-y}{\rm d} y
    =\frac{C\beta}{\alpha}\Gamma(1/\alpha, \ln n).
\end{align}
For the incomplete Gamma function, we have the following bound from \cite{natalini2000inequalities,borwein2009uniform}:
for $a>1$, $B>1$
\begin{align}\label{e:gamma}
    \Gamma(a,x)\leq B x^{a-1}e^{-x}, \text{ when } x\geq \frac{B}{B-1}(a-1).
\end{align}
By our assumption $\ln n \geq 2/\alpha$, we can take $B=2$, $a=1/\alpha$ and $x=\ln n$ in \eqref{e:gamma} to conclude 
\begin{align}\label{e:bb2}
    \mathbb P(z\in \cup_{k>m}\Ccal_k)\leq \frac{C\beta}{\alpha}\Gamma(1/\alpha, \ln n)
    \leq \frac{2C\beta }{n\alpha}(\ln n)^{\frac{1}{\alpha}-1}.
\end{align}

By plugging \eqref{e:bb2} into \eqref{e:che}, and take $\lambda=2C(e-1)\beta (\ln n)^{\frac{1}{\alpha}-1}/\alpha +\log(1/\delta)$, we can conclude that 
\begin{equation}
\mathbb P\left(\sum_{k>m} {\bm 1}(|\Ical_{k}^{S}|\ge 1)\geq \lambda \right)\leq \delta.
\end{equation}
Noticing that by our assumption $\ln n \geq 1/\alpha$, it gives that
\begin{align*}
    | \Tcal_{S}|&\leq
        \beta (\ln n)^{\frac{1}{\alpha}}+ \frac{2C(e-1)\beta}{\alpha \ln n}(\ln n)^{\frac{1}{\alpha}} +\log(1/\delta)\\
        &\leq 
         (1+2C(e-1))\beta(\ln n)^{\frac{1}{\alpha}} +\log(1/\delta),
\end{align*}
the second inequality follows.

\end{proof}
        
        \section{Proof of Theorem \ref{thm:2}}  \label{appendix:mainresult2}
        In this section, we refine the proof of Theorem \ref{thm:1} to obtain a tighter bound by using a tighter version of our new concentration bounds on multinomial distributions   from Appendices \ref{appendix:fixeda} and \ref{appendix:dependenta}.         \begin{proof}[Proof of Theorem \ref{thm:2}]
                The proof begins in the same manner as in the previous section, but uses the sharper multinomial bound.
                From Lemma \ref{lemma:1}, 
                \begin{align} 
                        & \EE_{z}[\ell(\Acal_S ,z)]   - \frac{1}{n} \sum_{i=1}^n \ell(\Acal_S, z_i)
                        \\ \nonumber & =\frac{1}{n}\sum_{k=1}^K  |\Ical_{k}|\left(\alpha_{k}(\Acal_S)-\frac{1}{|\Ical_{k}|}\sum_{i \in \Ical_{k}}\ell(\Acal_S ,z_{i}) \right)+\sum_{k=1}^K \alpha_{k}(\Acal_S)\left(\Pr(z_{}\in  \Ccal_{k})- \frac{|\Ical_{k}|}{n}\right) .  
                \end{align}
                By using Lemma \ref{lemma:8} with
                $a_{k}(X)=\alpha_k (\Acal_S), a_{\Tcal_S}(X)=\alpha_{\Tcal_S} (\Acal_S)$, and $a_{\Tcal_S^c}(X)=\alpha_{\Tcal_S^c}(\Acal_S)$, we have that for any $\delta>0$, with probability at least $1-\delta$, 
                \begin{align} 
                        &\sum_{k=1}^K \alpha_{k}(\Acal_S)\left(\Pr(z_{}\in  \Ccal_{k})- \frac{|\Ical_{k}|}{n}\right) 
                        \\ & \le \sqrt{\frac{ \ln(2K/\delta)}{n}} \left(  \sum_{k \in\Tcal_S}   (\alpha_{\Tcal_S^c}(\Acal_S)+\sqrt{2}\alpha_k (\Acal_S)   ) \sqrt{\frac{|\Ical_{k}|}{n}}  \right) + \frac{2\ln(2K/\delta)}{n}    \left( \alpha_{\Tcal_S^c}(\Acal_S)|  \Tcal_{S}| +   \sum_{k \in\Tcal_S} \alpha_k (\Acal_S) \right) \end{align}
                Applying Lemma \ref{lemma:2}, 
                \begin{align} 
                        \frac{1}{n}\sum_{k=1}^K  |\Ical_{k}|\left(\alpha_{k}(\Acal_S)-\frac{1}{|\Ical_{k}|}\sum_{i \in \Ical_{k}}\ell(\Acal_S ,z_{i}) \right) \le \left|\frac{1}{n}\sum_{k=1}^K  |\Ical_{k}|\left(\alpha_{k}(\Acal_S)-\frac{1}{|\Ical_{k}|}\sum_{i \in \Ical_{k}}\ell(\Acal_S ,z_{i}) \right)\right|\le\epsilon(S).
                \end{align}
                By combining these, we have that for any $\delta>0$, with probability at least $1-\delta$, 
                \begin{align*}
                        \EE_{z}[\ell(\Acal_S ,z)]   - \frac{1}{n} \sum_{i=1}^n \ell(\Acal_S, z_i)
                        \le\epsilon(S)+\mathcal{\mathcal{Q}}_1 \sqrt{\frac{ \ln(2K/\delta)}{n}} + \frac{2\mathcal{\mathcal{Q}}_2\ln(2K/\delta)}{n},    
                \end{align*}
                where $\mathcal{\mathcal{Q}}_1=\sum_{k \in\Tcal_S}   (\alpha_{\Tcal_S^c}(\Acal_S)+\sqrt{2}\alpha_k (\Acal_S)   ) \sqrt{\frac{|\Ical_{k}|}{n}}$ and $\mathcal{\mathcal{Q}}_2=\alpha_{\Tcal_S^c}(\Acal_S)|  \Tcal_{S}| +   \sum_{k \in\Tcal_S} \alpha_k (\Acal_S)$. 
        \end{proof}

                \section{Pseudo-robustness} \label{appendix:pseudo}
                This section is devoted to the generalizations of Theorems \ref{thm:1} and \ref{thm:2} in the pseudo-robust context.
        \begin{definition}
                A learning algorithm $\Acal$ is\textit{ $(K,\epsilon(\cdot), \hn(\cdot))$ pseudo robust}, for $K \in \NN,$  $\epsilon(\cdot):\Zcal^n \rightarrow \RR$, and $\hn(\cdot):\Zcal^n \rightarrow \{1,\dots,n\}$, if $\Zcal$ can be partitioned into $K$ disjoint sets, denoted by $\{\Ccal_k\}_{k=1}^K$, such that for all  $S \in \Zcal^n$, there exists a subset of training samples $\hS$ with $|\hS|=\hn(S)$ and the following holds:
                $$
                \forall s \in \hS, \forall z \in \Zcal, \forall k=1,\dots,K: \text{ if } s,z \in \Ccal_k, \text{ then } |\ell(\Acal_S, s)-\ell(\Acal_S, z)| \le\epsilon(S). $$
        \end{definition}
        
        Define   $\hzeta(\Acal,S)=\max_{(k, i) \in [K] \times [n] }\left|\alpha_{k}(\Acal_S)-\ell(\Acal_S ,z_{i}) \right|$ where $S=(z_{1},\dots,z_n)$ and $\alpha_{k}(h)=\EE_{z}[\ell(h ,z)|z\in  \Ccal_{k}]$.
        
        \subsection{Simple Version}
        Our first theorem is the analogue of Theorem \ref{thm:1}.
        \begin{theorem} \label{thm:5}
                If a learning algorithm $\Acal$ is $(K,\epsilon(\cdot), \hn(\cdot))$ pseudo robust (with $\{\Ccal_k\}_{k=1}^K$),
                then  for any $\delta>0$,  with  probability at least $1-\delta$ over an
                iid draw of $n$ examples $S=(z_{i})_{i=1}^n$, the following holds: 
                \begin{align*}
                        & \EE_{z}[\ell(\Acal_S ,z)]   
                        \\ & \le\frac{1}{n} \sum_{i=1}^n \ell(\Acal_S, z_i)+\frac{\hn(S)}{n}\epsilon(S)+ \frac{n-\hn(S)}{n}  \hzeta(\Acal,S)+\zeta(\Acal_S) \left((\sqrt{2}+1) \sqrt{\frac{|\Tcal_{S}|\ln(2K/\delta)}{n}}  + \frac{2|\Tcal_{S}|\ln(2K/\delta)}{n}\right), \end{align*}
                where $\zeta(\Acal_S):=\max_{k \in [K]}\EE_{z}[\ell(\Acal_S ,z)|z\in  \Ccal_{k}]$,  $\hzeta(\Acal,S):=\max_{(k, i) \in [K] \times [n] }\left|\alpha_{k}(\Acal_S)-\ell(\Acal_S ,z_{i}) \right|$,  and $\Tcal_{S}:=\{k\in[K]: |\Ical_{k}^{S}|\ge 1\}$ with $\Ical_{k}^S:=\{i\in[n]: z_{i}\in \Ccal_{k}\}$.
        \end{theorem}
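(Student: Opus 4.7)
The plan is to follow the same three-step template used for Theorem \ref{thm:1} (decompose the generalization gap via Lemma \ref{lemma:1}, control the ``conditional'' term using the robustness hypothesis, and control the ``multinomial'' term via Theorem \ref{lem:weightedmultinomial}/Lemma \ref{lemma:7}), and to modify only the middle step where the robustness hypothesis is weaker. The multinomial part is untouched because Lemma \ref{lemma:7} does not reference the algorithm's robustness, it only uses $a_k(X) = \alpha_k(\mathcal{A}_S)$ bounded by $\zeta(\mathcal{A}_S)$, and nothing in that argument is sensitive to the pseudo- versus full-robustness distinction.

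First, I would invoke Lemma \ref{lemma:1} to write
\[
\mathbb{E}_z[\ell(\mathcal{A}_S,z)]-\tfrac{1}{n}\sum_{i=1}^n \ell(\mathcal{A}_S,z_i)=\underbrace{\tfrac{1}{n}\sum_{k=1}^K\sum_{i\in\mathcal{I}_k}\bigl(\alpha_k(\mathcal{A}_S)-\ell(\mathcal{A}_S,z_i)\bigr)}_{(\mathrm{I})}+\underbrace{\sum_{k=1}^K\alpha_k(\mathcal{A}_S)\Bigl(\Pr(z\in\mathcal{C}_k)-\tfrac{|\mathcal{I}_k|}{n}\Bigr)}_{(\mathrm{II})}.
\]
Term $(\mathrm{II})$ is bounded exactly as in the proof of Theorem \ref{thm:1}: applying Lemma \ref{lemma:7} with $a_k(X)=\alpha_k(\mathcal{A}_S)\le\zeta(\mathcal{A}_S)$ yields, with probability at least $1-\delta$, the term $\zeta(\mathcal{A}_S)\bigl((\sqrt{2}+1)\sqrt{|\mathcal{T}_S|\ln(2K/\delta)/n}+2|\mathcal{T}_S|\ln(2K/\delta)/n\bigr)$.

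The main adaptation is for $(\mathrm{I})$, where the robustness hypothesis only applies to a subset $\hat{S}\subseteq S$ with $|\hat{S}|=\hat{n}(S)$. I split each inner sum according to whether $z_i\in\hat{S}$:
\[
(\mathrm{I})=\tfrac{1}{n}\sum_{k=1}^K\sum_{i\in\mathcal{I}_k\cap\hat{S}}\bigl(\alpha_k(\mathcal{A}_S)-\ell(\mathcal{A}_S,z_i)\bigr)+\tfrac{1}{n}\sum_{k=1}^K\sum_{i\in\mathcal{I}_k\setminus\hat{S}}\bigl(\alpha_k(\mathcal{A}_S)-\ell(\mathcal{A}_S,z_i)\bigr).
\]
For the first piece, the key observation (which is the analogue of Lemma \ref{lemma:2} in the pseudo-robust setting) is that pseudo-robustness applied to $s=z_i\in\hat{S}\cap\mathcal{C}_k$ and then averaged over $z$ conditioned on $z\in\mathcal{C}_k$ gives $|\alpha_k(\mathcal{A}_S)-\ell(\mathcal{A}_S,z_i)|\le\epsilon(S)$, so this piece is bounded in absolute value by $\hat{n}(S)\epsilon(S)/n$. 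For the second piece, there are at most $n-\hat{n}(S)$ indices, and each summand is at most $\hat{\zeta}(\mathcal{A},S)$ by definition, so this piece is bounded by $(n-\hat{n}(S))\hat{\zeta}(\mathcal{A},S)/n$.

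The hardest (though still routine) part of this argument is verifying that the conditional-expectation step from pseudo-robustness to $|\alpha_k(\mathcal{A}_S)-\ell(\mathcal{A}_S,z_i)|\le\epsilon(S)$ is valid: it relies on integrating the pointwise bound $|\ell(\mathcal{A}_S,s)-\ell(\mathcal{A}_S,z)|\le\epsilon(S)$ against the conditional distribution of $z$ given $z\in\mathcal{C}_k$, which requires $\mathcal{I}_k\cap\hat{S}$ to index samples actually lying in $\mathcal{C}_k$, and this is guaranteed by the definition $\mathcal{I}_k^S=\{i:z_i\in\mathcal{C}_k\}$. Combining these three bounds via the triangle inequality, under the single $1-\delta$ event used for $(\mathrm{II})$, produces exactly the claimed inequality. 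I do not need a union bound beyond the one already taken inside Lemma \ref{lemma:7}, because both sub-bounds on $(\mathrm{I})$ are deterministic consequences of the pseudo-robustness assumption and the definition of $\hat{\zeta}$.
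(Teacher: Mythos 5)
Your proposal is correct and follows essentially the same route as the paper: the paper's Lemma \ref{lemma:11} performs exactly your split of term $(\mathrm{I})$ over $\hat{\Ical}_k := \Ical_k \cap \hS$ versus $\Ical_k \setminus \hS$, bounding the first piece by $\frac{\hn(S)}{n}\epsilon(S)$ via the pointwise pseudo-robustness inequality integrated against the conditional law of $z$ given $z\in\Ccal_k$, and the second by $\frac{n-\hn(S)}{n}\hzeta(\Acal,S)$, after which Lemma \ref{lemma:7} handles term $(\mathrm{II})$ exactly as in Theorem \ref{thm:1}. Your observation that no extra union bound is needed (since the bounds on $(\mathrm{I})$ are deterministic) also matches the paper's argument.
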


        \subsection{Stronger Version}
        The following statement is the analogue of Theorem \ref{thm:2} and is a strengthening of Theorem \ref{thm:5}.
        \begin{theorem} \label{thm:6}
                If a learning algorithm $\Acal$ is $(K,\epsilon(\cdot), \hn(\cdot))$ pseudo robust (with $\{\Ccal_k\}_{k=1}^K$),
                then  for any $\delta>0$,  with  probability at least $1-\delta$ over an
                iid draw of $n$ examples $S=(z_{i})_{i=1}^n$, the following holds: 
                \begin{align*}
                        & \EE_{z}[\ell(\Acal_S ,z)]\le \frac{1}{n} \sum_{i=1}^n \ell(\Acal_S, z_i)+\frac{\hn(S)}{n}\epsilon(S)+ \frac{n-\hn(S)}{n}  \hzeta(\Acal,S)+\mathcal{\mathcal{Q}}_1 \sqrt{\frac{ \ln(2K/\delta)}{n}} + \frac{2\mathcal{\mathcal{Q}}_2\ln(2K/\delta)}{n} \end{align*}
                where $\mathcal{\mathcal{Q}}_1:=\sum_{k \in\Tcal_S}   (\alpha_{\Tcal_S^c}(\Acal_S)+\sqrt{2}\alpha_k (\Acal_S)   ) \sqrt{\frac{|\Ical_{k}^{S}|}{n}}$, $\mathcal{\mathcal{Q}}_2:=\alpha_{\Tcal_S^c}(\Acal_S)|  \Tcal_{S}| +   \sum_{k \in\Tcal_S} \alpha_k (\Acal_S)$,  $\Tcal_{S}:=\{k\in[K]: |\Ical_{k}^{S}|\ge 1\}$ with $\Ical_{k}^S:=\{i\in[n]: z_{i}\in \Ccal_{k}\}$, $\alpha_{k}(h):=\EE_{z}[\ell(h ,z)|z\in  \Ccal_{k}]$, and $\alpha_{\Tcal_S^c}(\Acal_S):=\max_{k\in \Tcal^c_S} \alpha_{k}(\Acal_S)$ with $\Tcal^c_S = [K]\setminus \Tcal_S$.\end{theorem}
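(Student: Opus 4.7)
The plan is to mirror the proof of Theorem~\ref{thm:2} verbatim for the multinomial concentration term, and to refine Lemma~\ref{lemma:2} so that it accommodates the weaker pseudo-robustness hypothesis. Concretely, I begin with the exact decomposition from Lemma~\ref{lemma:1}:
\begin{align*}
\EE_{z}[\ell(\Acal_S,z)] - \tfrac{1}{n}\sum_{i=1}^n \ell(\Acal_S,z_i)
&= \sum_{k=1}^K \alpha_k(\Acal_S)\!\left(\Pr(z\in\Ccal_k) - \tfrac{|\Ical_k|}{n}\right) \\
&\quad + \tfrac{1}{n}\sum_{k=1}^K\sum_{i\in\Ical_k}\bigl(\alpha_k(\Acal_S) - \ell(\Acal_S,z_i)\bigr).
\end{align*}
The first sum depends only on $\Acal_S$, $p_k$, and $X_k=|\Ical_k|$, so it is bounded by the (tight) multinomial concentration inequality of Lemma~\ref{lemma:8} with $a_k(X)=\alpha_k(\Acal_S)$, yielding exactly the $\mathcal{Q}_1\sqrt{\ln(2K/\delta)/n}+2\mathcal{Q}_2\ln(2K/\delta)/n$ contribution that appears in Theorem~\ref{thm:2}. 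Pseudo-robustness plays no role in this step.

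The second, ``robustness'' sum is where the pseudo-robust proof diverges from the robust one. I split the inner sum over $\Ical_k$ according to whether the index lies in the distinguished subset $\hS$ of size $\hn(S)$:
\begin{align*}
\tfrac{1}{n}\sum_{k=1}^K\sum_{i\in\Ical_k}\bigl(\alpha_k(\Acal_S) - \ell(\Acal_S,z_i)\bigr)
&= \tfrac{1}{n}\sum_{k=1}^K\sum_{i\in\Ical_k\cap\hS}\bigl(\alpha_k(\Acal_S)-\ell(\Acal_S,z_i)\bigr) \\
&\quad + \tfrac{1}{n}\sum_{k=1}^K\sum_{i\in\Ical_k\setminus\hS}\bigl(\alpha_k(\Acal_S)-\ell(\Acal_S,z_i)\bigr).
\end{align*}
For $i\in\hS\cap\Ical_k$, the pseudo-robustness definition gives $|\ell(\Acal_S,z_i)-\ell(\Acal_S,z)|\le\epsilon(S)$ for every $z\in\Ccal_k$; averaging against the conditional distribution of $z$ given $z\in\Ccal_k$ as in the proof of Lemma~\ref{lemma:2} shows $|\alpha_k(\Acal_S)-\ell(\Acal_S,z_i)|\le\epsilon(S)$. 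Summing over the $\hn(S)$ such indices and normalising by $n$ contributes at most $\hn(S)\epsilon(S)/n$. For $i\notin\hS$, by the definition of $\hzeta(\Acal,S):=\max_{(k,i)\in[K]\times[n]}|\alpha_k(\Acal_S)-\ell(\Acal_S,z_i)|$, each summand is bounded in absolute value by $\hzeta(\Acal,S)$, and there are $n-\hn(S)$ of them, giving a total of $(n-\hn(S))\hzeta(\Acal,S)/n$.

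Combining the three pieces through the triangle inequality gives the claimed bound. The main substantive step is the bookkeeping in the split of the robustness sum; everything else is a literal reuse of results already proved (Lemma~\ref{lemma:1} and Lemma~\ref{lemma:8}). The only subtlety worth pausing over is that we must upper-bound the signed second sum (not its absolute value) by the sum of upper bounds on the two pieces, which is immediate once each summand is bounded in absolute value. No new concentration machinery is needed beyond Theorem~\ref{lem:weightedmultinomial}, and the theorem statement of Theorem~\ref{thm:5} follows as an immediate corollary by further upper-bounding $\mathcal{Q}_1,\mathcal{Q}_2$ via $\alpha_k(\Acal_S)\le\zeta(\Acal_S)$ and the Cauchy--Schwarz inequality exactly as in the derivation of Theorem~\ref{thm:1} from Theorem~\ref{thm:2}.
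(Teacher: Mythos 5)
Your proposal is correct and follows essentially the same route as the paper: the paper's Lemma~\ref{lemma:11} performs exactly your split of the robustness sum over $\Ical_k\cap\hS$ versus $\Ical_k\setminus\hS$ (bounding the former by $\epsilon(S)$ per term via the Jensen-type averaging from Lemma~\ref{lemma:2} and the latter by $\hzeta(\Acal,S)$), and the multinomial term is then handled by Lemma~\ref{lemma:8} with $a_k(X)=\alpha_k(\Acal_S)$, just as you do.
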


        \subsection{Proof for pseudo robustness}
        We now prove Theorems \ref{thm:5} and \ref{thm:6}.
        
        \begin{lemma} \label{lemma:11}
                If a learning algorithm $\Acal$ is  \textit{$(K,\epsilon(\cdot), \hn(\cdot))$ pseudo robust},
                then the following holds for any $S \in \Zcal^n$:
                
                $$
                \EE_{z}[\ell(\Acal_S ,z)]   - \frac{1}{n} \sum_{i=1}^n \ell(\Acal_S, z_i)
                \le \frac{\hn(S)}{n}\epsilon(S)+ \frac{n-\hn(S)}{n}  \hzeta(\Acal,S)+\sum_{k=1}^K \alpha_{k}(\Acal_S)\left(\Pr(z_{}\in  \Ccal_{k})- \frac{|\Ical_{k}|}{n}\right). $$
                where  $\hzeta(\Acal,S):=\max_{(k, i) \in [K] \times [n] }\left|\alpha_{k}(\Acal_S)-\ell(\Acal_S ,z_{i}) \right|$. 
        \end{lemma}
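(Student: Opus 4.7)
The plan is to mimic the proof structure of Lemma \ref{lemma:2} while carefully separating the pseudo-robust portion of the training set from the rest. First, I would invoke Lemma \ref{lemma:1} with $h=\Acal_S$ to decompose
\[
\EE_{z}[\ell(\Acal_S, z)] - \frac{1}{n}\sum_{i=1}^n \ell(\Acal_S, z_i)
= \sum_{k=1}^K \alpha_{k}(\Acal_S)\left(\Pr(z\in \Ccal_k) - \frac{|\Ical_k|}{n}\right) + \frac{1}{n}\sum_{k=1}^K |\Ical_k|\left(\alpha_k(\Acal_S) - \frac{1}{|\Ical_k|}\sum_{i \in \Ical_k} \ell(\Acal_S, z_i)\right).
\]
The first sum already appears verbatim in the target inequality, so the whole task reduces to upper bounding the ``empirical deviation'' term on the right by $\tfrac{\hn(S)}{n}\epsilon(S) + \tfrac{n - \hn(S)}{n}\hzeta(\Acal, S)$.

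Next, I would rewrite the empirical deviation as a sum over individual training samples: letting $k(i)$ denote the unique index with $z_i \in \Ccal_{k(i)}$, one has
\[
\frac{1}{n}\sum_{k=1}^K |\Ical_k|\left(\alpha_k(\Acal_S) - \frac{1}{|\Ical_k|}\sum_{i \in \Ical_k} \ell(\Acal_S, z_i)\right) = \frac{1}{n}\sum_{i=1}^n \left(\alpha_{k(i)}(\Acal_S) - \ell(\Acal_S, z_i)\right),
\]
which, after applying the triangle inequality inside the sum, is at most $\tfrac{1}{n}\sum_{i=1}^n |\alpha_{k(i)}(\Acal_S) - \ell(\Acal_S, z_i)|$. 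I would then split the sum into the $\hn(S)$ indices with $z_i \in \hS$ and the remaining $n - \hn(S)$ indices. For $z_i \in \hS$, pseudo-robustness applied with $s = z_i$ gives $|\ell(\Acal_S, z_i) - \ell(\Acal_S, z)| \le \epsilon(S)$ for every $z \in \Ccal_{k(i)}$; averaging over $z$ conditional on $z \in \Ccal_{k(i)}$ and moving the absolute value inside the conditional expectation (as in the proof of Lemma \ref{lemma:2}) yields $|\alpha_{k(i)}(\Acal_S) - \ell(\Acal_S, z_i)| \le \epsilon(S)$. For $z_i \notin \hS$, robustness is unavailable, so I would simply invoke the definition $\hzeta(\Acal,S) := \max_{(k,i)} |\alpha_k(\Acal_S) - \ell(\Acal_S, z_i)|$ to bound the corresponding summand by $\hzeta(\Acal, S)$.

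Summing these two contributions gives exactly $\tfrac{\hn(S)}{n}\epsilon(S) + \tfrac{n - \hn(S)}{n}\hzeta(\Acal, S)$, and adding back the untouched first term from Lemma \ref{lemma:1} yields the statement. The main (and essentially only) conceptual obstacle is bookkeeping: pseudo-robustness is a Lipschitz-like statement anchored at a specific sample $s \in \hS$ rather than at a class index, so it only controls the conditional expectation within $\Ccal_{k(i)}$ when $z_i$ itself lies in $\hS$, which is why the two-way split along $\hS$ versus $S\setminus\hS$ is forced. Once this split is made, no new concentration or multinomial argument is needed, and the bound follows by direct algebraic combination.
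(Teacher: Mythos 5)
Your proposal is correct and follows essentially the same route as the paper's proof: decompose via Lemma \ref{lemma:1}, split the empirical deviation term over $\hS$ versus $S\setminus\hS$, bound the $\hS$-part by $\epsilon(S)$ using pseudo-robustness averaged inside the conditional expectation, and bound the remainder by $\hzeta(\Acal,S)$. The only cosmetic difference is that you index the sum directly by samples $i$ with $k(i)$ rather than the paper's double sum over $k$ and $i\in\Ical_k$, which is equivalent.
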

        \begin{proof}
                Define $\hIcal_{k} :=\{i\in[n]: z_{i}\in \hS,z_{i}\in \Ccal_{k}\}$.
                Then
                \begin{align*}
                        &\left|\frac{1}{n}\sum_{k=1}^K  |\Ical_{k}|\left(\alpha_{k}(\Acal_S)-\frac{1}{|\Ical_{k}|}\sum_{i \in \Ical_{k}}\ell(\Acal_S ,z_{i}) \right)\right|
                        \\ & =\left|\frac{1}{n}\sum_{k=1}^K  \left(|\Ical_{k}|\alpha_{k}(\Acal_S)-\sum_{i \in\hIcal_{k} }\ell(\Acal_S ,z_{i})-\sum_{i \in \Ical_{k}\wedge i \notin\hIcal_{k}  }\ell(\Acal_S ,z_{i}) \right)\right| 
                        \\ & = \left|\frac{1}{n}\sum_{k=1}^K  \sum_{i \in\hIcal_{k} }(\alpha_{k}(\Acal_S)-\ell(\Acal_S ,z_{i}))+\frac{1}{n}\sum_{k=1}^K  \sum_{i \in \Ical_{k}\wedge i \notin\hIcal_{k} }(\alpha_{k}(\Acal_S)-\ell(\Acal_S ,z_{i}) )\right| 
                        \\ & \le \frac{1}{n}\sum_{k=1}^K  \sum_{i \in\hIcal_{k} } \left|\alpha_{k}(\Acal_S)-\ell(\Acal_S ,z_{i}) \right| + \frac{1}{n}\sum_{k=1}^K  \sum_{i \in \Ical_{k}\wedge i \notin\hIcal_{k} } \left|\alpha_{k}(\Acal_S)-\ell(\Acal_S ,z_{i}) \right| 
                        \\ & \le  \frac{\hn(S)}{n}\epsilon(S)+ \frac{n-\hn(S)}{n}  \hzeta(\Acal,S),
                \end{align*}
                where $\hzeta(\Acal,S)=\max_{(k, i) \in [K] \times [n] }\left|\alpha_{k}(\Acal_S)-\ell(\Acal_S ,z_{i}) \right|$. Combining this with Lemma \ref{lemma:1} gives 
                \begin{align*}
                        &\EE_{z}[\ell(\Acal_S ,z)]   - \frac{1}{n} \sum_{i=1}^n \ell(\Acal_S, z_i)
                        \\ & =\frac{1}{n}\sum_{k=1}^K  |\Ical_{k}|\left(\alpha_{k}(\Acal_S)-\frac{1}{|\Ical_{k}|}\sum_{i \in \Ical_{k}}\ell(\Acal_S ,z_{i}) \right)+\sum_{k=1}^K \alpha_{k}(\Acal_S)\left(\Pr(z_{}\in  \Ccal_{k})- \frac{|\Ical_{k}|}{n}\right)  
                        \\ \nonumber & \le \frac{\hn(S)}{n}\epsilon(S)+ \frac{n-\hn(S)}{n}  \hzeta(\Acal,S)+\sum_{k=1}^K \alpha_{k}(\Acal_S)\left(\Pr(z_{}\in  \Ccal_{k})- \frac{|\Ical_{k}|}{n}\right).
                \end{align*}
        \end{proof}

        We now have all the tools necessary to complete the proofs of the main theorems.
        
        \begin{proof}[Proof of Theorem \ref{thm:5}]
                By  Lemma \ref{lemma:11}, 
                $$
                \EE_{z}[\ell(\Acal_S ,z)]   - \frac{1}{n} \sum_{i=1}^n \ell(\Acal_S, z_i)
                \le \frac{\hn(S)}{n}\epsilon(S)+ \frac{n-\hn(S)}{n}  \hzeta(\Acal,S)+\sum_{k=1}^K \alpha_{k}(\Acal_S)\left(\Pr(z_{}\in  \Ccal_{k})- \frac{|\Ical_{k}|}{n}\right). $$
                By using Lemma \ref{lemma:7} with
                $a_{k}(X)=\alpha_k (\Acal_S)$  and noticing that $a_{\Tcal_S}(X),a_{\Tcal_S^c}(X)\le \zeta(\Acal_S)$, we have that for any $\delta>0$, with probability at least $1-\delta$, 
                \begin{align*} 
                        \sum_{k=1}^K \alpha_{k}(\Acal_S)\left(\Pr(z_{}\in  \Ccal_{k})- \frac{|\Ical_{k}|}{n}\right) \le\zeta(\Acal_S) \left((\sqrt{2}+1) \sqrt{\frac{|\Tcal_{S}| \ln(2K/\delta)}{n}}  + \frac{2|\Tcal_{S}|\ln(2K/\delta)}{n}\right). 
                \end{align*}
        \end{proof}

        \begin{proof}[Proof of Theorem \ref{thm:6}]
                By  Lemma \ref{lemma:11}, 
                $$
                \EE_{z}[\ell(\Acal_S ,z)]   - \frac{1}{n} \sum_{i=1}^n \ell(\Acal_S, z_i)
                \le \frac{\hn(S)}{n}\epsilon(S)+ \frac{n-\hn(S)}{n}  \hzeta(\Acal,S)+\sum_{k=1}^K \alpha_{k}(\Acal_S)\left(\Pr(z_{}\in  \Ccal_{k})- \frac{|\Ical_{k}|}{n}\right). $$
                By using Lemma \ref{lemma:8} with
                $a_{k}(X)=\alpha_k (\Acal_S), a_{\Tcal_S}(X)=\alpha_{\Tcal_S} (\Acal_S)$, and $a_{\Tcal_S^c}(X)=\alpha_{\Tcal_S^c}(\Acal_S)$, we have that for any $\delta>0$, with probability at least $1-\delta$, 
                \begin{align*} 
                        &\sum_{k=1}^K \alpha_{k}(\Acal_S)\left(\Pr(z_{}\in  \Ccal_{k})- \frac{|\Ical_{k}|}{n}\right) 
                        \\ & \le \sqrt{\frac{ \ln(2K/\delta)}{n}} \left(  \sum_{k \in\Tcal_S}   (\alpha_{\Tcal_S^c}(\Acal_S)+\sqrt{2}\alpha_k (\Acal_S)   ) \sqrt{\frac{|\Ical_{k}|}{n}}  \right) + \frac{2\ln(2K/\delta)}{n}    \left( \alpha_{\Tcal_S^c}(\Acal_S)|  \Tcal_{S}| +   \sum_{k \in\Tcal_S} \alpha_k (\Acal_S) \right). 
                \end{align*}
        \end{proof}
        
\section{Proof of Theoretical Comparisons} \label{sec:theory}

\paragraph{Proof of Example \ref{example:2}.} First, in order to show the bound in Theorem 1 is tighter than the that of Proposition \ref{prop:1}, we must show that
\begin{align} 
(\sqrt{2}+1) \sqrt{\frac{|\Tcal_{S}|\ln(2K/\delta)}{n}}+ \frac{2|\Tcal_{S}|\ln(2K/\delta)}{n}\le \sqrt{\frac{2K\ln2+2 \ln (1/\delta)}{n}}. 
\end{align}
It is not hard to see for any given $\delta,$ when $n>2|\mathcal{T}_S|\ln(2K/\delta)$, $|\mathcal {T}_S|\ll K$, and $2K> 1/\delta$, the above inequality holds.

We can now divide each coordinate of $z^{(x)}$ into equal sized $\nu$-length intervals (possibly excluding the last interval):
$$[-1,-1+\nu),[-1+\nu,-1+2\nu),\cdots [-1+i\nu,-1+(i+1)\nu),\cdots$$
Then, $\{\mathcal C_i\}$ is the Cartesian product of the intervals of each coordinate. Notice that by standard concentration for the maximum of a sequence of sub-gaussians, for any $\delta>0$, there exists small enough $\sigma>0$, such that $\|x^{(2)}-\mu\|_\infty<1/\nu$ with probabilty at least $1-\delta$. Let us choose $\mu=c\cdot (1,1,\cdots,1)^\top$, where $c\in [-1,1]$ is the center of one of the intervals constructed above. Then, with probability at least $1-\delta$, $|\mathcal{T}_S|=\Theta((2/\nu)^{p+1})$. As a result, when $d\gg p$, we have the desired inequality.

\paragraph{Proof of Example \ref{example:1}.} Recall the bound in Theorem 1 implies that
        \begin{align}
\EE_{z}[\ell(\Acal_S ,z)] \le\frac{1}{n} \sum_{i=1}^n \ell(\Acal_S, z_i) +\epsilon(S)+\sqrt{\frac{B}{\lambda}} \Bigg((\sqrt{2}+1) \sqrt{\frac{|\Tcal_{S}|\ln(2K/\delta)}{n}}+ \frac{2|\Tcal_{S}|\ln(2K/\delta)}{n}\Bigg)
                \end{align}
        
On the other hand, the bound obtained via uniform stability is:
\begin{align} \label{eq:uniformstability}
\EE_{z}[\ell(\Acal_S ,z)] 
                \le\frac{1}{n} \sum_{i=1}^n \ell(\Acal_S, z_i) +2\beta+ (4n\beta+\sqrt{\frac{B}{\lambda}})\sqrt{\frac{\ln(1/\delta)}{2n}}.
                \end{align}
When $n$ and $B$ are large, the dominating term is $\sqrt{\frac{B}{\lambda}} (\sqrt{2}+1) \sqrt{\frac{|\Tcal_{S}|\ln(2K/\delta)}{n}}$ and $4n\beta\sqrt{\frac{\ln(1/\delta)}{2n}}$, where we take $\beta=2B^2/(\lambda n)$ as in \citet{bousquet2002stability}. We can divide $z^{(x)}$ into equal sized $\nu$-length intervals (again with the possible exception of the last interval):
$$[0,\nu),[\nu,2\nu),\cdots,[i\nu,(i+1)\nu),\cdots$$

If there is no noise, i.e. $z^{(y)}=w^*z^{(x)}$, then all the points will fall on the line segment $(z^{(x)},z^{(x)})$. When we have a Gaussian perturbation over $z^{(x)}$, by suitably choosing the variance parameter $\sigma>0$, and concentration of Gaussian variables, we can let most of the data mass covered in the union of $\{\mathcal{C}_i\}$, where $\mathcal{C}_i=\{(x,y): (x-\frac{i\nu}{2})^2+(y-\frac{i\nu}{2})^2\le \frac{\nu^2}{2}\}$ is a circle with radius $\frac{\sqrt{2}\nu}{2}$ and its center is on the line segment $(z^{(x),z^{(x)}})$. We then have $ |\mathcal T_S|\le \Theta(2/\nu)$. Thus, when $B\gg 2/\nu$, our bound is strictly less than the uniform stability result \eqref{eq:uniformstability}.

\section{Additional Experimental Results and Details } \label{app:exp}

We report the additional experimental results in Figures \ref{fig:app:1}, \ref{fig:app:2}, and \ref{fig:app:3}, where we can observe that our new bounds provide the significant improvements over the previous bounds.   

For the real-world data, we adopted the standard benchmark datasets  --- MNIST \citep{lecun1998gradient}, CIFAR-10 \allowbreak \citep{krizhevsky2009learning}, CIFAR-100 \citep{krizhevsky2009learning}, SVHN \citep{netzer2011reading}, Fashion-MNIST (FMNIST) \citep{xiao2017fashion}, Kuzushiji-MNIST (KMNIST) \citep{clanuwat2019deep}, and Semeion \citep{srl1994semeion}. We used  all the training samples exactly as provided by those datasets. For the synthetic data, we generated them by sampling the input $x \in \Xcal$ from beta distributions and Gaussian mixture distributions with a variety of hyperparameters.  Beta($\alpha$, $\beta$) indicates the Beta distribution with hyper-parameters $\alpha$ and $\beta$. Gauss mix ($\sigma$) means the mixture of five Gaussian distributions with a standard deviation $\sigma$. Beta mix ($\alpha$, $\beta$)-($\sigma$) represents the mixture of beta distributions generated by the following procedure:
$$
x=0.4*v_0 + v_1 + v_2, 
$$
where $v_0$ is drawn from the uniform distribution on $[0,1]$, $v_1 \sim$ Beta($\alpha$, $\beta$), and $v_2 \sim$ Beta($\sigma$, $\sigma$). Similarly, beta-Gauss  ($\alpha$, $\beta$)-($\sigma$) represents the  mixture of  distributions generated by the following procedure:
$$
x=0.4*v_0 + v_1 + v_2, 
$$ 
where $v_0$ is drawn from the uniform distribution on $[0,1]$, $v_1 \sim$ Beta($\alpha$, $\beta$), and $v_2$ is drawn from the Gaussian distribution with a standard deviation $\sigma$. For all the synthetic data, we generated and used 1000
training data points. 

For the partition $\{\Ccal_k\}_{k=1}^K$, we consider the division of the input space $\Xcal$ because we can either (1) assume that there exists a function $\hat y$ such that $y=\hat y(x)$ or (2) notice that the partition $\{\Ccal_k\}_{k=1}^K$ of the input space $\Xcal$ can be dictated by the label $y \in \Ycal$; i.e., $K=|\Ycal| \times K'$ where $K'$ is the size of the partition of the input space $\Xcal$, which is used in the previous paper \citep{xu2012robustness}. Thus,  we can focus on partition of the input space $\Xcal$ for the purpose of comparing $K$ and $|\Tcal_S|$.

The $\epsilon$-covering of $\Xcal \subseteq[0,1]^d$ can be defined by the following. We first define
$$   
\Ccal_{k_1,\dots,k_d}' = \{x \in \Xcal : 0.1(k_j-1)\le x_j < 0.1k_j + \one(k_j=10), j=1,\dots,d\}, \qquad k_1,\dots,k_d \in \{1,\dots,10\},
$$
where $\one(k_j = 10)$ is one if $k_j=10$ and is zero otherwise. Note that without this notation of $\one(k_j = 10)$, equivalently, we can define the condition by $0.1(k_j-1)\le x_j \le 0.1k_j$ if $k_j = 10$ and $0.1(k_j-1)\le x_j < 0.1k_j$ if $k_j < 10$, since $\Xcal \subseteq [0,1]^d$. We then define $\Ccal_k$ to be the flatten version of $\Ccal_{k_1,\dots,k_d}' $; i.e., $\{\Ccal_k\}_{k=1}^K=\{\Ccal_{k_1,\dots,k_d} '\}_{k_1,\dots,k_d\in [10]}$ with $C_1=\Ccal_{1,1,\dots,1}'$, $C_2= \Ccal_{2,1,\dots,1}'$, $C_{10}= \Ccal_{10,1,\dots,1}'$, $C_{10+1}= \Ccal_{1,2,1,\dots,1}'$, $C_{20}= \Ccal_{10,2,1,\dots,1}'$, and so on.
While the $\epsilon$-covering of the original input space $\Xcal$ is the default example from the previous paper \citep{xu2012robustness}, in Figure \ref{fig:1} we see that $K$ grows rapidly as $d$ increases. Therefore, to reduce $K$ significantly, we also propose utilizing the inverse image of the $\epsilon$-covering in a randomly projected space. That is, given a random matrix $A$, we use the $\epsilon$-covering of the space of $u=Ax$ to define the pre-partition $\{\tilde \Ccal_k\}_{k=1}^K$. More concretely, the random matrix $A$ for  the projection was generated by the following procedure: 
\begin{enumerate}
\item 
Each entry of a random matrix $\tilde A$ is generated by the Uniform Distribution on $[0,1]$ independently. 
\item
Each row of the random matrix $\tilde A \in \RR^{3 \times d}$  is then normalized so that $Ax \in [0,1]^3$; i.e., 
$$
A_{ij} = \frac{\tilde A _{ij}}{\sum_{j=1}^d \tilde A_{ij}}.
$$
\end{enumerate}
Then,  we  can define
$$   
\tilde \Ccal_{k_1,k_{2},k_3}' = \{u \in  [0,1]^3: 0.1(k_j-1)\le x_j < 0.1k_j+\one(k_j=10), j=1,2,3\}, \qquad k_1,k_2,k_3 \in \{1,\dots,10\}.
$$

We then define $\tilde \Ccal_k$ to be the flatten version of $\tilde \Ccal_{k_1,\dots,k_d}' $; i.e., $\{\tilde \Ccal_k\}_{k=1}^K=\{\tilde \Ccal_{k_1,\dots,k_d} '\}_{k_1,\dots,k_d\in [10]}$. Finally,  the partition $\{\Ccal_k\}_{k=1}^K$ is defined by $\Ccal_k= \{x \in \Xcal : Ax \in \tilde \Ccal_{k}\}$. In this study, we randomly generated matrix $A \in \RR^{3 \times d}$ in each trial.

For the clustering with unlabeled data, given a set of unlabeled data points $\{\bar x_k\}_{k=1}^K$, the partition $\{\Ccal_k\}_{k=1}^K$ is defined by $\Ccal_k= \{x \in \Xcal : k =\argmin_{k' \in [K]} \|x-\bar x_{k'} \|_2 \}$. Following the literature on semi-supervised learning, we randomly split the training data points into labeled data points (500 for Semeion and 5000 for all other datasets) and unlabeled data points (the remainder of all the training data). 

\begin{figure}[!t]
\center
\begin{subfigure}[b]{0.24\textwidth}
  \includegraphics[width=\textwidth, height=0.8\textwidth]{fig/results_1/base_beta_1000_001_01_01_01_0.pdf}
  \vspace{-15pt}
  \caption{Beta(0.1, 0.1)} 
  \vspace{10pt}
\end{subfigure}
\begin{subfigure}[b]{0.24\textwidth}
  \includegraphics[width=\textwidth, height=0.8\textwidth]{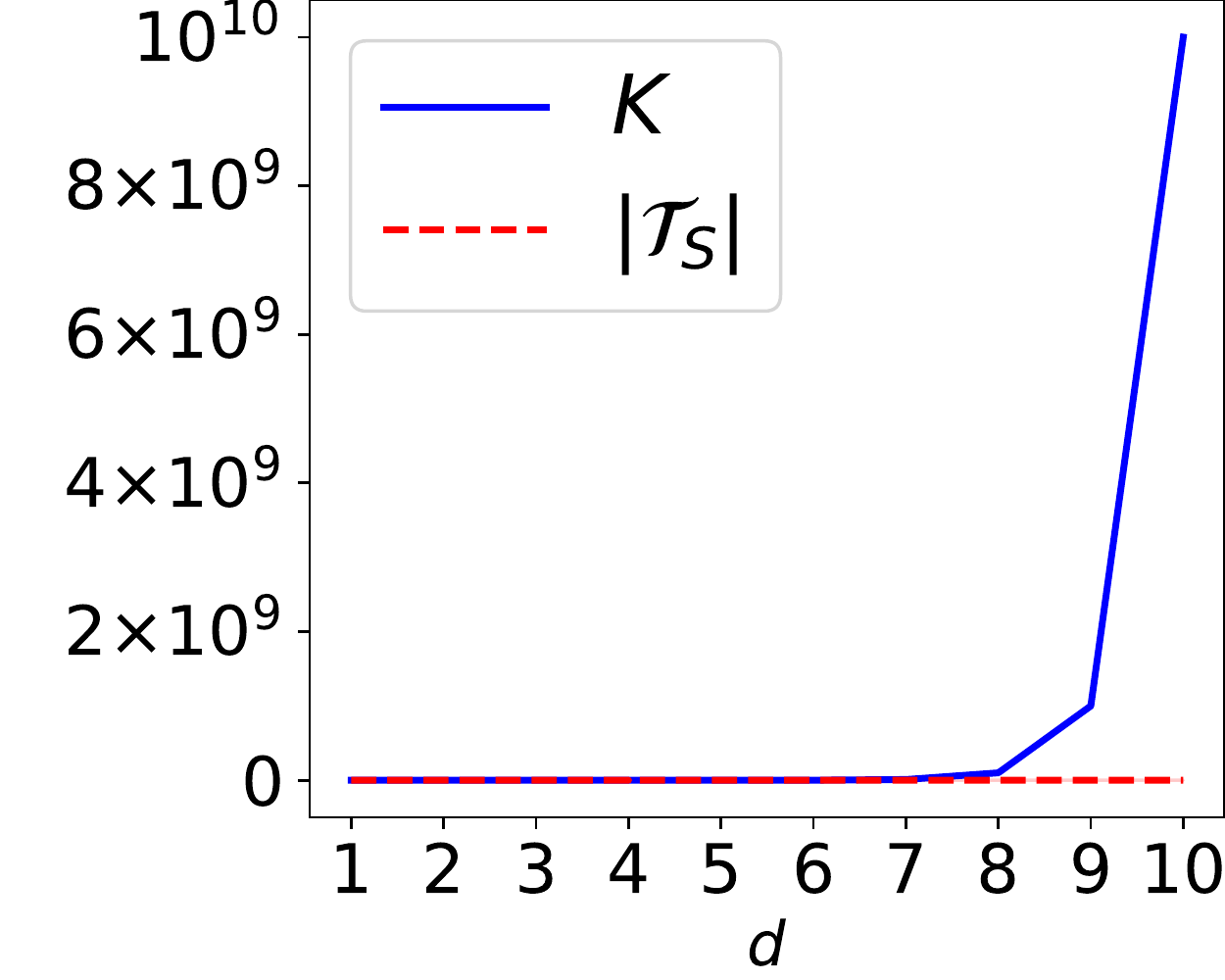}
  \vspace{-15pt}
  \caption{Beta(0.01, 0.01)} 
  \vspace{10pt}
\end{subfigure}
\begin{subfigure}[b]{0.24\textwidth}
  \includegraphics[width=\textwidth, height=0.8\textwidth]{fig/results_1/base_beta_1000_001_01_100_01_0.pdf}
  \vspace{-15pt}
  \caption{Beta(0.1, 10)} 
  \vspace{10pt}
\end{subfigure}
\begin{subfigure}[b]{0.24\textwidth}
  \includegraphics[width=\textwidth, height=0.8\textwidth]{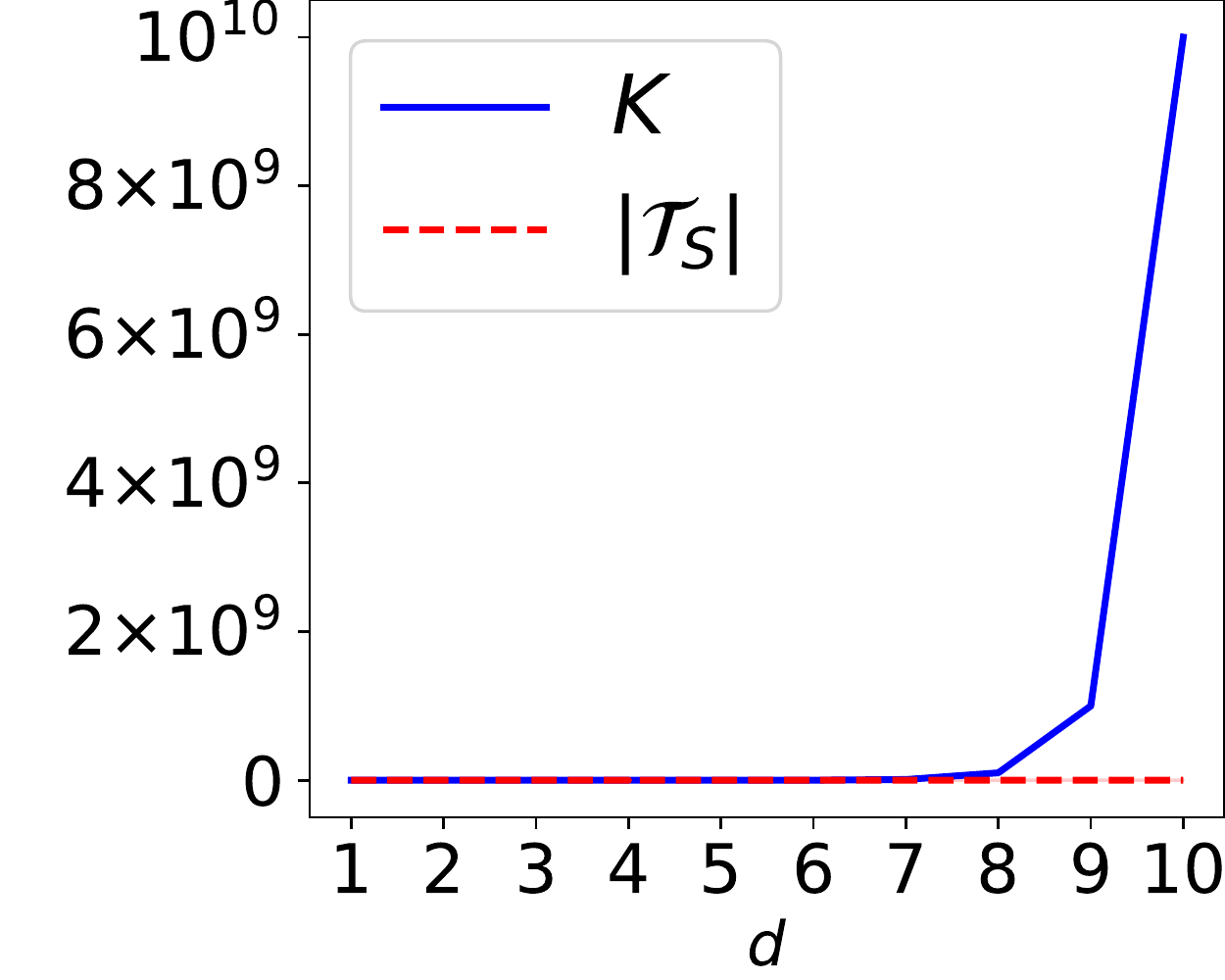}
  \vspace{-15pt}
  \caption{Beta mix (0.1, 0.1)-(0.01)} 
  \vspace{10pt}
\end{subfigure}
\begin{subfigure}[b]{0.24\textwidth}
  \includegraphics[width=\textwidth, height=0.8\textwidth]{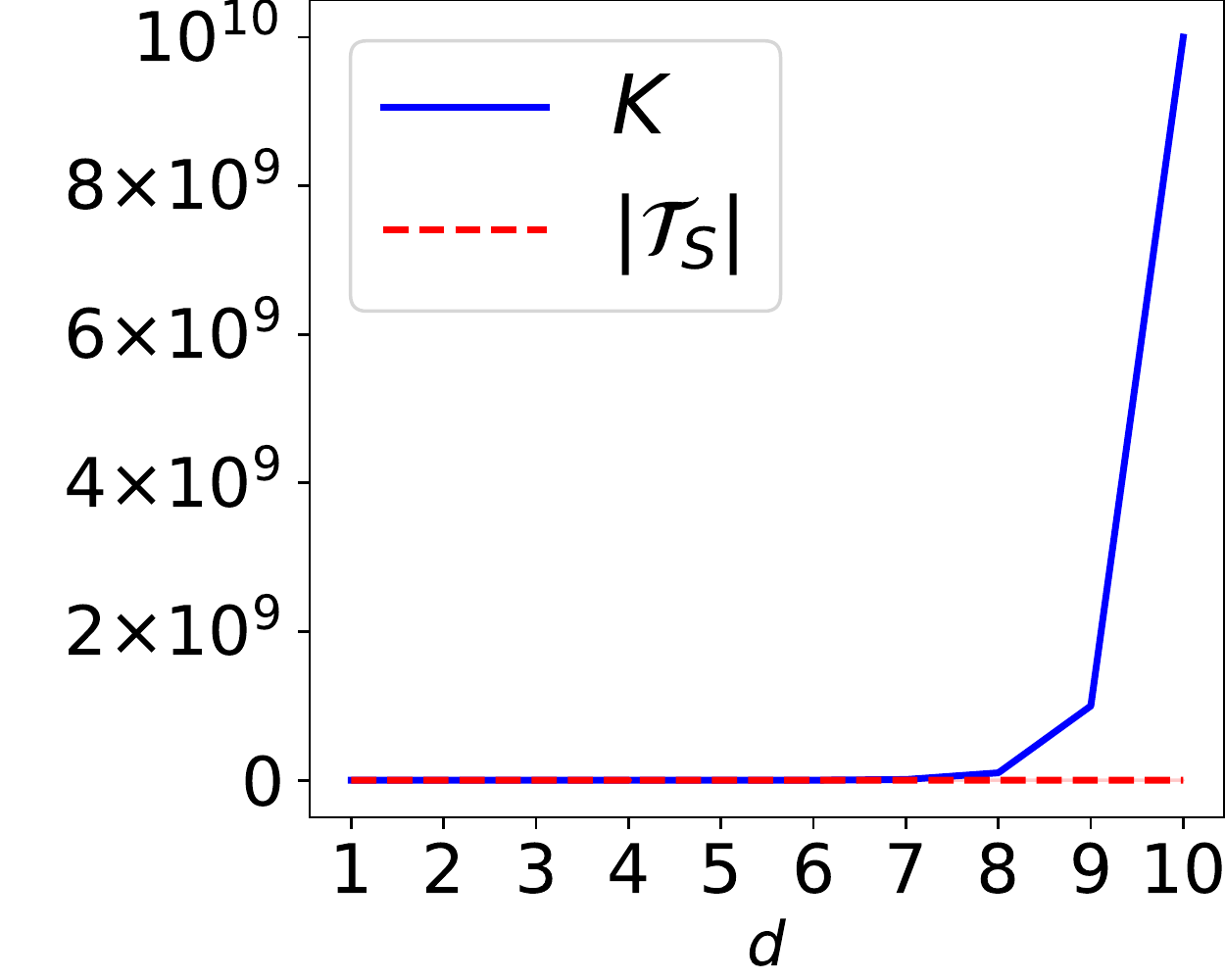}
  \vspace{-15pt}
  \caption{Beta mix (0.1, 0.1)-(0.1)} 
  \vspace{10pt}
\end{subfigure}
\begin{subfigure}[b]{0.24\textwidth}
  \includegraphics[width=\textwidth, height=0.8\textwidth]{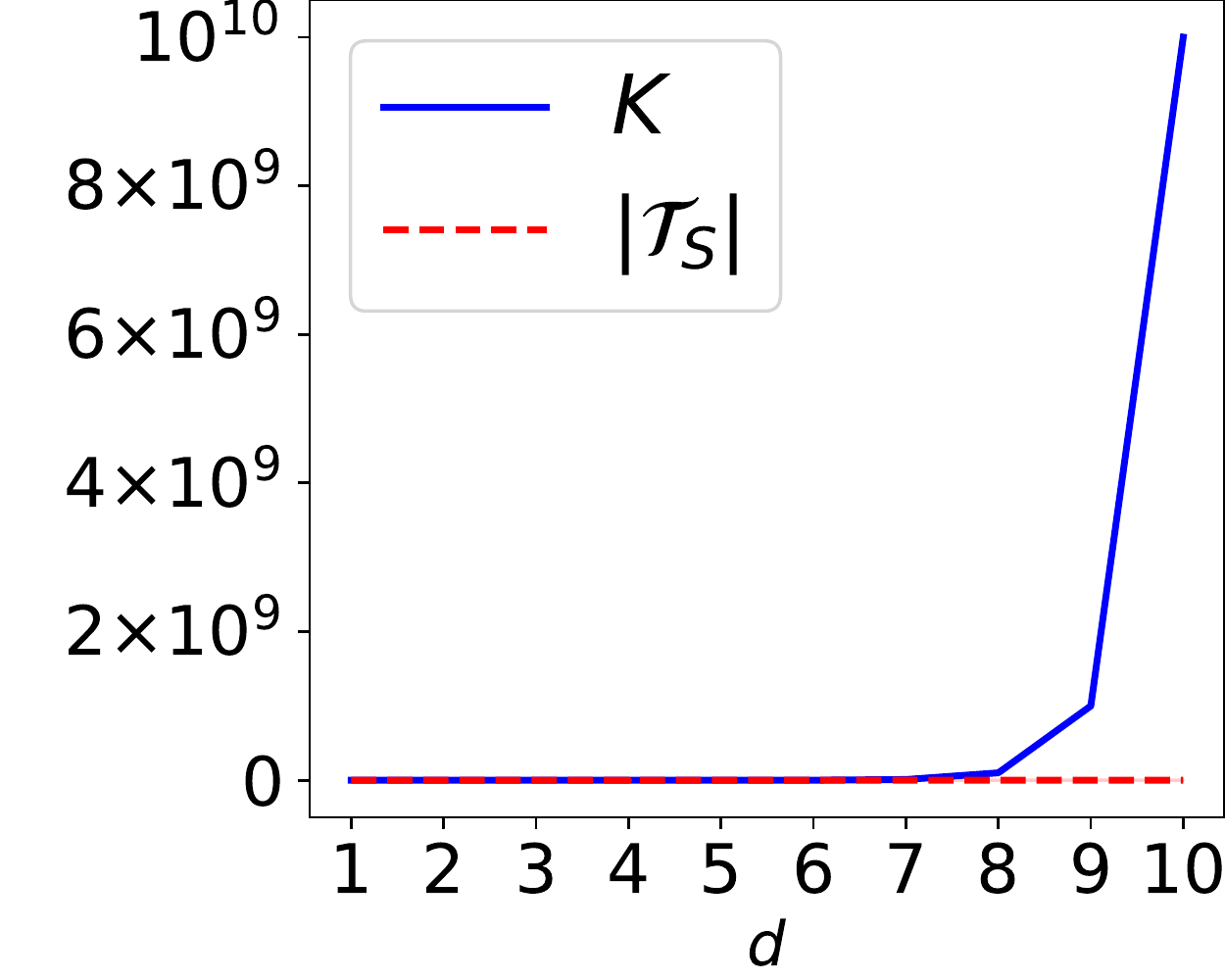}
  \vspace{-15pt}
  \caption{Beta mix (0.1, 0.1)-(10)} 
  \vspace{10pt}
\end{subfigure}
\begin{subfigure}[b]{0.24\textwidth}
  \includegraphics[width=\textwidth, height=0.8\textwidth]{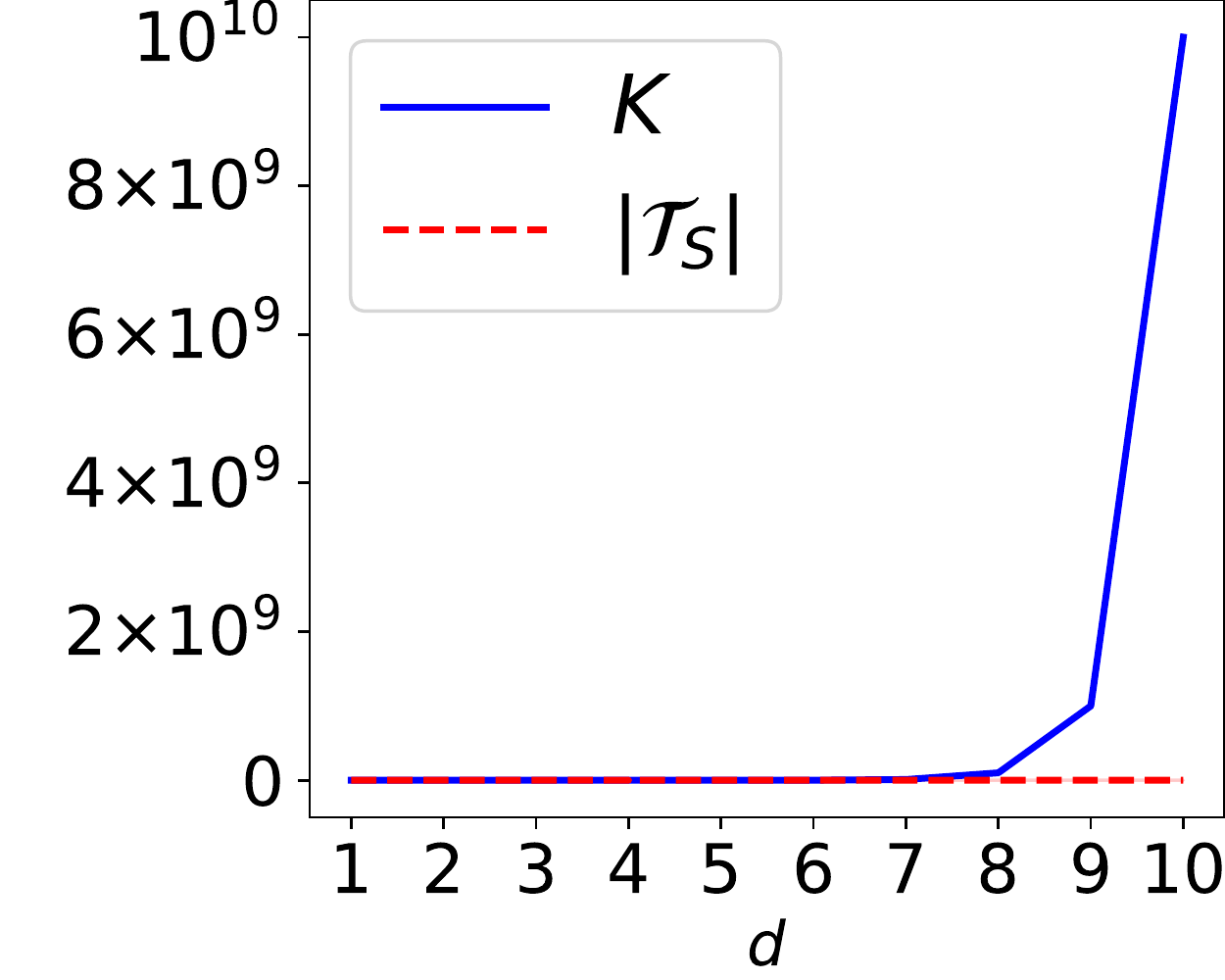}
  \vspace{-15pt}
  \caption{Beta mix (0.1, 10)-(0.01)} 
  \vspace{10pt}
\end{subfigure}
\begin{subfigure}[b]{0.24\textwidth}
  \includegraphics[width=\textwidth, height=0.8\textwidth]{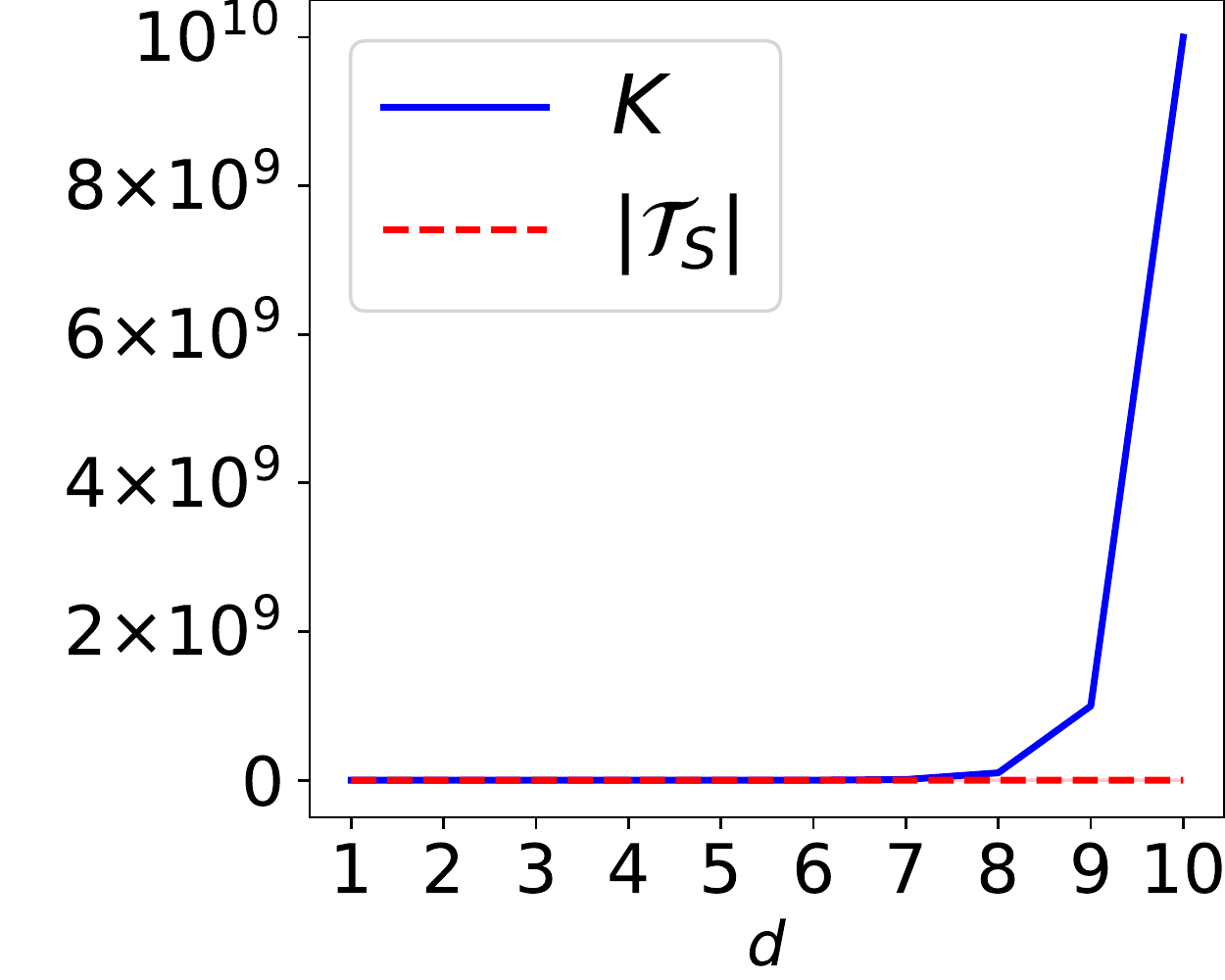}
  \vspace{-15pt}
  \caption{Beta mix (0.1, 10)-(0.1)} 
  \vspace{10pt}
\end{subfigure}
\begin{subfigure}[b]{0.24\textwidth}
  \includegraphics[width=\textwidth, height=0.8\textwidth]{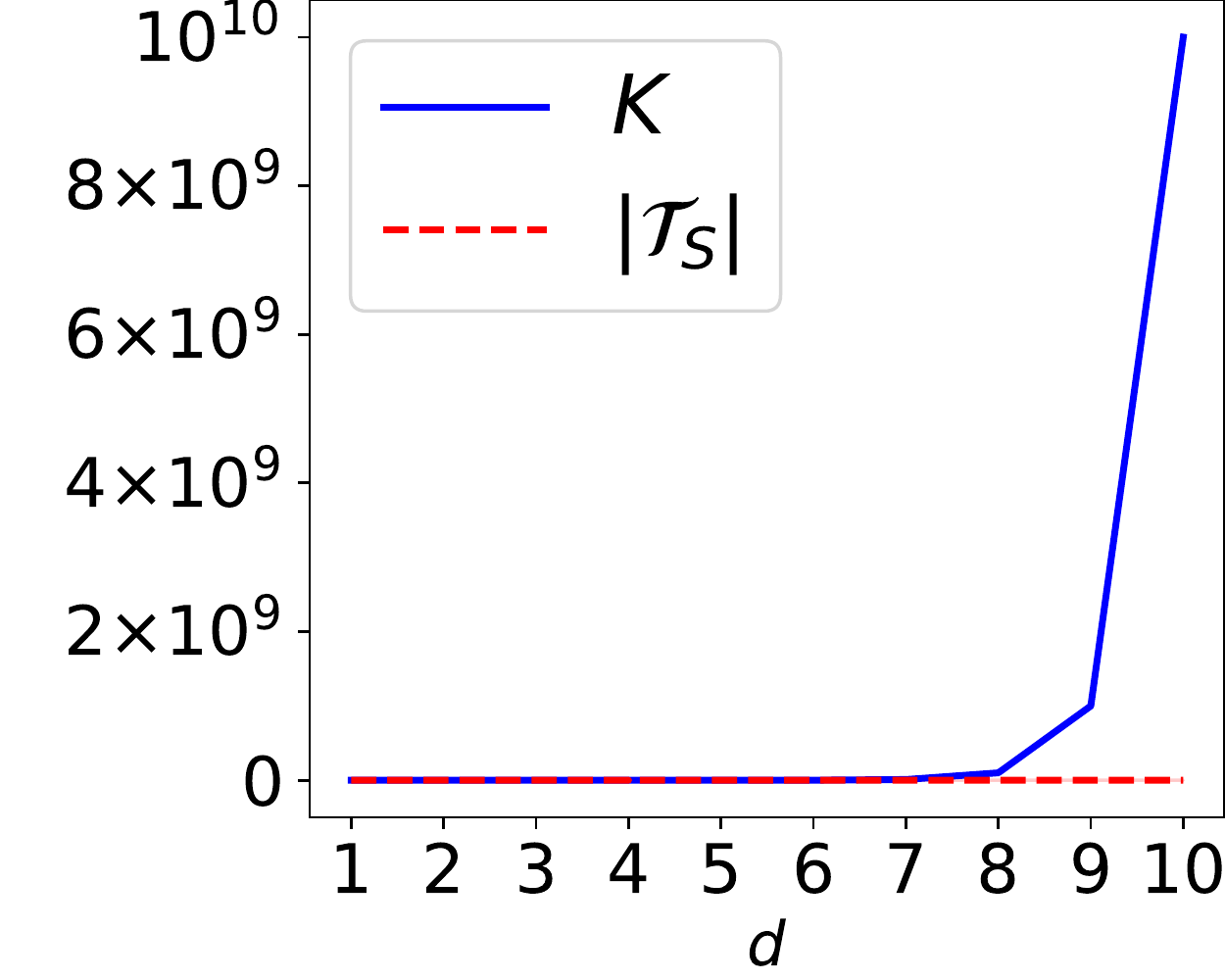}
  \vspace{-15pt}
  \caption{Beta mix (0.1, 10)-(10)} 
  \vspace{10pt}
\end{subfigure}
\begin{subfigure}[b]{0.24\textwidth}
  \includegraphics[width=\textwidth, height=0.8\textwidth]{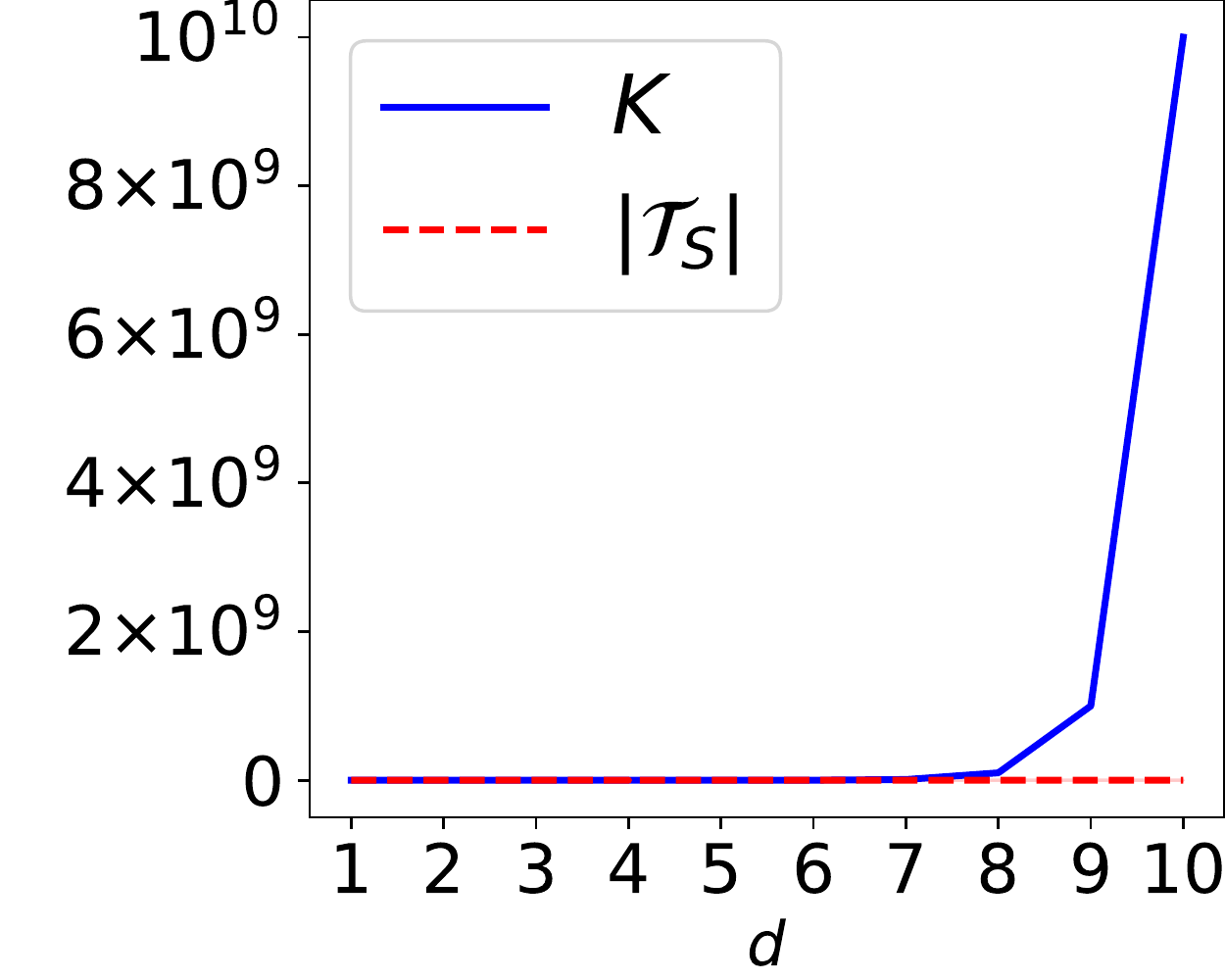}
  \vspace{-15pt}
  \caption{beta-Gauss (0.1,0.1)-(0.01)} 
  \vspace{10pt}
\end{subfigure}
\begin{subfigure}[b]{0.24\textwidth}
  \includegraphics[width=\textwidth, height=0.8\textwidth]{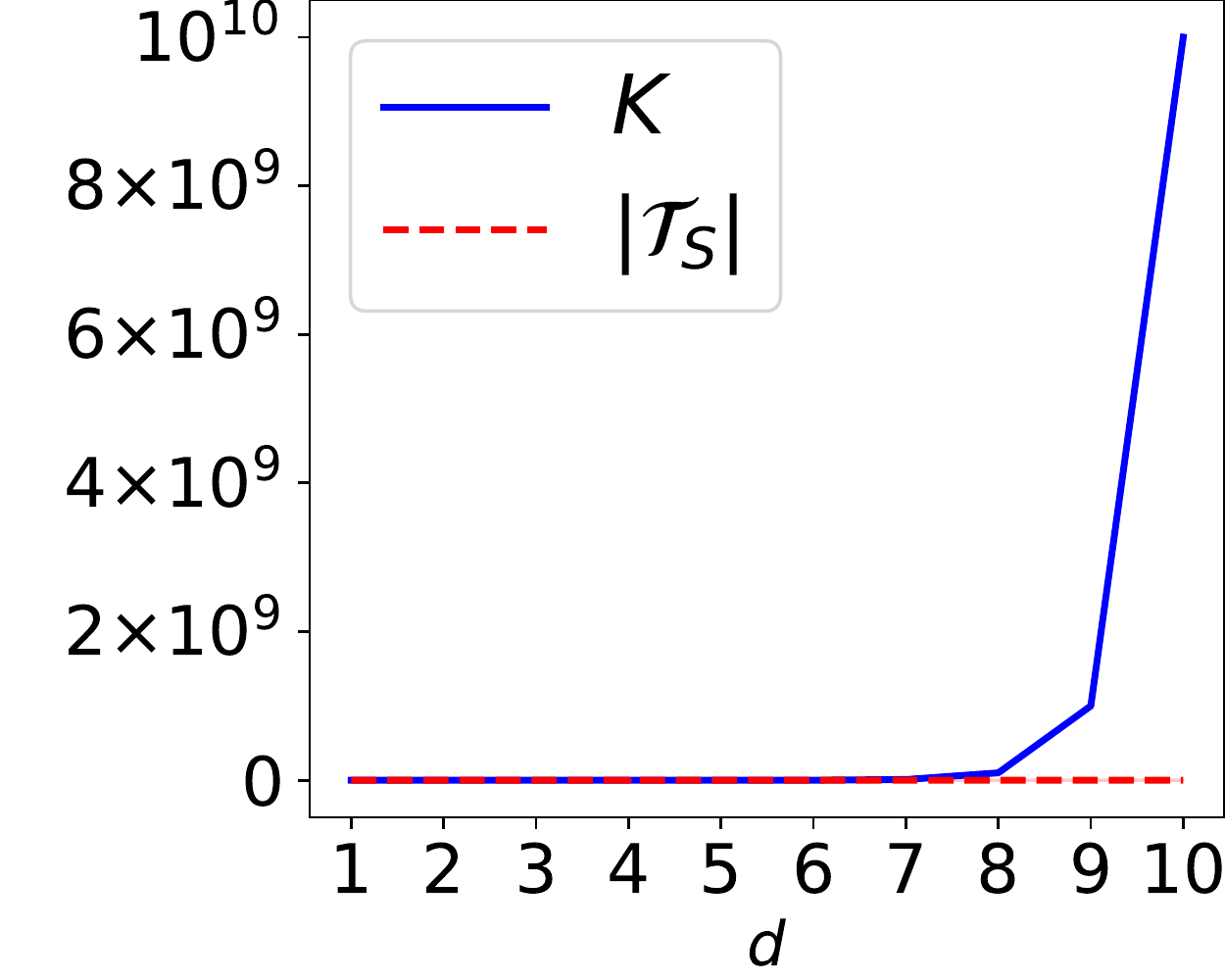}
  \vspace{-15pt}
  \caption{beta-Gauss (0.1,0.1)-(0.1)} 
  \vspace{10pt}
\end{subfigure}
\begin{subfigure}[b]{0.24\textwidth}
  \includegraphics[width=\textwidth, height=0.8\textwidth]{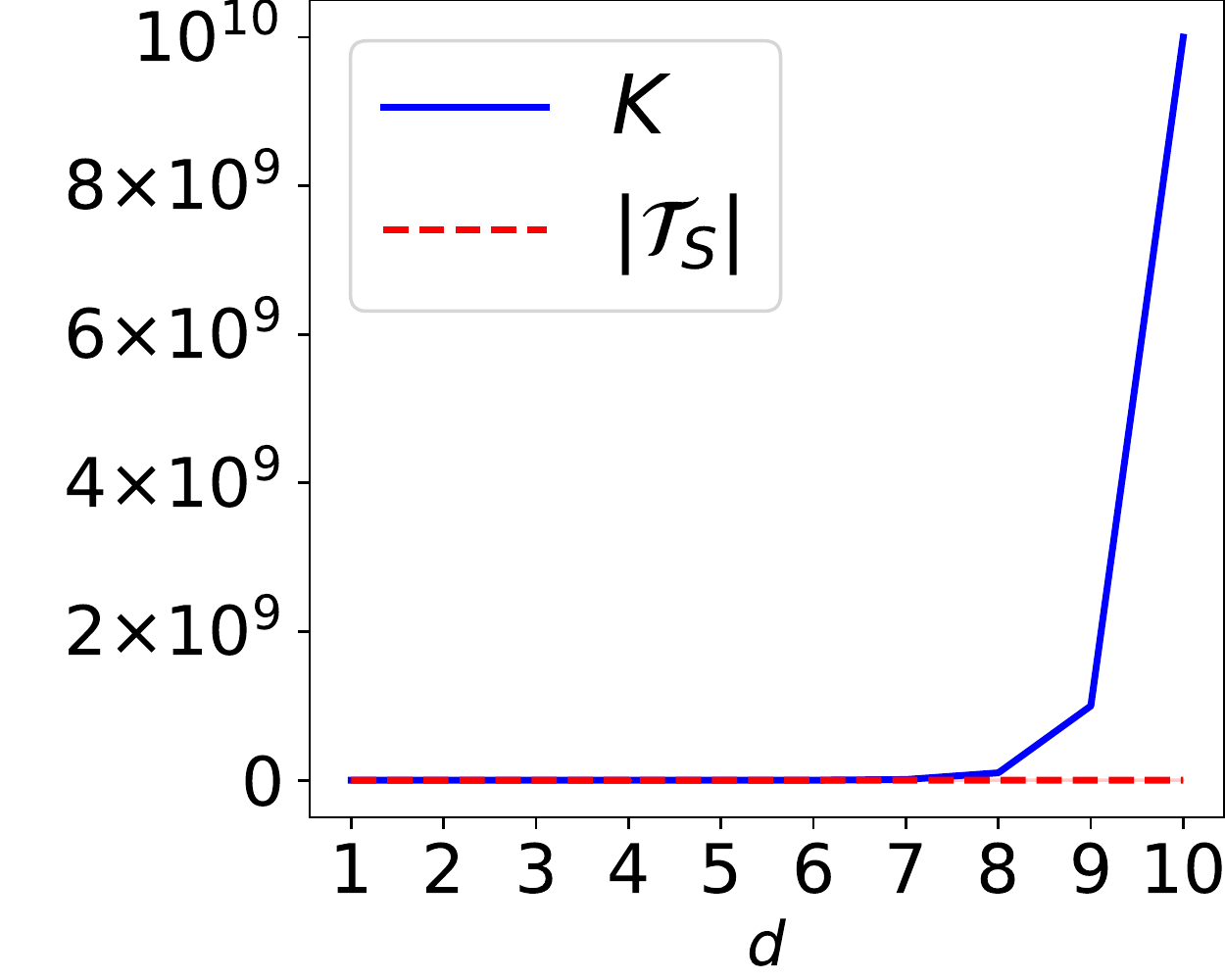}
  \vspace{-15pt}
  \caption{beta-Gauss (0.1,0.1)-(10)} 
  \vspace{10pt}
\end{subfigure}
\begin{subfigure}[b]{0.24\textwidth}
  \includegraphics[width=\textwidth, height=0.8\textwidth]{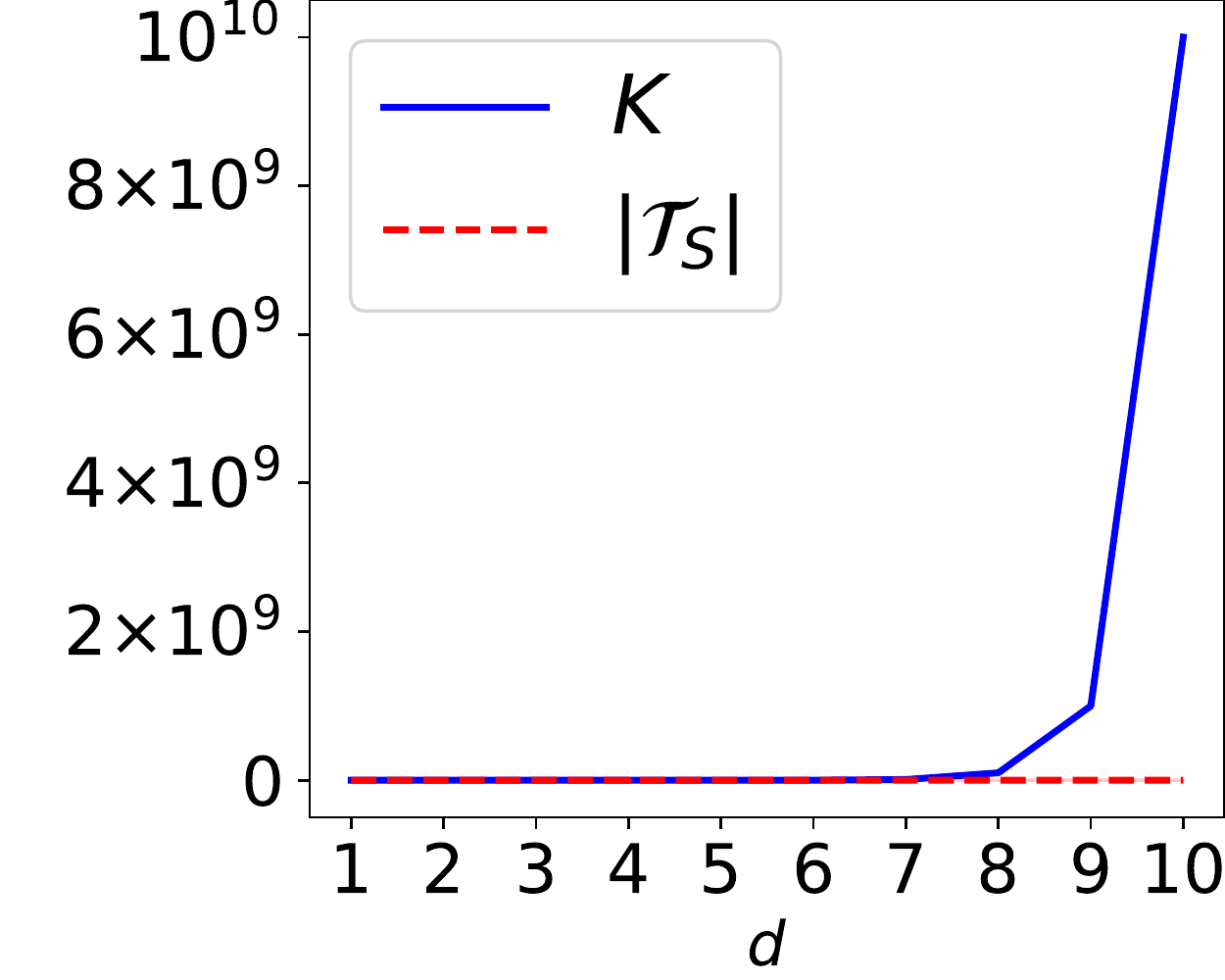}
  \vspace{-15pt}
  \caption{beta-Gauss (0.1,10)-(0.01)} 
  \vspace{10pt}
\end{subfigure}
\begin{subfigure}[b]{0.24\textwidth}
  \includegraphics[width=\textwidth, height=0.8\textwidth]{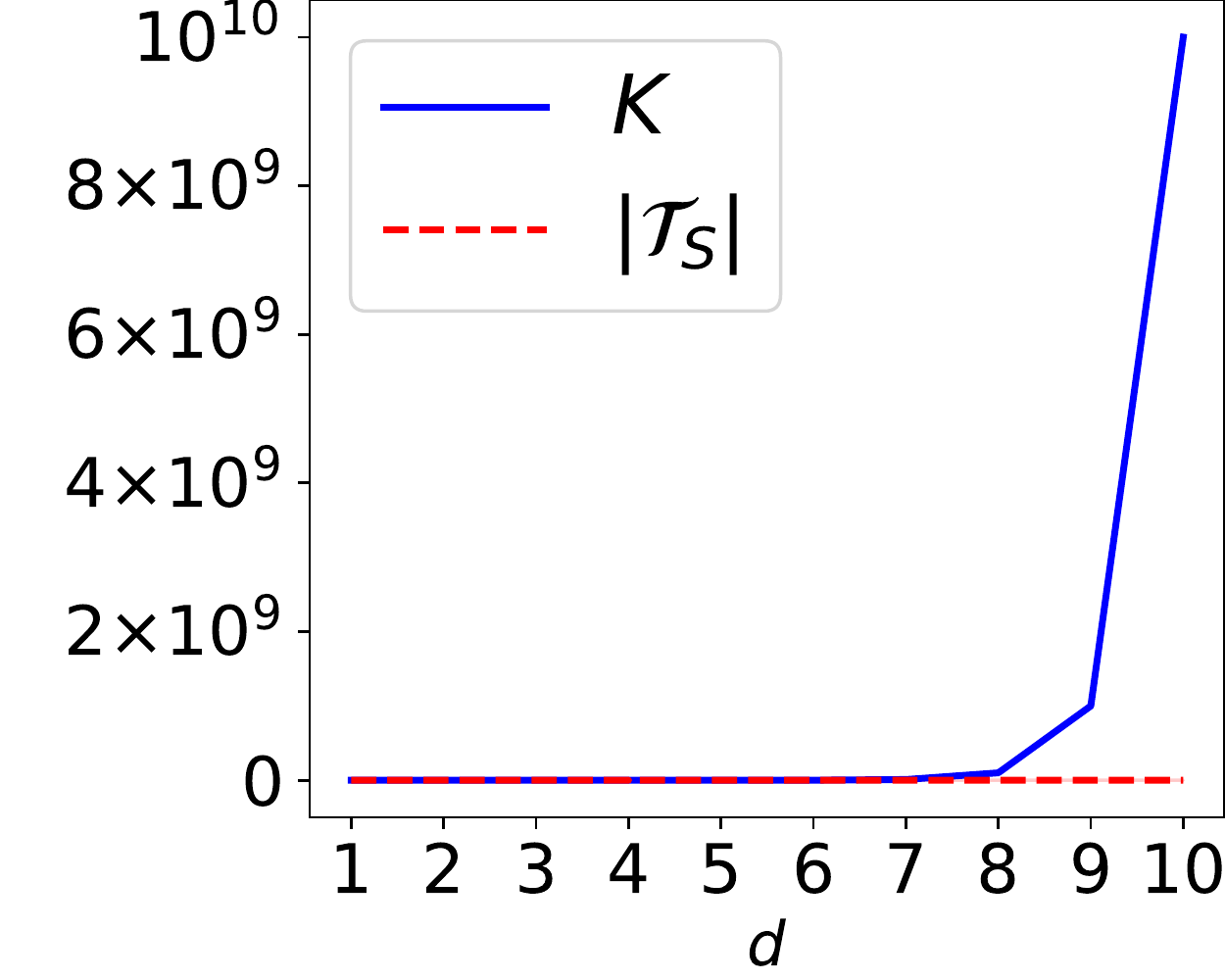}
  \vspace{-15pt}
  \caption{beta-Gauss (0.1,10)-(0.1)} 
  \vspace{10pt}
\end{subfigure}
\begin{subfigure}[b]{0.24\textwidth}
  \includegraphics[width=\textwidth, height=0.8\textwidth]{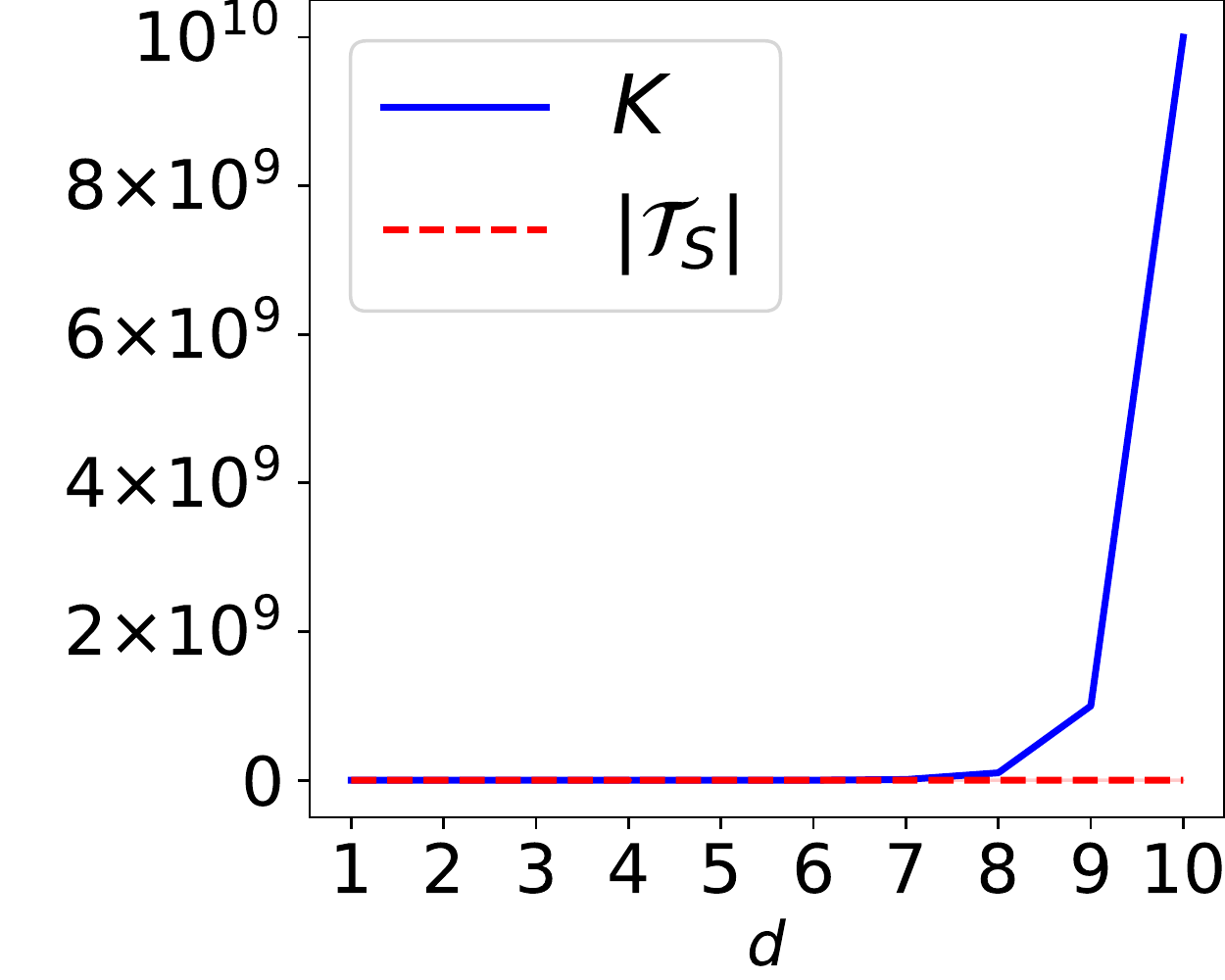}
  \vspace{-15pt}
  \caption{beta-Gauss (0.1,10)-(10)} 
  \vspace{10pt}
\end{subfigure}
\begin{subfigure}[b]{0.24\textwidth}
  \includegraphics[width=\textwidth, height=0.8\textwidth]{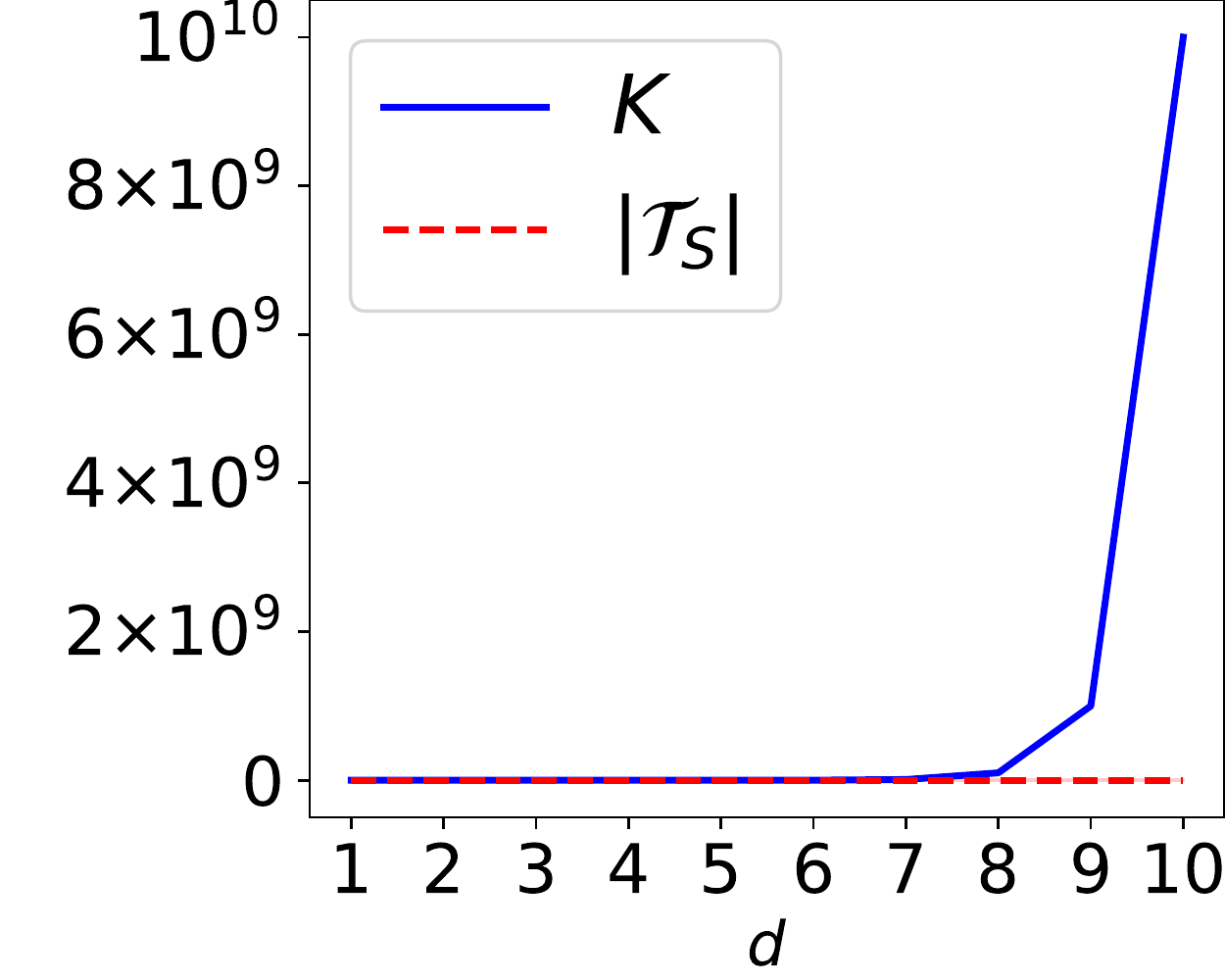}
  \vspace{-15pt}
  \caption{Gauss mix (0.001)} 
  \vspace{10pt}
\end{subfigure}
\begin{subfigure}[b]{0.24\textwidth}
  \includegraphics[width=\textwidth, height=0.8\textwidth]{fig/results_1/base_blobs_1000_001_01_100_01_0.pdf}
  \vspace{-15pt}
  \caption{Gauss mix (0.01)} 
  \vspace{10pt}
\end{subfigure}
\begin{subfigure}[b]{0.24\textwidth}
  \includegraphics[width=\textwidth, height=0.8\textwidth]{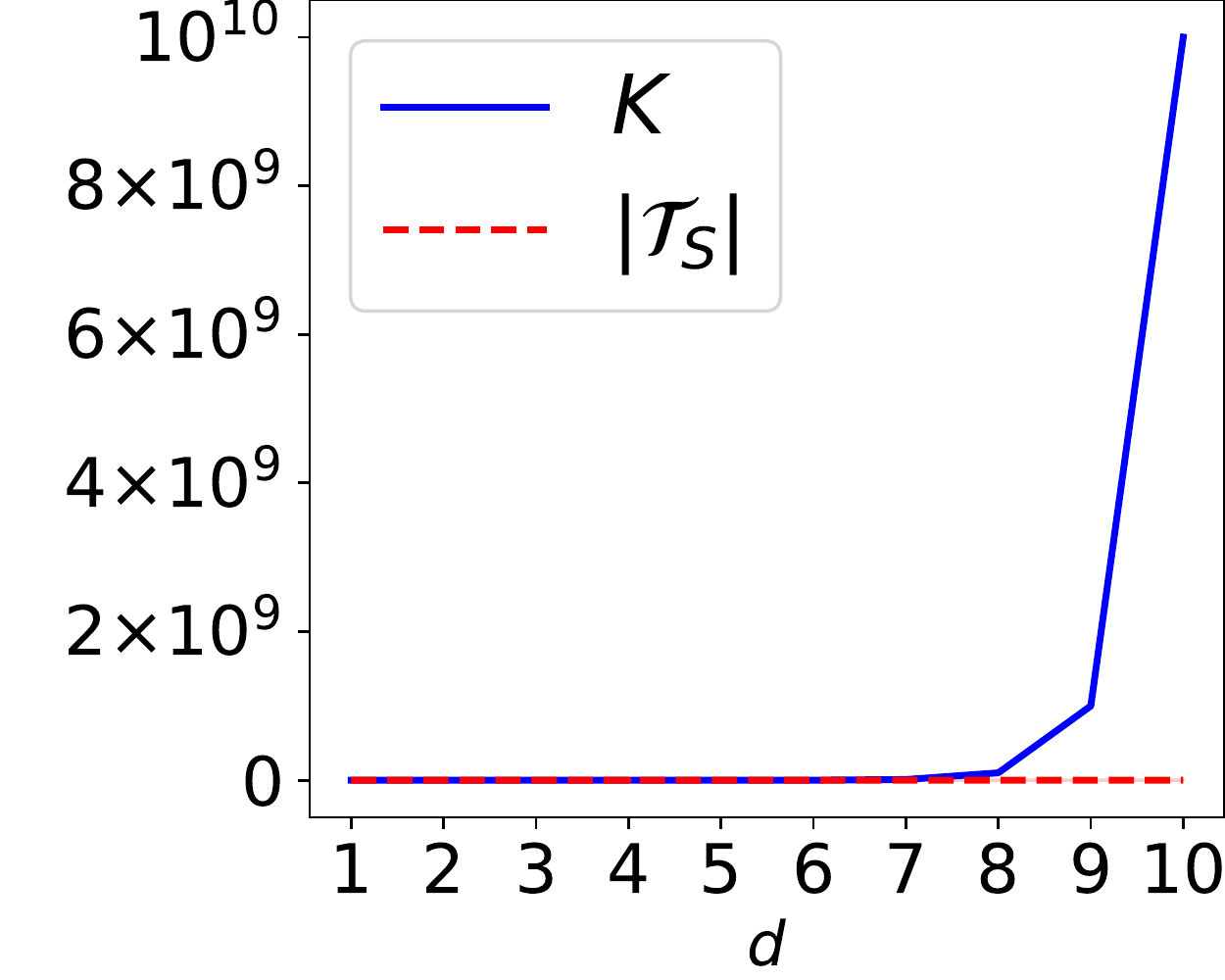}
  \vspace{-15pt}
  \caption{Gauss mix (0.1)} 
  \vspace{10pt}
\end{subfigure}
\begin{subfigure}[b]{0.24\textwidth}
  \includegraphics[width=\textwidth, height=0.8\textwidth]{fig/results_1/base_blobs_1000_10_01_100_01_0.pdf}
  \vspace{-15pt}
  \caption{Gauss mix (1.0)} 
  \vspace{10pt}
\end{subfigure}
\begin{subfigure}[b]{0.24\textwidth}
  \includegraphics[width=\textwidth, height=0.8\textwidth]{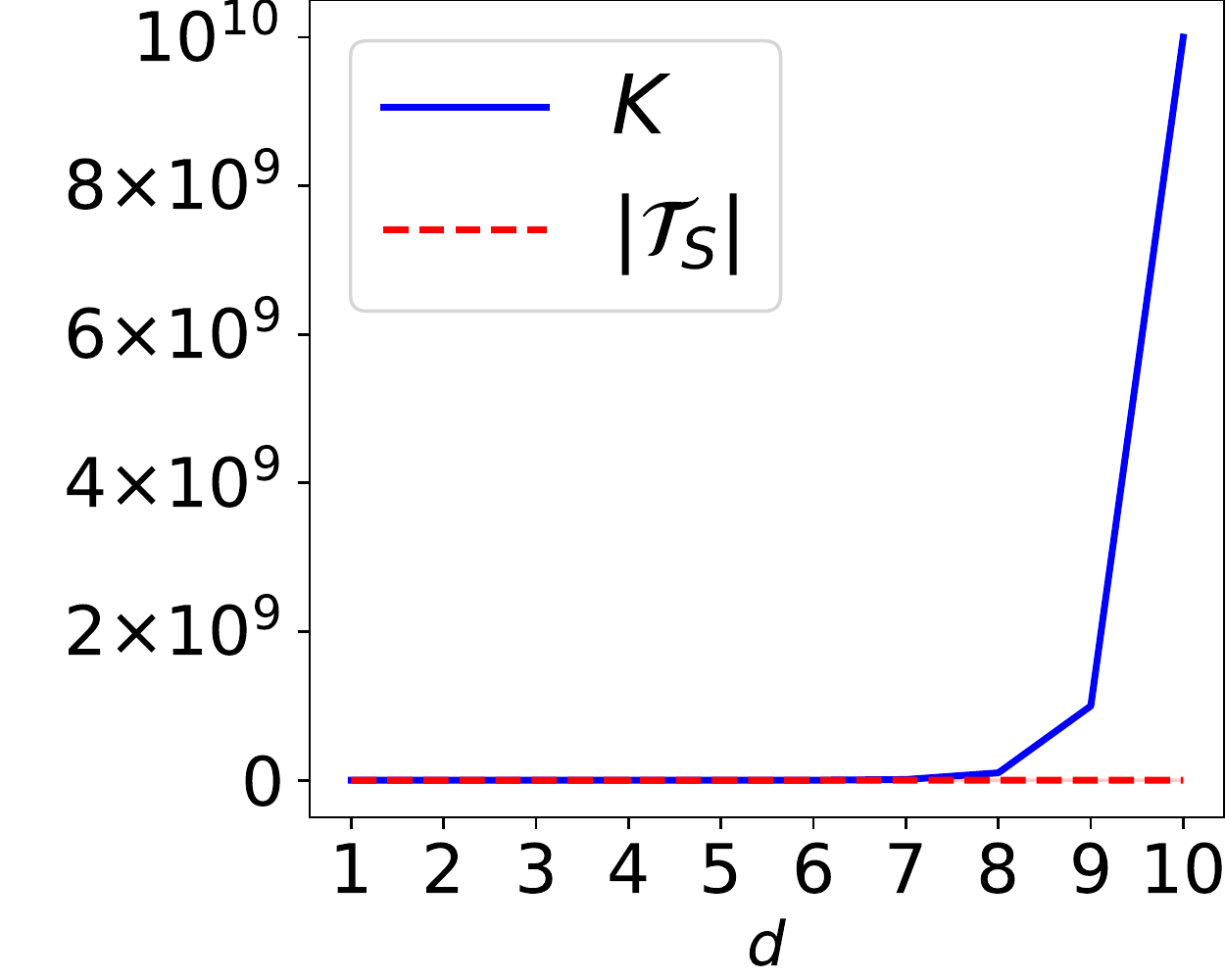}
  \vspace{-15pt}
  \caption{Gauss mix (10.0)} 
  \vspace{10pt}
\end{subfigure}
\caption{The values of $K$ versus $|\Tcal_S|$ with  the $\epsilon$-covering of the original space. The figures display the mean of 10 random trials along with one standard deviation.} \label{fig:app:1}
\end{figure}

\begin{figure}[!t]
\center
\begin{subfigure}[b]{0.24\textwidth}
  \includegraphics[width=\textwidth, height=0.8\textwidth]{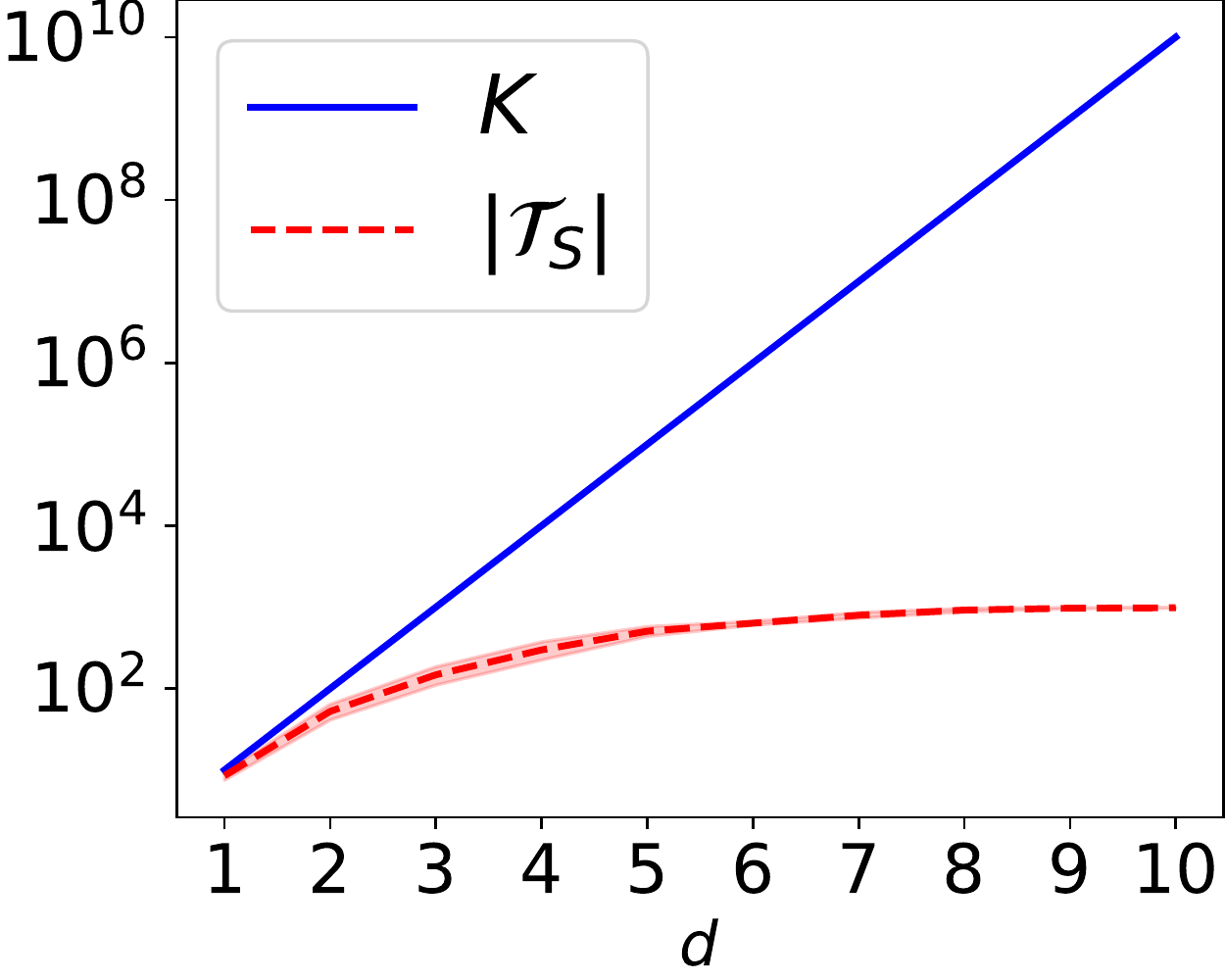}
  \vspace{-15pt}
  \caption{Beta(0.1, 0.1)} 
  \vspace{10pt}
\end{subfigure}
\begin{subfigure}[b]{0.24\textwidth}
  \includegraphics[width=\textwidth, height=0.8\textwidth]{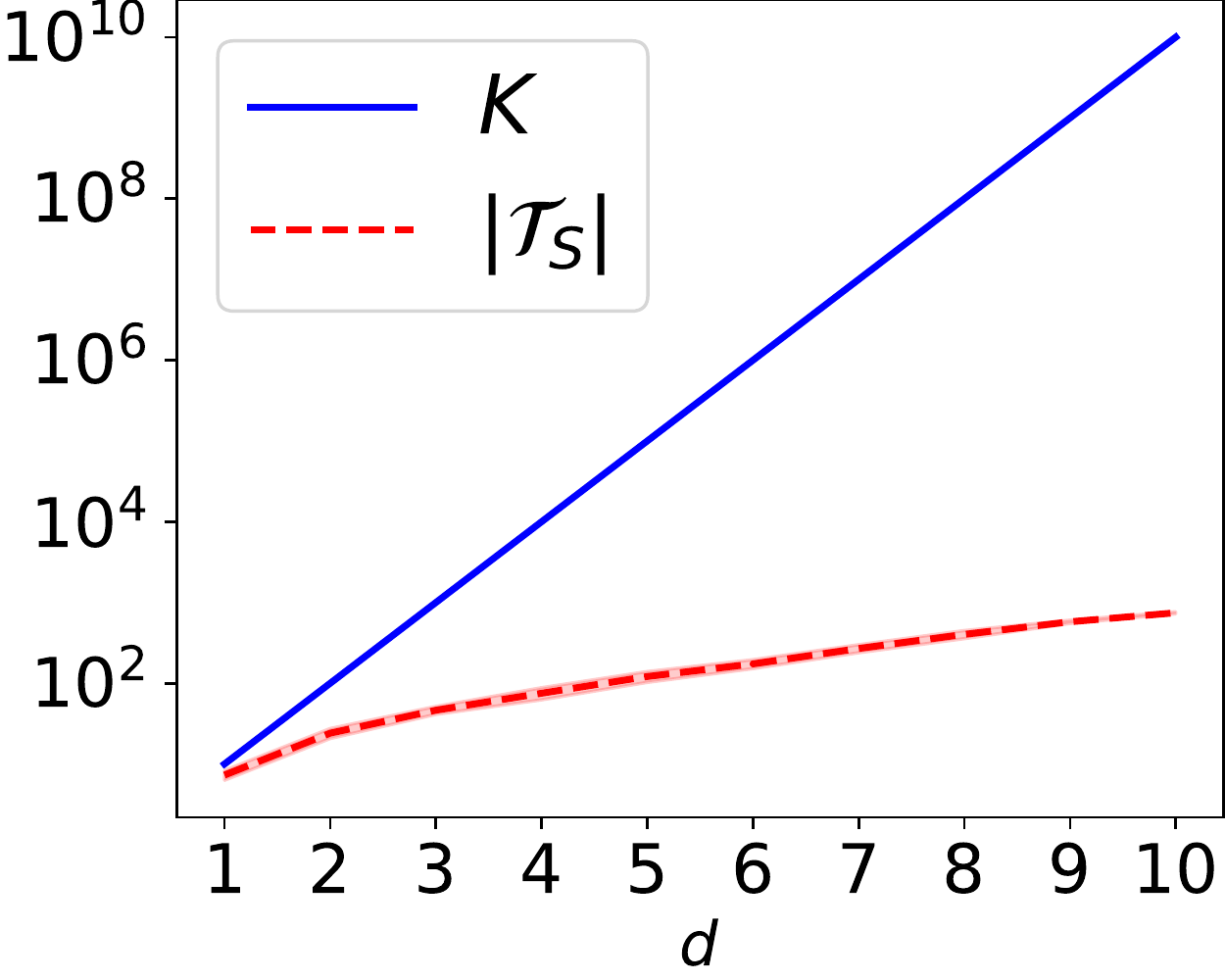}
  \vspace{-15pt}
  \caption{Beta(0.01, 0.01)} 
  \vspace{10pt}
\end{subfigure}
\begin{subfigure}[b]{0.24\textwidth}
  \includegraphics[width=\textwidth, height=0.8\textwidth]{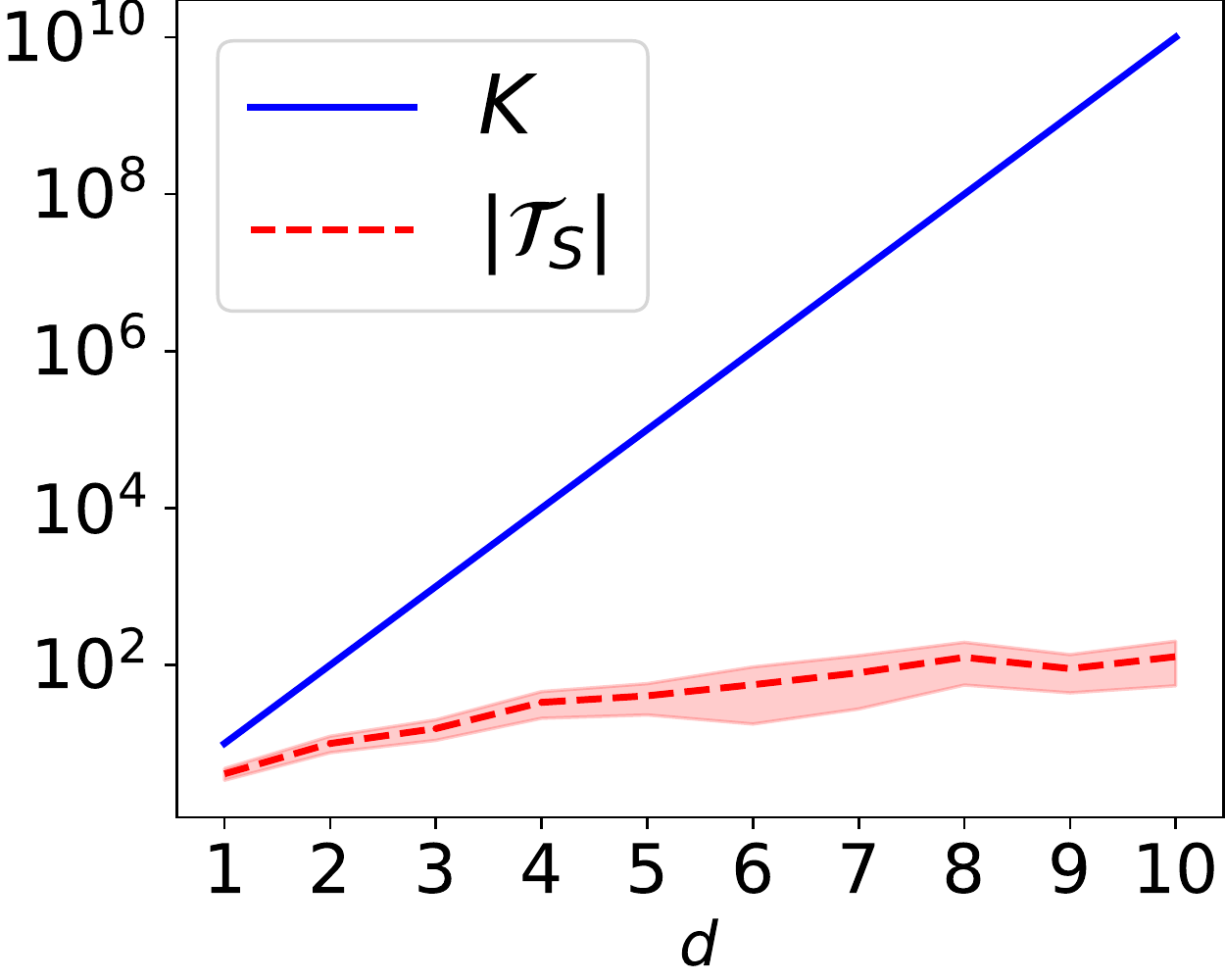}
  \vspace{-15pt}
  \caption{Beta(0.1, 10)} 
  \vspace{10pt}
\end{subfigure}
\begin{subfigure}[b]{0.24\textwidth}
  \includegraphics[width=\textwidth, height=0.8\textwidth]{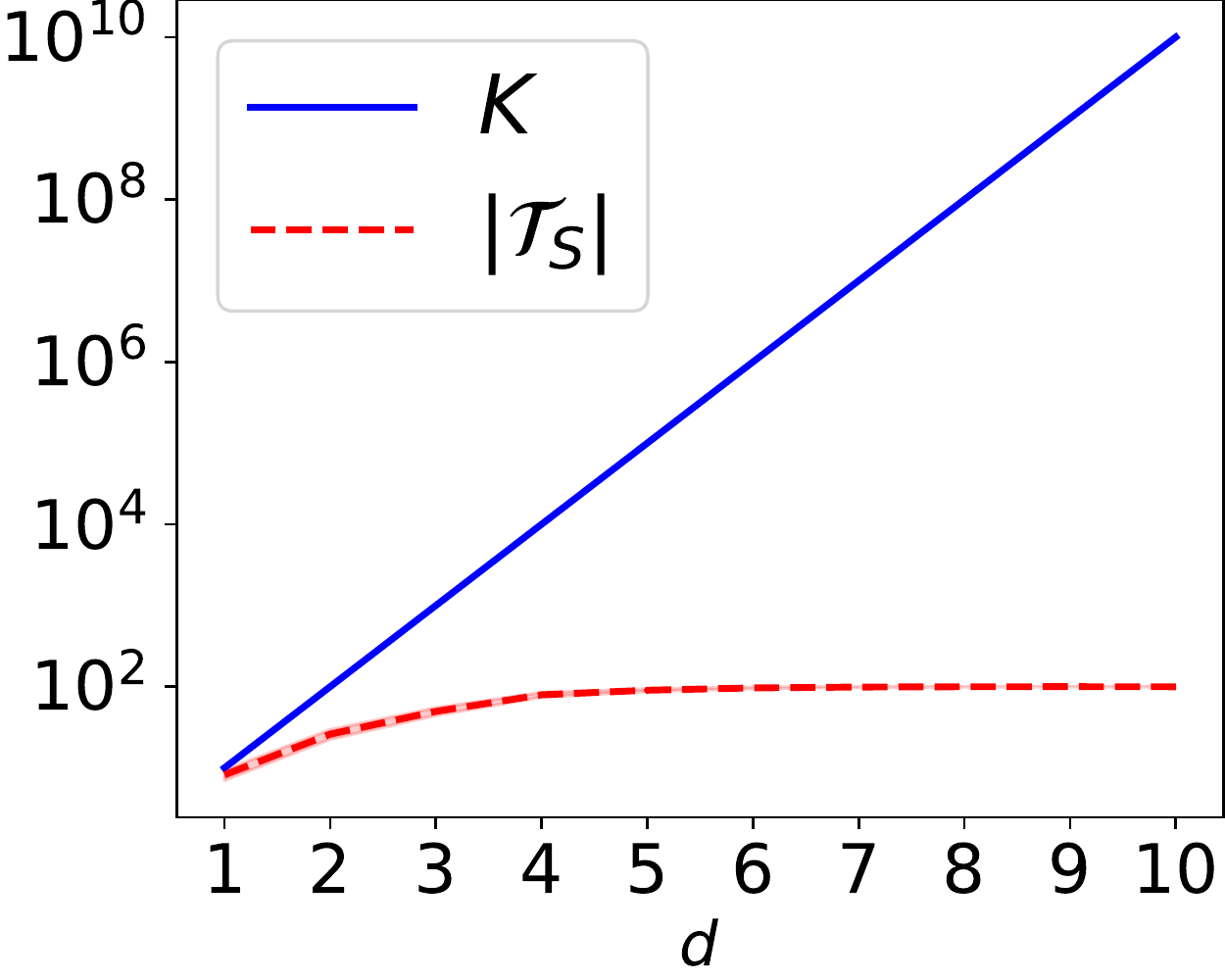}
  \vspace{-15pt}
  \caption{Beta mix (0.1, 0.1)-(0.01)} 
  \vspace{10pt}
\end{subfigure}
\begin{subfigure}[b]{0.24\textwidth}
  \includegraphics[width=\textwidth, height=0.8\textwidth]{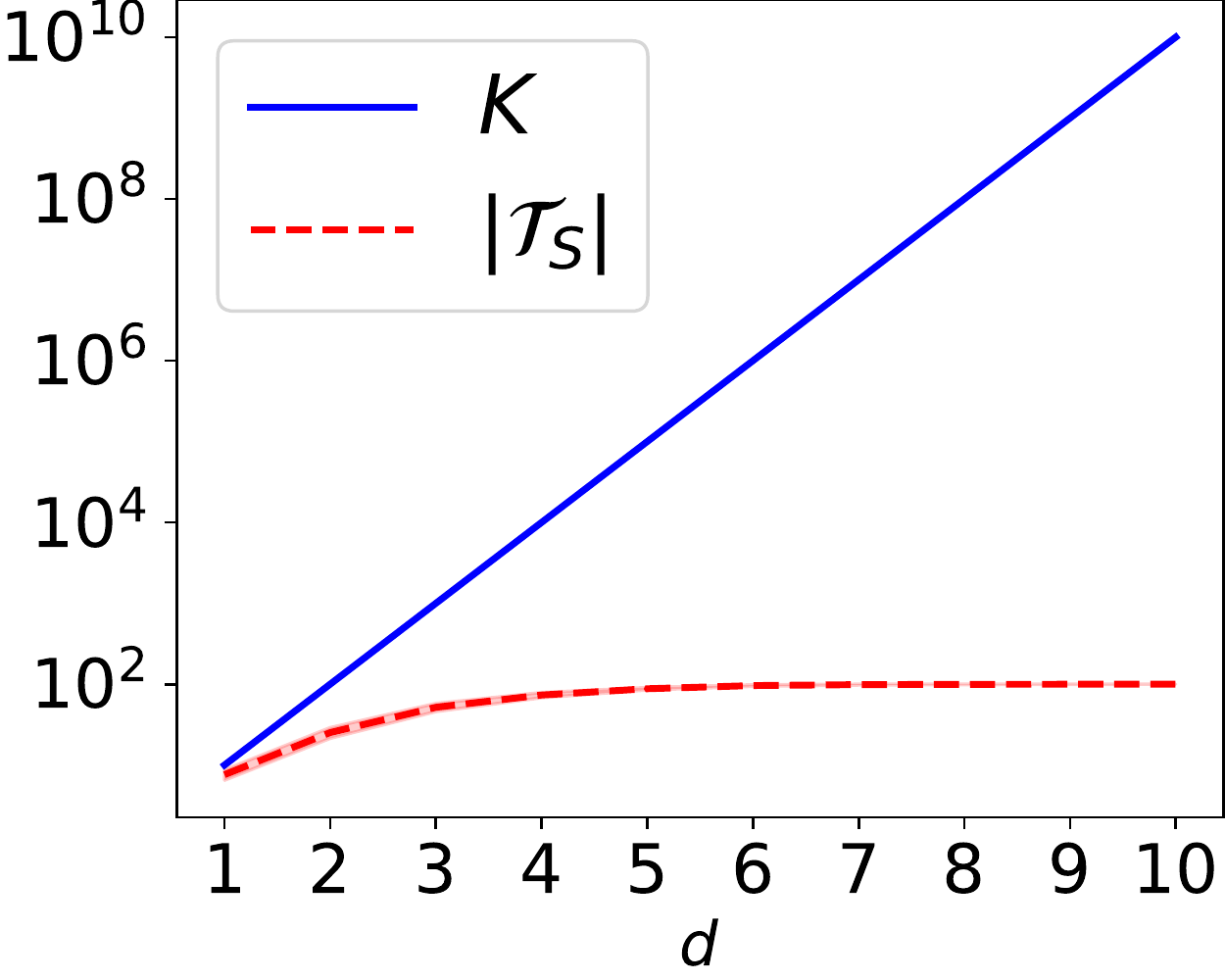}
  \vspace{-15pt}
  \caption{Beta mix (0.1, 0.1)-(0.1)} 
  \vspace{10pt}
\end{subfigure}
\begin{subfigure}[b]{0.24\textwidth}
  \includegraphics[width=\textwidth, height=0.8\textwidth]{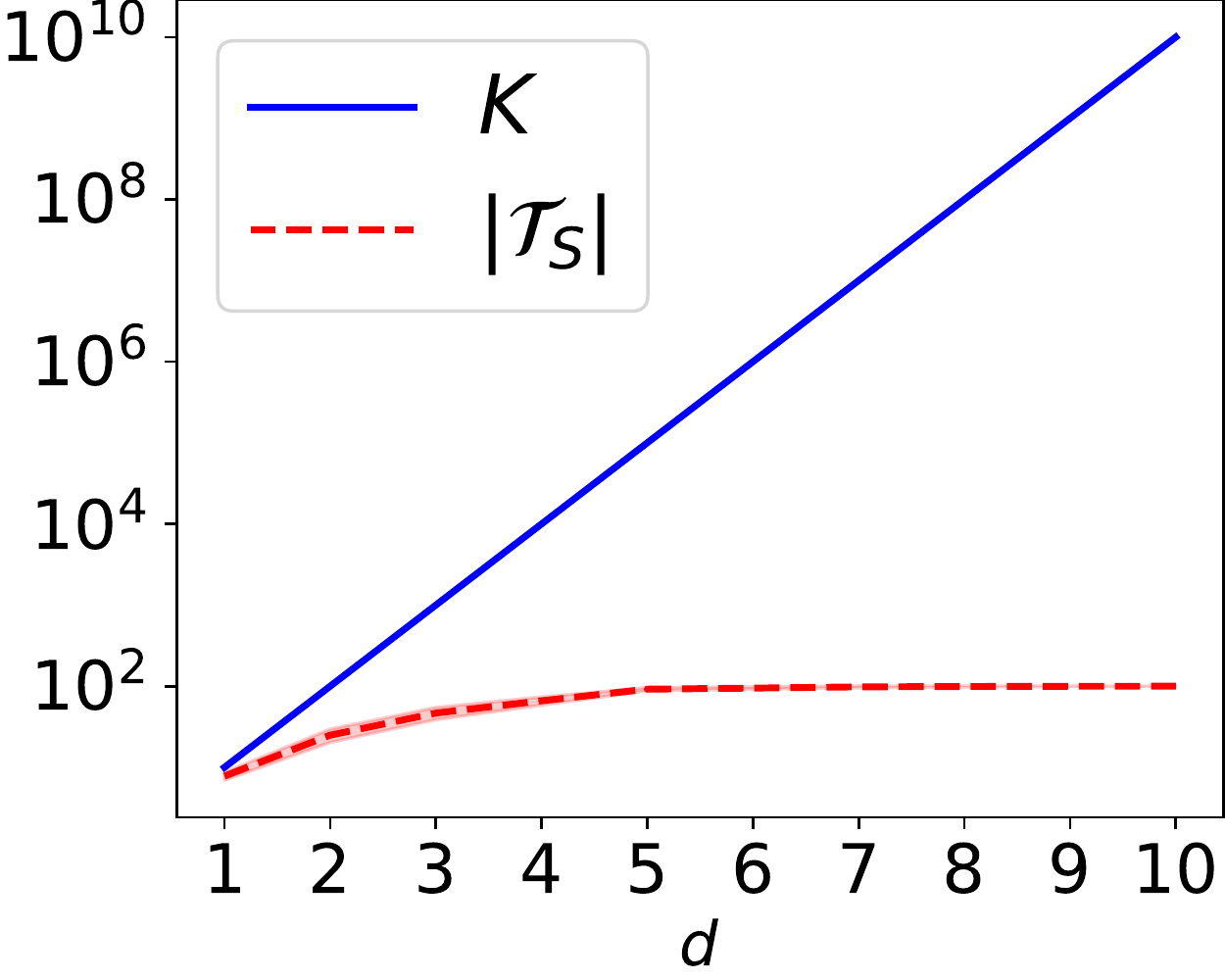}
  \vspace{-15pt}
  \caption{Beta mix (0.1, 0.1)-(10)} 
  \vspace{10pt}
\end{subfigure}
\begin{subfigure}[b]{0.24\textwidth}
  \includegraphics[width=\textwidth, height=0.8\textwidth]{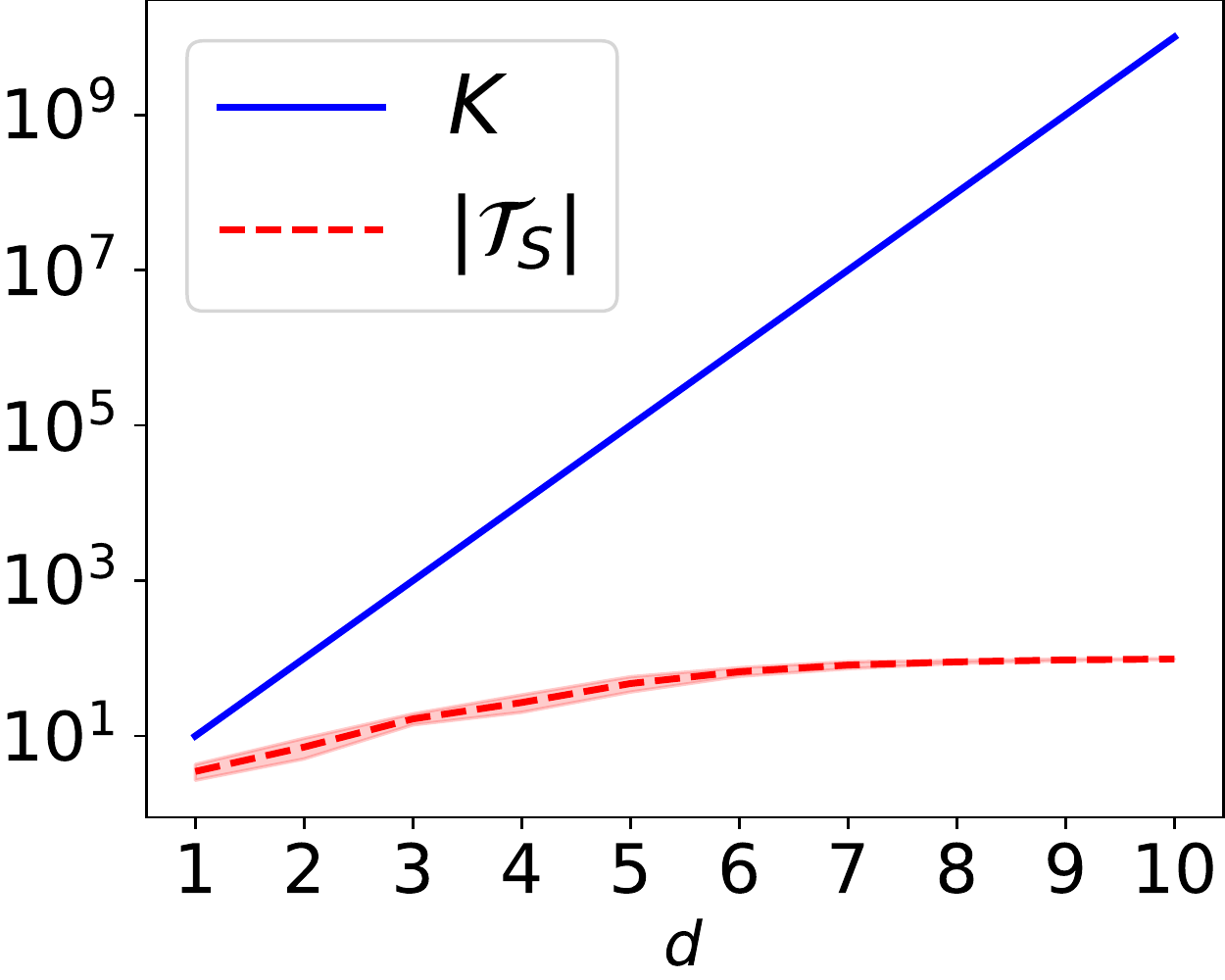}
  \vspace{-15pt}
  \caption{Beta mix (0.1, 10)-(0.01)} 
  \vspace{10pt}
\end{subfigure}
\begin{subfigure}[b]{0.24\textwidth}
  \includegraphics[width=\textwidth, height=0.8\textwidth]{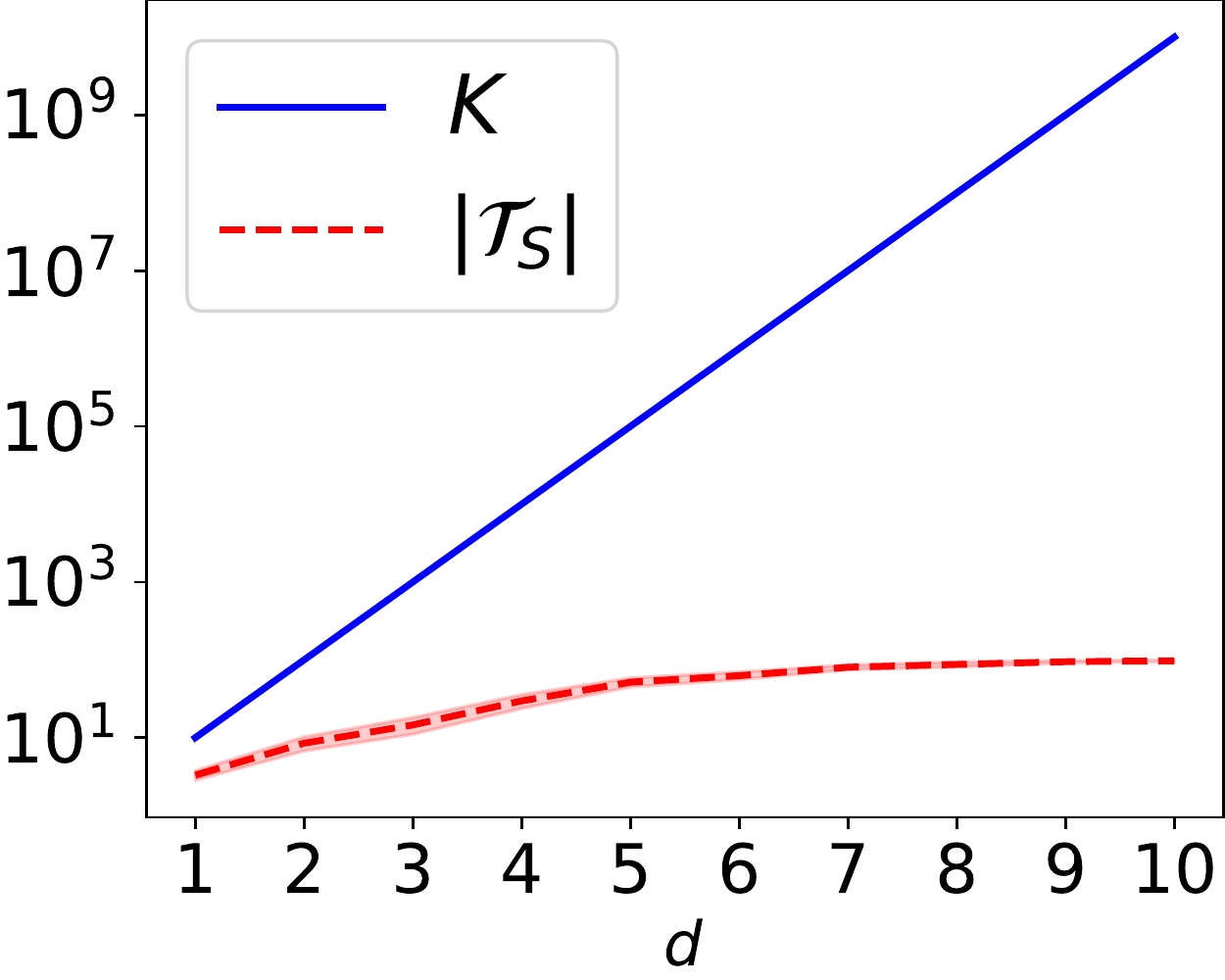}
  \vspace{-15pt}
  \caption{Beta mix (0.1, 10)-(0.1)} 
  \vspace{10pt}
\end{subfigure}
\begin{subfigure}[b]{0.24\textwidth}
  \includegraphics[width=\textwidth, height=0.8\textwidth]{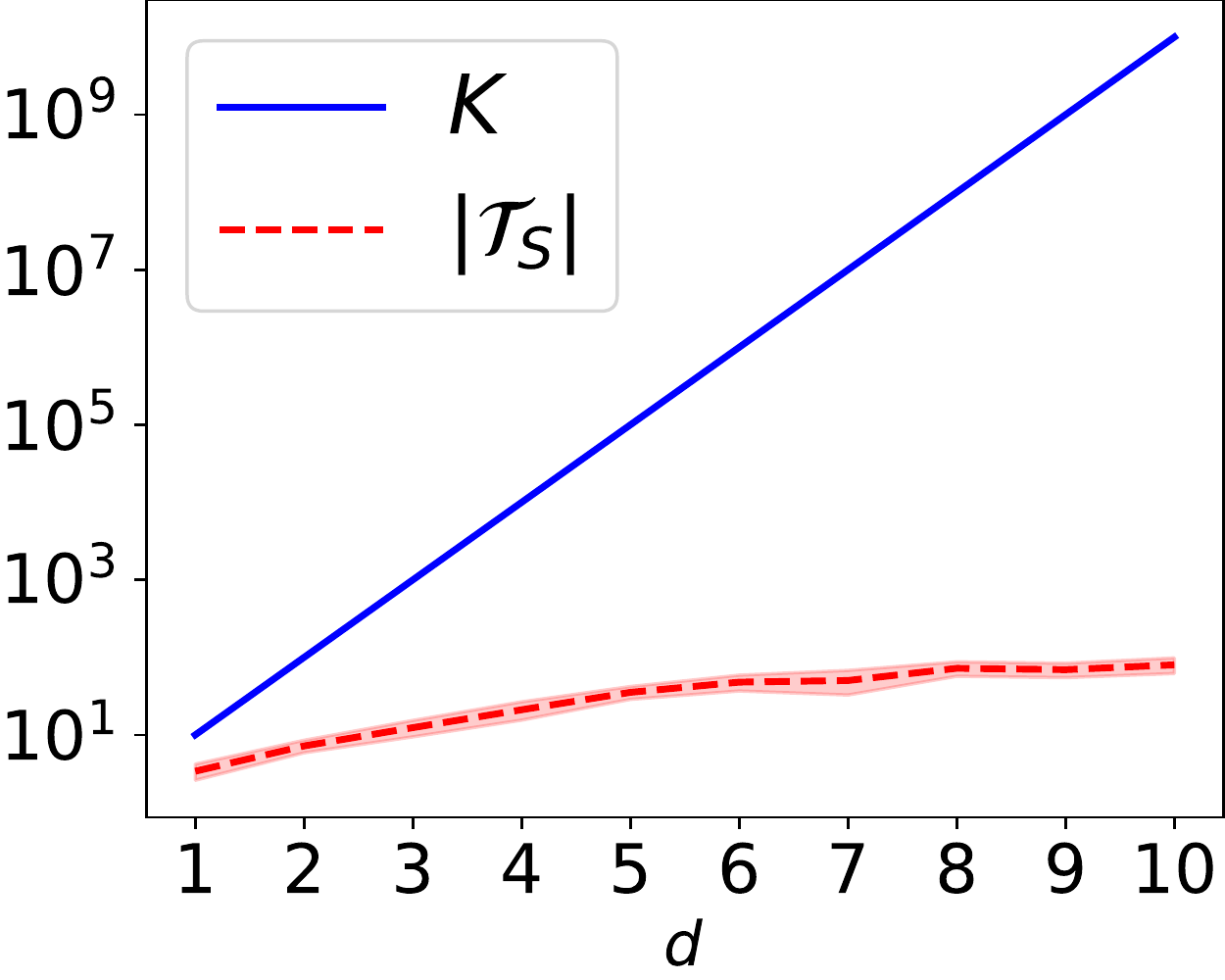}
  \vspace{-15pt}
  \caption{Beta mix (0.1, 10)-(10)} 
  \vspace{10pt}
\end{subfigure}
\begin{subfigure}[b]{0.24\textwidth}
  \includegraphics[width=\textwidth, height=0.8\textwidth]{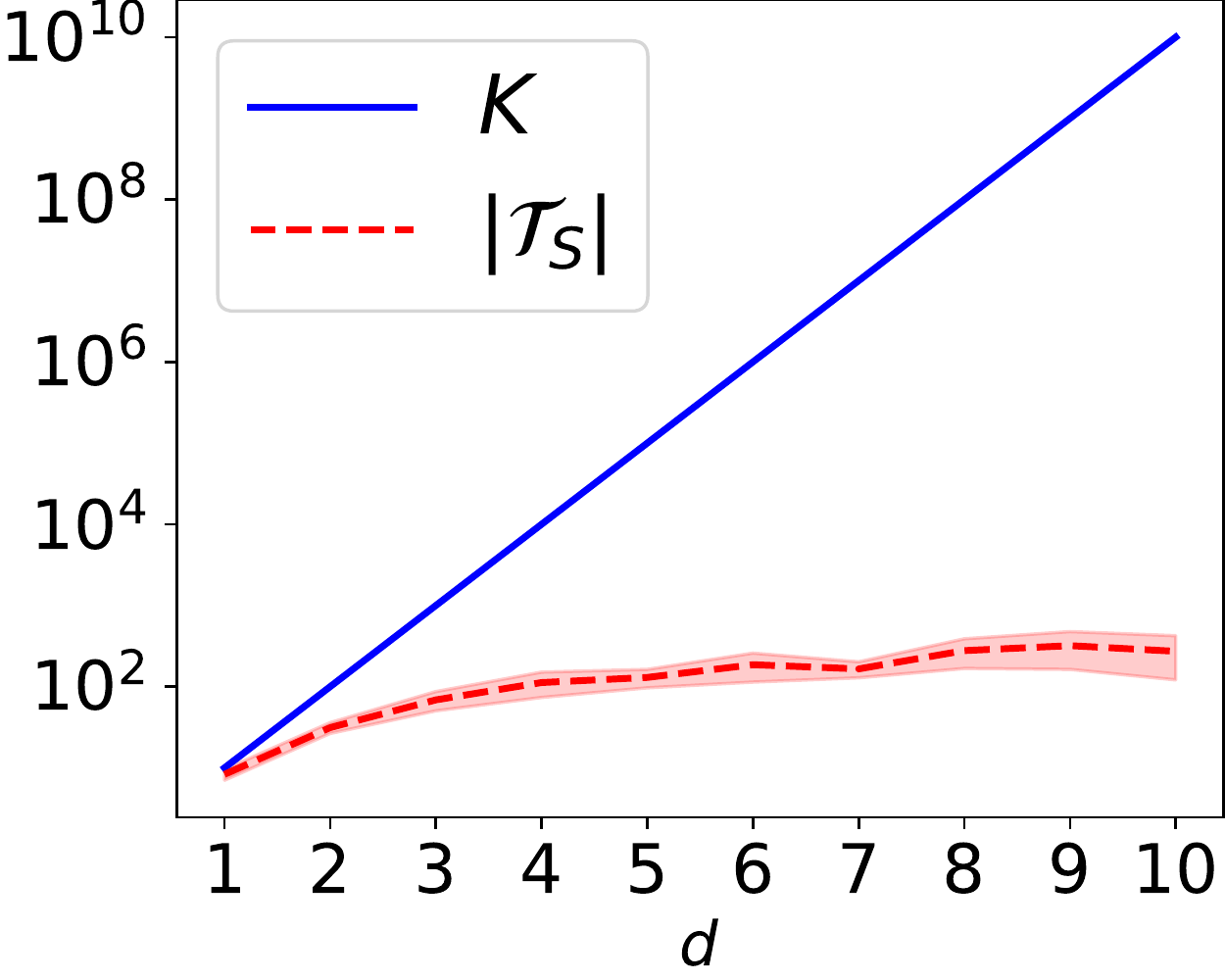}
  \vspace{-15pt}
  \caption{beta-Gauss (0.1,0.1)-(0.01)} 
  \vspace{10pt}
\end{subfigure}
\begin{subfigure}[b]{0.24\textwidth}
  \includegraphics[width=\textwidth, height=0.8\textwidth]{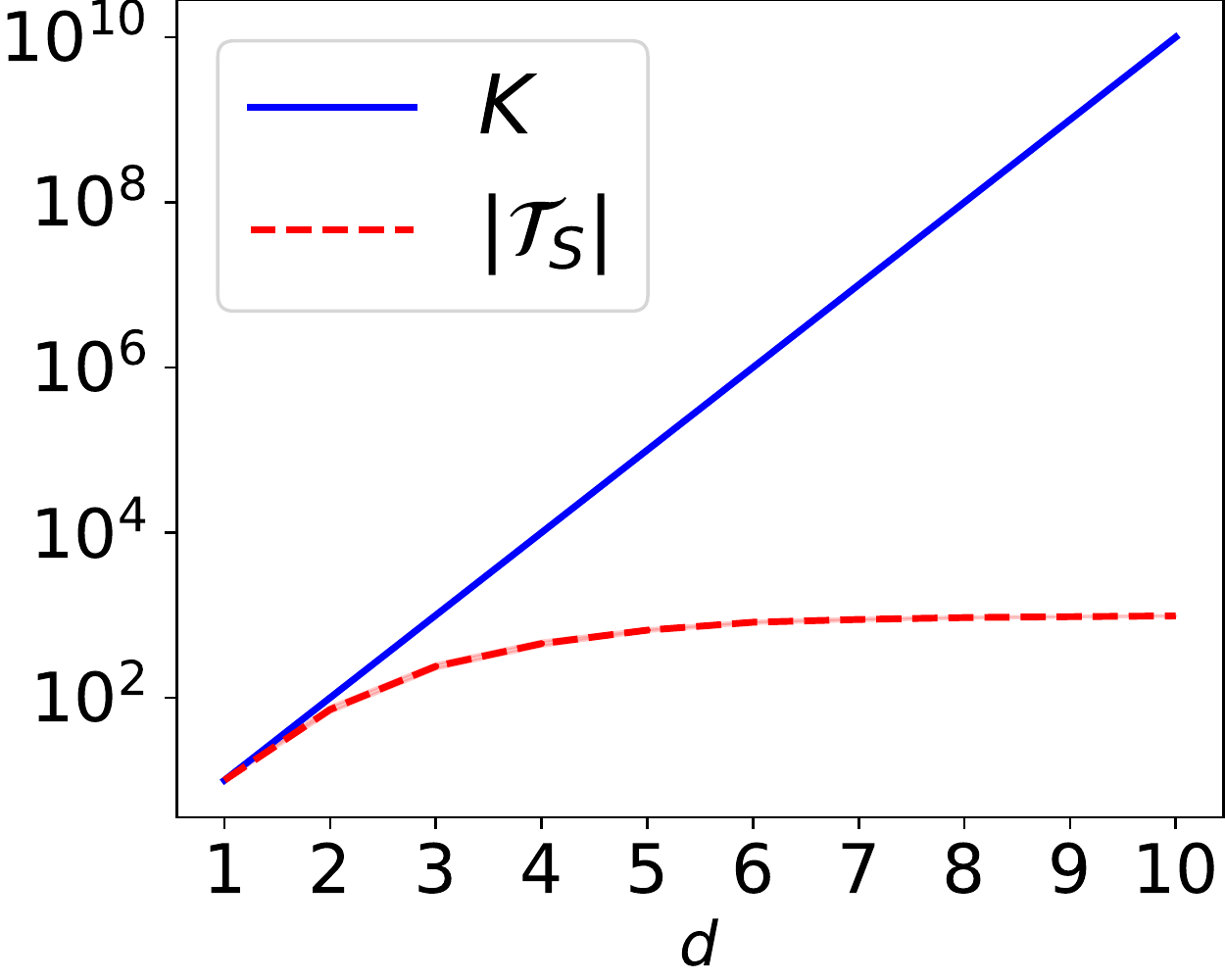}
  \vspace{-15pt}
  \caption{beta-Gauss (0.1,0.1)-(0.1)} 
  \vspace{10pt}
\end{subfigure}
\begin{subfigure}[b]{0.24\textwidth}
  \includegraphics[width=\textwidth, height=0.8\textwidth]{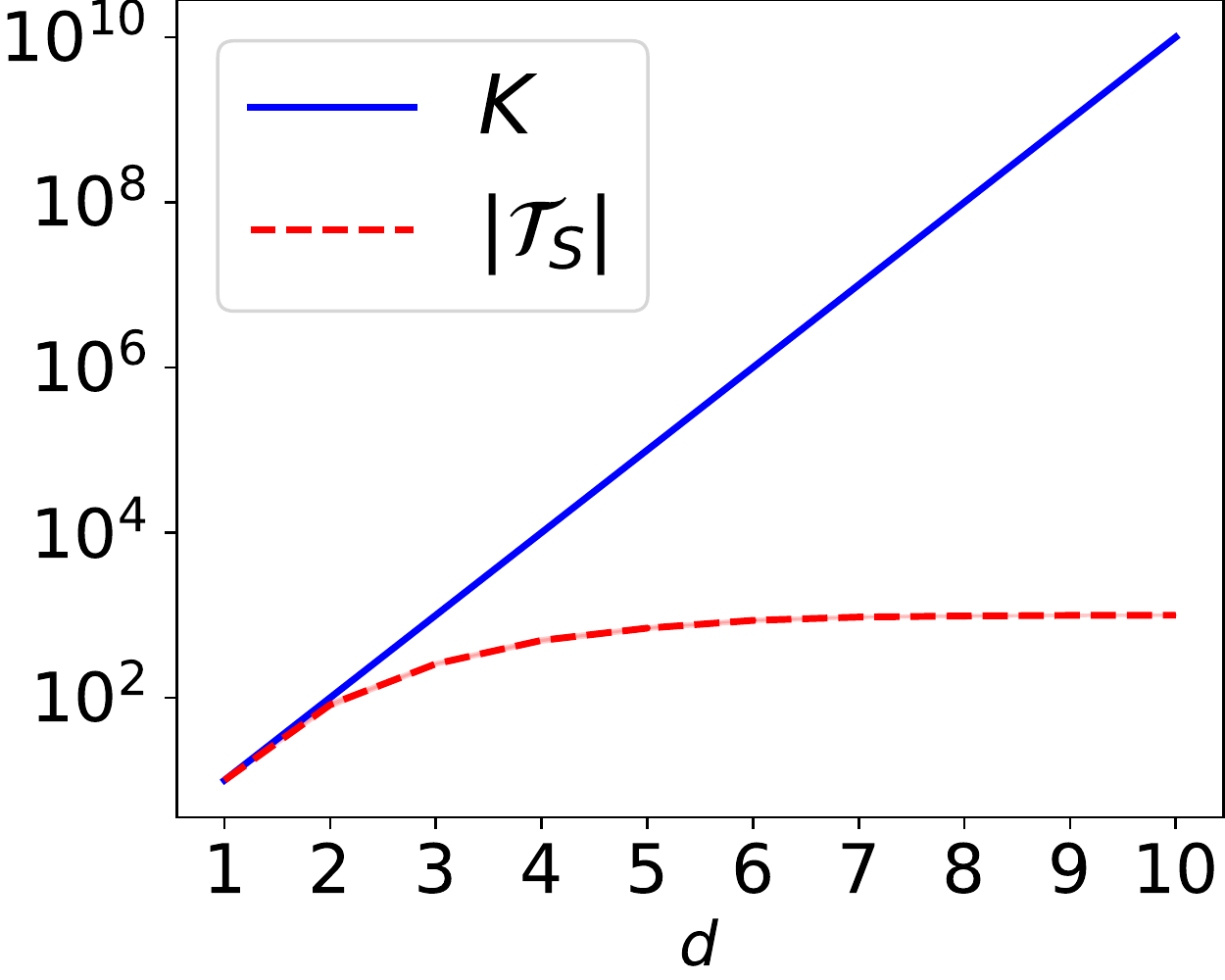}
  \vspace{-15pt}
  \caption{beta-Gauss (0.1,0.1)-(10)} 
  \vspace{10pt}
\end{subfigure}
\begin{subfigure}[b]{0.24\textwidth}
  \includegraphics[width=\textwidth, height=0.8\textwidth]{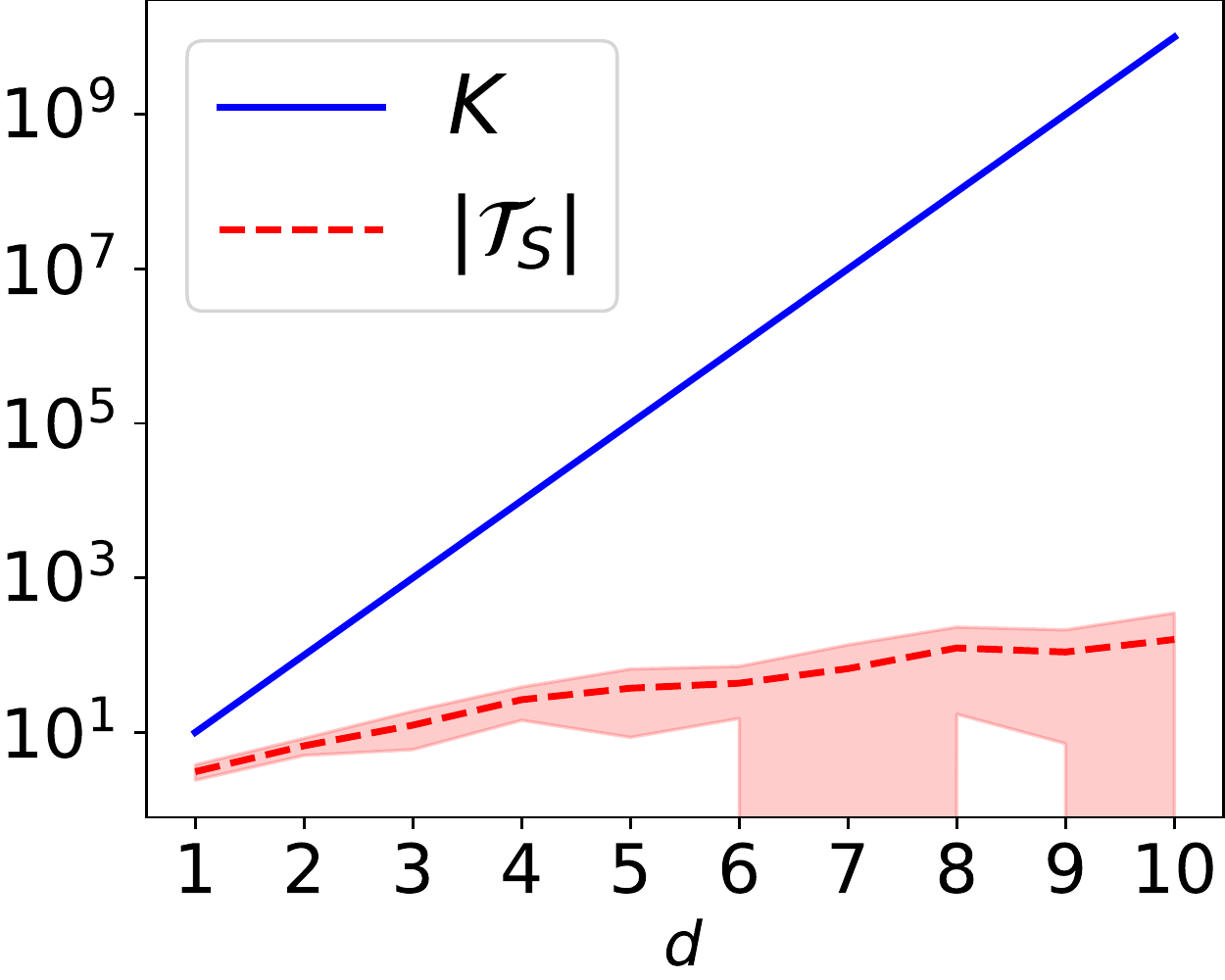}
  \vspace{-15pt}
  \caption{beta-Gauss (0.1,10)-(0.01)} 
  \vspace{10pt}
\end{subfigure}
\begin{subfigure}[b]{0.24\textwidth}
  \includegraphics[width=\textwidth, height=0.8\textwidth]{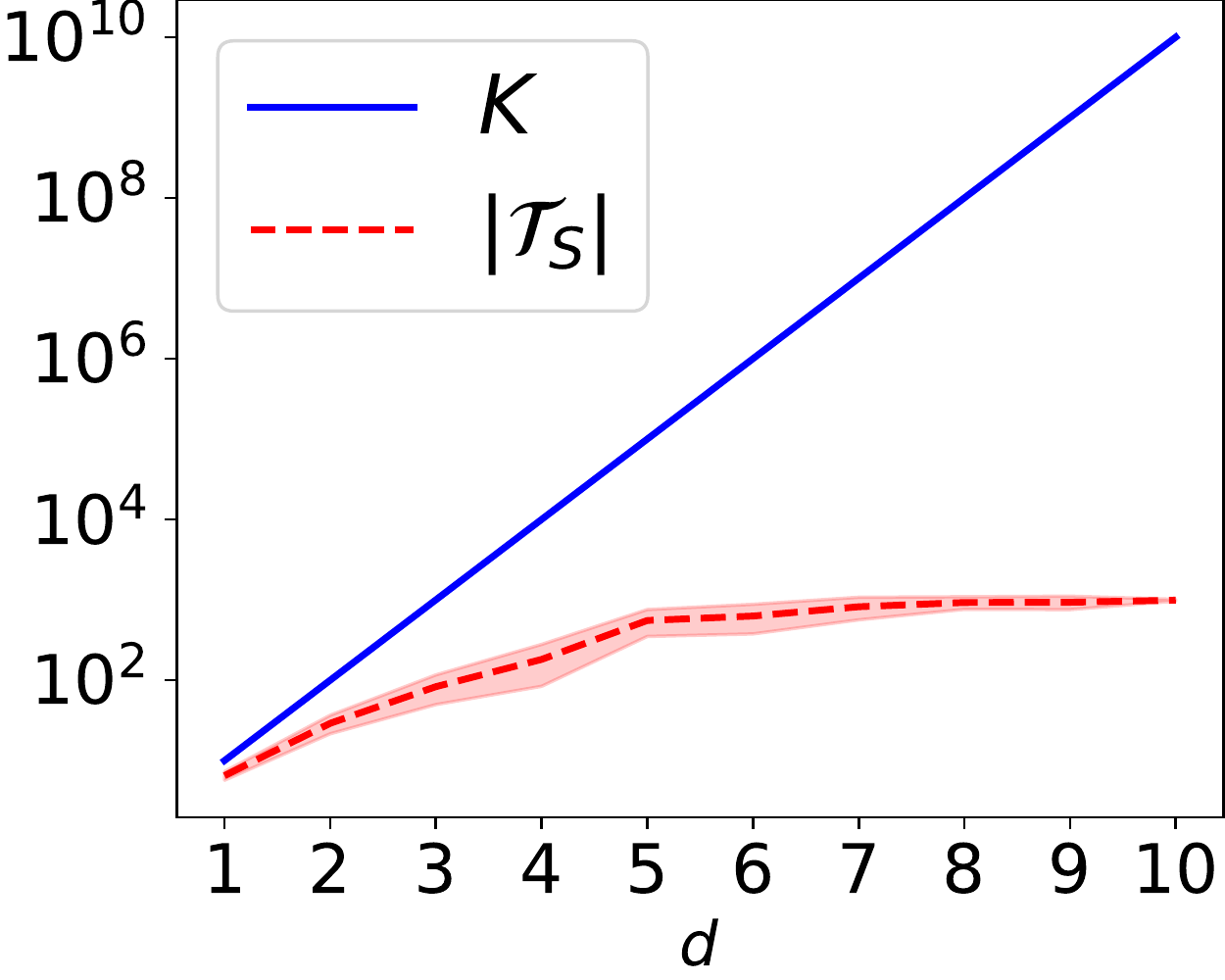}
  \vspace{-15pt}
  \caption{beta-Gauss (0.1,10)-(0.1)} 
  \vspace{10pt}
\end{subfigure}
\begin{subfigure}[b]{0.24\textwidth}
  \includegraphics[width=\textwidth, height=0.8\textwidth]{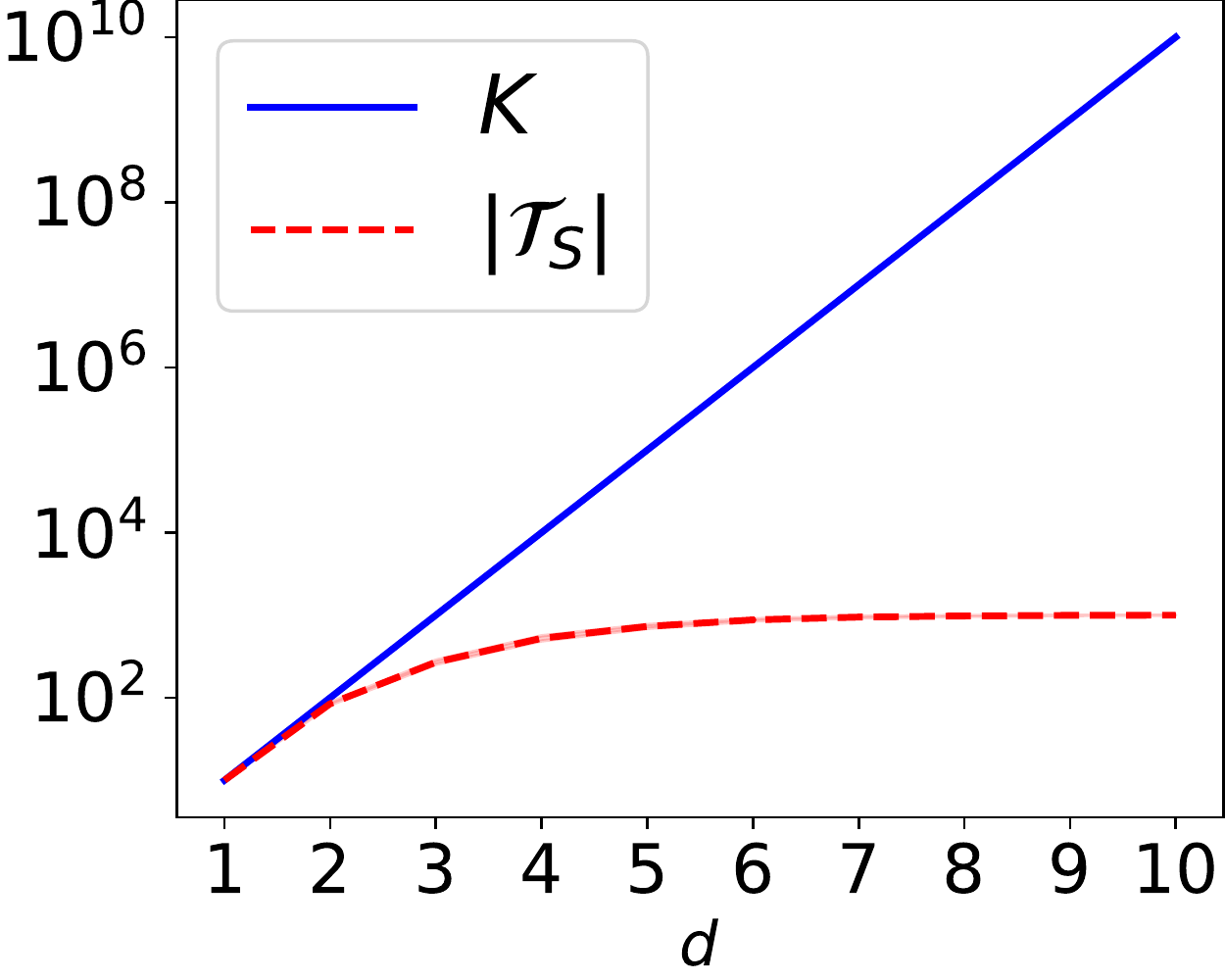}
  \vspace{-15pt}
  \caption{beta-Gauss (0.1,10)-(10)} 
  \vspace{10pt}
\end{subfigure}
\begin{subfigure}[b]{0.24\textwidth}
  \includegraphics[width=\textwidth, height=0.8\textwidth]{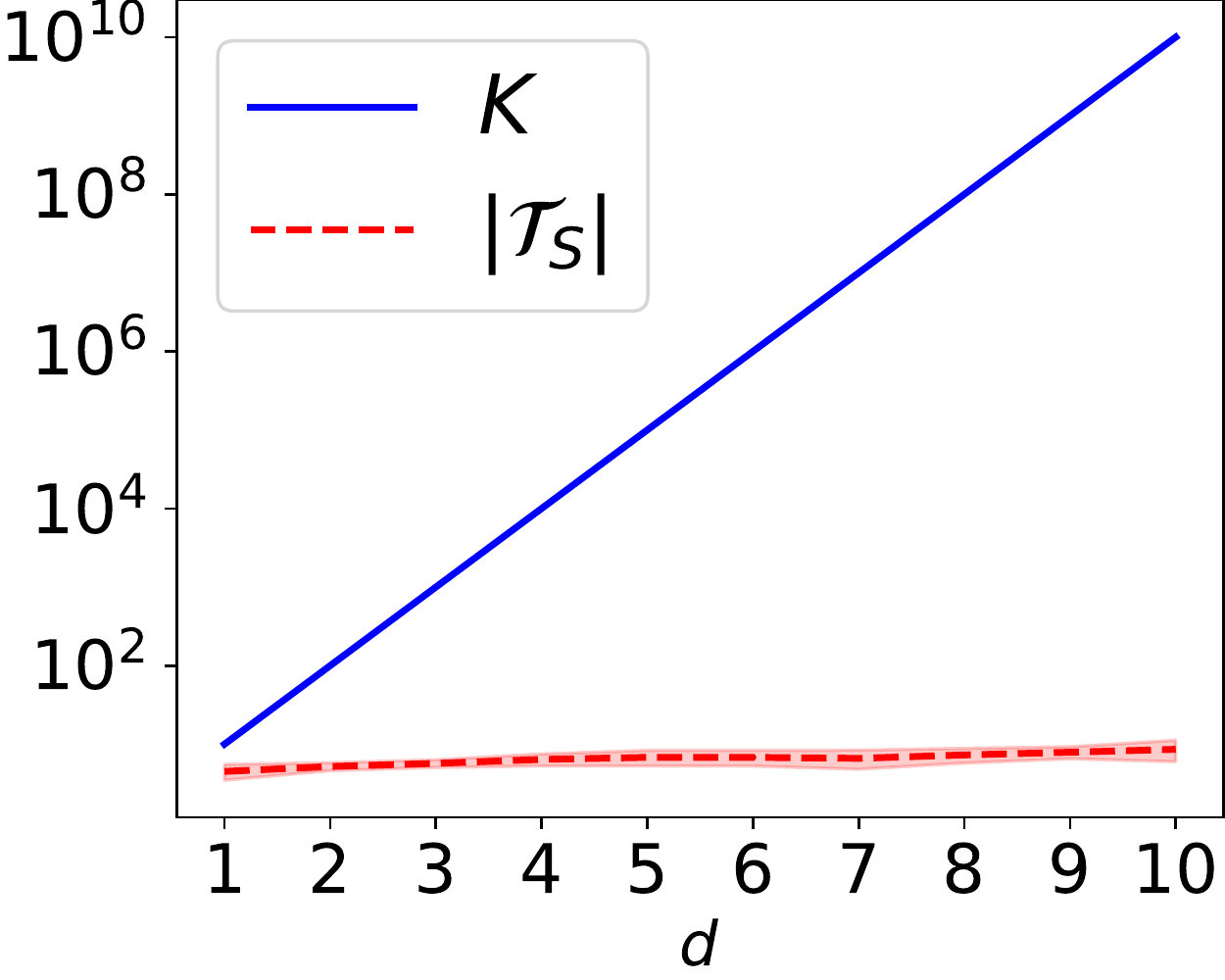}
  \vspace{-15pt}
  \caption{Gauss mix (0.001)} 
  \vspace{10pt}
\end{subfigure}
\begin{subfigure}[b]{0.24\textwidth}
  \includegraphics[width=\textwidth, height=0.8\textwidth]{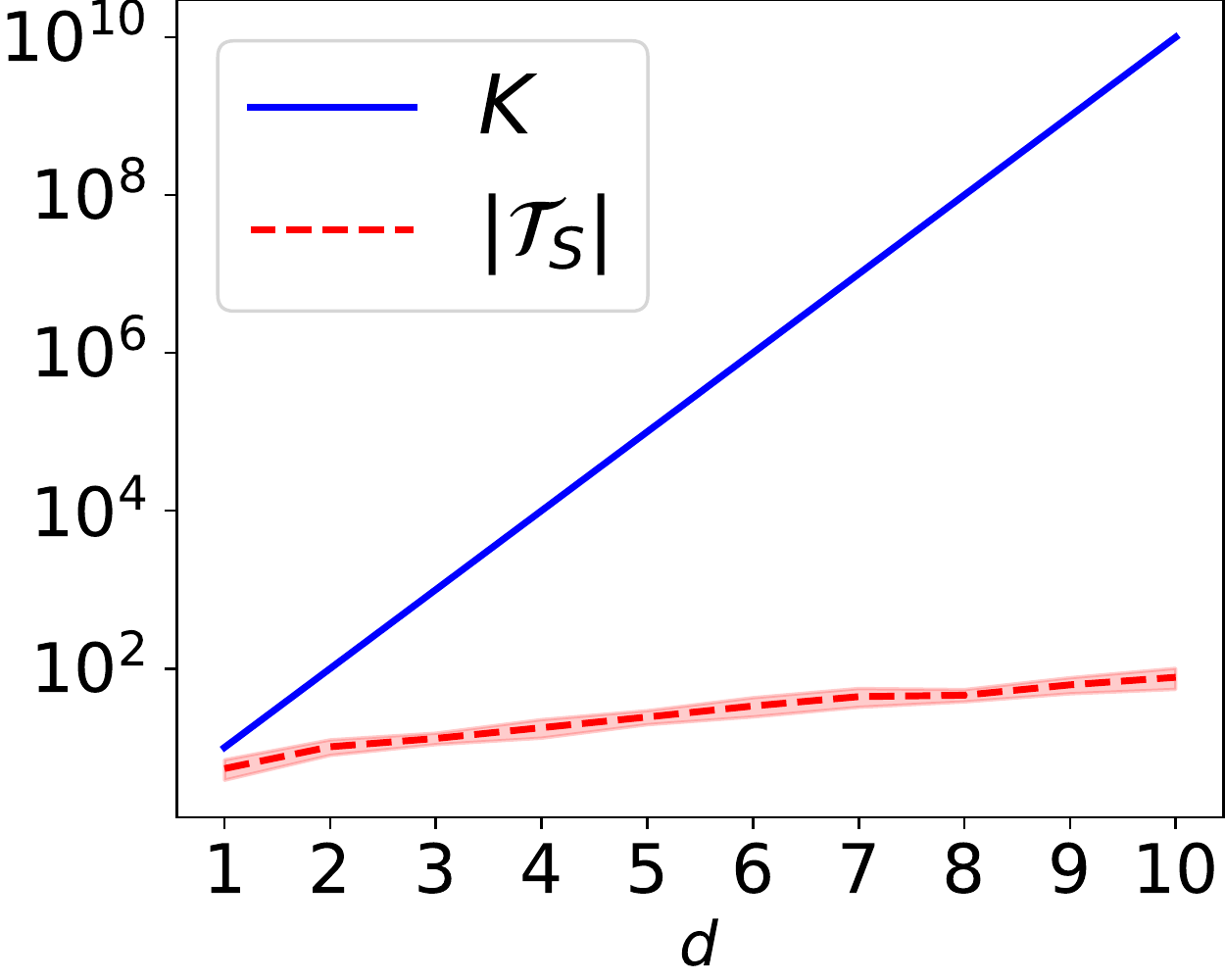}
  \vspace{-15pt}
  \caption{Gauss mix (0.01)} 
  \vspace{10pt}
\end{subfigure}
\begin{subfigure}[b]{0.24\textwidth}
  \includegraphics[width=\textwidth, height=0.8\textwidth]{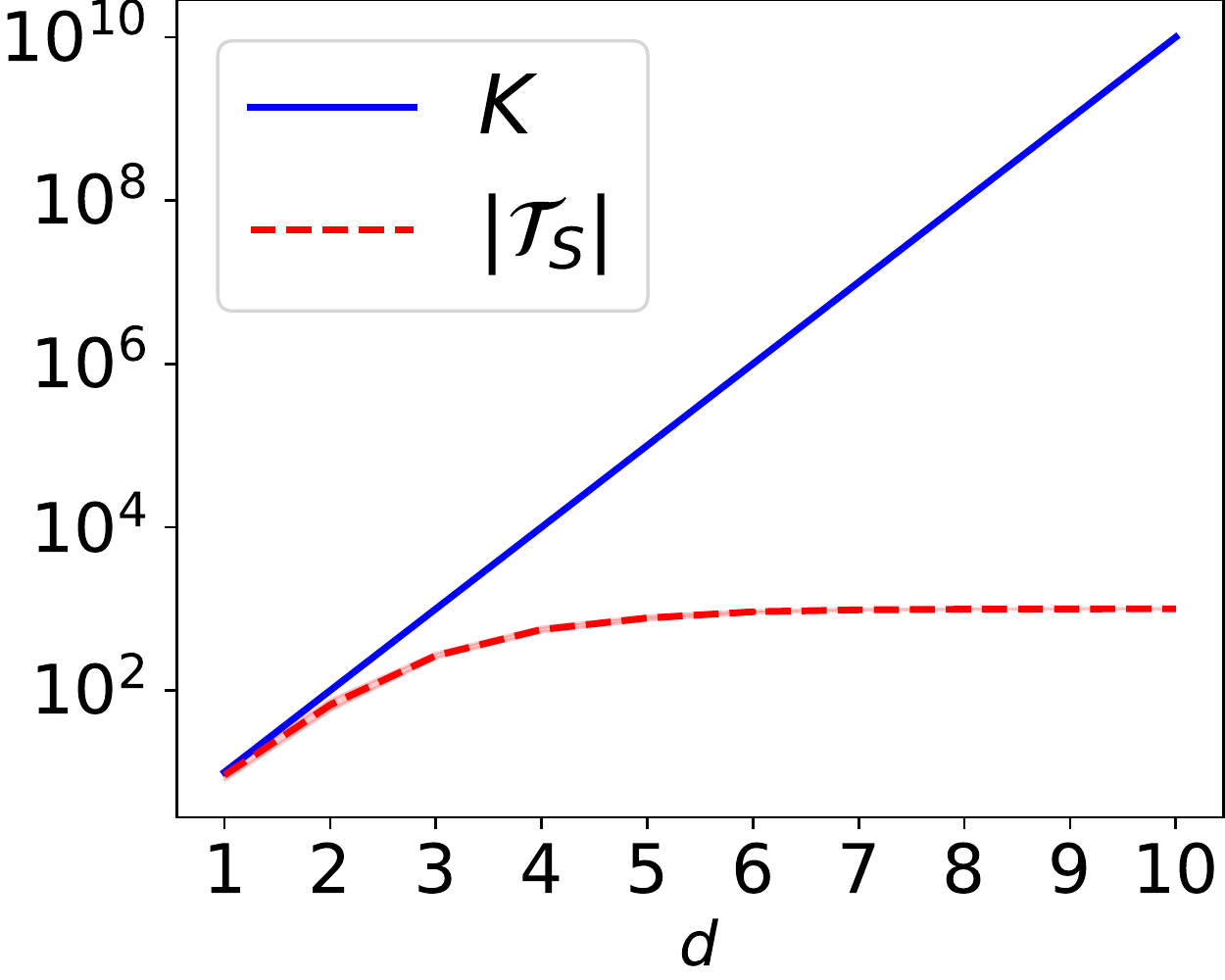}
  \vspace{-15pt}
  \caption{Gauss mix (0.1)} 
  \vspace{10pt}
\end{subfigure}
\begin{subfigure}[b]{0.24\textwidth}
  \includegraphics[width=\textwidth, height=0.8\textwidth]{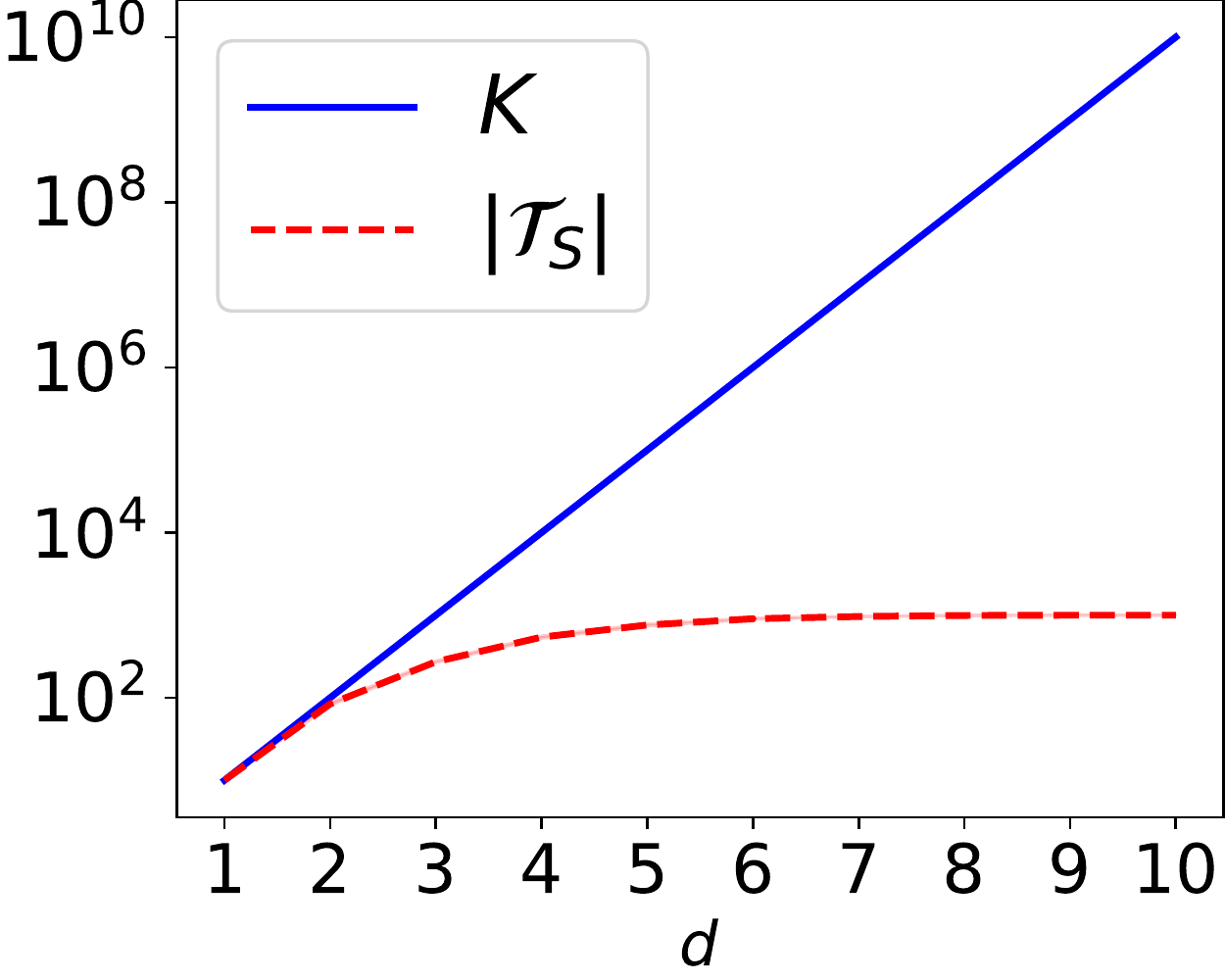}
  \vspace{-15pt}
  \caption{Gauss mix (1.0)} 
  \vspace{10pt}
\end{subfigure}
\begin{subfigure}[b]{0.24\textwidth}
  \includegraphics[width=\textwidth, height=0.8\textwidth]{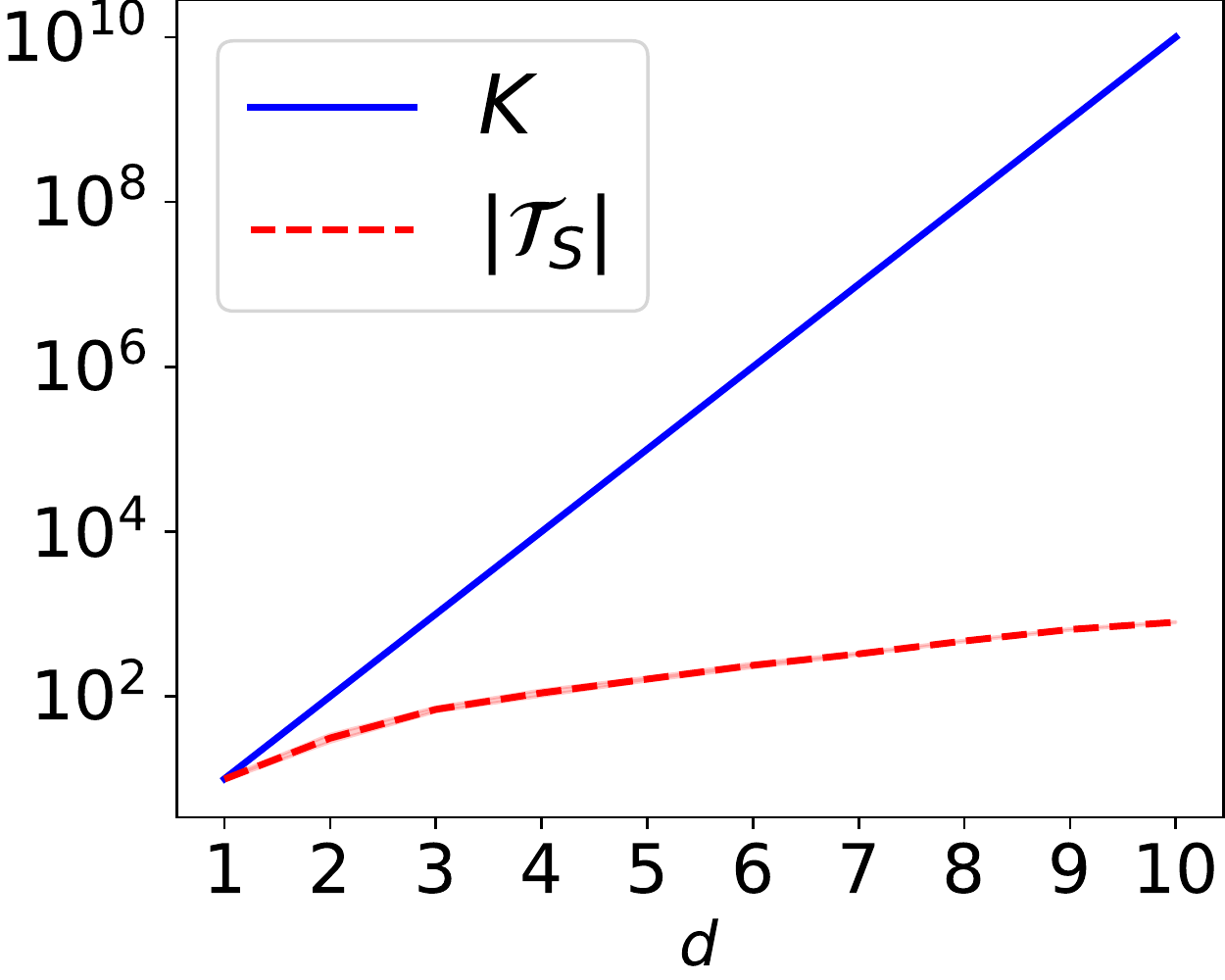}
  \vspace{-15pt}
  \caption{Gauss mix (10.0)} 
  \vspace{10pt}
\end{subfigure}
\caption{The values of $K$ versus $|\Tcal_S|$ with the $\epsilon$-covering of the original space. These figures show the mean of 10 random trials and one standard deviation.  They are plotted on a \emph{logarithmic} scale to show the discrepancy in the rates of growth.}  \label{fig:app:2}
\end{figure}

\begin{figure}[!t]
\center
\begin{subfigure}[b]{0.24\textwidth}
  \includegraphics[width=\textwidth, height=0.8\textwidth]{fig/results_2/log/projection_beta_1000_001_01_01_01_0.pdf}
  \vspace{-15pt}
  \caption{Beta(0.1, 0.1)} 
  \vspace{10pt}
\end{subfigure}
\begin{subfigure}[b]{0.24\textwidth}
  \includegraphics[width=\textwidth, height=0.8\textwidth]{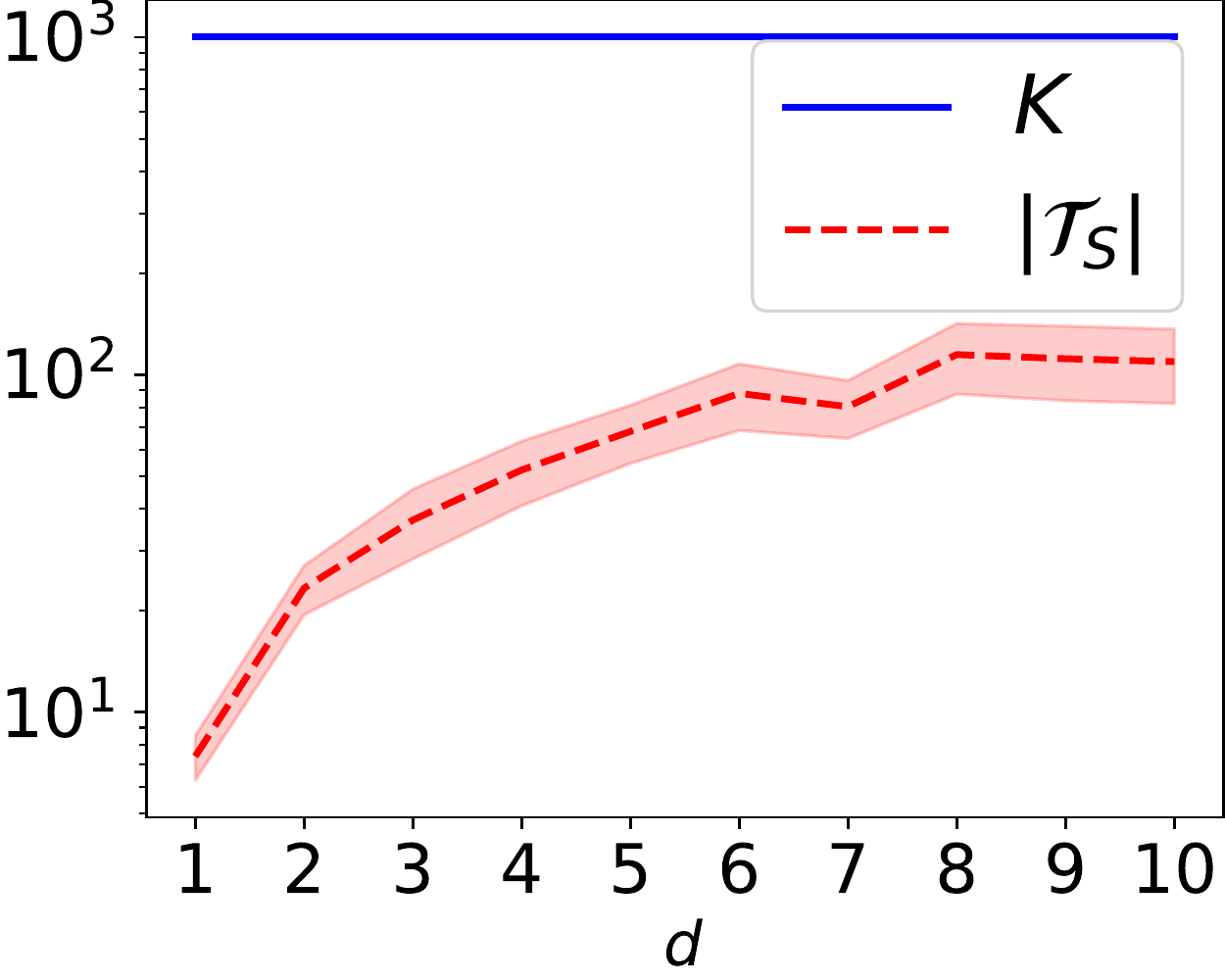}
  \vspace{-15pt}
  \caption{Beta(0.01, 0.01)} 
  \vspace{10pt}
\end{subfigure}
\begin{subfigure}[b]{0.24\textwidth}
  \includegraphics[width=\textwidth, height=0.8\textwidth]{fig/results_2/log/projection_beta_1000_001_01_100_01_0.pdf}
  \vspace{-15pt}
  \caption{Beta(0.1, 10)} 
  \vspace{10pt}
\end{subfigure}
\begin{subfigure}[b]{0.24\textwidth}
  \includegraphics[width=\textwidth, height=0.8\textwidth]{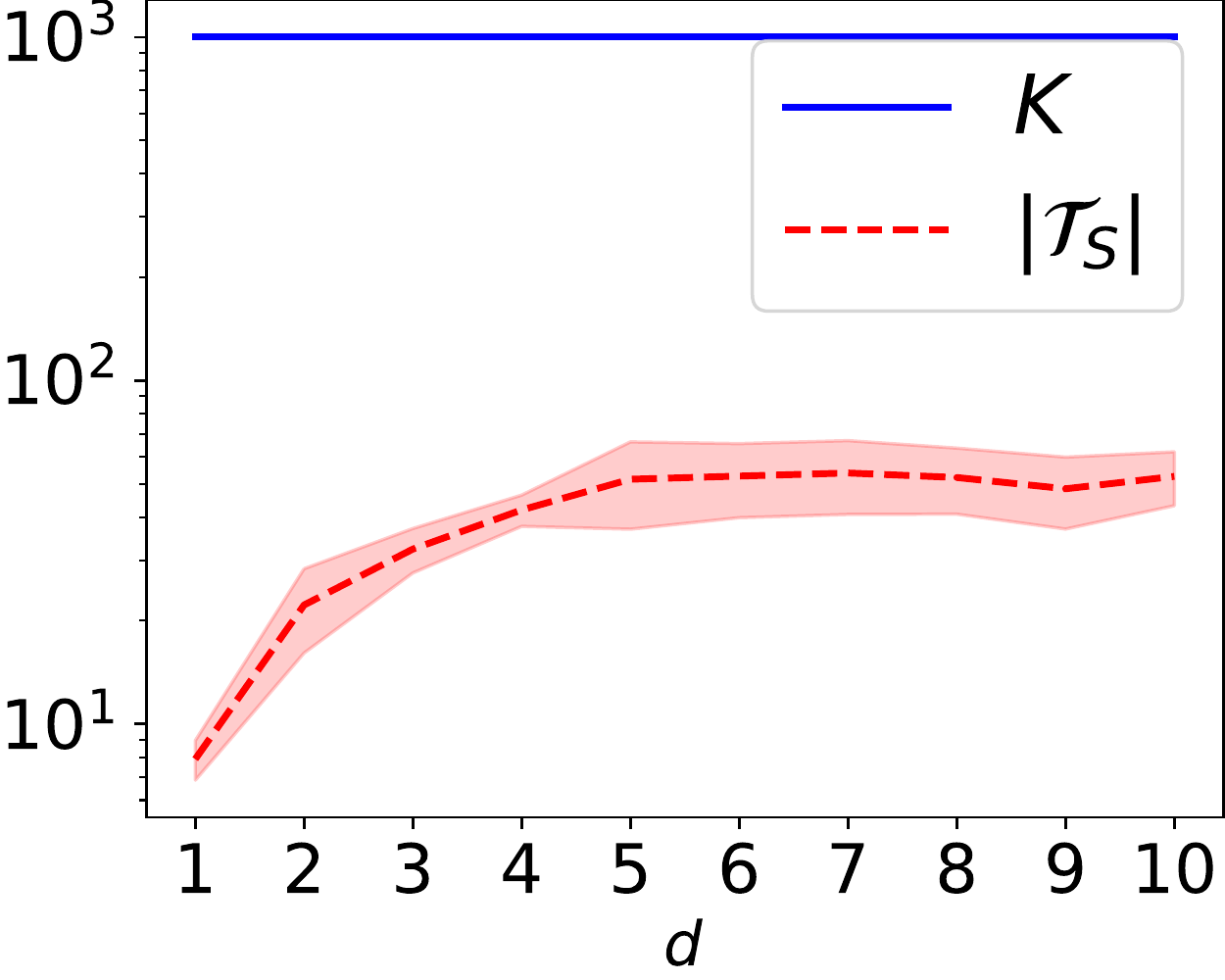}
  \vspace{-15pt}
  \caption{Beta mix (0.1, 0.1)-(0.01)} 
  \vspace{10pt}
\end{subfigure}
\begin{subfigure}[b]{0.24\textwidth}
  \includegraphics[width=\textwidth, height=0.8\textwidth]{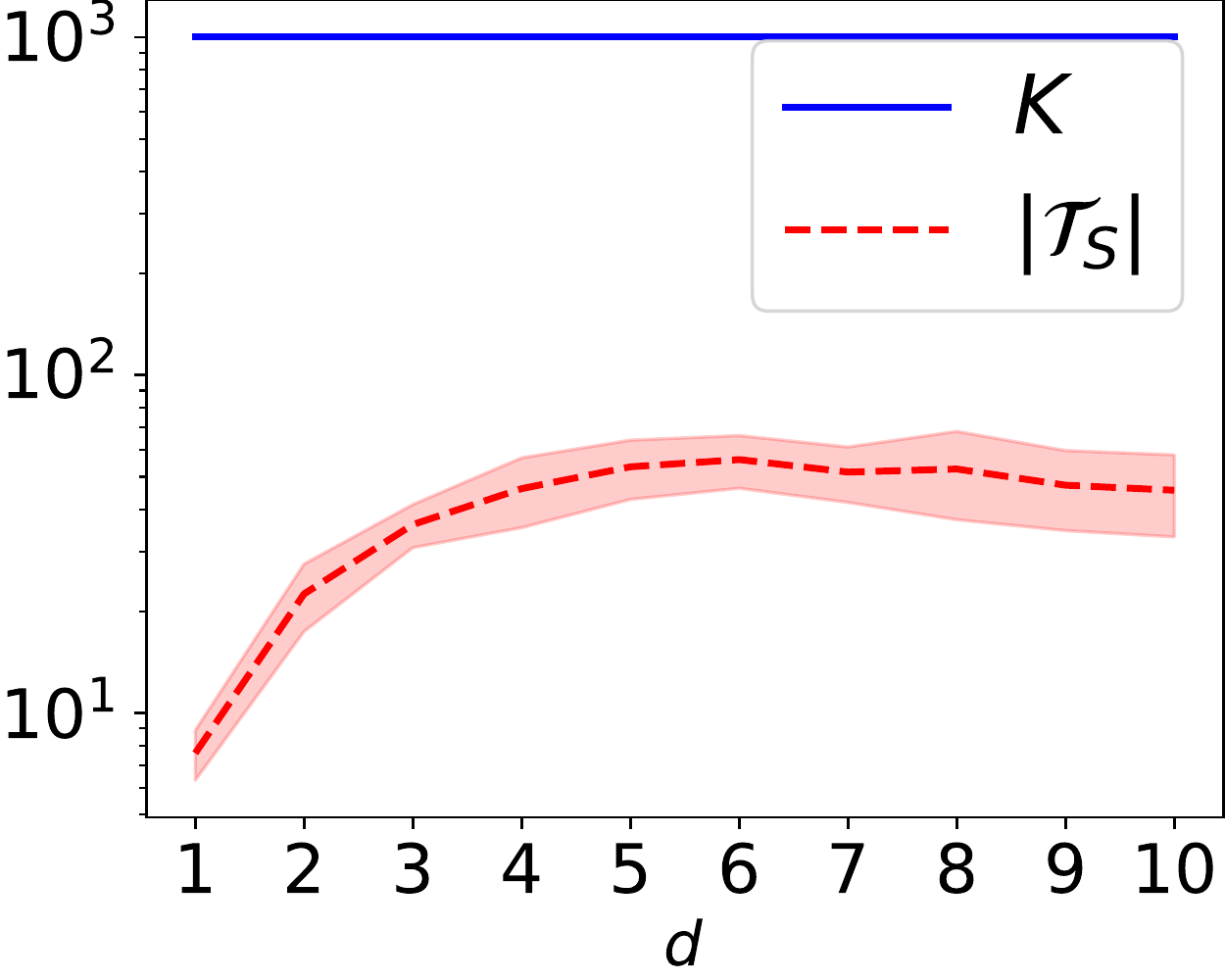}
  \vspace{-15pt}
  \caption{Beta mix (0.1, 0.1)-(0.1)} 
  \vspace{10pt}
\end{subfigure}
\begin{subfigure}[b]{0.24\textwidth}
  \includegraphics[width=\textwidth, height=0.8\textwidth]{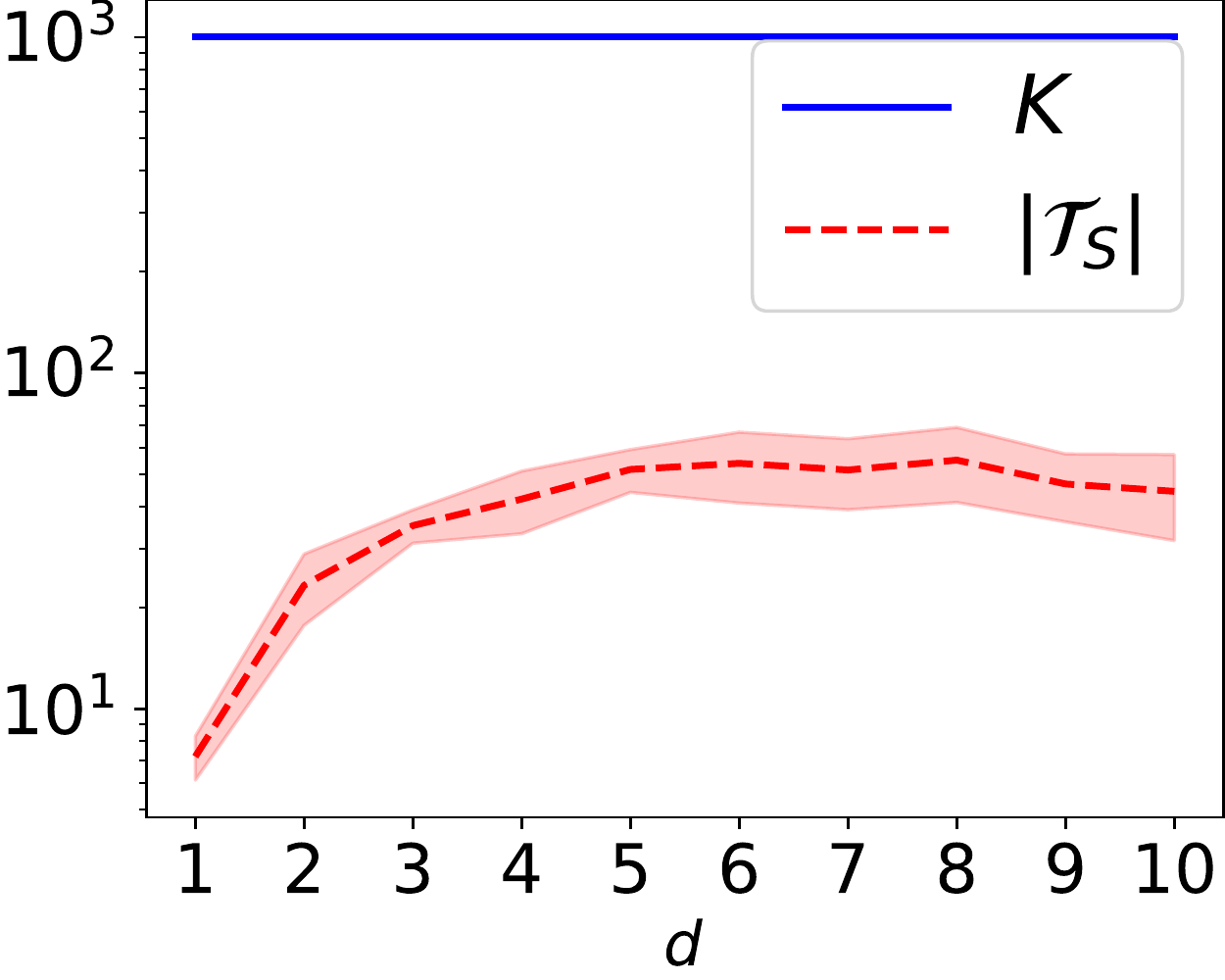}
  \vspace{-15pt}
  \caption{Beta mix (0.1, 0.1)-(10)} 
  \vspace{10pt}
\end{subfigure}
\begin{subfigure}[b]{0.24\textwidth}
  \includegraphics[width=\textwidth, height=0.8\textwidth]{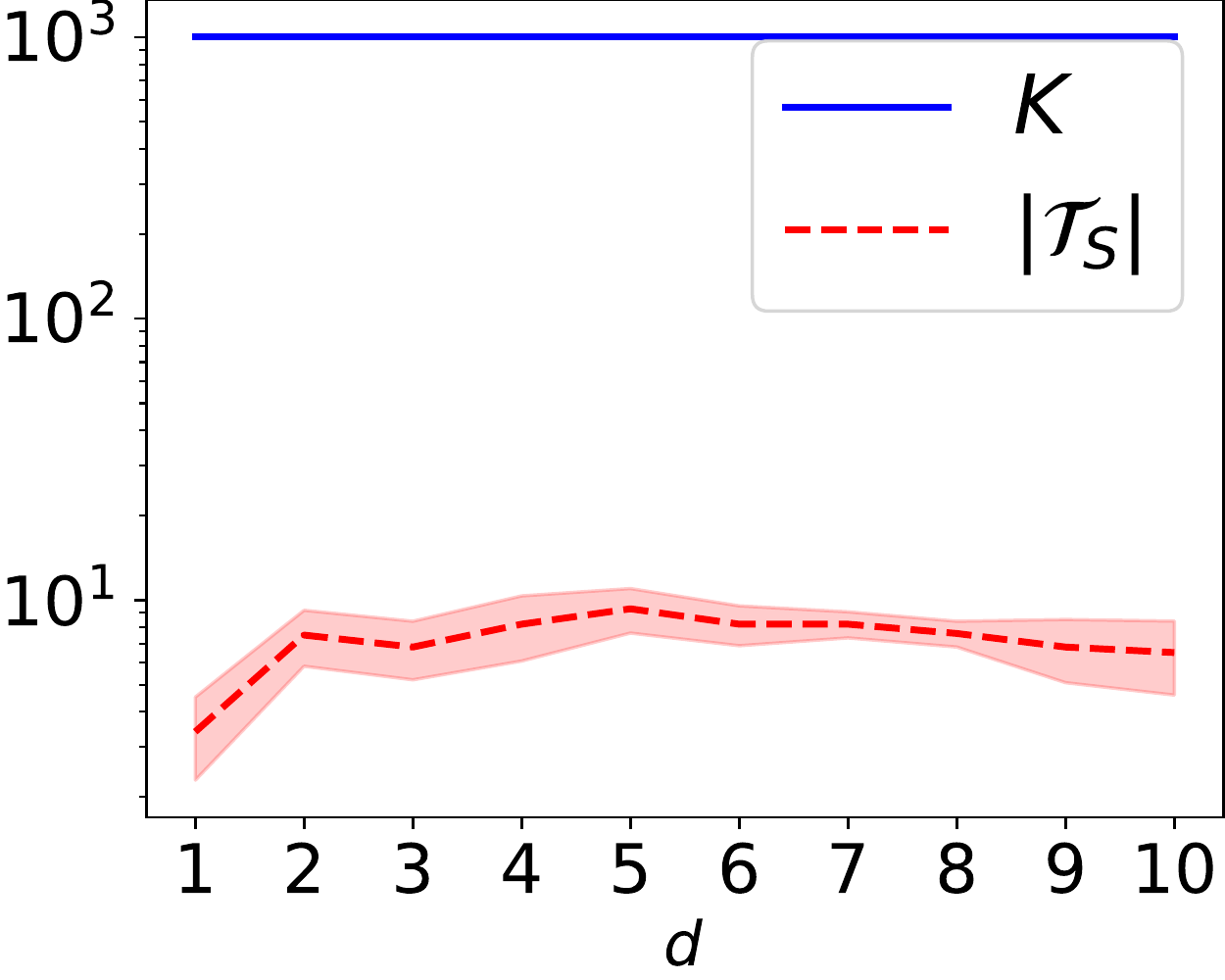}
  \vspace{-15pt}
  \caption{Beta mix (0.1, 10)-(0.01)} 
  \vspace{10pt}
\end{subfigure}
\begin{subfigure}[b]{0.24\textwidth}
  \includegraphics[width=\textwidth, height=0.8\textwidth]{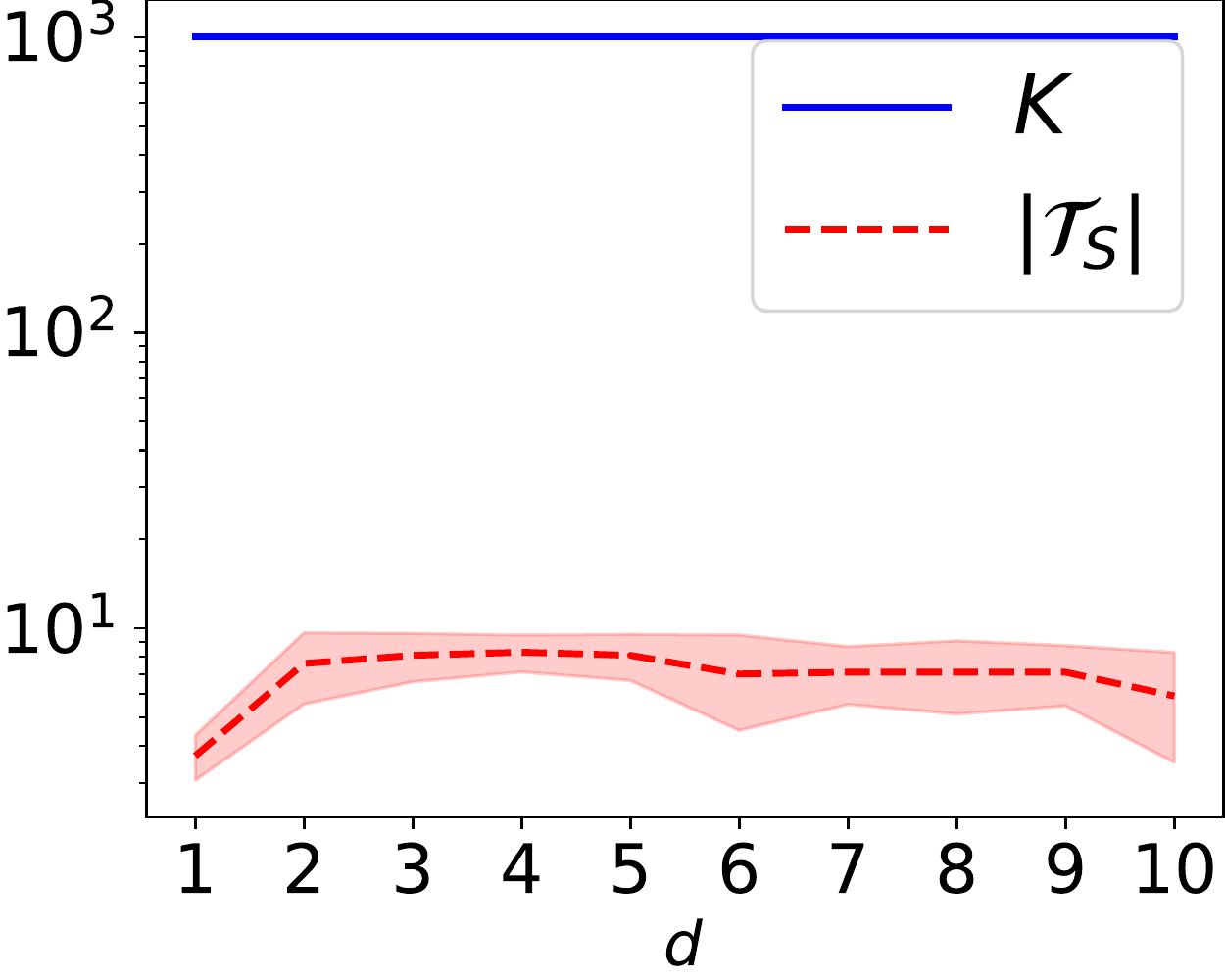}
  \vspace{-15pt}
  \caption{Beta mix (0.1, 10)-(0.1)} 
  \vspace{10pt}
\end{subfigure}
\begin{subfigure}[b]{0.24\textwidth}
  \includegraphics[width=\textwidth, height=0.8\textwidth]{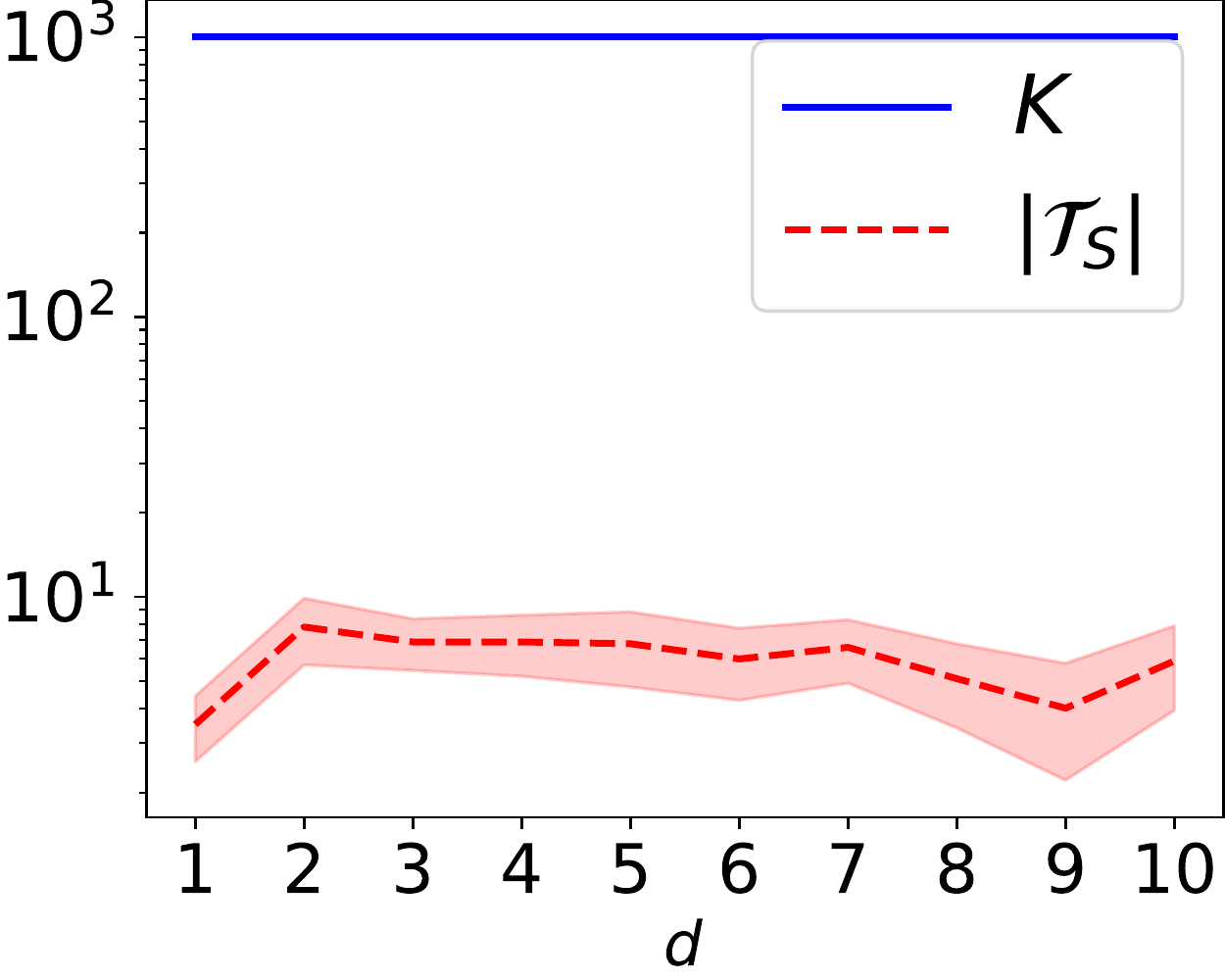}
  \vspace{-15pt}
  \caption{Beta mix (0.1, 10)-(10)} 
  \vspace{10pt}
\end{subfigure}
\begin{subfigure}[b]{0.24\textwidth}
  \includegraphics[width=\textwidth, height=0.8\textwidth]{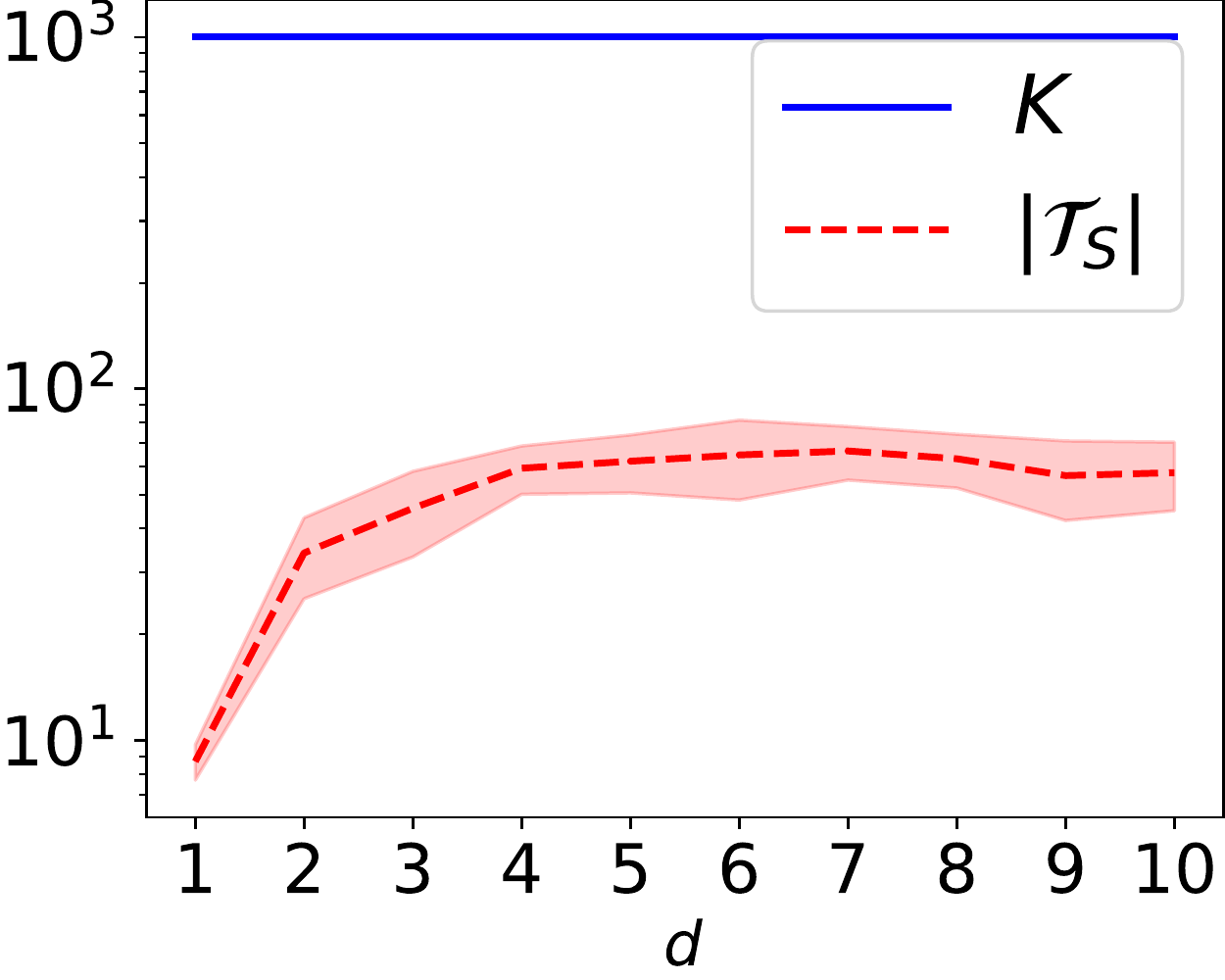}
  \vspace{-15pt}
  \caption{beta-Gauss (0.1,0.1)-(0.01)} 
  \vspace{10pt}
\end{subfigure}
\begin{subfigure}[b]{0.24\textwidth}
  \includegraphics[width=\textwidth, height=0.8\textwidth]{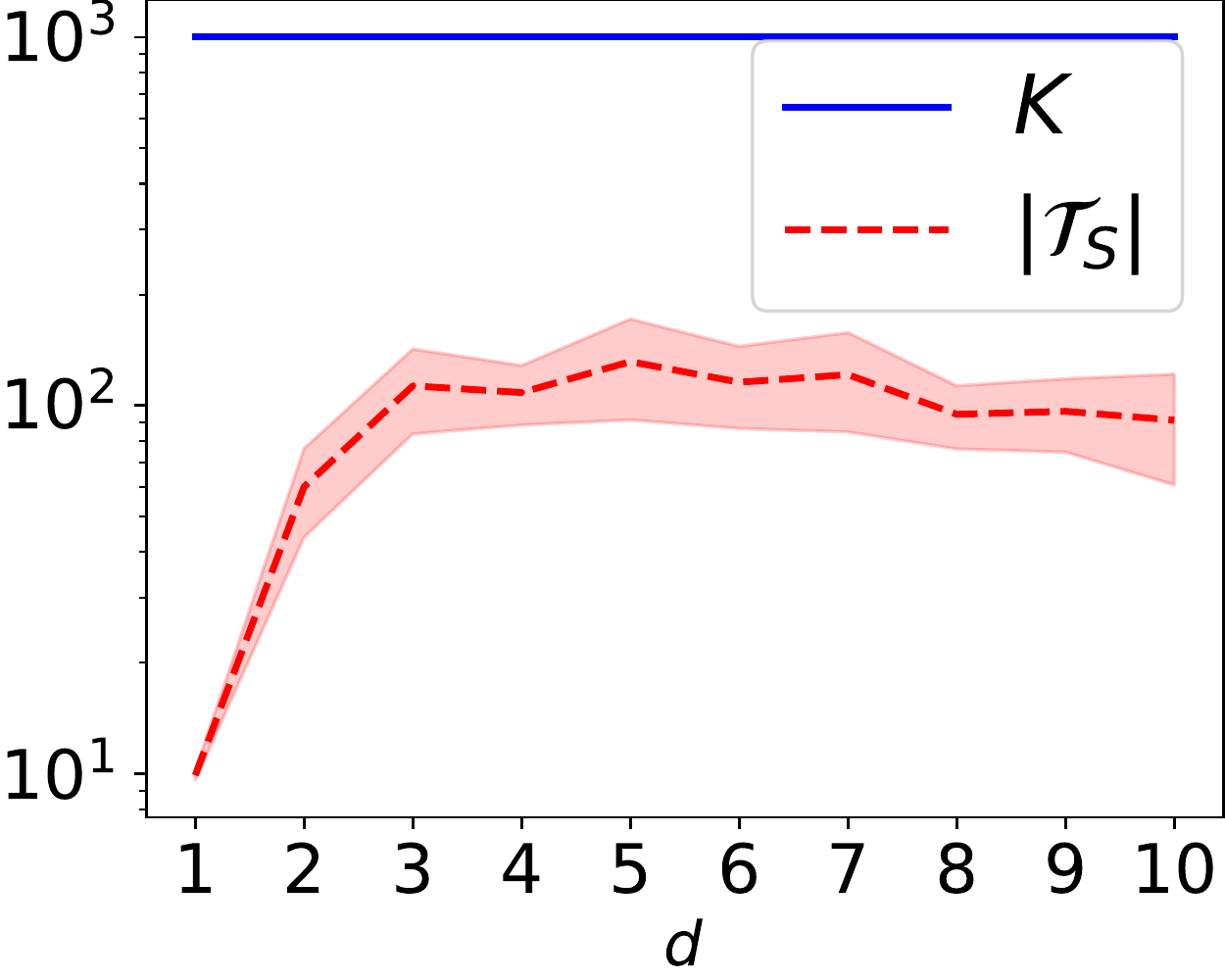}
  \vspace{-15pt}
  \caption{beta-Gauss (0.1,0.1)-(0.1)} 
  \vspace{10pt}
\end{subfigure}
\begin{subfigure}[b]{0.24\textwidth}
  \includegraphics[width=\textwidth, height=0.8\textwidth]{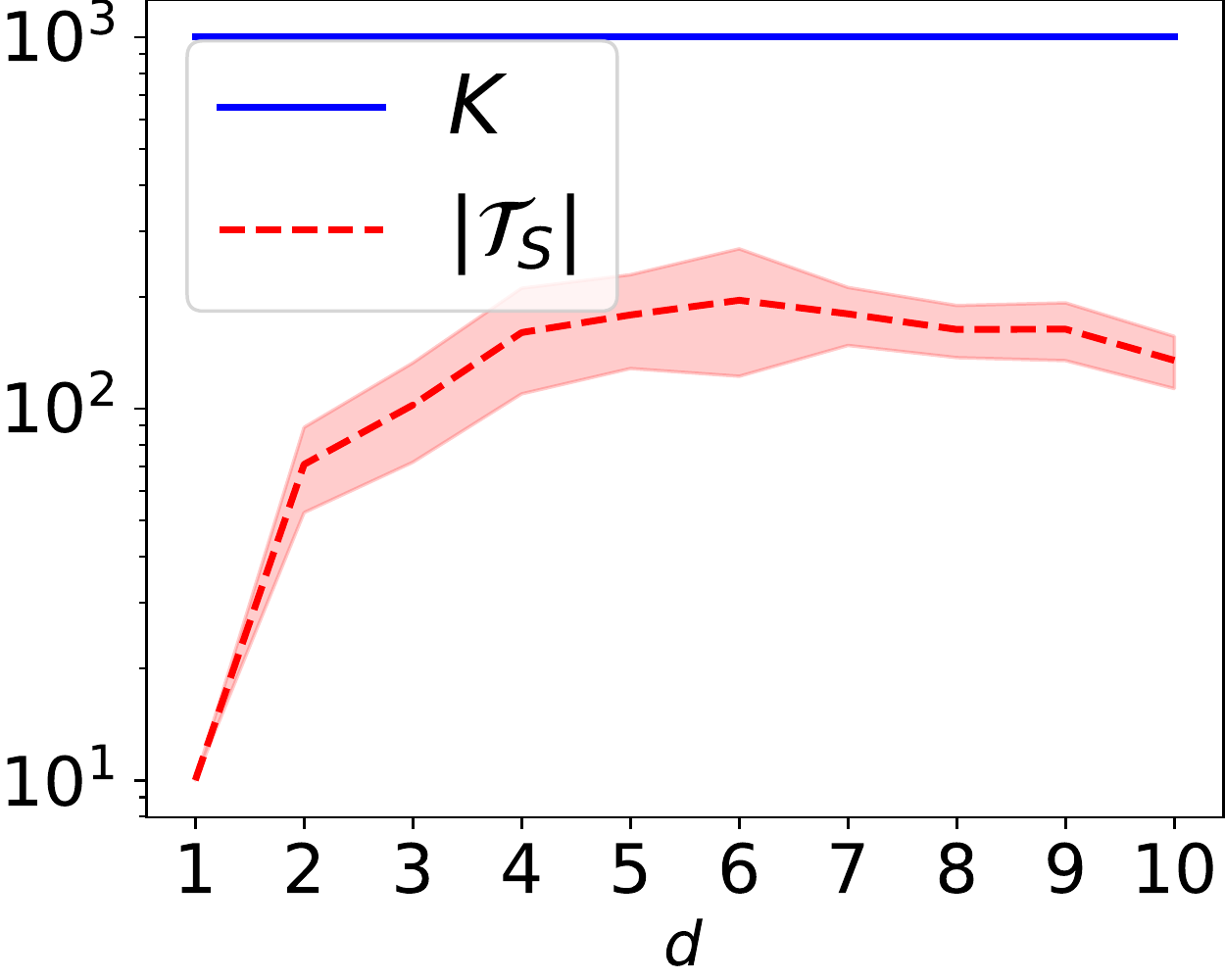}
  \vspace{-15pt}
  \caption{beta-Gauss (0.1,0.1)-(10)} 
  \vspace{10pt}
\end{subfigure}
\begin{subfigure}[b]{0.24\textwidth}
  \includegraphics[width=\textwidth, height=0.8\textwidth]{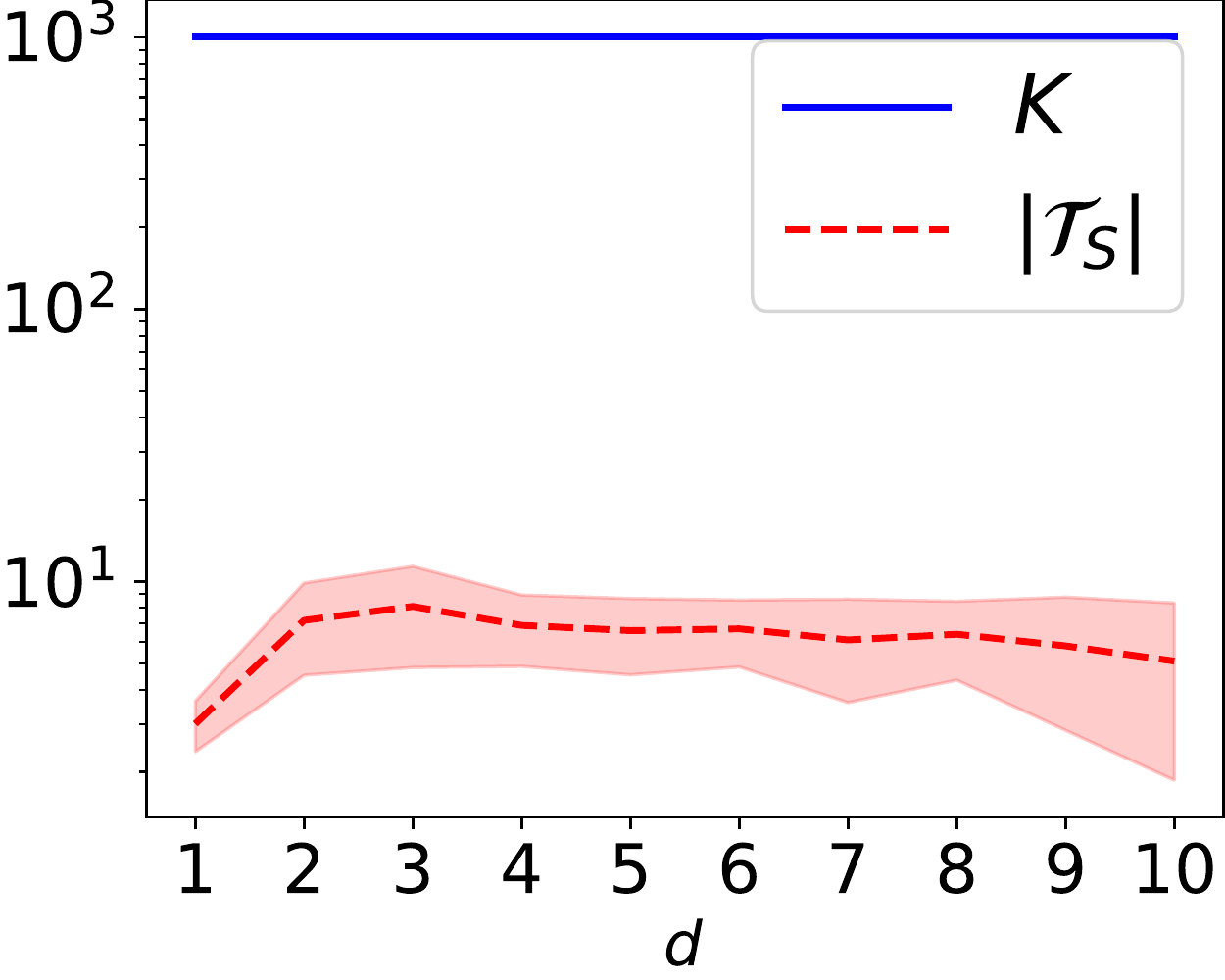}
  \vspace{-15pt}
  \caption{beta-Gauss (0.1,10)-(0.01)} 
  \vspace{10pt}
\end{subfigure}
\begin{subfigure}[b]{0.24\textwidth}
  \includegraphics[width=\textwidth, height=0.8\textwidth]{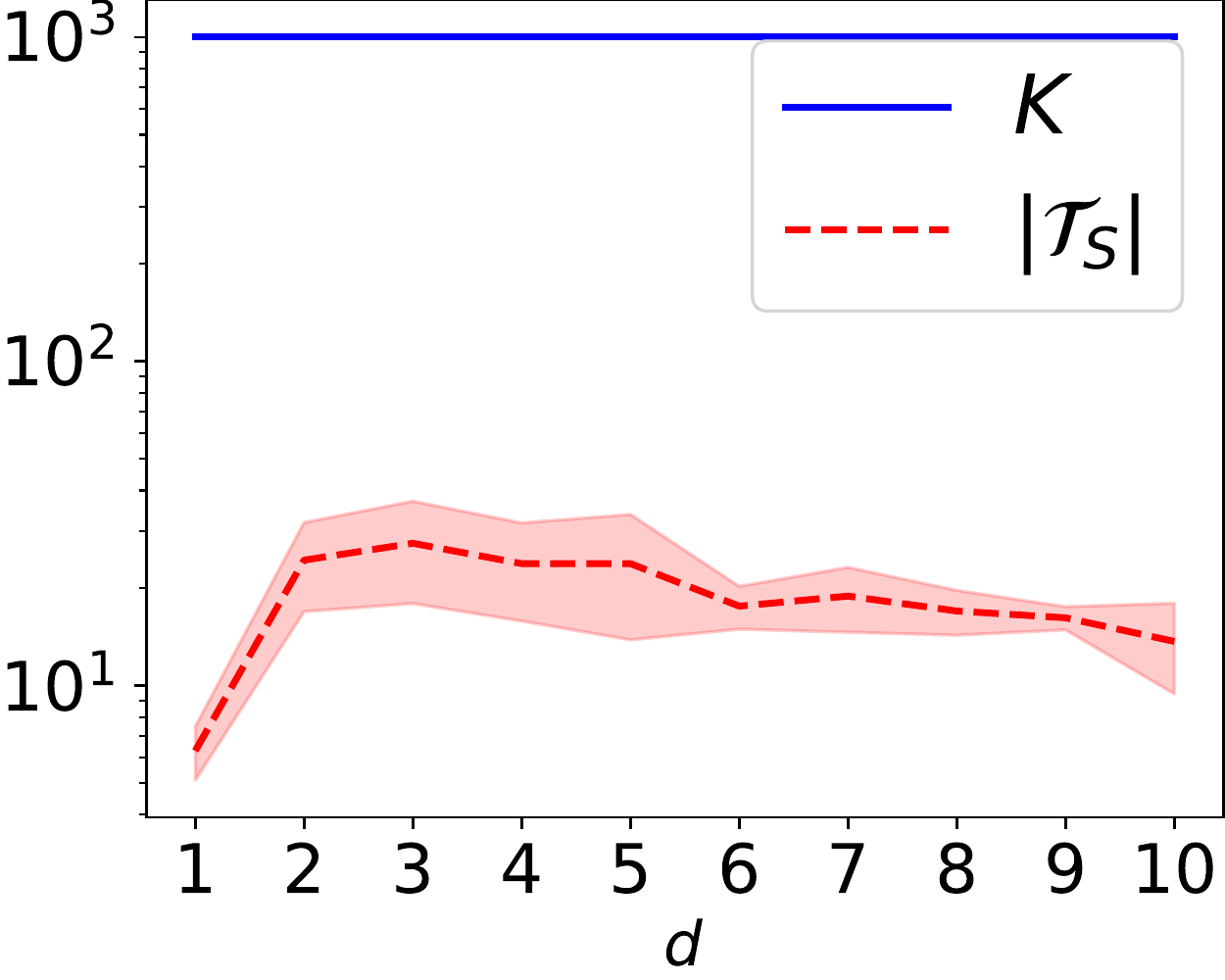}
  \vspace{-15pt}
  \caption{beta-Gauss (0.1,10)-(0.1)} 
  \vspace{10pt}
\end{subfigure}
\begin{subfigure}[b]{0.24\textwidth}
  \includegraphics[width=\textwidth, height=0.8\textwidth]{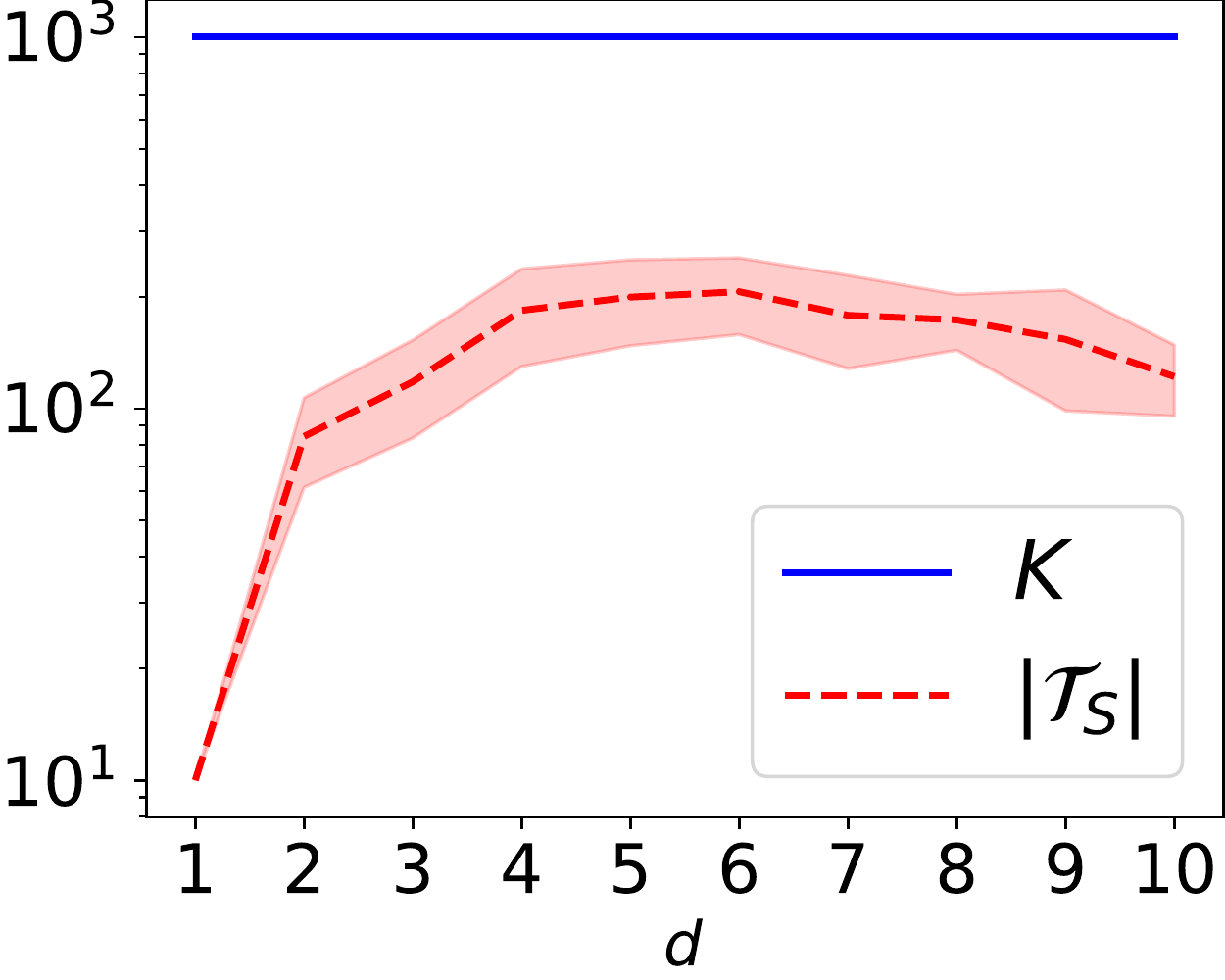}
  \vspace{-15pt}
  \caption{beta-Gauss (0.1,10)-(10)} 
  \vspace{10pt}
\end{subfigure}
\begin{subfigure}[b]{0.24\textwidth}
  \includegraphics[width=\textwidth, height=0.8\textwidth]{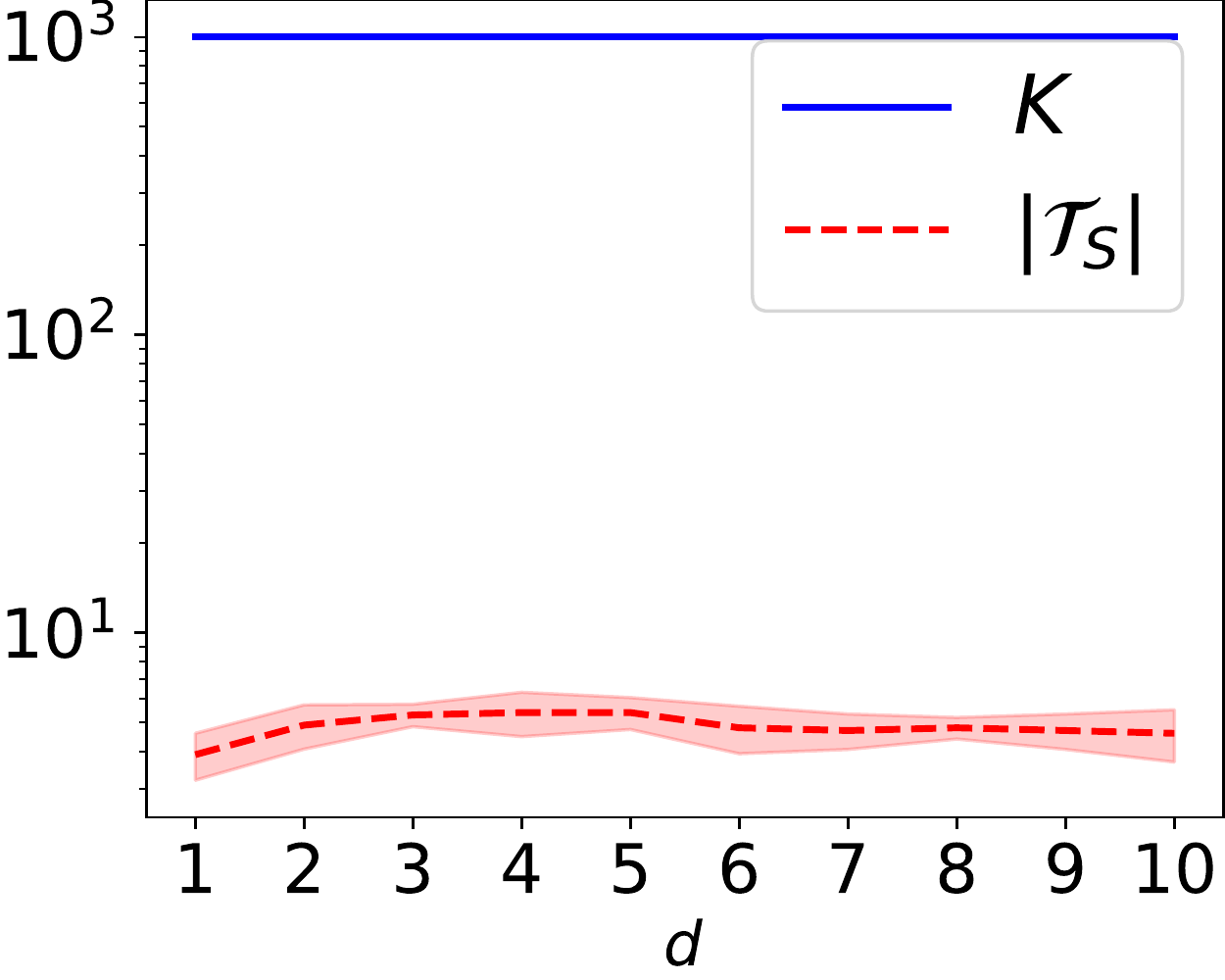}
  \vspace{-15pt}
  \caption{Gauss mix (0.001)} 
  \vspace{10pt}
\end{subfigure}
\begin{subfigure}[b]{0.24\textwidth}
  \includegraphics[width=\textwidth, height=0.8\textwidth]{fig/results_2/log/projection_blobs_1000_001_01_100_01_0.pdf}
  \vspace{-15pt}
  \caption{Gauss mix (0.01)} 
  \vspace{10pt}
\end{subfigure}
\begin{subfigure}[b]{0.24\textwidth}
  \includegraphics[width=\textwidth, height=0.8\textwidth]{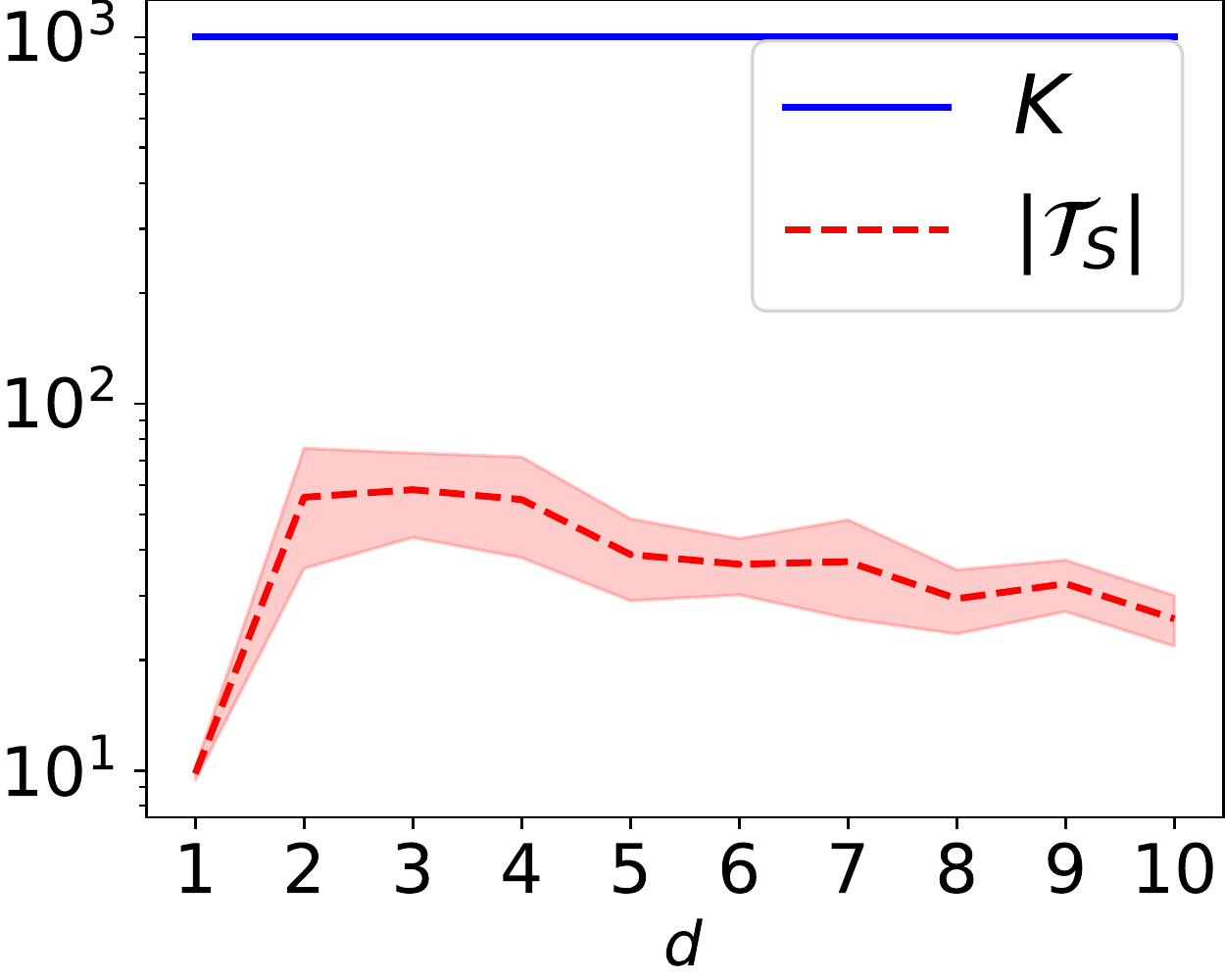}
  \vspace{-15pt}
  \caption{Gauss mix (0.1)} 
  \vspace{10pt}
\end{subfigure}
\begin{subfigure}[b]{0.24\textwidth}
  \includegraphics[width=\textwidth, height=0.8\textwidth]{fig/results_2/log/projection_blobs_1000_10_01_100_01_0.pdf}
  \vspace{-15pt}
  \caption{Gauss mix (1.0)} 
  \vspace{10pt}
\end{subfigure}
\begin{subfigure}[b]{0.24\textwidth}
  \includegraphics[width=\textwidth, height=0.8\textwidth]{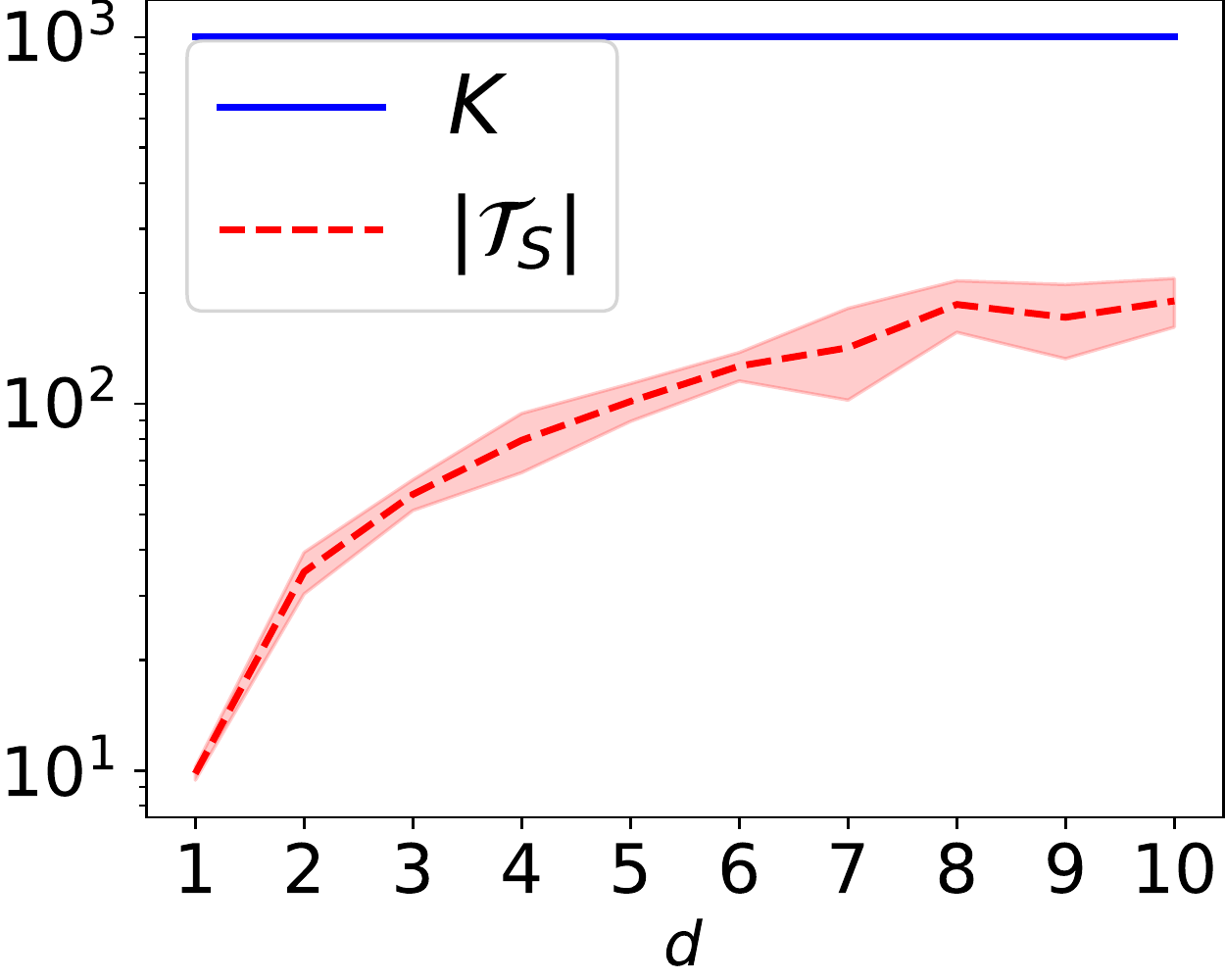}
  \vspace{-15pt}
  \caption{Gauss mix (10.0)} 
  \vspace{10pt}
\end{subfigure}
\caption{The values of $K$ versus $|\Tcal_S|$ with the inverse image of the $\epsilon$-covering in randomly projected spaces. These figures show the mean of 10 random trials and one standard deviation.  Here, the input space was first randomly projected onto a space of dimension 3.  This data is plotted on a logarithmic scale.} \label{fig:app:3}
\end{figure}    
          
\end{document}